\newcommand{\tp}[1]{\textcolor{red}{#1}} 
\newcommand{\x}{\mathbf{x}}
\newcommand{\z}{\mathbf{z}}
\newcommand{\y}{\mathbf{y}}
\newcommand{\w}{\mathbf{w}}
\newtheorem{theorem}{Theorem}
\newtheorem{proposition}{Proposition}
\newtheorem{definition}{Definition}
\newtheorem{lemma}{Lemma}[theorem]
\newtheorem{corollary}{Corollary}[theorem]
\newtheorem*{remark}{Remark}
\definecolor{tpgreen}{RGB}{1,254,1}
\definecolor{codegreen}{rgb}{0,0.6,0}
\definecolor{codegray}{rgb}{0.5,0.5,0.5}
\definecolor{codepurple}{rgb}{0.58,0,0.82}
\definecolor{backcolour}{rgb}{0.96,0.96,0.96}
\lstdefinestyle{mystyle}{
    backgroundcolor=\color{backcolour},   
    commentstyle=\color{codegreen},
    keywordstyle=\color{magenta},
    numberstyle=\tiny\color{codegray},
    stringstyle=\color{codepurple},
    % basicstyle=\ttfamily\footnotesize,
    basicstyle=\ttfamily\scriptsize,
    breakatwhitespace=false,         
    breaklines=true,                 
    captionpos=t,                    
    keepspaces=true,                 
    numbers=none,                    
    numbersep=5pt,                  
    showspaces=false,                
    showstringspaces=false,
    showtabs=false,                  
    tabsize=2
}
\title{Understanding Softmax Confidence and Uncertainty}
\author{%
%   Tim Pearce, Jun Zhu \\ %\thanks{Use footnote for providing further information about author} \\
% %   Department of Computer Science\\
%   TSAIL\\
%   Tsinghua University\\
  Tim Pearce \\ %\thanks{Use footnote for providing further information about author} \\
%   Department of Computer Science\\
  TSAIL\\
  Tsinghua University
  %\texttt{tp424@cam.ac.uk} \\
  \And
  Alexandra Brintrup \\
  MAG \\
  University of Cambridge
  \And
  Jun Zhu \\
  TSAIL\\
  Tsinghua University
  % \texttt{email} \\
  % \AND
  % Coauthor \\
  % Affiliation \\
  % Address \\
  % \texttt{email} \\
}
\begin{document}

\maketitle

\begin{abstract}
% 
%It is often remarked that neural networks fail to increase their uncertainty when predicting on data far from the training distribution, and even that they lack fundamental properties allowing them to do so. Yet naively using softmax confidence as a proxy for uncertainty achieves modest success in tasks exclusively testing for epistemic uncertainty, e.g. out-of-distribution (OOD) detection. This paper investigates this contradiction, studying in-depth the geometry of a softmax layer and its interaction with deep neural networks. 
%This reveals an implicit mechanism causing softmax confidence to correlate with epistemic uncertainty. The paper quantifies reasons for its failure in common OOD benchmarks; only around 1/3 of failures are due to softmax extrapolations, while 2/3 arise from overlap in final-layer features of training and OOD data. Pre-trained/fine-tuned networks largely avoid this overlap, highlighting the impact of representation quality on uncertainty.
%This reveals an implicit mechanism that causes softmax confidence to correlate with epistemic uncertainty in common OOD benchmarks. The paper further explores why it sometimes fails, and why it is particularly effective in pre-trained/fine-tuned networks.
%A combination of geometry and task-specific features creates 

It is often remarked that neural networks fail to increase their uncertainty when predicting on data far from the training distribution. Yet naively using softmax confidence as a proxy for uncertainty achieves modest success in tasks exclusively testing for this, e.g., out-of-distribution (OOD) detection. This paper investigates this contradiction, identifying two implicit biases that do encourage softmax confidence to correlate with epistemic uncertainty: 1) Approximately optimal decision boundary structure, and 2) Filtering effects of deep networks. It describes why low-dimensional intuitions about softmax confidence are misleading. Diagnostic experiments quantify reasons softmax confidence can fail, finding that extrapolations are less to blame than overlap between training and OOD data in final-layer representations. Pre-trained/fine-tuned networks reduce this overlap.

%, highlighting the importance of representation quality.

% proposing a more accurate mental model

% TLDR 
% Explaining why softmax confidence is a better indicator of epistemic uncertainty than it ought to be

\end{abstract}

\section{Introduction}
\label{sec_intro}

Many papers studying uncertainty and robustness in deep neural networks observe that unmodified networks fail to be uncertain when predicting on data far from the training distribution, or that they are unable to capture epistemic uncertainty (example critiques in section \ref{sec_app_softmax_critiques}). 
Intuition for this deficiency can be communicated through visualisations of low-dimensional input spaces where a network's softmax nonsensically extrapolates with increasing confidence, as in Fig. \ref{fig_2d}a. 
This is one motivation for research into methods such as ensembling and Bayesian neural networks.
When evaluating the quality of these methods empirically, softmax confidence is invariably used as a baseline -- what's surprising is that despite the rhetoric, and without any obvious mechanism enabling capture of epistemic uncertainty, softmax confidence nevertheless performs moderately well. 
%\tp{often outputs reasonable uncertainty estimates on data outside the training distribution}

%\begin{figure}[b!]
%\vspace{-0.2in}
%\resizebox{\textwidth}{!}{
%\begin{tabular}{p{0.4\textwidth} p{0.6\textwidth}}
%  \begin{center}
%  a) Two-layer ReLU network, 2D input
%  \end{center}
%  \vspace{-2pt}
%  \includegraphics[width=0.2\textwidth]{images/05_2dplot/2d_class_soft_ent_22-18.pdf} 
%  \put(-75,-14){\small Softmax confidence}
%  \put(-90,35){\small $x_2$}
%  \put(-40,-4){\small $x_1$} \includegraphics[width=0.2\textwidth]{images/05_2dplot/2d_class_latent_22-18.pdf}
%  \put(-65,-14){\small Ideal prediction}
%  \put(-40,-4){\small $x_1$} 
%  &
%  \begin{center}
%  b) Softmax layer only, 1D input
%  \end{center}
%  \vspace{-2pt} \includegraphics[width=0.30\columnwidth]{images/1d_softmax_softmax_3.pdf} 
%  \put(-52,-2){\small $z$}
%  \put(-119,45){\small $y$}
%  \put(-85,-8){\small Softmax confidence}
%  \includegraphics[width=0.30\columnwidth]{images/1d_softmax_fakebayes_3.pdf}
%  \put(-80,-8){\small Ideal prediction}
%  % \put(-80,-20){\small e.g. Bayesian/ensemble methods}
%  \put(-52,-2){\small $z$}
%\end{tabular}
%}
%\vskip -0.05in
%\caption{Whilst in low dimensional problems a neural network's softmax confidence provides uncertainty estimates that are far from ideal, for deep networks and inputs such as images, softmax confidence becomes a better proxy for uncertainty.}
%\label{fig_2d}
%\end{figure}

For example, out-of-distribution (OOD) detection requires predicting if a data point is from the training distribution or not (fig. \ref{fig_2d} contains example distributions), where success depends on good epistemic uncertainty estimates. A system incapable of capturing epistemic uncertainty should do no better than random guessing -- an AUROC of 50\%. Yet directly interpreting softmax confidence as uncertainty can score from 75\% to 99\% across datasets and architectures (section \ref{sec_app_baseline_table}).
Meanwhile, adding modifications purposefully designed to capture uncertainty typically improves AUROC by just 1\% to 5\%. 
This suggests that softmax confidence may be more useful as an indicator of epistemic uncertainty than widely thought.

\textbf{Contributions:}
The goal of this paper is to understand the value of softmax confidence as a proxy for epistemic uncertainty in common vision benchmarks for detection of non-adversarial OOD data. This is valuable in improving our understanding of when, why, and how much we can trust deep learning's most accessible uncertainty estimates. 
We defer mathematical results (section \ref{sec_app_proofs2}), extended related work (section \ref{sec_app_relatedwork}), and full experimental details (section \ref{sec_app_experiment_details}) to the appendix, using the main paper to intuitively communicate results. 
Concrete contributions are as follows:
\begin{itemize}[leftmargin=15pt]
%\item \textbf{Insights into softmax confidence (section \ref{sec_geom_softmax}).} 
% We gain analytical insights into the softmax layer by deriving uncertainty vector fields and defining regions on the final-layer labelled as OOD.
\item \textbf{Uncertain regions of the softmax (section \ref{sec_geom_softmax}).} 
We analytically study the softmax layer, defining regions on the final-layer that will be correctly labelled as OOD. % under several assumptions.
\item \textbf{Decision boundary structure (section \ref{sec_final_layer}).}
Good structuring of a softmax's decision boundaries is the first important factor in its OOD detection ability, since it enlarges the region that an OOD input is allowed to fall within. We derive a theoretically optimal structuring scheme, finding it closely matches the structure found in trained networks.
\item \textbf{Deep networks filter for task-specific features (section \ref{sec_misleading_intuition}).} 
Deep networks extract features useful for a specific training task -- this serves as an implicit filter of OOD features. We demonstrate why our low-dimensional intuitions mislead us, and propose a more representative mental model.
\item \textbf{Diagnostic experiments (section \ref{sec_failure_measure}).} We summarise reasons softmax confidence can fail and measure the importance of each in common benchmark OOD tasks. Further experiments find that pre-trained/fine-tuned networks remedy two of these causes.
\end{itemize}

\begin{figure}[t!]
%\vskip -0.1in
\begin{center}
%a) Low-dimensional input, shallow network \hspace{0.5in} b) High-dimensional input, deep network 
\vspace{-0.02in}
\includegraphics[width=0.2\columnwidth]{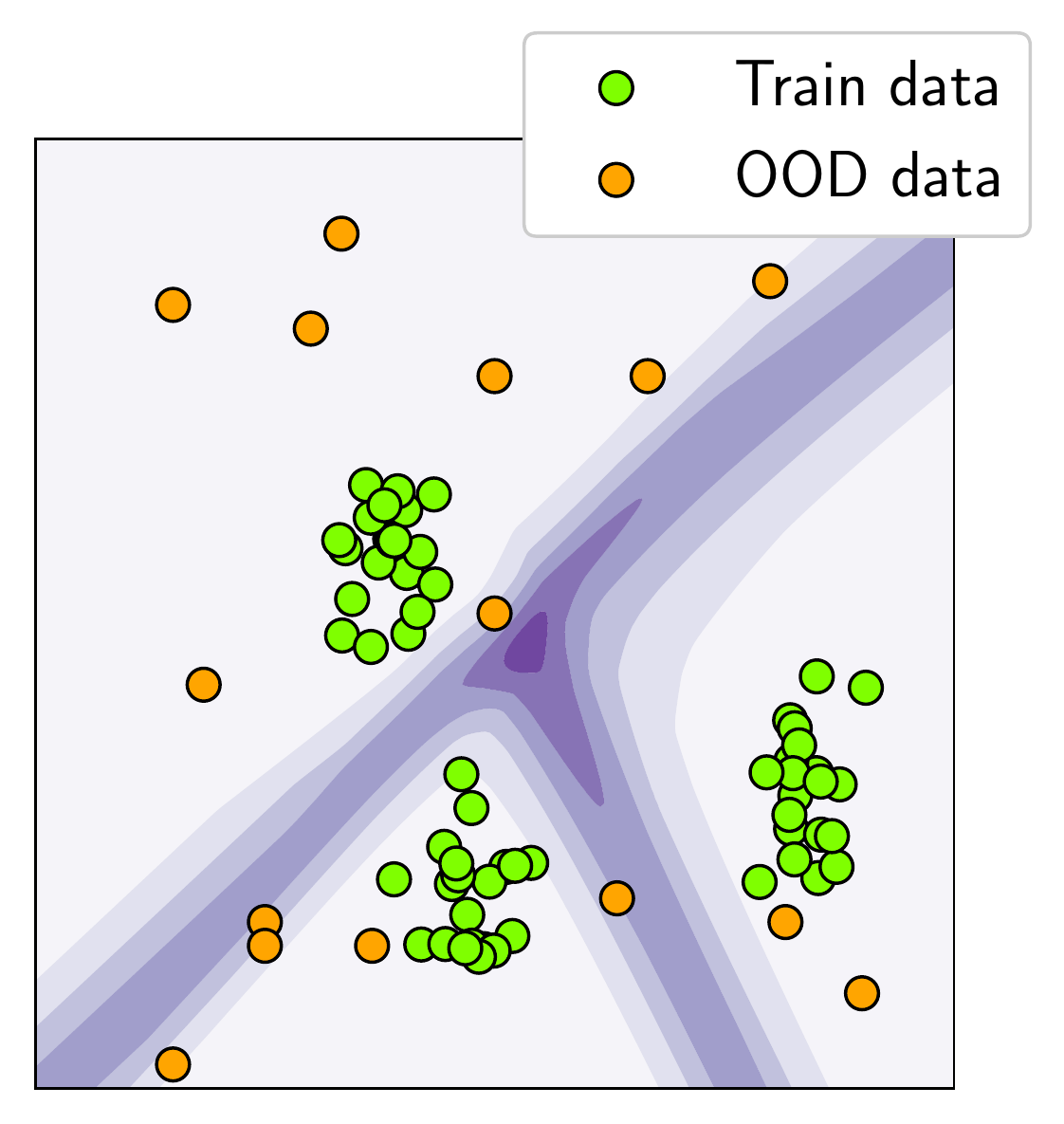}
\put(-59,-7){\small Input space}
\put(-70,83){\small a) Low-dimensional input, shallow network}
\includegraphics[width=0.03\columnwidth]{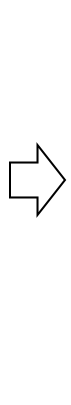}
\hspace{-0.05in}
\includegraphics[width=0.2\columnwidth]{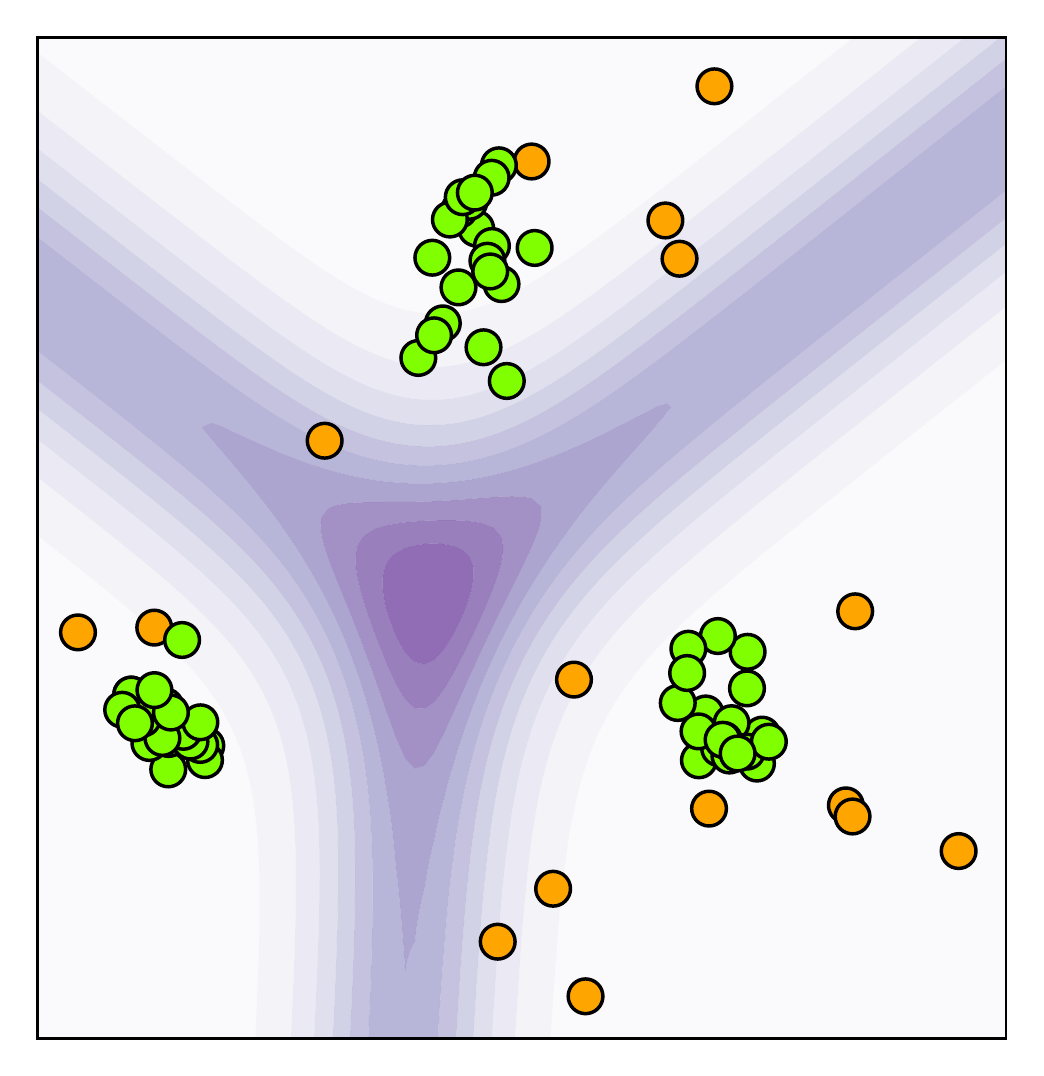}
\put(-92,-7){\small Final-layer PCA visualisation}
%\put(-40,5){\small High confidence}
\includegraphics[width=0.021\columnwidth]{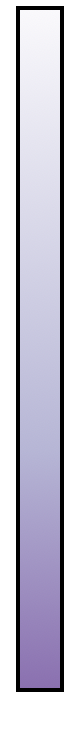}
\put(0,70){\tiny High softmax}
\put(0,63){\tiny confidence}
%\put(0,60){\tiny Low softmax}
%\put(0,53){\tiny confidence}
%\put(0,20){\tiny High softmax}
%\put(0,13){\tiny confidence}
\put(0,13){\tiny Low softmax}
\put(0,6){\tiny confidence}
\hspace{0.5in}
\includegraphics[width=0.2\columnwidth]{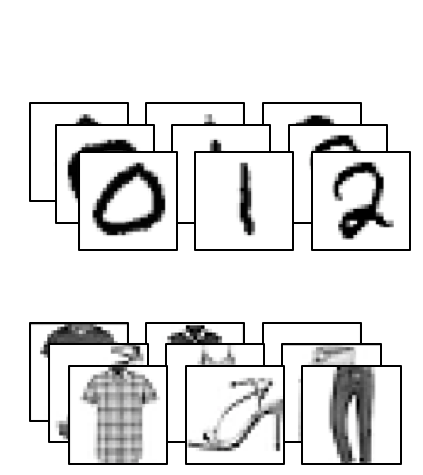} 
\put(-60,72){\small \textbf{\textcolor{tpgreen}{Train data}}}
\put(-60,31){\small \textbf{\textcolor{orange}{OOD data}}}
\put(-65,-7){\small Example inputs}
%\put(-70,83){\small b) High-dimensional input, deep network }
%\hspace{0.3in}
\includegraphics[width=0.03\columnwidth]{images/10_intro/arrowtp}
\hspace{-0.05in}
\includegraphics[width=0.2\columnwidth]{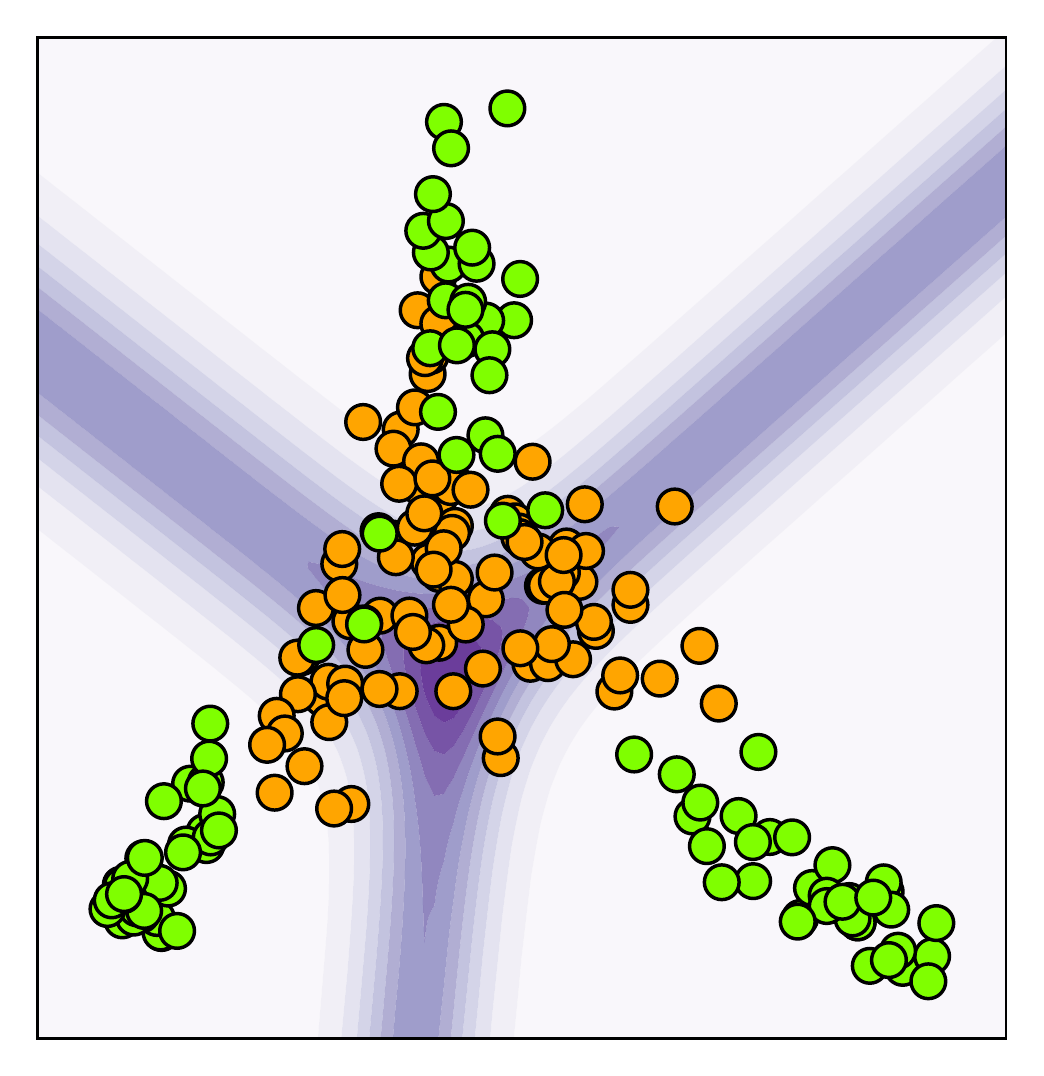}
\put(-92,-7){\small Final-layer PCA visualisation}
\put(-160,83){\small b) High-dimensional input, deep network }
\caption{a) In low-dimensions, softmax confidence is unreliable as a measure of uncertainty -- OOD data far from the training distribution falls into high confidence extrapolation regions. However, with more complex tasks and deeper networks, implicit biases encourage higher uncertainty for data outside the training distribution. b) For a LeNet trained on three classes of MNIST, OOD data from Fashion MNIST is reliably mapped to the low confidence region of the softmax layer.}
%(section \ref{sec_misleading_intuition}).}
% for softmax and density estimators.}
% Toy classification problem with two hidden features.}
\label{fig_2d}
\end{center}
\vskip -0.2in
\end{figure}

\section{Background}\label{sec_bg}

%\subsection{Preliminaries}\label{sec_}

%\textbf{Preliminaries.} 
Consider a neural network taking an input $\x \in \mathbb{R}^D$, producing final-layer activations $\z = \psi(\x) \in \mathbb{R}^H$, which are passed through a softmax function to give predictions, $\hat{\y} =\sigma(\z) \in [0,1]^K$,
\begin{equation}
\sigma(\z)_i = \frac{\exp{\w_i^\intercal \z}}{\sum^K_{j=1} \exp{\w_j^\intercal \z}}, \label{eq_softmax} 
%\sigma(\z)_i = \frac{\exp{\w_i^\intercal \z}}{\sum^K_{j=1} \exp{\w_j^\intercal \z}} = \frac{\exp{\lvert \lvert \w_i \rvert \rvert \, \lvert \lvert \z \rvert \rvert \cos \theta_{i,\z}}}{\sum_j \exp{\lvert \lvert \w_j \rvert \rvert \, \lvert \lvert \z \rvert \rvert \cos \theta_{j,\z}}}, \label{eq_softmax} 
% \label{eq_softmax_mag_angle} 
\end{equation}
where the final-layer weight $\mathbf{W} \in \mathbb{R}^{H \times K}$ can be indexed so $\w_i$ represents column $i$. One can think of appending a one element to $\z$ in place of a bias. Targets for $K$ classes use a one-hot-encoding. %, and a cross-entropy loss is minimised.

The training distribution is written, $\x \sim \mathcal{D}_\text{in}$ or $p_\text{in}(\x)$. A distribution over the final-layer activations is induced, $\z \sim \mathcal{D}_\text{in}$, or, $p_\text{in}(\z)$. The outlier, or OOD distribution is given by $\x \sim \mathcal{D}_\text{out}$ or $p_\text{out}(\x)$, and similarly for $\z$. A dataset assumes $N$ data points drawn i.i.d..

Two common measures of uncertainty derived from softmax confidence are max predicted probability for any class, and entropy. We define two estimators based on these, ensuring higher is more uncertain. In this paper `softmax confidence' refers to either estimator. %(`{softmax confidence}')
\begin{align}
%U_\text{max}(\z) &= - \max_i \sigma(\z)_i  \label{eq_soft_max} \\ 
%U_\text{entropy}(\z) &= \sum_{i=1}^K \sigma(\z)_i \log \sigma(\z)_i  \label{eq_soft_ent}
U_\text{max}(\z) &\coloneqq - \max_i \sigma(\z)_i   \;\;\;\;\;\;\;
U_\text{entropy}(\z) \coloneqq - \sum_{i=1}^K \sigma(\z)_i \log \sigma(\z)_i  \label{eq_soft_ent}
\end{align}
We further consider an uncertainty metric operating on the final-hidden layer activations, but not derived from the softmax, similar to Lee et al. \citeyearpar{Lee2018a}. A probability density is estimated, $\hat{q}(\z) \approx p_\text{in}(\z)$, and the negative log likelihood is used as an uncertainty score (again higher is more uncertain),
\begin{align}
U_\text{density}(\z) &\coloneqq -\log \hat{q}(\z).
\end{align}
In this work we use a Gaussian mixture model (GMM) with $K$ (equal to the number of classes) components to estimate this density (though not restricted to one component per class, and with no restrictions on the covariance structure). Marginal probabilities are given by, $\pi_i \coloneqq p(\y=i)$. A code snippet in section \ref{sec_app_codesnippet} summarises the implementation.
\begin{align}
    \hat{q}(\z) = \sum_{i=1}^K \pi_i \mathcal{N}(\z; \pmb{\mu}_i,\pmb{\Sigma}_i).
\end{align}
% This paper focuses on the task of OOD detection, where performance is measured using Area Under the Receiver Operating Characteristic curve (AUROC) and Area Under the Precision-Recall curve (AUPR) \citep{Hendrycks2016a}.
\textbf{Epistemic vs Aleatoric:}
%Although in everyday usage `uncertainty' defines the single concept of being unsure in a response, for modelling purposes it is useful to further decompose it. 
%The deep learning community most commonly splits uncertainty into
Whilst in everyday usage `uncertainty' describes the single concept of being unsure, 
for modelling purposes it is useful to decompose it into
 two types; aleatoric and epistemic \citep{DerKiureghian2008, Gal2015}. \textbf{Aleatoric} refers to uncertainty due to classes that overlap in input space,  e.g. a network trained on MNIST digits should have aleatoric uncertainty if asked to classify an input appearing between `1' and `7'. \textbf{Epistemic} refers to uncertainty about the model or its parameters; if the MNIST network is asked to classify an image of clothing, it should be uncertain due to lack of knowledge about how to handle this type of input. 
A softmax layer in isolation can learn to output a probability in between 0 and 1 to catch overlapping classes (aleatoric), but fails to decrease its confidence if queried far from the training data (epistemic).
%Figure \ref{fig_2d}b illustrates a softmax layer's response to these two types of uncertainty -- it can learn to output a probability in between 0 and 1 to catch overlapping classes, but fails to decrease its confidence outside the training data.

% (also data noise)
% (also model uncertainty) 

%(Other authors have labelled what we describe as epistemic uncertainty as distributional uncertainty \citep{Malinin2018}, or domain shift/out-of-distribution uncertainty \citep{Lakshminarayanan2017}, we are content for the reader to substitute in place of `epistemic' whatever label they prefer.) 

%\footnote{We are content for the reader to substitute in  whatever label they prefer in place of `epistemic' if it describes unfamiliarity of an input relative to the training data.}) % -- we are content for the reader to substitute whatever label they choose in place of `epistemic', if it describes unfamiliarity of an input relative to the training data.

% This uncertainty does not reduce with more data samples.

\textbf{Related Work:}
% We are not aware of prior work that has directly tackled the research objective of this paper, however several areas are of relevance.
Neural networks have long combined a softmax output and cross-entropy loss for classification \citep{Bridle1990}. Recent work has shown that interpreting softmax confidence as predictive probabilities can have several pitfalls; they are poorly calibrated (generally overconfident) \citep{Guo2017}, and can be easily manipulated by adversarial examples \citep{Nguyen2015}. 
It's also been claimed there is no reason to trust them outside of the training distribution -- \textit{"[the softmax output] is often erroneously interpreted as model confidence."} \citep{Gal2015}.

This paper investigates the last point. Whilst most work on uncertainty and neural networks criticises softmax uncertainties only informally, several works more rigorously demonstrate weaknesses.
%Hein et al. \citeyearpar{Hein} prove inputs `far from the training distribution' lead to arbitrarily confident predictions. They define this as magnified inputs 
Hein et al. \citeyearpar{Hein} prove that any input can be magnified, $\tilde{\x} = \alpha \x$, with large $\alpha>1$, to produce an arbitrarily high confidence prediction by ReLU networks. Whilst they define $\alpha \x$ as `far from the training data', our work uses `far' to mean unfamiliar image datasets with bounded pixel intensity (something trivial to check). 
%Hein et al. \citeyearpar{Hein} prove that inputs `far from the training data' lead to high confidence predictions for ReLU networks. However, this depends on a specific definition of `far' as a magnified input, $\tilde{\x} = \alpha \x$, for arbitrarily large $\alpha>0$. Our work investigates image datasets with bounded pixel intensity (something trivial to check). 
%
Mukhoti et al. \citeyearpar{Mukhoti2021} released work concurrently with our own. Whilst they make strong claims about the relationship between softmax confidence and uncertainty, they focus on tasks with high aleatoric uncertainty -- we focus on standard datasets which have low aleatoric uncertainty. Section \ref{sec_app_comprehensive} provides more discussion and a comprehensive literature review.

\section{Uncertain Regions of the Softmax}\label{sec_geom_softmax}

This section analytically studies the softmax layer in isolation, defining regions (`valid OOD region') on the softmax layer which an OOD input must fall into to be correctly labelled as OOD. 
\begin{definition}{}
\label{def_valid_ood_region}
`Valid OOD region' specifies a region in $\mathbb{R}^H$, such that if an OOD point, $\z' = \psi(\x')$, lies within it, it will be more uncertain than at least $(1-\epsilon)$\% of the training data (true positive rate, one axis of the ROC). For, $\epsilon \in [0,1]$, some uncertainty estimator $U(\cdot)$, and indicator function $\mathbbm{1}(\cdot)$,
\begin{align}
    \mathcal{R} :\{\z' \in \mathbb{R}^H | 
    \mathbb{E}_{\z \sim \mathcal{D}_\text{in}} \left[\mathbb{I}(U(\z') > U(\z)) \right] > 1-\epsilon
    \}.
\end{align}
\end{definition}
Section \ref{sec_vec_fields} provides further insight into softmax confidence, deriving uncertainty vector fields for each of our estimators, formalising common intuition about the danger of softmax extrapolations.

%Section \ref{sec_region_defn} then defines 

%Uncertainty vector fields are derived in section \ref{sec_vec_fields} formalise common intuition about extrapolations.

\begin{figure}[b!]
% \vskip 0.05in
\begin{center}
a) $K=2, H=2, \epsilon=0.05$ \hspace{1.2in} b) $K=3, H=2, \epsilon=0.05$
% \hspace{-0.1in}
\includegraphics[width=0.25\columnwidth]{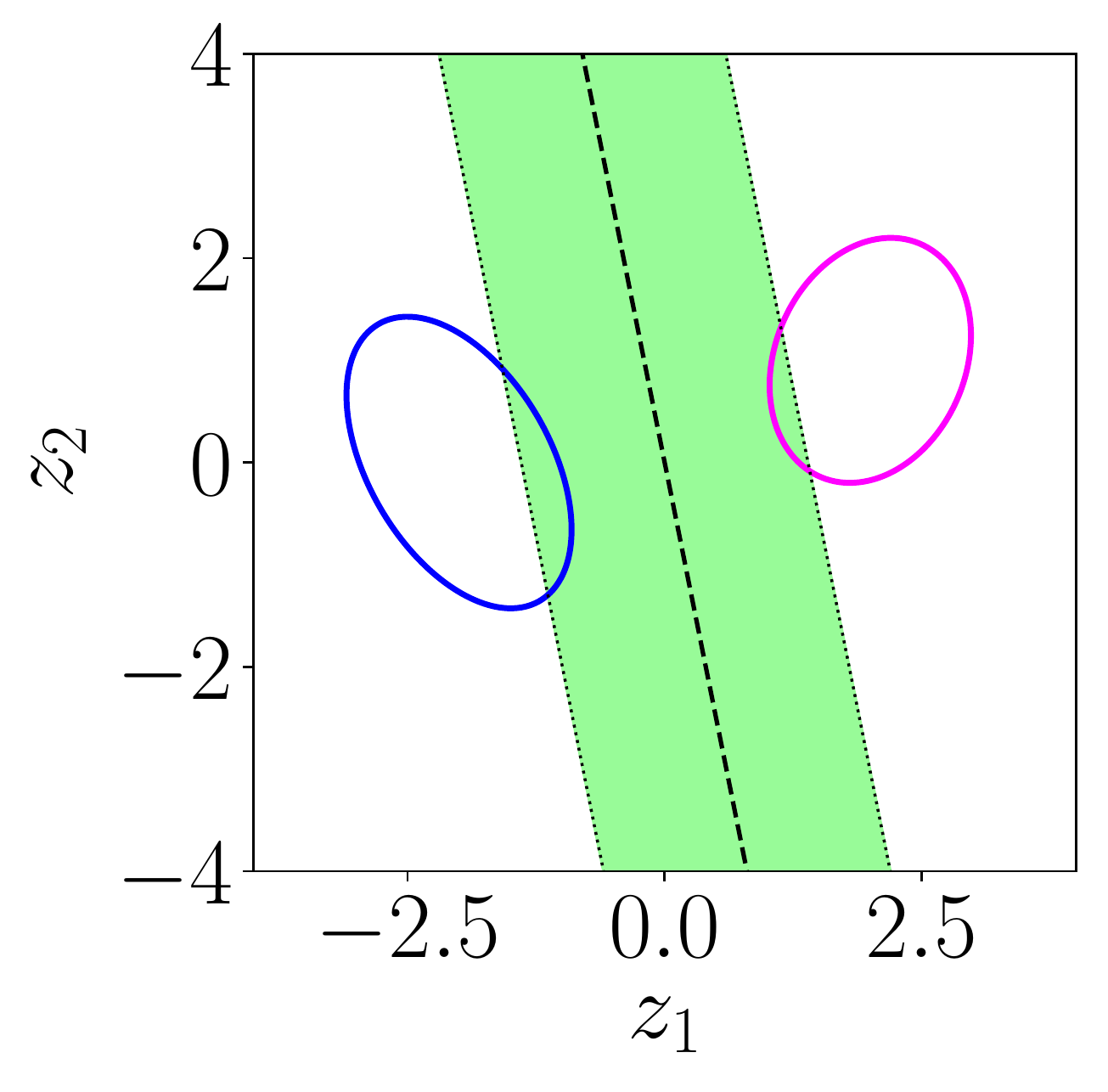}
\put(-65,-6){\small $U_\text{max}$ \& $U_\text{entropy}$}
%\put(-47,34){\rotatebox{8}{$ \alpha \w_1$}}
\put(-75,72){\small \textcolor{blue}{$p(\z|y=1)$}}
\put(-40,81){\small \textcolor{magenta}{$p(\z|y=2)$}}
%\put(-48,34){$ \alpha \w_1$}
%\put(-48,24){\rotatebox{8}{\small $\longleftarrow$}}
\includegraphics[width=0.25\columnwidth]{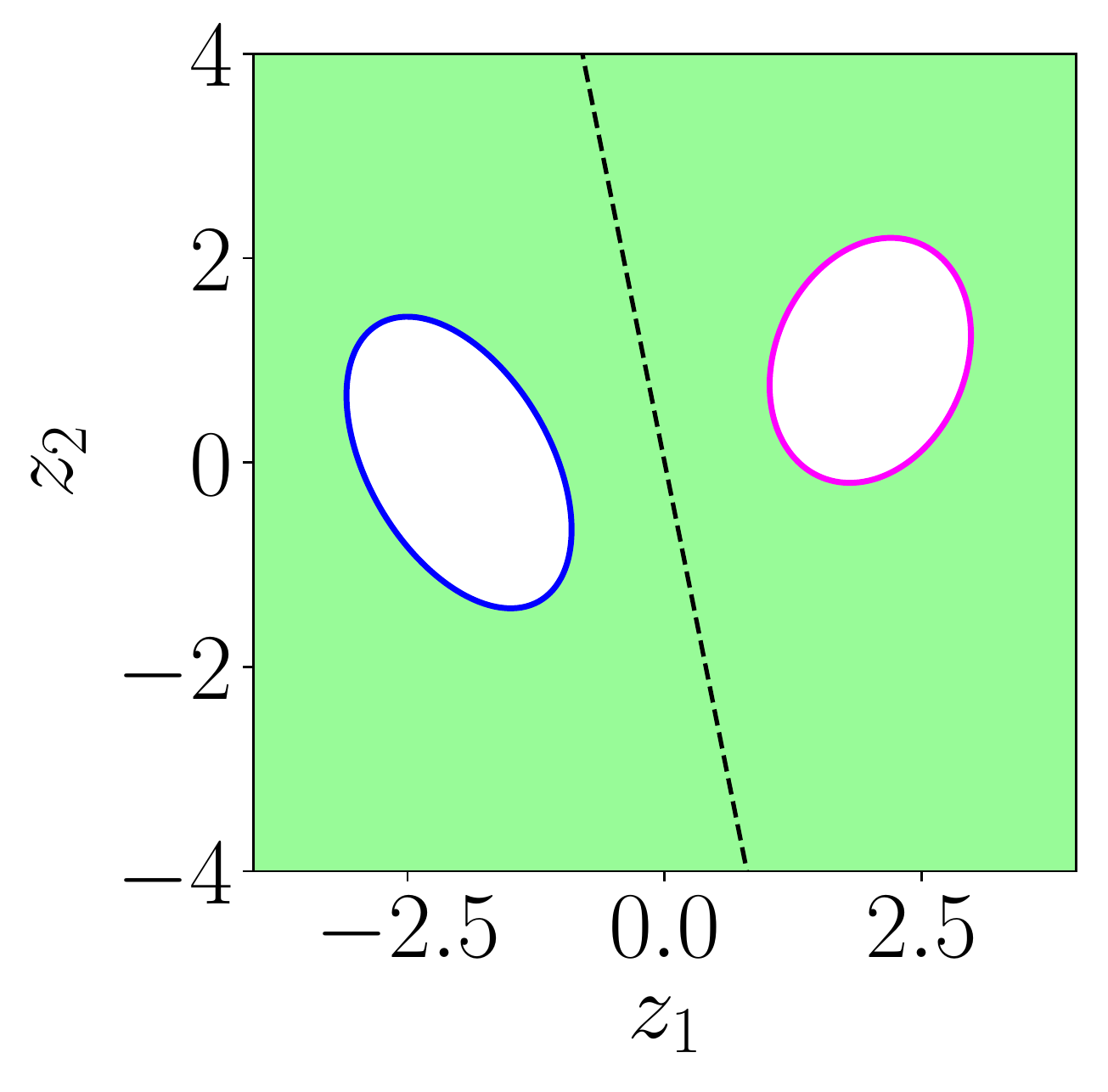}
\put(-52,-6){\small $U_\text{density}$}
% \hspace{0.05in}
\includegraphics[width=0.25\columnwidth]{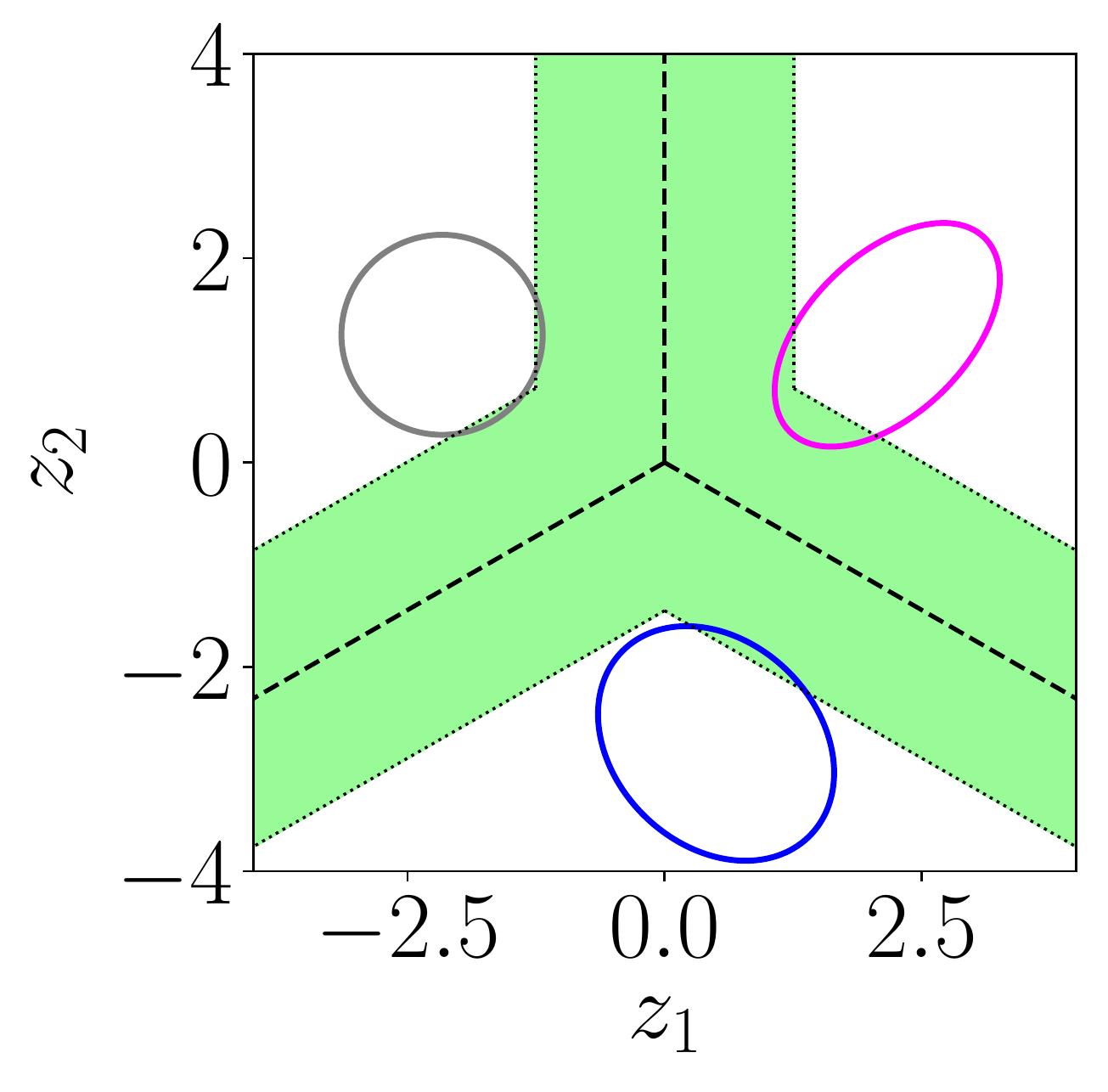}
\put(-80,-6){\small $U_\text{max}$, linear approximation}
%\put(-47,34){\rotatebox{8}{$ \alpha \w_1$}}
% \put(-48,34){$ \alpha \w_1$}
% \put(-48,24){\rotatebox{8}{\small $\longleftarrow$}}
\includegraphics[width=0.25\columnwidth]{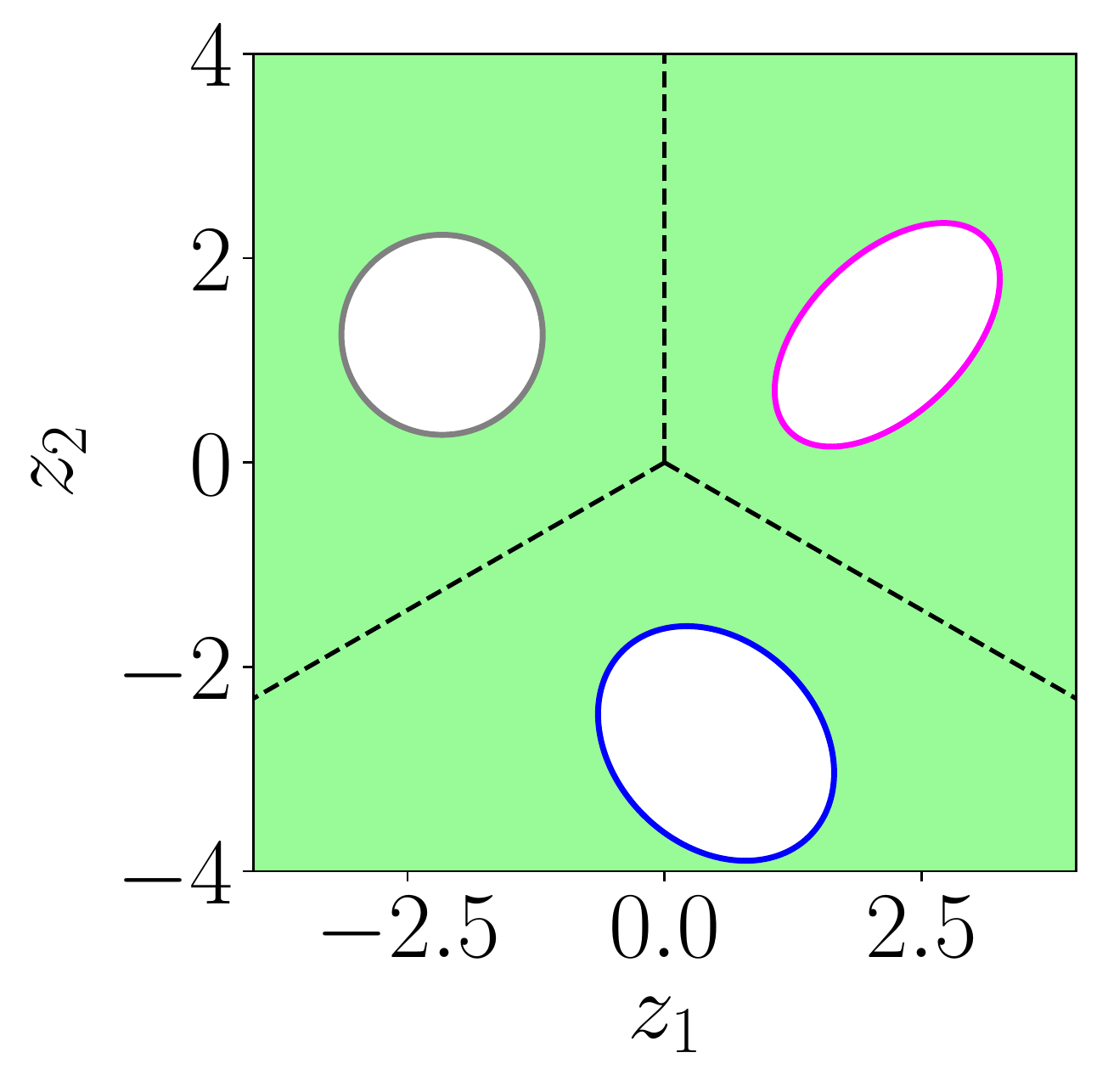}
\put(-52,-6){\small $U_\text{density}$}
\caption{The valid OOD region is shown in green for each uncertainty estimator. Gaussian ellipses capture 95\% of clusters. For softmax confidence, OOD data must fall closer to a decision boundary than 95\% of training data, whilst the density estimator only demands it fall outside of training clusters.}
%A two class problem, with $\epsilon=0.05$ and $H=2$. Gaussian ellipses capture 95\% coverage. \tp{use z1 and z2 as in prior figure.}}
\label{fig_valid_ood_region}
\end{center}
% \vskip -0.2in
\end{figure}

\subsection{Valid Out-of-Distribution Region Definition}\label{sec_region_defn}

%This section specifies regions on the domain of the final layer that will be classified correctly as OOD. 

Fig. \ref{fig_valid_ood_region} illustrates valid OOD regions for all estimators. Roughly speaking, for $U_\text{max}$ and $U_\text{entropy}$, OOD data must fall \textit{closer to a decision boundary} than $(1-\epsilon)$\% of the training distribution to be in the valid OOD region. For $U_\text{density}$, it is required only that OOD data does not overlap with the training distribution. Observe that if a sample is labelled OOD by $U_\text{max}$ and $U_\text{entropy}$, it will usually also be labelled OOD by $U_\text{density}$, but not vice versa -- we exploit this property in section \ref{sec_failure_measure}'s  measurements.

These regions are formally defined in theorem \ref{theorem_valid_ood_region_2class} for $U_\text{max}$ and $U_\text{entropy}$ for two classes. 
Making the assumption that final-layer features in the training data follow a mixture of Gaussians allows analytical integration of class clusters in corollary \ref{sec_corollary_2class_gaussian}.
%Justification comes from topological studies \citep{Naitzat2020} and prior work \citep{Lee2018a}. 
Specification of the regions for $U_\text{max}$ and $U_\text{entropy}$ becomes more difficult for higher numbers of classes, since intersections between decision boundary hyperplanes create curved valid OOD regions.
However, we define an approximation of the $U_\text{max}$ valid OOD region for general numbers of classes using pairs of linear hyperplanes, offset from the decision boundary by a distance $\alpha(\w_i - \w_j)$  (def. \ref{def_linearapprox_region}).
This linear approximation forms a subset of the exact valid OOD region (corollary \ref{corollary_linear_region_is_subset_of_exact}), matching the exact valid OOD region for $K=2$ (proposition \ref{prop_linearapprox_equalfor_k2}) and also matching the exact valid OOD region at nearly any point at large magnitudes (theorem \ref{theorem_exact_linear_convergence}).
Specifying the valid OOD region for $U_\text{density}$ is more straightforward for general classes, $K \geq 2$, as per proposition \ref{prop_valid_ood_region_density}.
%Formally stated results and proofs are in section \ref{sec_app_proofs2}.

% \begin{figure}[b!]
% \vskip 0.05in
% \begin{center}
% Optimal structure \hspace{2in} Sub-optimal structure
% \includegraphics[width=1.\columnwidth]{}
% \put(-390,-8){\small Small cross-ent. loss} 
% \put(-300,-8){\small Large valid OOD region} 
% \put(-175,-8){\small Large cross-ent. loss} 
% \put(-90,-8){\small Small valid OOD region} 
% \caption{Left: Softmax weight vectors and cluster positioning that minimise a cross-entropy loss are evenly distributed, with small clusters placed maximally far apart. This also maximises the linear approximate valid OOD region.
% Right: An example of a sub-optimal structure which produces a smaller approximate valid OOD region, increasing the risk that data outside the training distribution will not be labelled as OOD.}
% \label{fig_optimal}
% \end{center}
% % \vskip -0.2in
% \end{figure}

\section{Implicit Bias 1: Approximately Optimal Decision Boundary Structure}

\label{sec_final_layer}

The previous section defined the valid OOD region presuming the distribution of training features, $p_\text{in}(\z)$, was given. In fact, a neural network learns the transformation, $\z = \psi(\x)$, and while $p_\text{in}(\x)$ is fixed, the shape and position of $p_\text{in}(\z)$ can itself be optimised -- this section asks what decision boundary structure emerges when $p_\text{in}(\z)$ and $\w_i$'s are jointly optimised.
Section \ref{sec_optimal_decision_boundary} studies this in an idealised setting, while section \ref{sec_decision_analysis} analyses it empirically in trained networks. Section \ref{sec_optimal_boundary_importance} considers how the valid OOD region is affected by structure, and demonstrates experimentally that sub-optimal softmax boundaries can degrade OOD detection.
%This section shows that the structure of $p_\text{in}(\z)$ and softmax weights typically learnt maximises the valid OOD region, which provides a useful implicit bias...

%Section \ref{sec_optimal_decision_boundary} derives a structure for both cluster positioning, $p_\text{in}(\z)$, and softmax decision boundaries, that minimises a cross-entropy loss (`optimal structure'). Section \ref{sec_decision_analysis} finds trained networks approximate this optimal structure. Section \ref{sec_optimal_boundary_importance} shows this optimal structure maximises the valid OOD region, and demonstrates that sub-optimal softmax boundaries can degrade OOD detection.

% This particular structure encourages good OOD detection ability -- s
%\tp{1) theoretical optimal structure. 2) compare to weight vectors of trained networks. 3) show examples of alternative structures that would fail to give uncertainty estimates}

\subsection{Optimal Decision Boundary Structure}
\label{sec_optimal_decision_boundary}
\begin{definition}
\label{def_optimal_structure}
`Optimal decision boundary structure' (also `optimal structure') refers to softmax weight vectors that have the following properties. 1) All weight magnitudes are equal, $||\w_i||=c_1 \; \forall i$. 2) Bias values (absorbed into $\w_i$) are zero. 3) $\w_i$'s are `evenly distributed'; if $\theta_{i,j}$ represents the angle between $\w_i$ \& $\w_j$, then, $\cos \theta_{i,j} = \frac{-1}{K-1} \; \forall i \neq j$. 
Where relevant, it further means the training distribution features, $p_\text{in}(\z)$, have the following properties. Let $\pmb{\mu}_i \coloneqq \mathbb{E}_{\z \sim p(\z|\y=i)}[\z]$.
1) Mean vectors for each class are of constant magnitude, $|| \pmb{\mu}_i ||=c_2 \; \forall i$, and as large as possible subject to regularisation.
2) Weight and mean vectors of the same class point in the same direction, $ \pmb{\mu}_i= c_3 \w_i$.
3) The variance of each cluster is small, $\mathbb{V}\text{ar}_{\z \sim p(\z|\y=i)}[\z] \approx \mathbf{0}$.
\end{definition}

\begin{wrapfigure}{r}{0.45\columnwidth}
\vskip -0.4in
\begin{center}
% Optimal structure \hspace{0.5in} Sub-optimal structure
\includegraphics[width=0.45\columnwidth]{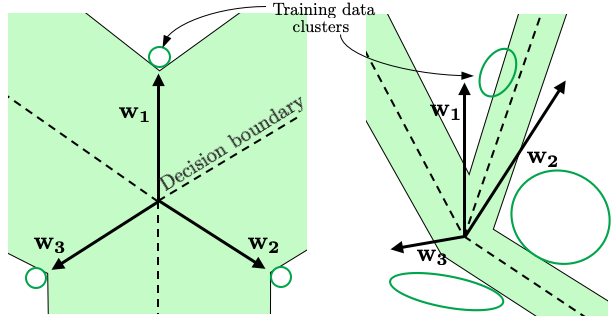}
% \put(-390,-8){\small Optimal structure} 
% \put(-300,-8){\small Large valid OOD region} 
\put(-165,-5){\small Optimal structure} 
\put(-75,-5){\small General structure} 
\caption{Optimal decision boundary structure for $H=2, K=3$. The valid OOD region is overlaid in green.}
\label{fig_optimal}
\end{center}
\vskip -0.2in
\end{wrapfigure}
Fig. \ref{fig_optimal} illustrates an optimal structure, compared to a general (non-optimal) structure.
Under the assumption that class weights are balanced, $\pi_i=1/K$, and mean magnitudes are fixed, $||\pmb{\mu}_i||=c_3$, theorem \ref{theorem_hyperplane_structure_optimal} shows that the optimal structure minimises a regularised cross-entropy loss. This is rather intuitive -- training classes should be positioned in small tight clusters as far as possible from each other to minimise this loss. There is inherent symmetry, meaning weights must be evenly distributed. The angle, $\cos \theta_{i,j} = \frac{-1}{K-1}$, arises in spherical coding problems.
We will later empirically evidence that optimal structures produce lower cross-entropy losses (fig. \ref{fig_counterfact_structs}).

\subsection{Empirical Measurement of Decision Boundary Structures in Trained Networks}
\label{sec_decision_analysis}

Analysing properties of final-layer weights in trained networks reveals three key properties of decision boundary structure. Fig. \ref{fig_weight_analysis} plots histograms of $||\w||$, $\mathbf{b}$, and $\cos \theta_{i,j} \forall i \neq j$ for various architectures and datasets (details in section \ref{sec_app_exp_traineddecisionboundary}).
%These properties provide clear insight into the structuring of decision boundaries. 
1) Bias values tend to be small. 2) Weight vectors all have similar magnitude. 3) Weight vectors are approximately evenly distributed, with $\cos{\theta_{i,j}} \approx \frac{-1}{K-1}$.

% , meaning all decision hyperplanes pass close to the origin

\begin{figure}[h!]
%\vspace{-0.2in}
\resizebox{\textwidth}{!}{
\begin{tabular}{p{0.2\textwidth} p{0.2\textwidth} p{0.2\textwidth} p{0.2\textwidth} p{0.2\textwidth}}

\includegraphics[width=0.2\columnwidth]{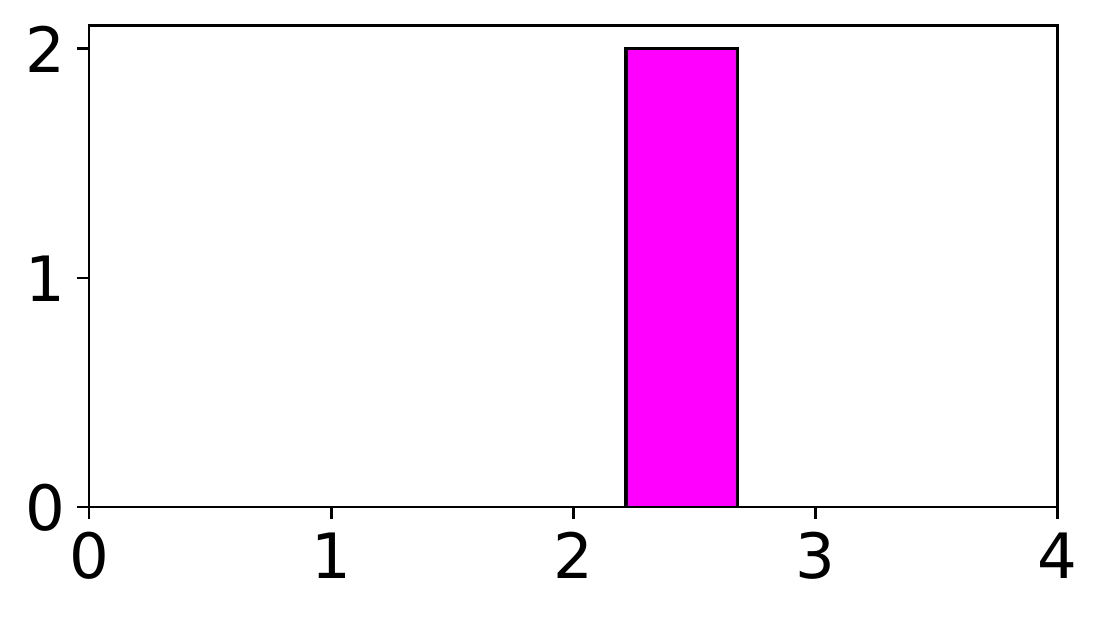}
\put(-75,60){\small LeNet, MNIST (0,1)} 
\put(-75,50){\small $K=2, H=4$} 
\put(-100,20){\small $||\w||$} 
  &
  \includegraphics[width=0.2\columnwidth]{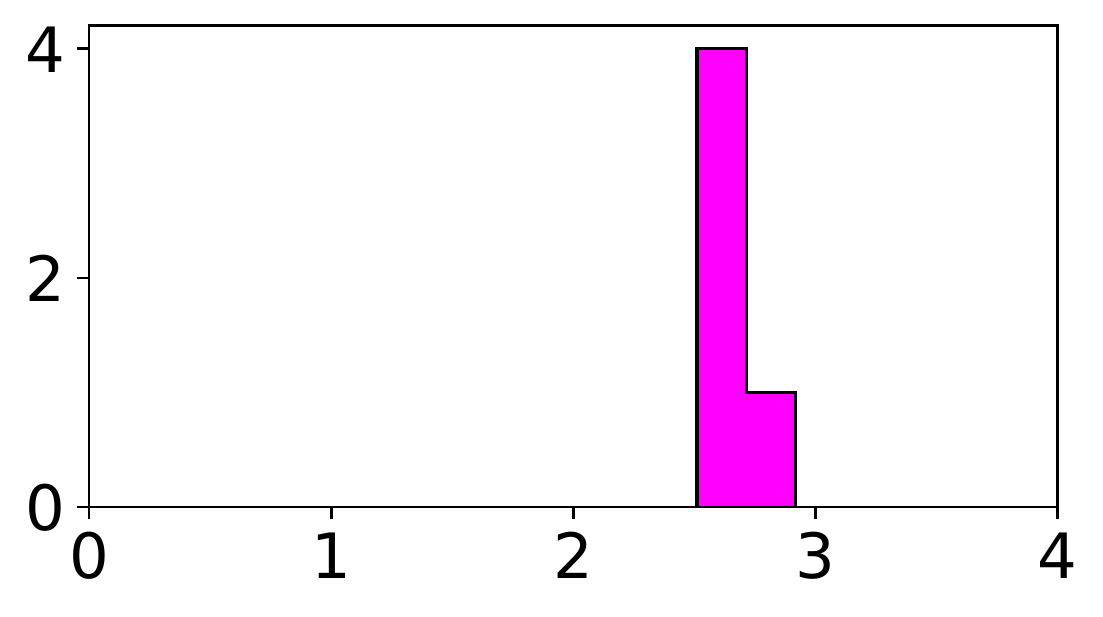}
  \put(-75,60){\small LeNet, MNIST (0...4)} 
\put(-75,50){\small $K=5, H=16$} 
&
\includegraphics[width=0.2\columnwidth]{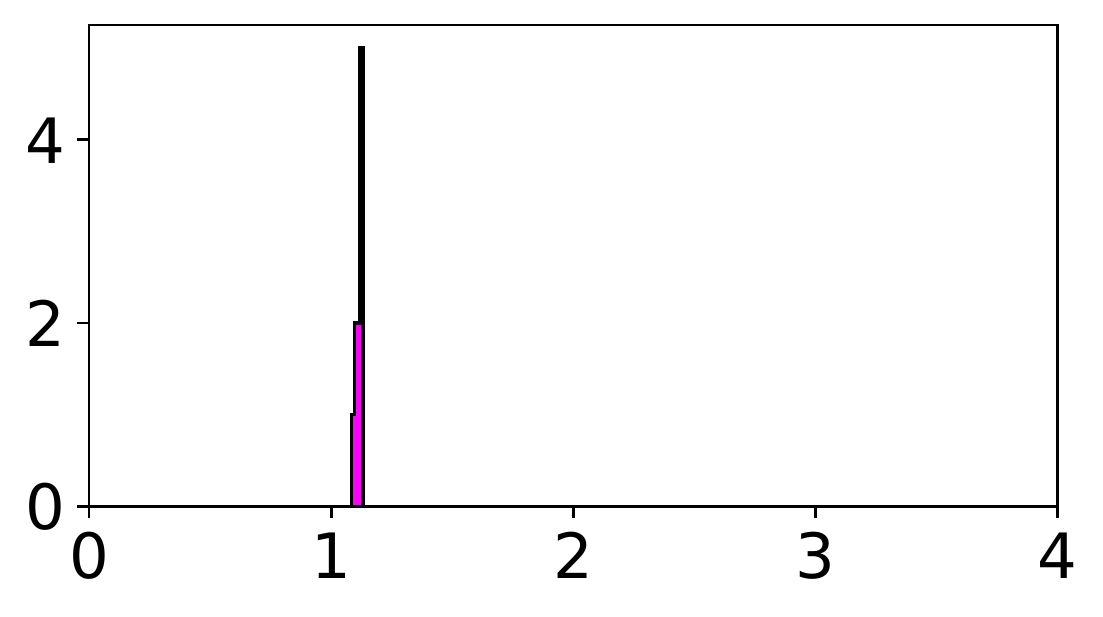}
\put(-75,60){\small ResNet18, C10} 
\put(-75,50){\small $K=10, H=512$} 
&
\includegraphics[width=0.2\columnwidth]{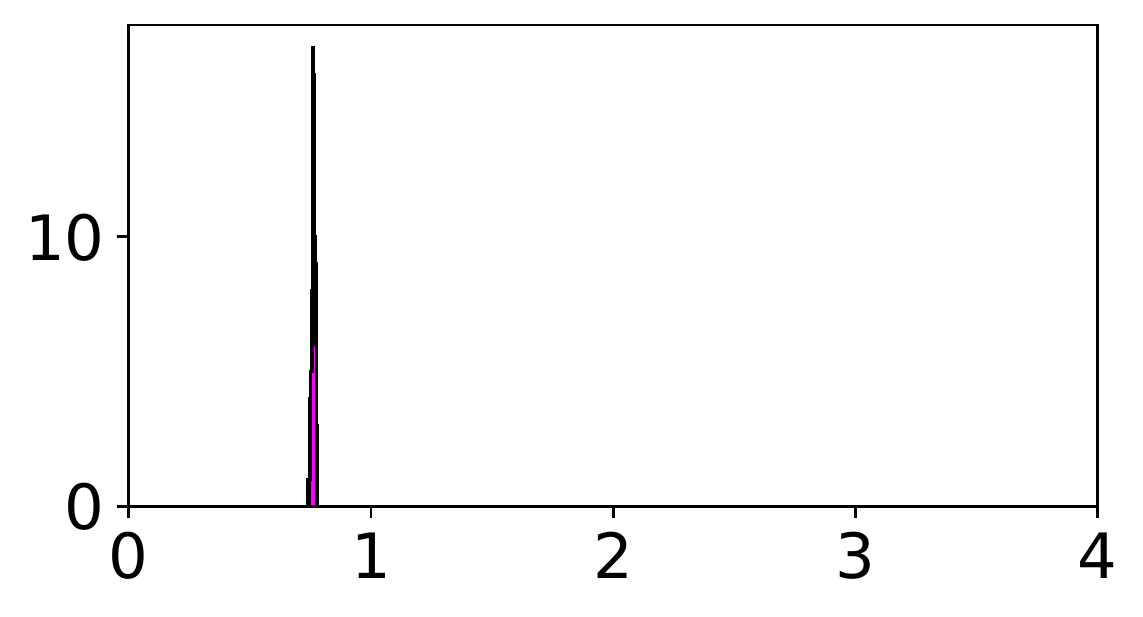}
\put(-75,60){\small ResNet18, C100} 
\put(-75,50){\small $K=100, H=512$} 
&
\includegraphics[width=0.2\columnwidth]{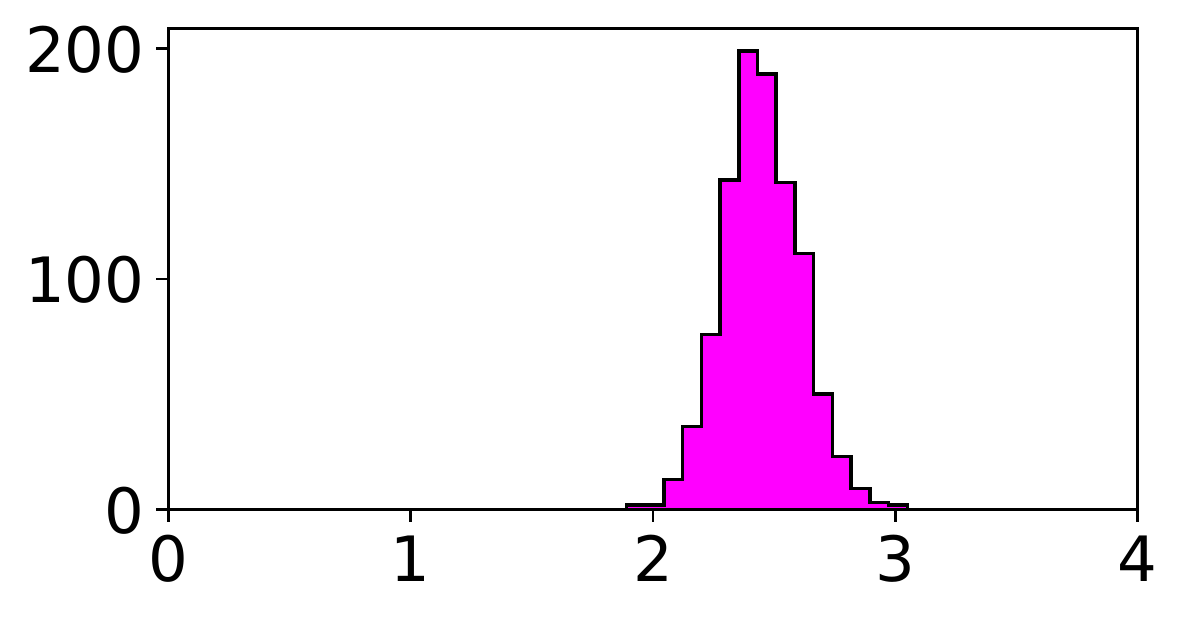} 
\put(-75,60){\small EfficientNetB0, ImageNet} 
\put(-75,50){\small $K=1000, H=1280$} 
\\

\includegraphics[width=0.21\columnwidth]{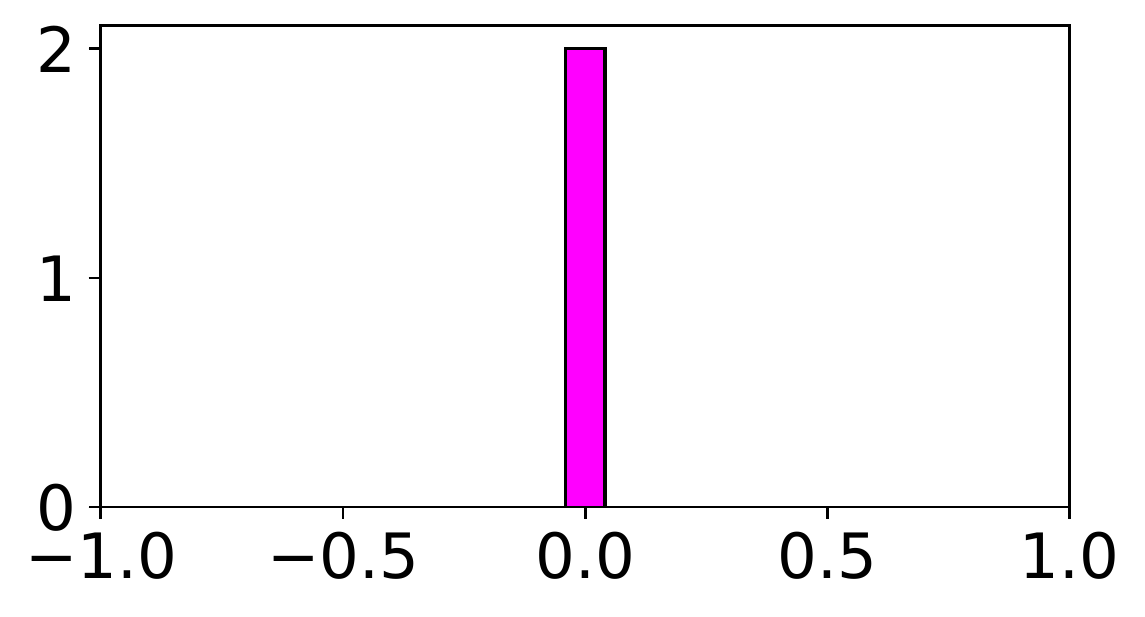}
\put(-94,20){\small $\mathbf{b}$} 
&
\includegraphics[width=0.21\columnwidth]{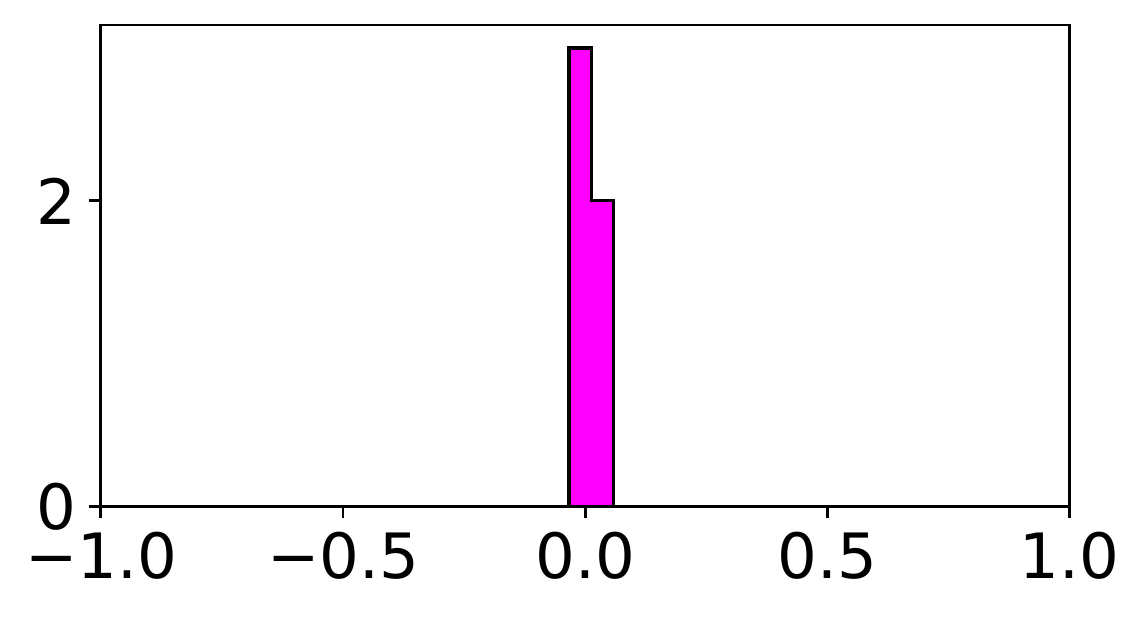}
&
\includegraphics[width=0.21\columnwidth]{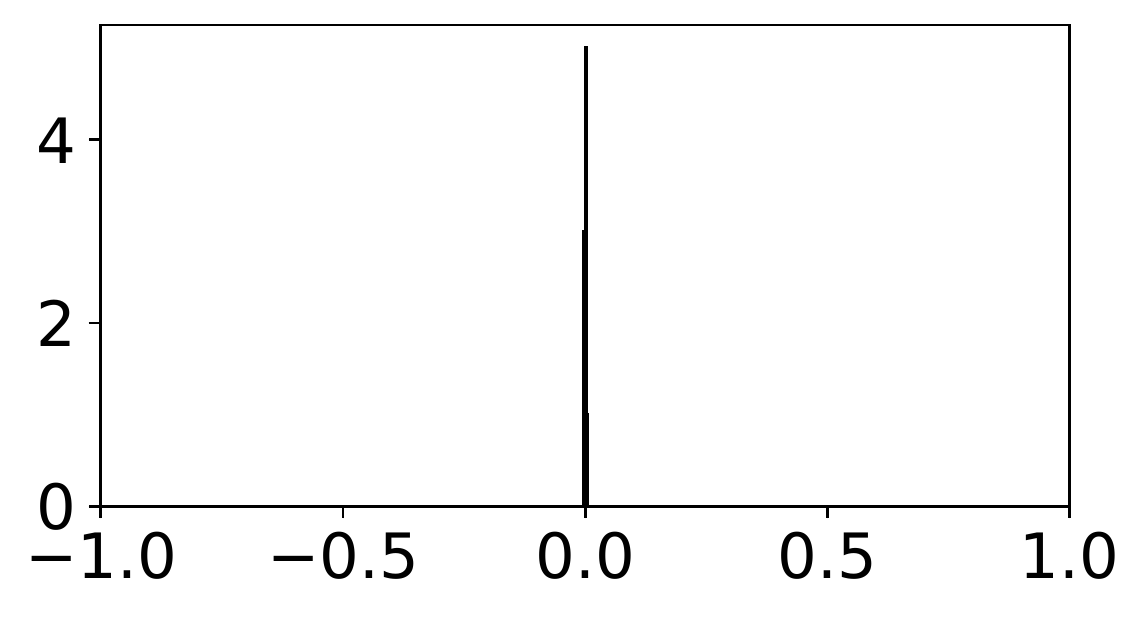}
&
\includegraphics[width=0.21\columnwidth]{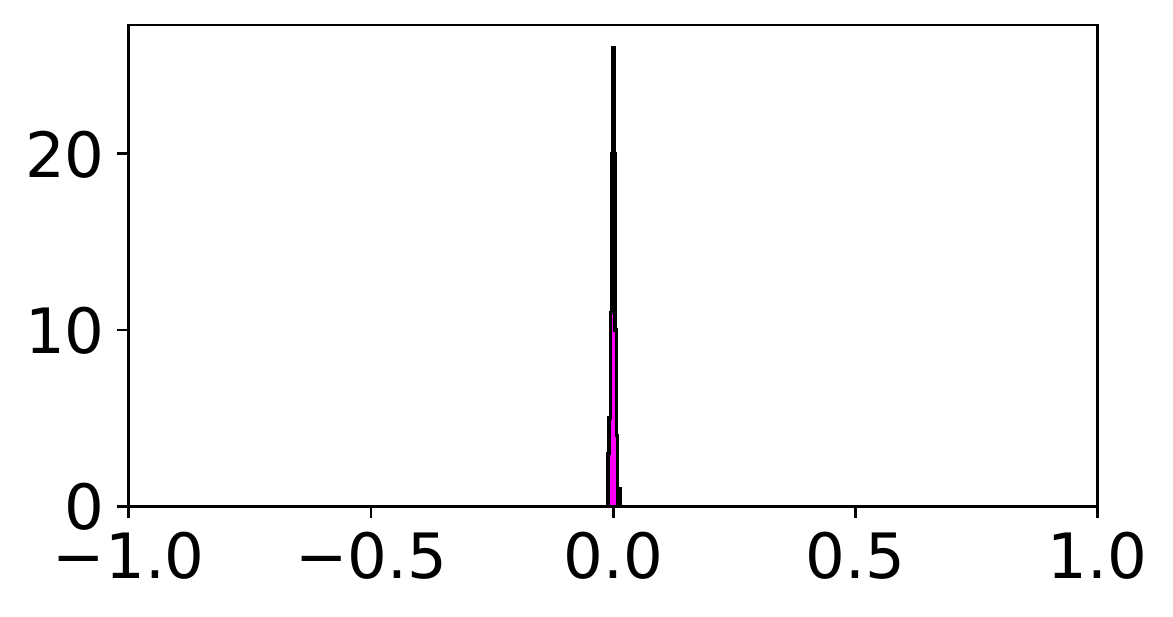}
&
\includegraphics[width=0.21\columnwidth]{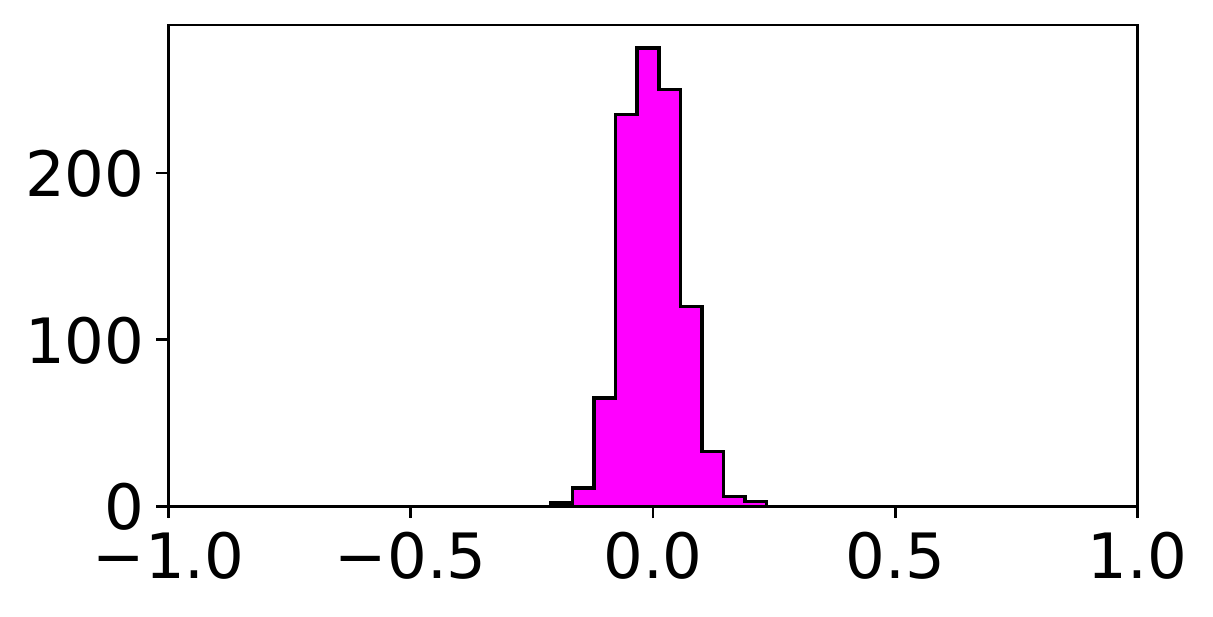} 
\\

\includegraphics[width=0.22\columnwidth]{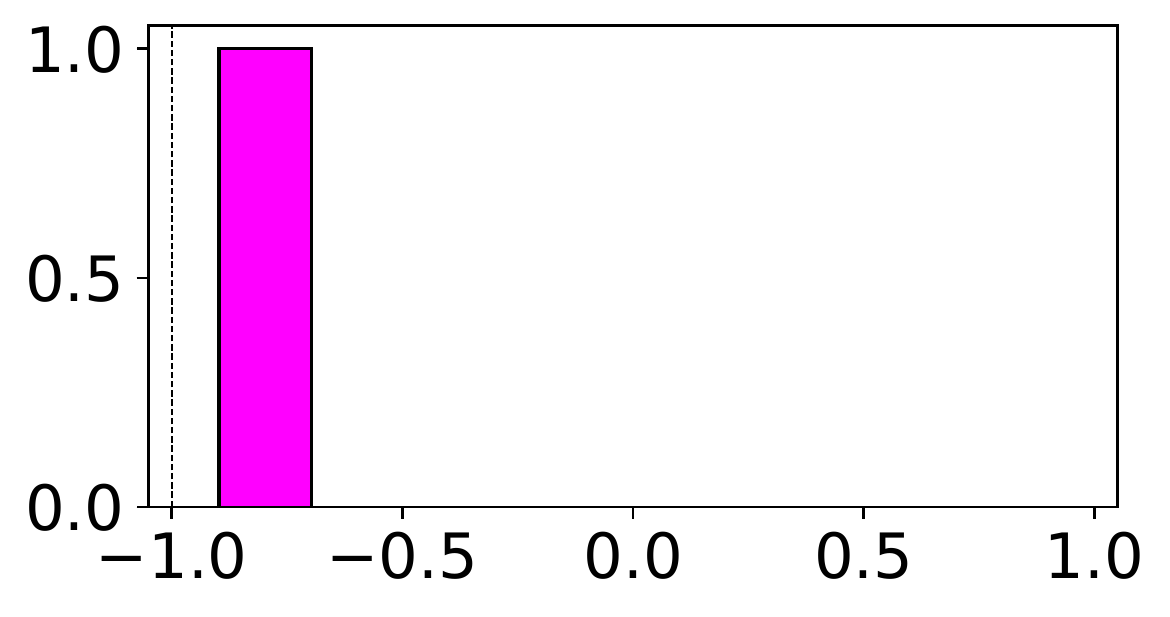}
\put(-115,20){\small $\cos \theta_{i,j}$} 
\put(-62,35){\small $\frac{-1}{K-1}$} 
\put(-72,35){\small $\leftarrow$} 
&
\includegraphics[width=0.22\columnwidth]{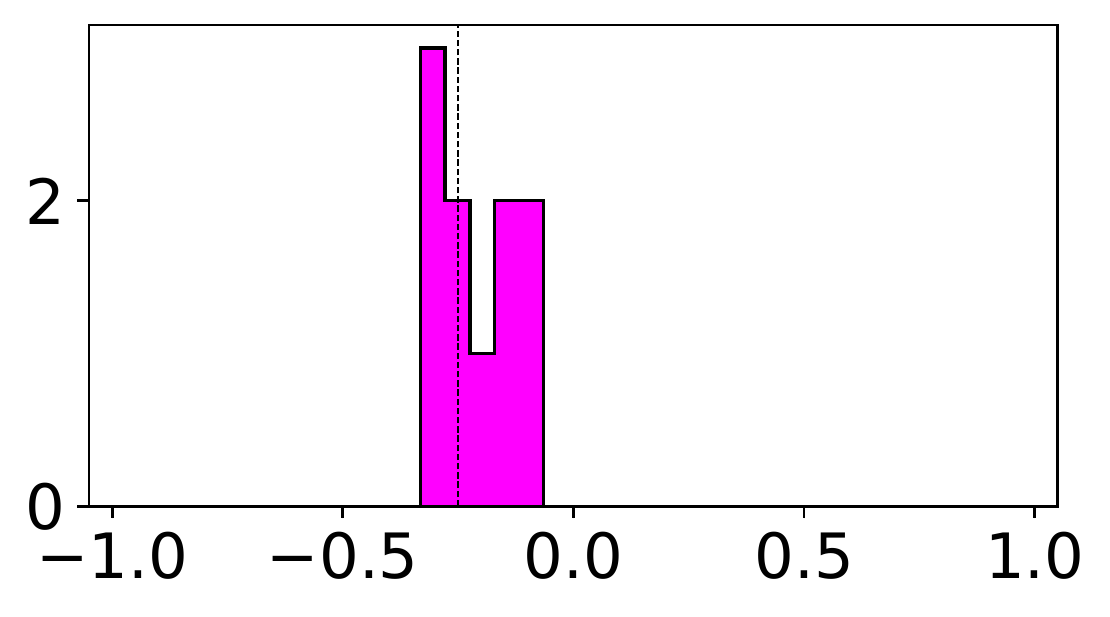}
\put(-40,38){\small $\frac{-1}{K-1}$} 
\put(-50,38){\small $\leftarrow$} 
&
\includegraphics[width=0.22\columnwidth]{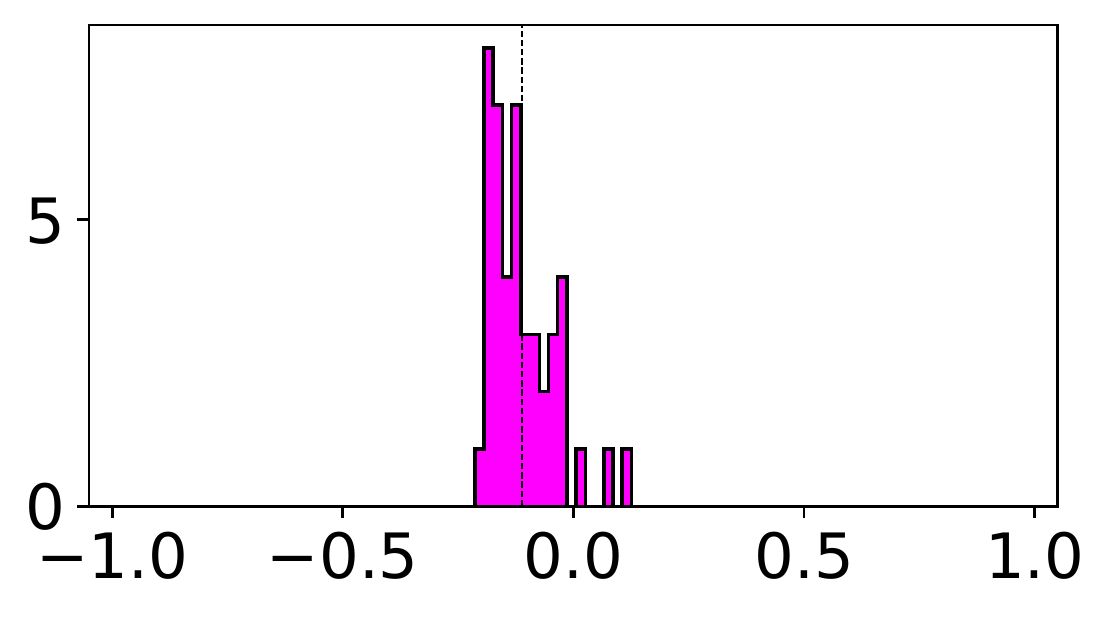}
\put(-33,35){\small $\frac{-1}{K-1}$} 
\put(-43,35){\small $\leftarrow$} 
%\put(-55,-12){\small $\frac{-1}{K-1}$} 
%\put(-48,-2){\small $\uparrow$} 
%\put(-35,38){\small $\frac{-1}{K-1}$} 
%\put(-45,38){\small $\leftarrow$} 
&
\includegraphics[width=0.22\columnwidth]{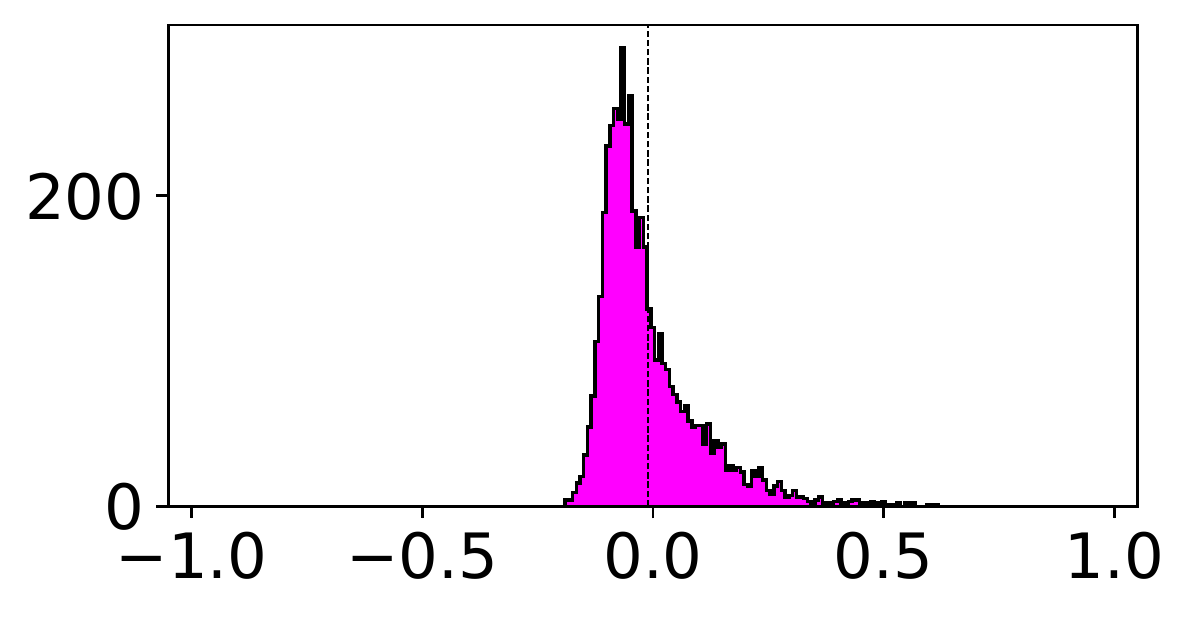}
\put(-26,35){\small $\frac{-1}{K-1}$} 
\put(-36,35){\small $\leftarrow$} 
%\put(-35,38){\small $\frac{-1}{K-1}$} 
%\put(-45,38){\small $\leftarrow$} 
&
\includegraphics[width=0.22\columnwidth]{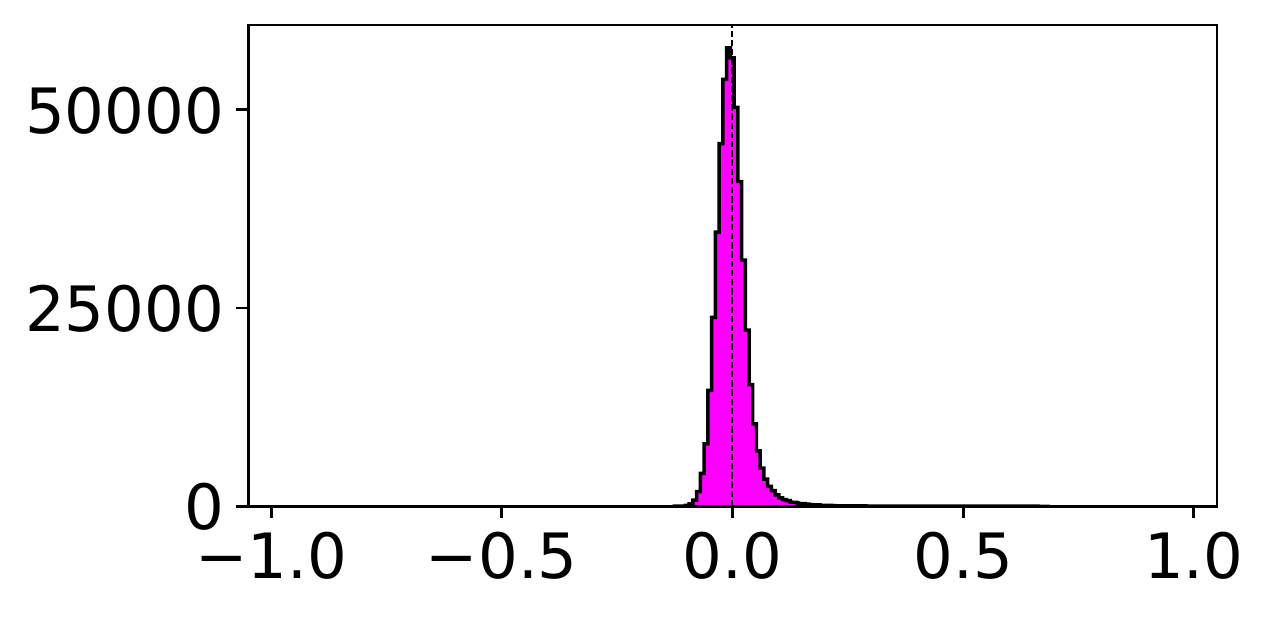}
\put(-26,35){\small $\frac{-1}{K-1}$} 
\put(-36,35){\small $\leftarrow$} 

\end{tabular}
}
\vskip -0.05in
\caption{Histograms of final-layer weight properties for various architectures/datasets show reasonable agreement with the theoretically optimal structure.}
\label{fig_weight_analysis}
\end{figure}

These properties match those of the optimal structure reasonably well, though for larger numbers of classes there tends to be feature sharing -- there are weak positive correlations between related classes, e.g. different dog breeds, which produces a positively skewed histogram.

Further evidence for agreement can be found in other experiments run in this paper. PCA visualisations of final-layer features in fig. \ref{fig_2d} show an evenly-spaced decision boundary, with well-separated data clusters (though these have not collapsed to a point). Fig \ref{fig_cifar_etc} shows that $\max_i \cos \theta_{\z,i} \approx 1$ for most of the training data, verifying that weight vectors and training data clusters are closely aligned, $\pmb{\mu}_i \approx c_3 \w_i $.

\subsection{The Effect of Decision Boundary Structure on OOD Detection}
\label{sec_optimal_boundary_importance}
%\subsection{Counterfactual Decision Boundary Structures}

Having described the decision boundary structure in neural networks, we show why this is a key factor in improving softmax confidence's OOD detection ability -- first by quantifying it's effect on the size of the valid OOD region, secondly by running OOD detection experiments with softmax weights frozen in sub-optimal structures.
%\tp{and showing several nice properties costheta ||z||}

\textbf{Effect on valid OOD region.} Fig. \ref{fig_optimal} depicts valid OOD regions for optimal and non-optimal structures. The volume of these regions is larger for the optimal structure (corollary \ref{corollary_optimal_maximises} verifies). This is useful for OOD detection since it creates an increased opportunity for OOD data to fall into these valid regions. 

%\ref{corollary_optimal_maximises}

% This turns out to be useful as will always reduce with ||z|| and costheta. This will not always be the case -- in next section give examples that validly partition the space but do not

%\tp{explain why this structure seems useful to softmax -- 1) maximises valid ood region. 2) gaurantees reduce Umax if if cos theta and ||z|| reduce. }
%we can assume something like a zero centered low variance gaussian for the OOD data.}
%\vspace{1in}
%In order to provide an increased opportunity for OOD data to fall into the valid OOD region, it seems reasonable that one would like to maximise the volume of this region. Whilst this is already maximal for $U_\text{density}$, for the softmax the volume will be affected by the orientation of the decision boundary. Specifically, the volume can be affected by the size of the margin. This is formalised in lemma \tp{x} but should be intuitive from simply observing figure \ref{fig_valid_ood_region}. \tp{prob add a fig in the appendix}

%Helpfully, there is evidence of neural networks being implicitly encouraged to learn maximum margin decision boundaries \citep{Soudry2017, Lyu2019}, which suggests the volume of the valid OOD region is optimised. (Readers may be more familiar with this in the context of support vector machines, where this is explicitly optimised.)

\textbf{Counterfactual experiments.}
To empirically verify that good structuring is important for good OOD detection, we present a short set of experiments. We take a LeNet architecture ($H=16$) and freeze the softmax weight vectors in various configurations, then train the rest of the network on a subset of three MNIST classes ($K=3$) ($\mathcal{D}_\text{in}$), measuring OOD detection using Fashion MNIST as OOD ($\mathcal{D}_\text{out}$). Trainable refers to not freezing the final layer weights. Further details in section \ref{sec_app_exp_counterfactual_structs}.
%(intended to be demonstrative rather than exhaustive)

Fig. \ref{fig_counterfact_structs} shows results along with illustrations of the structures tested. Whilst structure has limited impact on test accuracy -- all structures achieve around 99.5\% -- there are large differences in OOD detection ability, with only the trainable and optimal structures achieving high AUROC's. Cross-entropy loss is larger for non-optimal structures, supporting our theoretical result that the optimal structure minimises this.
% , if one has control over $\p_\text{in}(\z)$ and $\w_i$'s, 
%This demonstrates that the structure typically used by neural networks is important to encouraging softmax confidence correlate with uncertainty.

\begin{figure}[t]
% \vskip 0.05in
\begin{center}
%\hspace{0.1in} \textcolor{gray}{1) Trainable} \hspace{0.4in} \textcolor{black}{2) Optimal \hspace{0.4in} 3) Sandwich \hspace{0.4in} 4) Stack \hspace{0.4in} 5) Lopsided}
%\vspace{-0.1in}
%\textcolor{red}{\rule{1in}{1pt}} \hspace{0.05in} \textcolor{blue}{\rule{4in}{1pt}}
\includegraphics[width=1.\columnwidth]{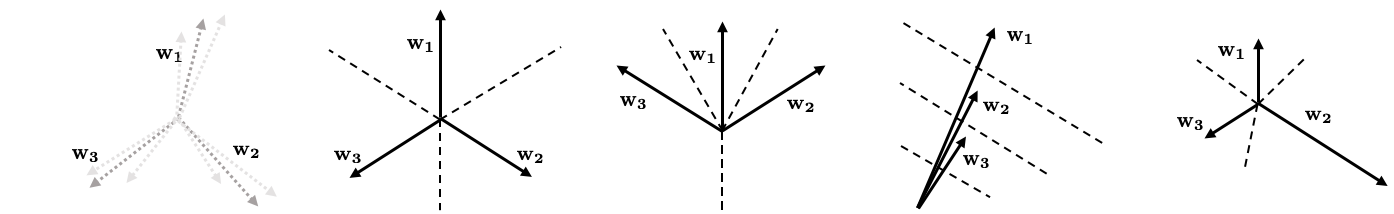}
%  \put(-390,0){\small Trainable}
	\put(-370,72){\small \textcolor{black}{Not frozen}}
	%\put(-190, 70){\small \textcolor{black}{Last-layer weights frozen}}
	%\put(-190, 70){\small Last-layer weights frozen}}
	\put(-300, 72){\small \textcolor{black}{\rule{1.35in}{0.5pt} Last-layer weights frozen \rule{1.35in}{0.5pt}}}
    \put(-410,-10){\small Accuracy:}
    \put(-410,-20){\small Cross-ent.:}
    \put(-410,-30){\small AUROC:}
    \put(-380,60){ 1) Trainable}
    \put(-370,-10){\small 99.5\% $\pm$ 0.0 }
    \put(-370,-20){\small 0.011 $\pm$ 0.002}
    \put(-370,-30){\small \textcolor{blue}{98.6\% $\pm$ 0.4}}
    \put(-360,-40){\small \textcolor{blue}{More reliable uncertainties}}
    \put(-300,60){ 2) Optimal}
    \put(-300,-10){\small 99.6\% $\pm$ 0.0}
    \put(-300,-20){\small 0.014 $\pm$ 0.004}
    \put(-300,-30){\small \textcolor{blue}{98.9\% $\pm$ 0.1}}
    \put(-220,60){ 3) Sandwich}
    \put(-220,-10){\small 99.5\% $\pm$ 0.0}
    \put(-220,-20){\small 0.360 $\pm$ 0.002}
    \put(-220,-30){\small \textcolor{red}{71.6\% $\pm$ 0.8}}
    \put(-140,60){ 4) Stack}
    \put(-140,-10){\small 99.3\% $\pm$ 0.0}
    \put(-140,-20){\small 0.276 $\pm$ 0.020}
    \put(-140,-30){\small \textcolor{red}{69.5\% $\pm$ 1.3}}
    \put(-60,60){ 5) Lopsided}
    \put(-60,-10){\small 99.2\% $\pm$ 0.1}
    \put(-60,-20){\small 0.079 $\pm$ 0.008}
    \put(-60,-30){\small \textcolor{red}{73.6\% $\pm$ 6.0}}
    \put(-162,-40){\small \textcolor{red}{Less reliable uncertainties}}
\vspace{-0.05in}
\caption{Illustration of various softmax structures, along with results when $\mathcal{D}_\text{in}$ is $K=3$ classes of MNIST, and $\mathcal{D}_\text{out}$ is Fashion MNIST. Whilst all structures achieve similar test accuracy, AUROC is largest when the softmax weight vectors are either trainable, or fixed in an optimal configuration. Fig. \ref{fig_weird_structures_pca} provides PCA visualisations of the trained networks. Mean $\pm$ 1 std. err. over three runs.}
\label{fig_counterfact_structs}
\end{center}
% \vskip -0.2in
\end{figure}

%\section{The Importance of Task-Specific Features}
%\section{Implicit Bias 2: Filtering for Task Specific-Features}
\section{Implicit Bias 2: Deep Networks Filter for Task-Specific Features}
%\section{Implicit Bias 2: Deep Networks Filter out Task-Irrelevant Features}
\label{sec_misleading_intuition}

Whilst the decision boundary structure typically learnt by neural networks is helpful for OOD detection, it is not sufficient by itself. This is illustrated by fig. \ref{fig_2d}; in both tasks shown, the final-layer visualisations suggest good decision boundary structure has been learnt, yet in fig. \ref{fig_2d}b, the network maps OOD data to the low confidence region of the softmax, whilst in fig. \ref{fig_2d}a it does not. 
This section studies a second implicit bias that emerges in deep networks also important in making softmax correlate with epistemic uncertainty.

%Consider again the low-dimensional example in fig. \ref{fig_2d}a, which used an MLP with two full-connected ReLU layers. The visualisation of the final-layer features reveals an approximation of the optimal structure has been learnt, which the previous section showed was important for good OOD detection. Yet OOD data only rarely falls inside the valid OOD region. Meanwhile, the high-dimensional example paired with the deeper network creates a similar structure, with most OOD data appearing to fall into the valid OOD region. Why the difference?
%This illustrates that good decision boundary structure is necessary, but not sufficient, for good uncertainties. This section explores a second important implicit bias that only emerges in deeper networks trained on more complex tasks.

%\subsection{What Features Do Neural Networks Extract?}
% \subsection{Final-Layer Features}
\label{sec_feats_what_feats}

%\tp{general discussion, info bottle neck link. swap pairing visualisation. depth experiment. }
\textbf{What final-layer features are learnt?}
Early-layer features of a deep network are known to be generic, while later features evolve in specialised roles to detect and separate the classes present in the training distribution \citep{Zeiler2014} -- the goal of final-layer features is to create clusters that can be linearly separated by a softmax layer. 
Information about the input that does not directly help with the classification task is thought to be compressed out \citep{Tishby2015, saxe2018}. We therefore view a deep network as a filter, optimised over a training distribution to filter out everything except distinguishing features of the classes.

% as it travels through the network

%The final-layer should therefore represent parts of the training distribution that distinguish classes from each other. There is no incentive to represent anything else about the training distribution. 

%This intuition is formalised by the information bottleneck principle, which theorises that networks learn compressed features, extracting a minimum amount of information that allows distinguishing training classes, i.e. optimising, $\min I(\x;\z) - \beta I(\z;\y)$, with $I(\cdot,\cdot)$ as mutual information and $\beta \geq 0$ \citep{Tishby2015}\footnote{Whilst aspects of this theory are disputed, compression is not one of them \citep{saxe2018}.}. 

% that have evolved to be sensitive to specific features present in the training data,

\textbf{How do final-layer features respond to OOD data?} 
%Whilst prior work has shown that OOD inputs can be intentionally crafted to create high softmax confidence predictions \citep{Szegedy2014, Nguyen2015, Alcorn}, we consider the case where 
%We argue that whilst this filtering mechanism is trained and tested on the training distribution, we argue 
%and several works have shown that OOD inputs can be intentionally crafted to create high softmax confidence predictions \citep{Szegedy2014, Nguyen2015, Alcorn}. 
%Convolutional filters produce larger activations when an input pattern matches the pattern of the filter weights. Filter weights are optimised to match distinguishing features in the training data.
%OOD data is unlikely to contain the distinguishing features that the convolutional filters are trained to extract, so activations tend to be of \textit{lower magnitude}. Where some of these distinguishing features are present, it will rarely be in the same combinations seen in training -- this results in \textit{unusual patterns} of final-layer activations.
%For an activation input patch of fixed magnitude, convolutional filters produce larger activations when the input patch matches the pattern of filter weights.
Convolutional filters operate on patches of activations from prior layers. The output of the convolutional filter is maximised when the pattern of patch activations matches the pattern of weights in the filter. 
%Filter weights are optimised to match distinguishing features in the training data.
Since OOD data is unlikely to contain the distinguishing features that the convolutional filters are trained to extract, OOD activations tend to be of \textit{lower magnitude}. Where some of these distinguishing features are present, it will rarely be in the same combinations seen in training -- this results in \textit{unusual patterns} of final-layer activations.

% Following training, we expect weight patterns to produce large activations when they encounter a feature that can

%\textbf{Demonstrative experiment.} 
To illustrate this, we train a LeNet with $H=64$ final-layer neurons. 
Fig. \ref{fig_activation_mnsit_fashion} plots final-layer activations for examples of each training class and OOD inputs, with $\mathcal{D}_\text{in}$ and $\mathcal{D}_\text{out}$ as denoted.
Inputs of the same training class tend to produce consistent patterns of final-layer activations, while OOD inputs produce activation patterns that are of lower magnitude, and/or in unusual combinations.
Final-layer activation magnitude is quantified by, $||\z||$, and the activation `familiarity' by, $\max_i \cos \theta_{i,\z}$.

More comprehensively, fig. \ref{fig_cifar_etc} plots histograms of $||\z||$ and $\max_i \cos \theta_{i, \z}$ for inputs from the training distribution vs. OOD inputs, for the ResNet18's trained on various datasets as described later in section \ref{sec_failure_measure}. These plots show $||\z||$ and $\max_i \cos \theta_{i, \z}$ tend to reduce for OOD data.

\textbf{What do these OOD final-layer activations do to softmax confidence?} We can write the softmax using, $\w_i^\intercal \z = ||\z|| \, ||\w_i|| \cos \theta_{i,\z}$, to gain insight into the effect of $||\z||$ \& $\cos \theta_{i,\z}$ on softmax confidence,
\begin{align}
\sigma(\z)_i &= \frac{\exp{\lvert \lvert \w_i \rvert \rvert \, \lvert \lvert \z \rvert \rvert \cos \theta_{i,\z}}}{\sum_j \exp{\lvert \lvert \w_j \rvert \rvert \, \lvert \lvert \z \rvert \rvert \cos \theta_{j,\z}}}. \label{eq_softmax_mag_angle} 
%U_\text{max}(\z) &= \frac{\exp{\lvert \lvert \w_i \rvert \rvert \, \lvert \lvert \z \rvert \rvert \cos \theta_{i,\z}}}{\sum_j \exp{\lvert \lvert \w_j \rvert \rvert \, \lvert \lvert \z \rvert \rvert \cos \theta_{j,\z}}}, \label{eq_softmax_mag_angle} 
\end{align}
Here, $||\z||$ plays an identical role to the temperature parameter in Platt/temperature scaling, where logits are warmed, $\sigma(\z/T)_i$ for $T>1$, to produce softmax distributions with lower $U_\text{max}$ and $U_\text{entropy}$,  \citep{Guo2017} (formally shown in proposition \ref{prop_optimal_zmag} for any decision boundary structure). The role of $\max_i \cos \theta_{i,\z}$ is less straightforward and for general softmax weight structures there is no guarantee decreasing this will result in decreased softmax confidence (proposition \ref{prop_optimal_notalways} provides an example). But crucially, if combined with an optimal decision boundary structure, lower $\max_i \cos \theta_{i,\z}$ does reduce softmax confidence -- as proved in theorem \ref{theorem_optimal_costheta_k2} for $K=2$ \& $K=3$, and theorem \ref{theorem_optimal_costheta} for large $K$. %\tp{maybe need some more detail here?}

\begin{figure}[t!]
% \vskip 0.05in
\begin{center}
\hspace{-0.3in}
\includegraphics[width=0.5\columnwidth]{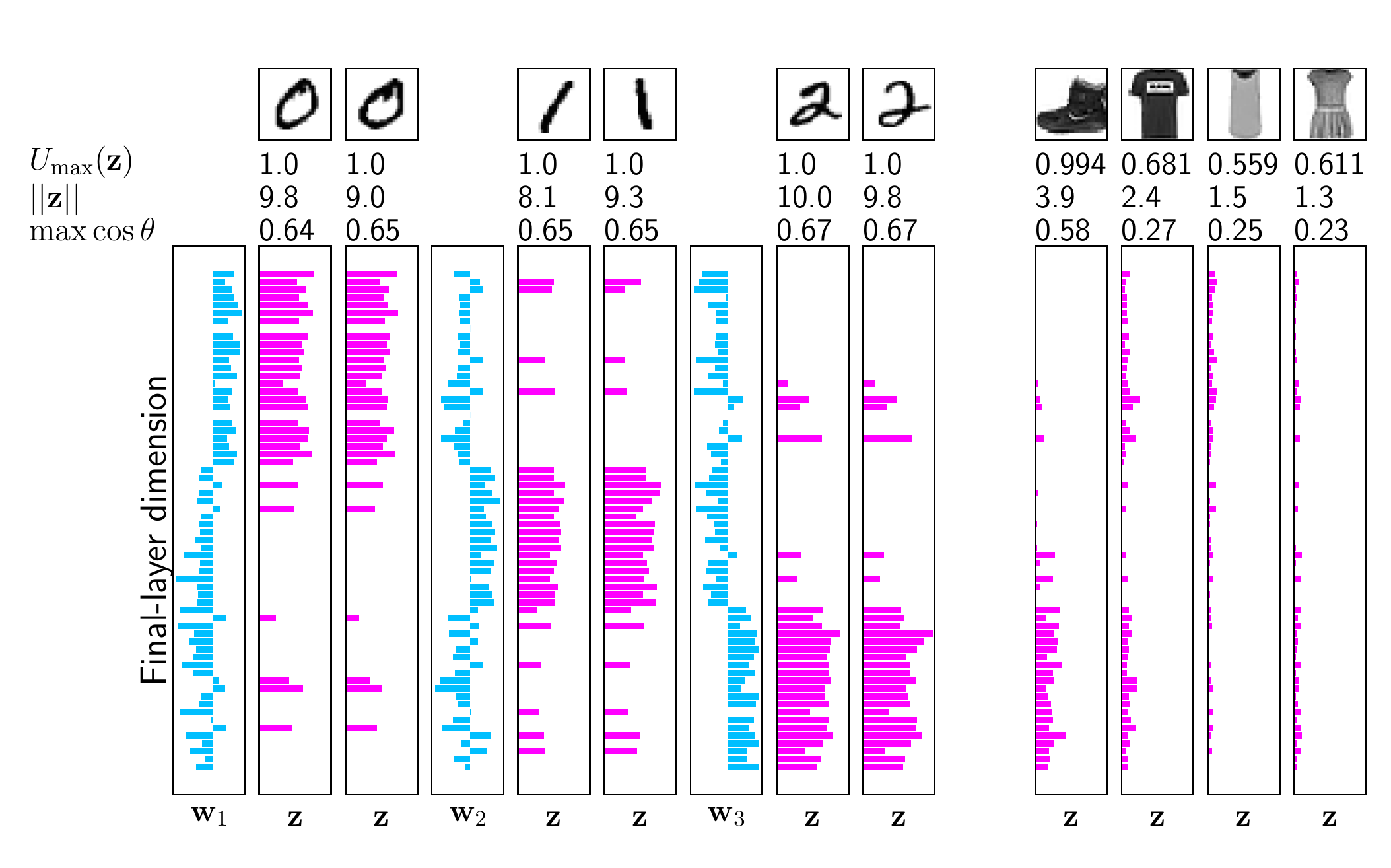}
\put(-164,120){\small $\mathcal{D}_\text{in}=$MNIST (3 classes)}
\put(-64,120){\small $\mathcal{D}_\text{out}=$ F. MNIST}
\hspace{0.2in}
\includegraphics[width=0.5\columnwidth]{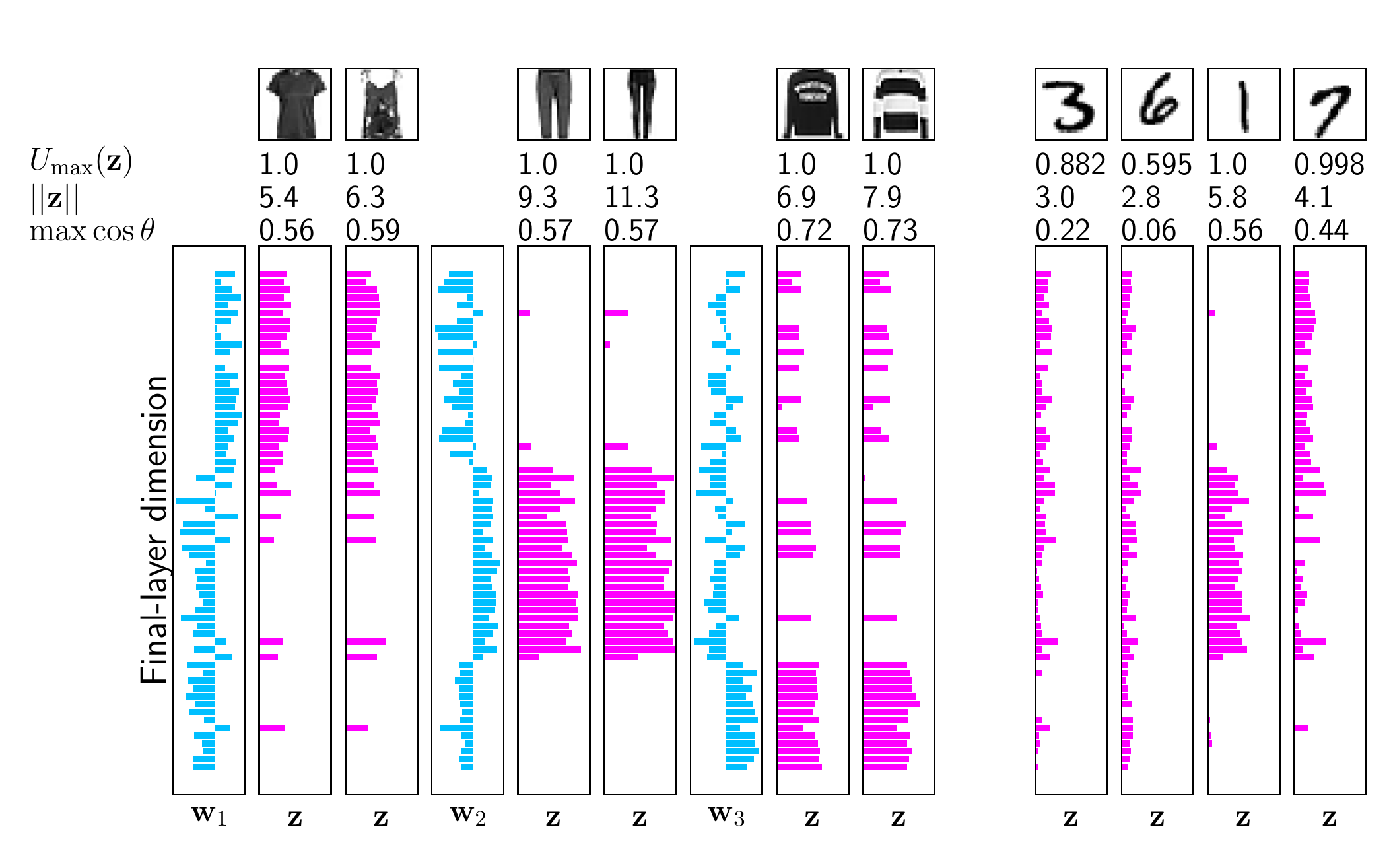}
\put(-164,120){\small $\mathcal{D}_\text{in}=$F. MNIST (3 classes)}
\put(-55,120){\small $\mathcal{D}_\text{out}=$MNIST}

\caption{Inputs from the same training class produce consistent patterns of final-layer activations, roughly aligning with their weight vectors. OOD inputs tend to produce final-layer activations of lower magnitude and/or in unusual combinations. When combined with good decision boundary structure, this often causes OOD data to fall into the valid OOD region.}
\label{fig_activation_mnsit_fashion}
\end{center}
\vskip -0.1in
\end{figure}

\subsection{Empirical Demonstration of the Filtering Effect}
\label{sec_filter_empirical}
% To demonstrate the p

%How can we evidence that OOD data tends to reduce actviation magnitudes, $||\z||$, and create unusual patterns of final-layer activations, $\max \cos \theta_{i, \z}$?
%\begin{wrapfigure}{r}{0.51\columnwidth}
%\begin{center}
%\vskip -0.2in
%\includegraphics[width=0.25\columnwidth]{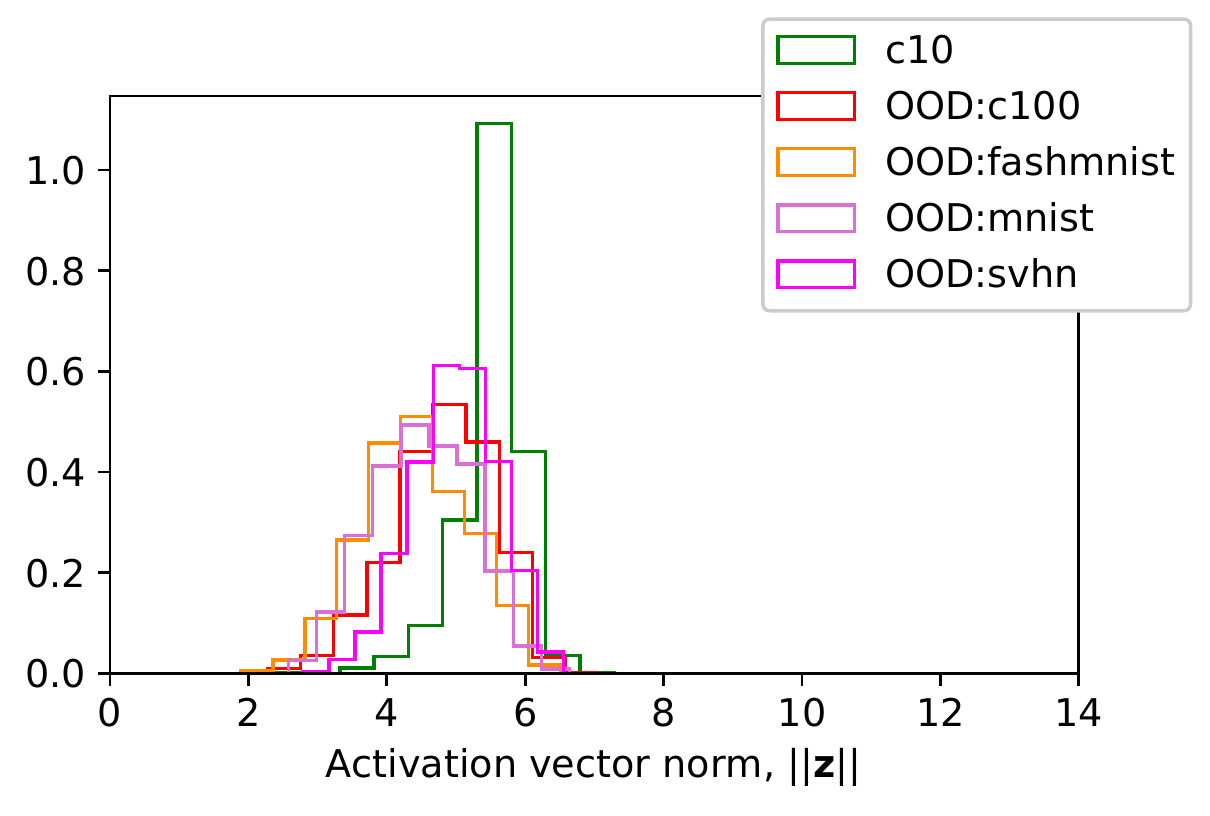}
%\includegraphics[width=0.25\columnwidth]{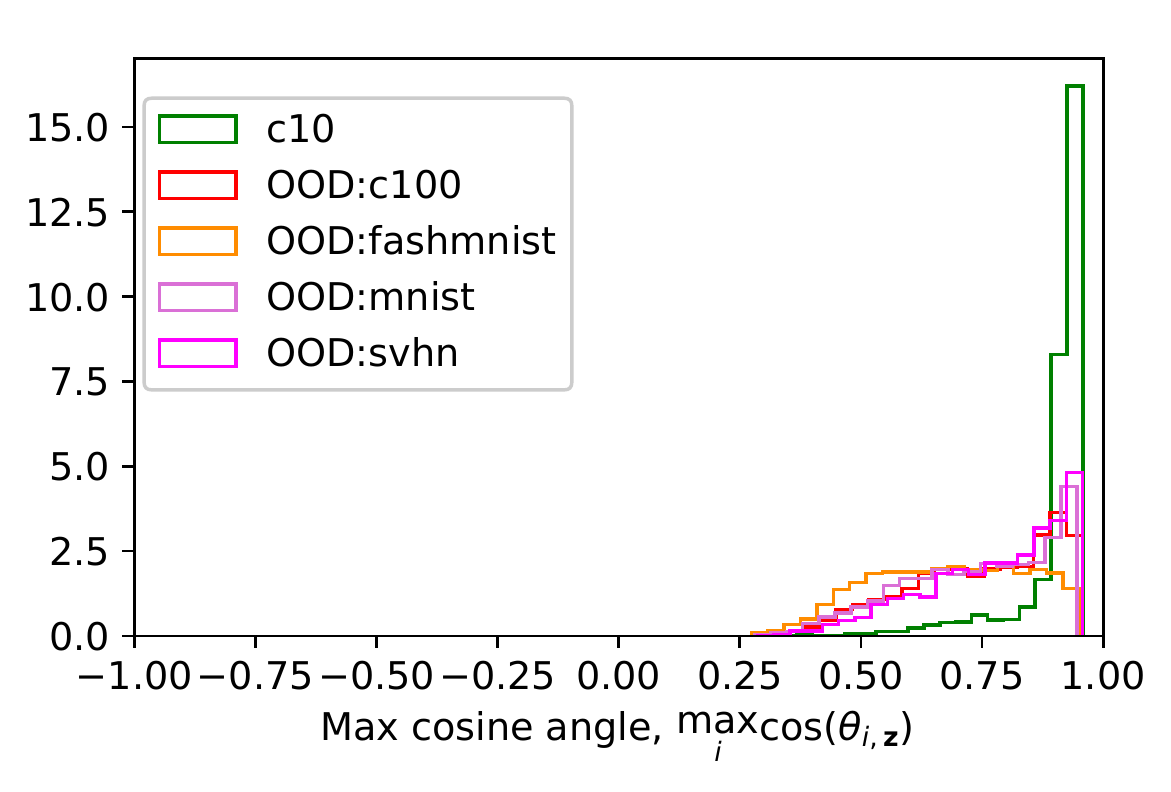}
%%\put(-204,83){\small $\mathcal{D}_\text{in} = $ Cifar 10}
%%\vspace{0.2in}
%\end{center}
%\vskip -0.1in
%\caption{$||\z||$ and $\max_i \cos \theta_{i, \z}$ histograms for ResNet18's trained on $\mathcal{D}_\text{in}=$CIFAR10.}
%\label{fig_cifar10_filter}
%\vskip -0.1in
%\end{wrapfigure}

%For the filtering to emerge, there must be an opportunity to do feature extraction, hence we can cannot show this effect on low-dimensional data. We now consider the simplest training distribution that does still allow demonstration of this effect.

Consider a literal interpretation of a neural network as a filter; when bombarded with \textit{any} allowable input, softmax confidence should be highest on inputs similar to that which the network has been trained on. 
Typical OOD benchmarks test this over a tiny fraction of the input space covered by some unrelated dataset. We now consider a more exhaustive test; defining a small, bounded training distribution, and in brute-force style generating OOD data sampled uniformly from the input space and examining which inputs are assigned lowest uncertainty by the softmax.

% Using unrelated datasets as OOD data tests the filtering effect of on a tiny fraction of the input space that falls outside the training distribution.

% If a deep neural network acts as a filter for specific parts of the training distribution, when fed with a variety of inputs, its softmax confidence should be highest for inputs that contain distinguishing training features, and lower for all other inputs. Whilst OOD benchmarks test this to some degree, only a tiny fraction of the entire input space is covered when using unrelated datasets. We consider a more thorough test, defining a small, bounded training distribution, and attempting, brute-force style, to generate samples that maximise softmax confidence. 

We use MNIST digits, downsampled to 9$\times$9 pixels (if smaller the digits become unrecognisable) and binarised, as a training distribution. The entire input space consists of $2^{81}$ possible patterns. We proceed to sample $2^{27} \approx 100$ million inputs uniformly from this space, and collect the most confident examples from each class. Fig. \ref{fig_noise} shows that, although noisy, these examples all contain features reminiscent of the training data, evidencing the presence of the filtering effect.
\begin{figure}[b!]
% \vskip 0.05in
\begin{center}
\hskip -0.8in
\includegraphics[width=0.25\columnwidth]{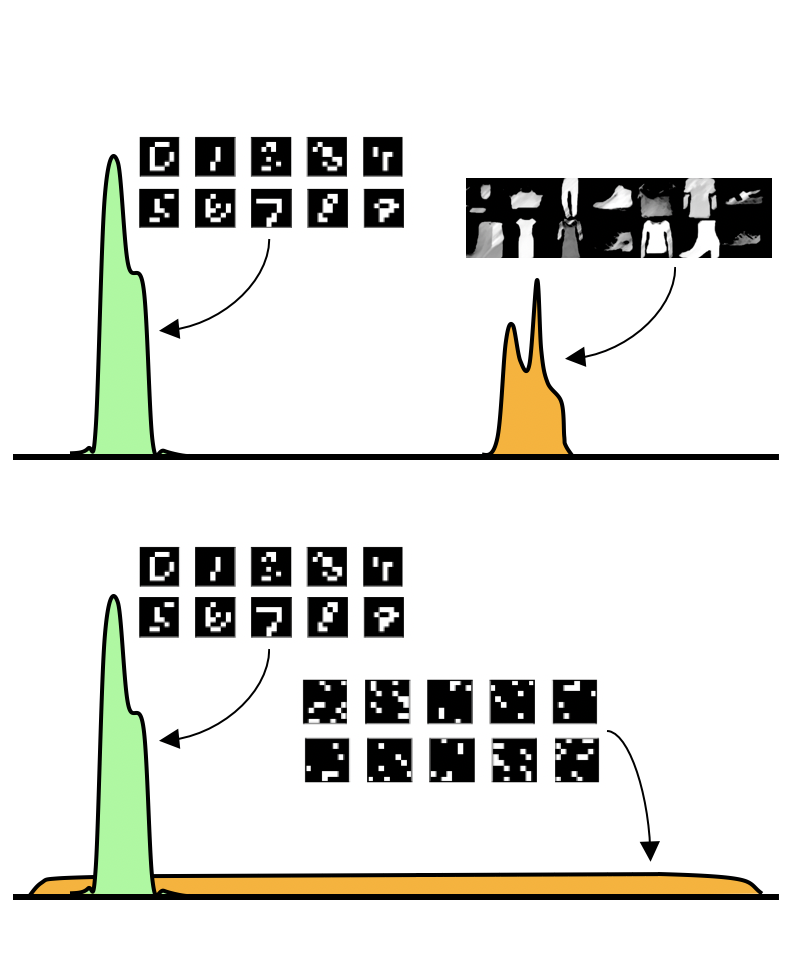}
\put(-70,2){\footnotesize Input space}
\put(-45,40){\footnotesize Exhaustive test}
\put(-45,105){\footnotesize Typical benchmark}
\hskip 0.4in
\includegraphics[width=0.4\columnwidth]{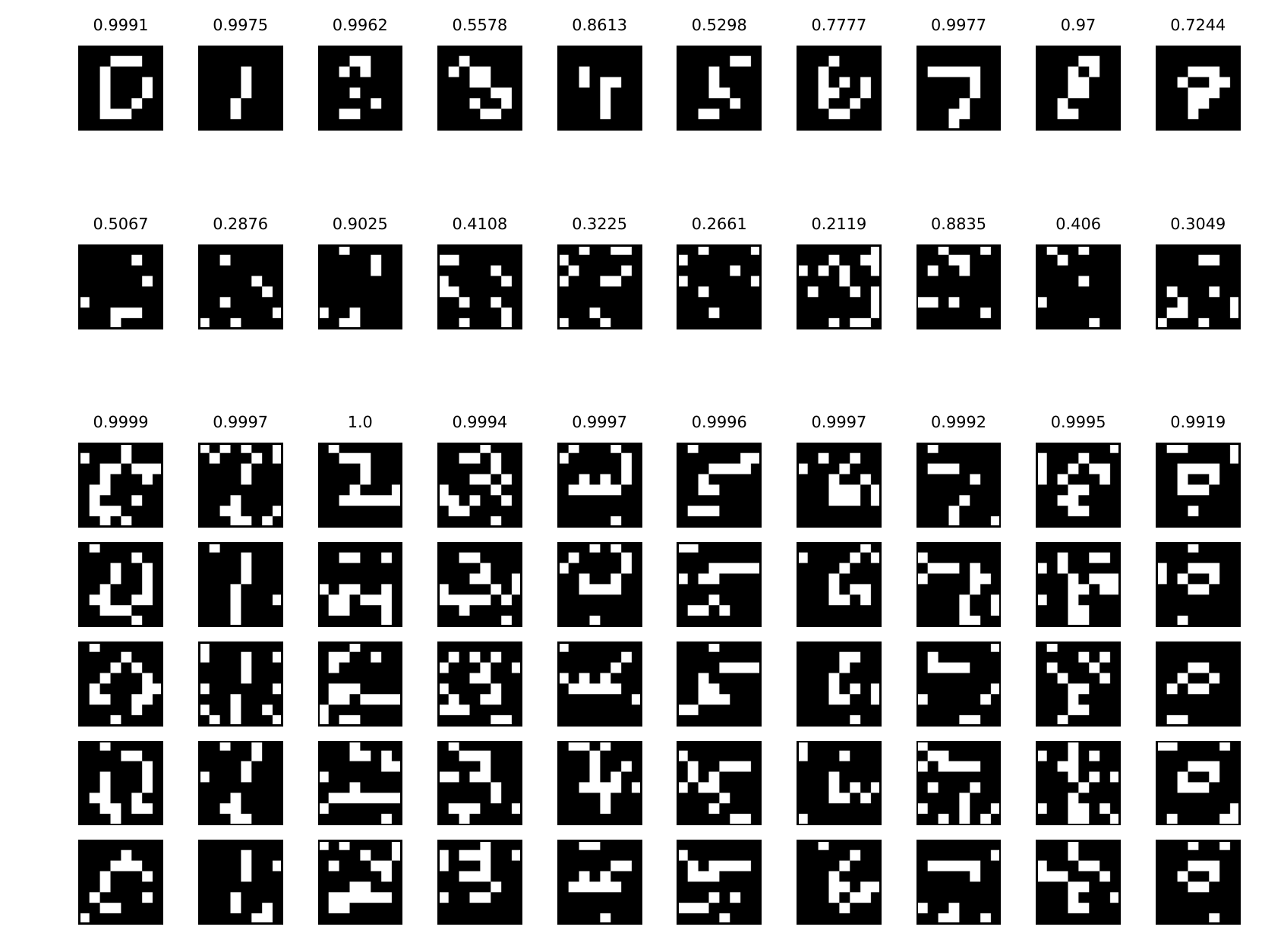}
\put(0,105){\footnotesize $\mathcal{D}_\text{in}=$ 9$\times$9, binarised MNIST}
\put(0,83){\footnotesize $\mathcal{D}_\text{in} \cup \mathcal{D}_\text{out}=$ 100 million inputs}
\put(0,73){\footnotesize sampled uniformly in input space}
\put(0,35){\footnotesize Inputs with lowest $U_\text{max}$,}
\put(0,25){\footnotesize five samples per class}
%, $\mathcal{D}_\text{out}=\text{fashMNIST}$}
\caption{A convolutional network is trained on MNIST, then bombarded with 100 million samples drawn uniformly from the input space. Shown are the top 5 most confident samples for each class. This demonstrates that deep neural networks can filter the entire input space for task-specific features.}
\label{fig_noise}
\end{center}
\vskip -0.1in
\end{figure}

In section \ref{sec_filter_depth_exp} we find empirically that depth is important in strengthening this filtering effect.

%One way to show this effect is to plot histograms of $||\z||$ and $\max_i \cos \theta_{i, \z}$ for data drawn from the training distribution, compared to OOD data. 
%We do this in fig. \ref{fig_cifar_etc} for ResNet18's trained on five different datasets, with the other four used as $\mathcal{D}_\text{out}$. % (fig. \ref{fig_cifar10_filter} shows this for CFIAR10). 
%All cases show the expected behaviour, with lower $||\z||$ and $\max_i \cos \theta_{i, \z}$ for $\mathcal{D}_\text{out}$. 

\subsection{Misleading Intuition}
\label{sec_feats_intuition}
Consider again the low-dimensional example in fig. \ref{fig_2d}a. Such visualisations reinforce the intuition that softmax confidence cannot be trusted `far from the training data'.
Should we be concerned that softmax confidence fails in such a simple setting? 
We argue that it fails \textit{because} it's so simple.

Firstly, for the filtering effect to emerge, there must be an opportunity to do feature extraction, but in low-dimensions there is no such opportunity -- final-layer features are trivial transformations of the input space.
%Firstly, in low dimensions there is no opportunity to do feature extraction so the filtering effect does not emerge --
Secondly, many of the OOD inputs in fig. \ref{fig_2d}a can be viewed as magnified training examples, so for any training data point, $\x_i$, an OOD input can be created, $\alpha \x_i$ for $\alpha \gg 1$. Due to ReLU being a homogenous function, magnified inputs produce magnified final-layer features, $\alpha \z \approx \psi(\alpha \x)$, leading to increased softmax confidence (as observed by Hein et al. \citeyearpar{Hein}). In low-dimensions, magnification is \textit{the most obvious way} to create outliers. But this type of OOD input does not translate to high dimensions (should MNIST digits with magnified pixel intensity be labelled OOD?) nor does it obviously pose a problem in typical deep learning domains such as images, text, and audio which have bounded input ranges.

\textbf{A new mental model.}
%Whilst low-dimensional examples highlight one weakness of softmax confidence, they 
If low-dimensional examples do not accurately capture the behaviour of softmax confidence in deep networks on complex tasks, how \textit{should} we think about it? We now propose a single interpretable equation summarising this paper's findings. 

% We now formalise findings from this paper in a single interpretable equation, where we have traded off accuracy for readability
%We now propose a simple intuitive model that formalises this paper's findings. 
%We are careful to point out that the model we propose in proposition \ref{prop_mentalmodel} is intended to formalise findings from this paper in a single interpretable equation, where we have traded off accuracy for readability

We take the softmax expressed in terms of magnitudes and angles from eq. \ref{eq_softmax_mag_angle} and make two strong assumptions, supported by this paper's findings, to give $U_\text{max mental}$. 1) The network's decision boundary structure is optimal (section \ref{sec_optimal_decision_boundary}). 2) $\cos\theta_{i,\z} = \frac{-1}{K-1}$ for all angles except the maximum class. 
%This results in (derivation and discussion on assumptions in proposition \ref{prop_mentalmodel}),
Proposition \ref{prop_mentalmodel} provides derivation and discussion on assumptions. Note we trade off accuracy for interpretability with these assumptions. 
% \begin{align}
% - U_\text{max mental}(\z) &\coloneqq
% \frac{1}
% {1 + (K-1)\exp{- ||\z||( \frac{1}{K-1} + \max \cos \theta_{i,\z} ) }}
% \end{align}
% \begin{align}
%  U_\text{max mental}(\z) &\coloneqq
%  ||\z||( \frac{1}{K-1} + \max \cos \theta_{i,\z} ) 
% \end{align}
\begin{align}
U_\text{max mental}(\z) &\coloneqq
\frac{-1}
{1 + (K-1)\exp{- ||\z||( \frac{1}{K-1} + \max \cos \theta_{i,\z} ) }}.
\end{align}
Here, $||\z|| \geq 0 $ represents the \textit{strength} of distinguishing training features, and $\max \cos \theta_{i,\z} \in [-1,1]$ represents the \textit{familiarity} of the combination of final-layer features relative to the training data. This model captures that uncertainty increases with a decrease in either of these quantities. %, and also that as the number of classes, $K$, becomes smaller, $||\z||$ has more of an impact.

\section{Failure Causes and their Prevalence}
\label{sec_failure_measure}

We have so far described two implicit biases in deep networks that encourage OOD data to fall into the softmax's valid OOD region. Softmax confidence is fallible, and we turn our attention to understanding why this mechanism can fail. 
Figure \ref{fig_ankle_boots} illustrates a concrete example of failure when trained on specific classes of Fashion MNIST -- the network confuses ankle boots for digits.
We now define several causes for softmax failure.  Some of these issues have been previously noted \citep{Liu2020, VanAmersfoort2020, Mukhoti2021}.
% with the intent of building fixes

\textbf{Cause 1) Softmax saturation:} 
%A straightforward issue can arise due to numerical rounding issues. 
$\max_i \sigma(\z)_i$ is typically very high (e.g. fig. \ref{fig_activation_mnsit_fashion}). Sometimes, the softmax saturates for a proportion of both training and OOD data and $U_\text{max}$ is exactly -1.00000; this leads to an inability to create an ordering between samples, and OOD detection deteriorates. Simple fixes include rescaling logits; when problematic we use, $U_\text{cool}(\z) \coloneqq U_\text{entropy}(0.1\z)$.

\textbf{Cause 2a) Softmax extrapolations:} OOD data may be mapped to regions of the softmax of higher confidence than the training distribution -- the extrapolation regions. 
% (ankle boots, which contain straps and artefacts appearing like digits) when mnist is used as OOD.
%contain extracted features even stronger than the training data. This can cause it to be

\textbf{Cause 2b) Conflation with aleatoric uncertainty:} Where class distributions within the training data overlap, a network should learn to explicitly map inputs to the appropriate low confidence region of the softmax. This leads to a proportion of the training data legitimately having low softmax confidence.
Note that the effect of this is to shrink the valid OOD region, equivalently enlarging the softmax extrapolation region, hence we combine this with cause 2a.
% , making this metric a less reliable measure of epistemic uncertainty. 
% We discuss the link between aleatoric uncertainty and test accuracy in section \ref{sec_app_aleatoric_and_test}, including how to quantify the problem.
%This was the focus of Mukhoti et al.'s work \citeyearpar{Mukhoti2021}.

\textbf{Cause 3) Feature overlap:} Neural networks are not bijective and there is a risk that a training input and OOD input both map to the same point in the final-layer. Any uncertainty estimator accessing only these final activations will be unable to distinguish these types of input.

%Even if one models the density perfectly in this space OOD detection may be imperfect.
% Fig. \tp{x} shows this is a problem even in low dimensional problems.
%This has been observed \citep{VanAmersfoort2020}.
% (i.e. multiple inputs can produce the same feature representation)

% We found this problematic for high capacity (ImageNet) models we tested.

%
%Having defined these possible failure causes, we ask a simple question -- 
\textbf{Which of these causes are primarily responsible for overconfident predictions on OOD data?}
We first estimate this on typical benchmark tasks 
-- a ResNet18 is trained on one of five datasets (MNIST, Fashion MNIST, SVHN, Cifar10, Cifar100) ($\mathcal{D}_\text{in}$) with OOD data taken from the other four datasets ($\mathcal{D}_\text{out}$). (Note we convert MNIST and Fashion MNIST to 32$\times$32 RGB format so all datasets are exchangeable.) We secondly test the same protocol, but using an EfficientNetB1 pre-trained on ImageNet, and fine-tuned on five datasets, in each case with the other four used as $\mathcal{D}_\text{out}$. Full details are given in section \ref{sec_app_exp_standard_bench} \& \ref{sec_app_exp_pretrain_finetune}.
% $\mathcal{D}_\text{in}$ and $\mathcal{D}_\text{out}$ contain equal numbers of samples.

%We estimate the contribution of each failure cause in the following way. 
A perfect uncertainty estimator should achieve 100\% AUROC, and 
we assign responsibility for not reaching this score in the following way. 
Cause 1 is remedied by using $U_\text{cool}$ rather than $U_\text{entropy}$. Hence, AUROC using $U_\text{cool}$ minus AUROC using $U_\text{entropy}$ gives the responsibility of cause 1. 
Cause 2 is estimated by finding the improvement realised by using $U_\text{density}$ rather than $U_\text{cool}$, visually:
%\begin{align*}
%\mathcal{R}_\text{density} \bigcap \mathcal{R}_\text{entropy}' = \text{Softmax extrapolation region}
%\end{align*}
%\vspace{-0.1in}

%  \hspace{1.3in}
%  \includegraphics[width=0.08\columnwidth]{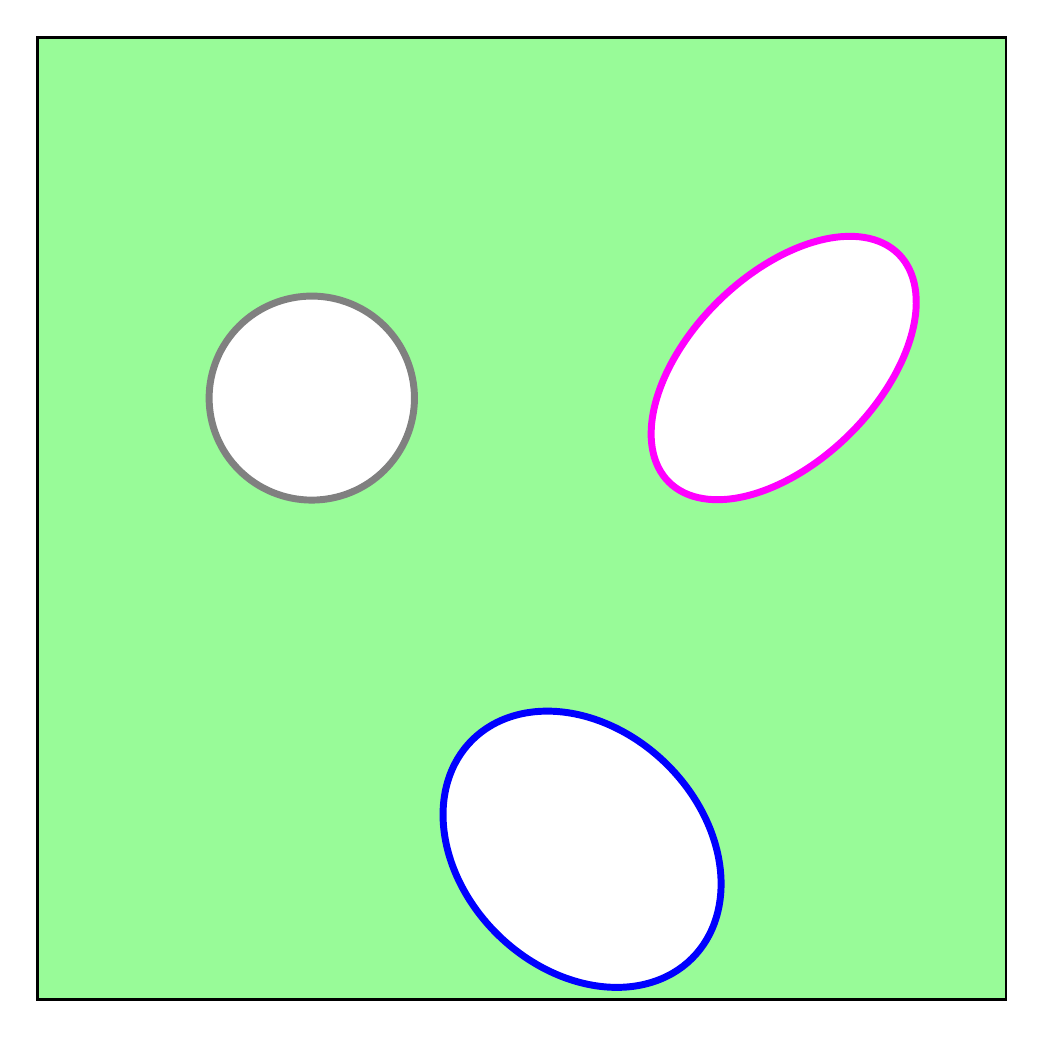}
%  \put(-26,40){$\mathcal{R}_\text{density} \bigcap \mathcal{R}_\text{entropy}' = \text{Softmax extrapolation region}$.}
%  \put(3,12){$-$}
%  \hspace{0.15in}
% \includegraphics[width=0.08\columnwidth]{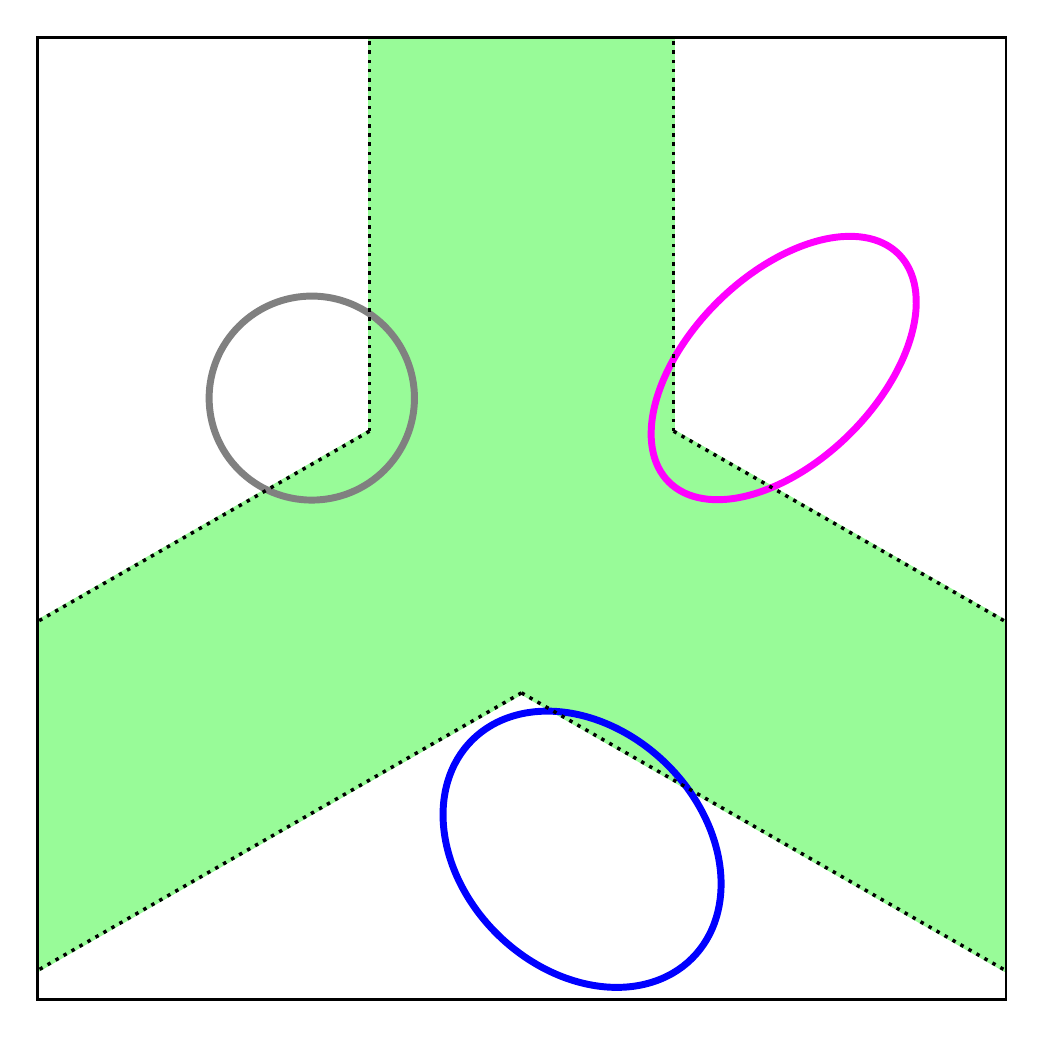}
%   \put(5,12){$=$}
%  \hspace{0.2in}
% \includegraphics[width=0.08\columnwidth]{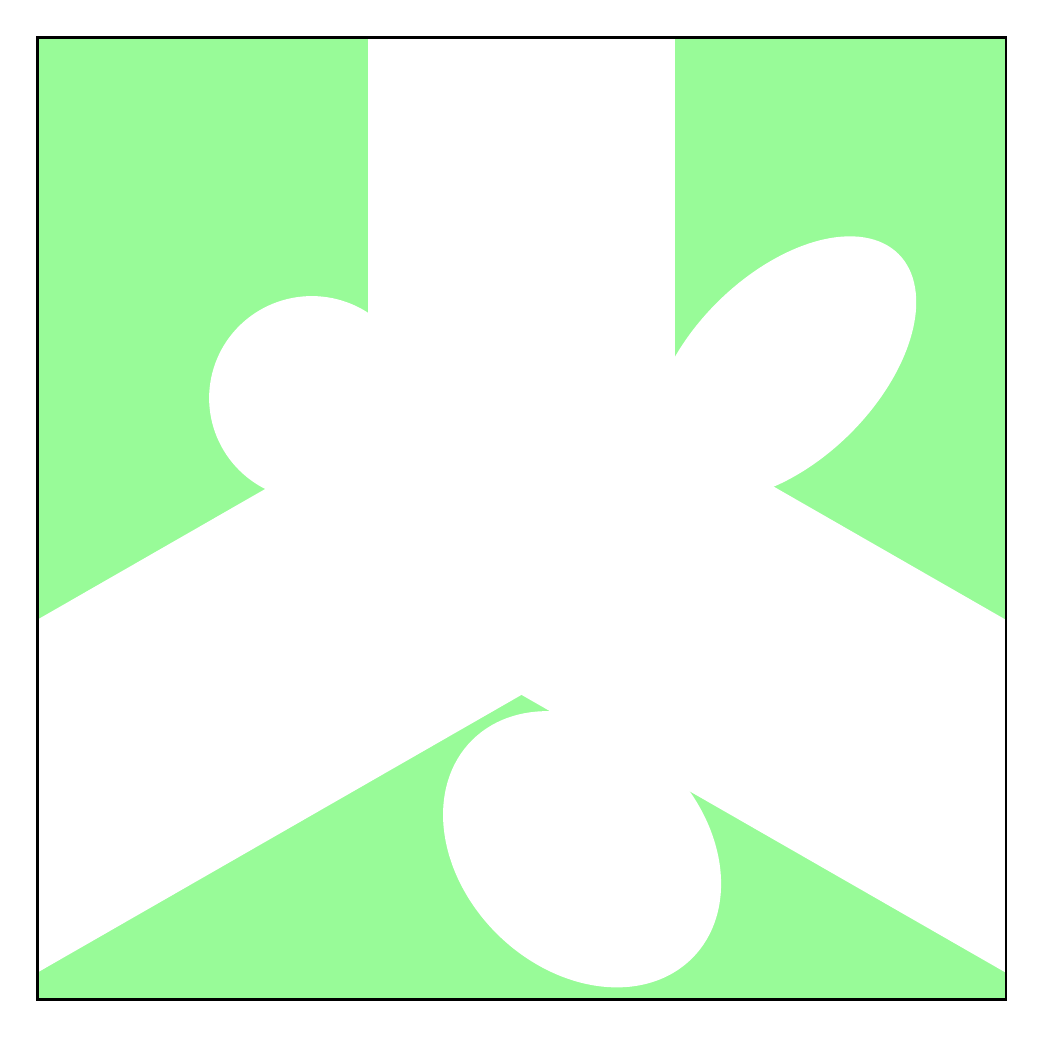}
%\vspace{0.1in}
 \hspace{3.3in}
 \includegraphics[width=0.08\columnwidth]{images/07_valid_regions/diffs_3class_region_density.pdf}
 \put(-260,15){$\mathcal{R}_\text{density} \bigcap \mathcal{R}_\text{entropy}' = \text{Softmax extrapolation region}$}
 \put(3,12){$-$}
 \hspace{0.15in}
\includegraphics[width=0.08\columnwidth]{images/07_valid_regions/diffs_3class_region_soft_ent.pdf}
  \put(5,12){$=$}
 \hspace{0.2in}
\includegraphics[width=0.08\columnwidth]{images/07_valid_regions/diffs_3class_region_diff.pdf}

Any inaccuracy remaining using $U_\text{density}$ is attributed to cause 3 -- this assumes the density estimator is a good approximation of the true density, $\hat{q}(\z) \approx p_\text{in}(\z)$. We spent considerable effort tuning the GMM as far as possible, finding it superior to KDE (details in section \ref{sec_app_exp_standard_bench}).

Table \ref{table_AUROC_bench} displays results. Our numbers in standard training are in line with prior work (table \ref{table_reported_results}). 
$U_\text{entropy}$ is consistently preferable to $U_\text{max}$. Saturation is avoided for ResNet18, and only minor for EfficientNet. Importantly, we attribute 4.2\% of AUROC drop to softmax extrapolations, and a 7.4\% drop to feature overlap -- we believe this goes against common intuition in the field.
AUROC is high for all estimators in the pre-trained case, and interestingly, $U_\text{density}$ provides a close to perfect score across all datasets, largely eliminating feature overlap. %This suggests

\begin{table}[h!]
  \caption{AUROC scores for OOD detection, mean $\pm$ 1 standard error. For each $\mathcal{D}_\text{in}$ dataset, $\mathcal{D}_\text{out}$ comprises the other four datasets. Further breakdown in section \ref{sec_app_exp_standard_bench} \& \ref{sec_app_exp_pretrain_finetune}. Note that cause 1 is the dominant cause of failure in standard training, while pre-trained networks largely avoid all causes.}
  \vspace{-0.1in}
  \label{table_AUROC_bench}
  \begin{center}
  \resizebox{\textwidth}{!}{
  \begin{tabular}{l c ccc cc c c c}
    \toprule
    & & & & & & \multicolumn{3}{c}{ Estimated responsibility of each cause } \\
    & & \multicolumn{1}{c}{$A$} & \multicolumn{1}{c}{$B$} & \multicolumn{1}{c}{$C$} & \multicolumn{1}{c}{$D$} & \multicolumn{1}{c}{$C - B$} & \multicolumn{1}{c}{$D-C$} & \multicolumn{1}{c}{$1.0 - D$}\\
    $\mathcal{D}_\text{in}$ & Accuracy & $U_\text{max}$ & $U_\text{entropy}$  & $U_\text{cool}$ & $U_\text{density}$ & Cause 1 & Cause 2a \& 2b & Cause 3 \\
    \midrule
	\multicolumn{3}{l}{\textbf{Standard training}, ResNet18, five runs} \\
%AUROC \\
% mnist  &  0.996 $\pm$ 0.000  &  0.963 $\pm$ 0.011  &  0.963 $\pm$ 0.011  &  0.963 $\pm$ 0.011  &  0.995 $\pm$ 0.001  &  0.000  &  0.032  &  0.005  &  \\
% f.mnist  &  0.945 $\pm$ 0.001  &  0.874 $\pm$ 0.011  &  0.887 $\pm$ 0.011  &  0.887 $\pm$ 0.011  &  0.985 $\pm$ 0.003  &  0.000  &  0.097  &  0.015  &  \\
% svhn  &  0.964 $\pm$ 0.001  &  0.907 $\pm$ 0.006  &  0.909 $\pm$ 0.006  &  0.909 $\pm$ 0.006  &  0.932 $\pm$ 0.002  &  0.000  &  0.023  &  0.068  &  \\
% c10  &  0.935 $\pm$ 0.001  &  0.854 $\pm$ 0.011  &  0.857 $\pm$ 0.011  &  0.857 $\pm$ 0.011  &  0.893 $\pm$ 0.004  &  0.000  &  0.036  &  0.107  &  \\
% c100  &  0.720 $\pm$ 0.004  &  0.786 $\pm$ 0.005  &  0.806 $\pm$ 0.005  &  0.806 $\pm$ 0.005  &  0.825 $\pm$ 0.006  &  0.000  &  0.020  &  0.175  &  \\
% \textbf{Mean} & - & 0.877 & 0.884 & 0.884 & 0.926 & 0.000 & \textbf{\textcolor{blue}{0.042}} & \textbf{\textcolor{blue}{0.074} }\\

mnist  &  99.6\% $\pm$ 0.0  &  96.3\% $\pm$ 1.1  &  96.3\% $\pm$ 1.1  &  96.3\% $\pm$ 1.1  &  99.5\% $\pm$ 0.1  &  0.0\%  &  3.2\%  &  0.5\%  &  \\
f.mnist  &  94.5\% $\pm$ 0.1  &  87.4\% $\pm$ 1.1  &  88.7\% $\pm$ 1.1  &  88.7\% $\pm$ 1.1  &  98.5\% $\pm$ 0.3  &  0.0\%  &  9.7\%  &  1.5\%  &  \\
svhn  &  96.4\% $\pm$ 0.1  &  90.7\% $\pm$ 0.6  &  90.9\% $\pm$ 0.6  &  90.9\% $\pm$ 0.6  &  93.2\% $\pm$ 0.2  &  0.0\%  &  2.3\%  &  6.8\%  &  \\
c10  &  93.5\% $\pm$ 0.1  &  85.4\% $\pm$ 1.1  &  85.7\% $\pm$ 1.1  &  85.7\% $\pm$ 1.1  &  89.3\% $\pm$ 0.4  &  0.0\%  &  3.6\%  &  0.7\%  &  \\
c100  &  72.0\% $\pm$ 0.4  &  78.6\% $\pm$ 0.5  &  80.6\% $\pm$ 0.5  &  80.6\% $\pm$ 0.5  &  82.5\% $\pm$ 0.6  &  0.0\%  &  2.0\%  &  17.5\%  &  \\
\textbf{Mean} & - & 87.7\% & 88.4\% & 88.4\% & 92.6\% & 0.0\% & \textbf{\textcolor{blue}{4.2\%}} & \textbf{\textcolor{blue}{7.4\%} }\\

%AUPRC \\
%mnist  &  0.996 $\pm$ 0.0  &  0.936 $\pm$ 0.019  &  0.936 $\pm$ 0.019  &  0.936 $\pm$ 0.019  &  0.996 $\pm$ 0.001  &  0.0  &  0.06  &  0.004  &  \\
%f.mnist  &  0.945 $\pm$ 0.001  &  0.86 $\pm$ 0.017  &  0.868 $\pm$ 0.017  &  0.868 $\pm$ 0.017  &  0.983 $\pm$ 0.003  &  0.0  &  0.115  &  0.017  &  \\
%svhn  &  0.964 $\pm$ 0.001  &  0.858 $\pm$ 0.012  &  0.858 $\pm$ 0.012  &  0.858 $\pm$ 0.012  &  0.94 $\pm$ 0.003  &  0.0  &  0.082  &  0.06  &  \\
%c10  &  0.935 $\pm$ 0.001  &  0.82 $\pm$ 0.018  &  0.82 $\pm$ 0.018  &  0.82 $\pm$ 0.018  &  0.908 $\pm$ 0.003  &  0.0  &  0.088  &  0.092  &  \\
%c100  &  0.72 $\pm$ 0.004  &  0.809 $\pm$ 0.006  &  0.821 $\pm$ 0.007  &  0.821 $\pm$ 0.007  &  0.845 $\pm$ 0.005  &  0.0  &  0.023  &  0.155  &  \\
%\textbf{Mean} &  & 0.857 & 0.861 & 0.861 & 0.934 & 0.0 & 0.074 & 0.066 \\

	\midrule
	\multicolumn{3}{l}{\textbf{Pre-trained/fine-tuned}, EfficientNetB1, three runs} \\
	
%AUROC \\
% Satellite & 0.977$\pm$0.001 & 0.997$\pm$0.000 & 0.999$\pm$0.000 & 0.999$\pm$0.000 & 0.999$\pm$0.000 & 0.000 & 0.000 & 0.001 & \\
% Cancer & 0.952$\pm$0.001 & 0.984$\pm$0.000 & 0.993$\pm$0.000 & 0.998$\pm$0.000 & 1.000$\pm$0.000 & 0.005 & 0.002 & 0.000 & \\
% Pets & 0.925$\pm$0.001 & 1.000$\pm$0.000 & 1.000$\pm$0.000 & 1.000$\pm$0.000 & 1.000$\pm$0.000 & 0.000 & 0.000 & 0.000 & \\
% Flowers & 0.947$\pm$0.000 & 0.999$\pm$0.000 & 0.999$\pm$0.000 & 0.999$\pm$0.000 & 1.000$\pm$0.000 & 0.000 & 0.001 & 0.000 & \\
% Beans & 0.941$\pm$0.002 & 0.968$\pm$0.001 & 0.982$\pm$0.001 & 0.990$\pm$0.000 & 1.000$\pm$0.000 & 0.008 & 0.010 & 0.000 & \\
% \textbf{Mean} & - & 0.990 & 0.995 & 0.997 & 1.000 & 0.003 & \textbf{\textcolor{blue}{0.002}} & \textbf{\textcolor{blue}{0.000}} \\

Satellite & 97.7\% $\pm$0.1 & 99.7\% $\pm$0.0 & 99.9\% $\pm$0.0 & 99.9\% $\pm$0.0 & 99.9\% $\pm$0.0 & 0.0\% & 0.0\% & 0.1\% & \\
Cancer & 95.2\% $\pm$0.1 & 98.4\% $\pm$0.0 & 99.3\% $\pm$0.0 & 99.8\% $\pm$0.0 & 100.0\% $\pm$0.0 & 0.5\% & 0.2\% & 0.0\% & \\
Pets & 92.5\% $\pm$0.1 & 100.0\% $\pm$0.0 & 100.0\% $\pm$0.0 & 100.0\% $\pm$0.0 & 100.0\% $\pm$0.0 & 0.0\% & 0.0\% & 0.0\%& \\
Flowers & 94.7\%$\pm$0.0 & 99.9\%$\pm$0.0 & 99.9\%$\pm$0.0 & 99.9\%$\pm$0.0 & 100.0\%$\pm$0.0 & 0.0\% & 0.1\% & 0.0\% & \\
Beans & 94.1\%$\pm$0.2 & 96.8\%$\pm$0.1 & 98.2\%$\pm$0.1 & 99.0\%$\pm$0.0 & 100.0\%$\pm$0.0 & 0.8\% & 1.0\% & 0.0\% & \\
\textbf{Mean} & - & 99.0\% & 99.5\% & 99.7\% & 100.0\% & 0.3\% & \textbf{\textcolor{blue}{0.2\%}} & \textbf{\textcolor{blue}{0.0\%}} \\

%AUPROC \\
%Satellite & 0.977$\pm$0.001 & 0.997$\pm$0.0 & 0.999$\pm$0.0 & 0.999$\pm$0.0 & 0.999$\pm$0.0 & 0.0 & 0.0 & 0.001 & \\
%Cancer & 0.952$\pm$0.001 & 0.987$\pm$0.0 & 0.993$\pm$0.0 & 0.998$\pm$0.0 & 1.0$\pm$0.0 & 0.005 & 0.002 & 0.0 & \\
%Pets & 0.925$\pm$0.001 & 1.0$\pm$0.0 & 1.0$\pm$0.0 & 1.0$\pm$0.0 & 1.0$\pm$0.0 & 0.0 & -0.0 & 0.0 & \\
%Flowers & 0.947$\pm$0.0 & 0.999$\pm$0.0 & 0.999$\pm$0.0 & 0.999$\pm$0.0 & 1.0$\pm$0.0 & 0.0 & 0.001 & 0.0 & \\
%Beans & 0.941$\pm$0.002 & 0.976$\pm$0.001 & 0.986$\pm$0.001 & 0.993$\pm$0.0 & 1.0$\pm$0.0 & 0.007 & 0.007 & 0.0 & \\
%\textbf{Mean} &  & 0.992 & 0.995 & 0.998 & 1.0 & 0.003 & 0.002 & 0.0 \\

    \bottomrule
  \end{tabular}
  }
\end{center}
%\footnotesize{$^{*1}$... .}

\end{table}

%\tp{cifar grid experiments, comparing softmax to }

%The density model was a GMM with number of components equal to number of classes, full covariance, no log transforms. For ablations and AUPRC, see sec \tp{x}

%\tp{maybe fashion boost for density is from aleatoric conflation?}

%\tp{would be cool to conclude... only about 3\% OOD deterioration comes from those `dreaded' softmax extrapolations}

%\begin{figure}[t]
%\vskip 0.05in
%\begin{center}
%\includegraphics[width=0.25\columnwidth]{images/grid_max_auroc_v3_5x_mnist_fitgmm_ngmm10_covfull_islog0_n10000.pdf}
%\put(-70,92){\small Softmax, $U_\text{max}$}
%\includegraphics[width=0.25\columnwidth]{images/grid_ent_auroc_v3_5x_mnist_fitgmm_ngmm10_covfull_islog0_n10000.pdf}
%\put(-70,92){\small Softmax, $U_\text{entropy}$}
%\includegraphics[width=0.25\columnwidth]{images/grid_latents_auroc_v3_5x_mnist_fitgmm_ngmm10_covfull_islog0_n10000.pdf}
%\put(-70,92){\small Density, $U_\text{density}$}
%\includegraphics[width=0.25\columnwidth]{images/grid_diffs_auroc_v3_5x_mnist_fitgmm_ngmm10_covfull_islog0_n10000.pdf}
%\put(-95,92){\small Improvement, $U_\text{entropy}$ to $U_\text{density}$}
%\caption{Full permutations of training datasets, with each dataset used as OOD. Numbers are AUROC for an evenly balanced combination of train and OOD datasets, averaged over five runs. E.g. for a ResNet18 trained on CIFAR 10, the mean AUROC against MNIST using $U_\text{max}$ is 82.2\%.}
%\label{fig_grid_auroc}
%\end{center}
%\vskip -0.2in
%\end{figure}

\section{Discussion \& Conclusion}
\label{sec_discussion_conclusion}

%Ensuring neural networks produce reliable uncertainty estimates on their predictions is a goal that has attracted considerable research attention, with numerous modifications being proposed, analysed and refined. 
%Softmax confidence, despite its widespread availability,
%remains surprisingly poorly understood, with papers on modifications quick to dismiss its value as an uncertainty estimator.

% has largely been ignored by this effort, often written off with vague criticism
%despite its reasonable empirical results.

% mostly focused on proposing, refining and understanding various methodologies

%ismissed with vague criticisms

% A large amount of research effort has gone toward developing Bayesian and ensembling methods, with the aim of improving uncertainty estimates produced by neural networks. 
% Puzzlingly, the most widespread uncertainty estimator in machine learning today, softmax confidence, has largely been ignored by this effort, often not take seriously as an approach to uncertainty estimation, despite achieving reasonable empirical results in many tasks.

\textbf{Summary.} This paper sought to understand the value of softmax confidence as a proxy for epistemic uncertainty in the detection of OOD data. It uncovered two implicit biases baked into unmodified networks which together produce a rudimentary mechanism for uncertainty estimation. We argued that common intuition around softmax confidence is misleading.
Carefully designed diagnostic experiments on standard benchmark tasks suggested final-layer feature overlap is more responsible for failures than softmax extrapolations.
%in the detection of OOD data
% (which are easily remedied with simple density estimators). 
% and proposed an approximate model that allows reasoning about this in more complex tasks
% -- a code snippet in section \tp{x} demonstrates this). 
% A code snippet in section \tp{x} shows the ease of implementation.
%These extrapolations can anyway be trivially fixed through the use of simple density estimators, such as the GMM used in this paper (code snippet in appendix \tp{x}).

%\section*{Broader Impact}

\textbf{Limitations \& impact.} This paper has contributed to an understanding of deep learning's most accessible uncertainty estimates; we expect this to be helpful for practitioners and researchers. Whilst our paper has explained why using softmax confidence for uncertainty estimation might have a sounder basis than widely believed, we remind readers of several things. Our results have said nothing about calibration of the softmax probabilities, nor it's effectiveness against adversarial examples, nor about uncertainty in regression tasks.
Softmax confidence remains an imperfect measure of uncertainty, and caution should be applied when used in real-world applications. 
Our study has focused on one specific type of OOD data commonly used in benchmarking OOD detection methods -- $\mathcal{D}_\text{in}$ is the input distribution of the task of interest (with low aleatoric uncertainty), and $\mathcal{D}_\text{out}$ is an arbitrarily selected, unrelated, dataset. Softmax confidence may be less effective on other types of OOD data.

\textbf{Conclusions.} We close with three thoughts. 
1) The ability to `capture epistemic uncertainty' is often presented as a binary property, whilst our work described a mechanism providing partial capture -- perhaps this property should be interpreted in a softer way.
2) It may be revealing to study ensembling, Bayesian neural networks, and MC Dropout, in the context of strengthening  the implicit biases this paper has described, rather than as adding fundamental properties by themselves. 
3) Addressing final-layer feature overlap should be a priority for the field. There are at least two approaches;  modifying networks to encourage bijective behaviour, e.g. \citep{Liu2020, VanAmersfoort2020}, or learning more diverse representations, e.g. \citep{Hendrycks2019a, Hendrycks2019, Hendrycks2020}. Given the trend across machine learning for large-scale pre-training, we see the second route as particularly promising.
%\tp{3) The ability to `capture epistemic uncertainty' is generally seen as a binary property, yet our work described a mechanism providing partial capture -- perhaps this property should be interpreted in a softer way.}
%we encourage the field to instead interpret it as a .}

% ; Cao et al. \citeyearpar{Cao2020} provide a cautionary tale in the medical imaging domain.

% remind readers of several points. 1) Softmax confidence (and indeed all of today's OOD detection methods) are fallible, and caution should be applied if using in high-stakes applications. 2) Our study has focused on a specific type of OOD data commonly used in benchmarking as of today -- where $\mathcal{D}_\text{in}$ is assigned as one benchmarking dataset, and $\mathcal{D}_\text{out}$ another (unrelated) dataset. It's possible that softmax confidence is less effective on other types of OOD data that might be encountered in the real world. A study in medical imaging suggests this may be the case \citep{Cao2020}.

%There are examples of degradation in medical imaging \citep{Cao2020} or tasks with ambiguous inputs \citep{Mukhoti2021}. We leave investigation of more specific OOD tests to future work.

%\begin{ack}
%Thanks...
% Nicolas Anastassacos (Turing and UCL)
% Ziyu Wang (Tsinghua University)
% Dan Hendrycks (Berkley)
% Ayman Boustati (CausaLens)
%Funded through TSAIL
%\end{ack}

\bibliography{library.bib}

\begin{thebibliography}{42}
\providecommand{\natexlab}[1]{#1}
\providecommand{\url}[1]{\texttt{#1}}
\expandafter\ifx\csname urlstyle\endcsname\relax
  \providecommand{\doi}[1]{doi: #1}\else
  \providecommand{\doi}{doi: \begingroup \urlstyle{rm}\Url}\fi

\bibitem[Alcorn et~al.(2018)Alcorn, Li, Gong, Wang, Mai, Ku, and
  Nguyen]{Alcorn}
Michael~A Alcorn, Qi~Li, Zhitao Gong, Chengfei Wang, Long Mai, Wei-Shinn Ku,
  and Anh Nguyen.
\newblock {Strike (with) a Pose: Neural Networks Are Easily Fooled by Strange
  Poses of Familiar Objects}.
\newblock 2018.
\newblock URL \url{https://arxiv.org/abs/1811.11553}.

\bibitem[Bagchi(1997)]{Bagchi1997}
Bhaskar Bagchi.
\newblock {How to stay away from each other in a spherical universe}.
\newblock \emph{Resonance}, 2\penalty0 (10):\penalty0 38--45, 1997.
\newblock ISSN 0971-8044.
\newblock \doi{10.1007/bf02835977}.

\bibitem[Baum and Wilczek(1987)]{Baum}
Eric~B Baum and Frank Wilczek.
\newblock {Supervised Learning of Probability Distributions by Neural
  Networks}.
\newblock \emph{Neural Information Processing Systems}, 1987.

\bibitem[Bishop(2006)]{Bishop2013}
Christopher~M Bishop.
\newblock \emph{{Pattern Recognition and Machine Learning}}.
\newblock Springer, 2006.
\newblock ISBN 9788578110796.
\newblock \doi{10.1017/CBO9781107415324.004}.

\bibitem[Boult et~al.(2019)Boult, Cruz, Dhamija, Gunther, Henrydoss, and
  Scheirer]{Boult2019}
T.~E. Boult, S.~Cruz, A.R. Dhamija, M.~Gunther, J.~Henrydoss, and W.J.
  Scheirer.
\newblock {Learning and the Unknown: Surveying Steps toward Open World
  Recognition}.
\newblock \emph{Proceedings of the AAAI Conference on Artificial Intelligence},
  33:\penalty0 9801--9807, 2019.
\newblock ISSN 2159-5399.
\newblock \doi{10.1609/aaai.v33i01.33019801}.

\bibitem[Brauchart and Grabner(2015)]{Brauchart2015}
Johann~S. Brauchart and Peter~J. Grabner.
\newblock {Distributing many points on spheres: Minimal energy and designs}.
\newblock \emph{Journal of Complexity}, 31\penalty0 (3):\penalty0 293--326,
  2015.
\newblock ISSN 10902708.
\newblock \doi{10.1016/j.jco.2015.02.003}.
\newblock URL \url{http://dx.doi.org/10.1016/j.jco.2015.02.003}.

\bibitem[Bridle(1989)]{Bridle1990}
John~S. Bridle.
\newblock {Training Stochastic Model Recognition Algorithms as Networks can
  lead to Maximum Mutual Information Estimation of Parameters}.
\newblock In \emph{Advances in Neural Information Processing Systems 2}, 1989.

\bibitem[Bridle(1990)]{Bridle1990b}
John~S. Bridle.
\newblock {Probabilistic Interpretation of Feedforward Classification Network
  Outputs, with Relationships to Statistical Pattern Recognition}.
\newblock \emph{Neurocomputing}, \penalty0 (C):\penalty0 227--236, 1990.
\newblock \doi{10.1007/978-3-642-76153-9_28}.

\bibitem[Coxeter et~al.(1989)Coxeter, Conway, and Sloane]{Coxeter1989}
H.~S.~M. Coxeter, J.~H. Conway, and N.~J.~A. Sloane.
\newblock \emph{{Sphere Packings, Lattices and Groups.}}, volume~96.
\newblock 1989.
\newblock ISBN 9781441931344.
\newblock \doi{10.2307/2323992}.

\bibitem[{Der Kiureghian} and Ditlevsen(2008)]{DerKiureghian2008}
A.~{Der Kiureghian} and O.~Ditlevsen.
\newblock {Aleatoric or epistemic ? Does it matter ?}
\newblock \emph{Special Workshop on Risk Acceptance and Risk Communication},
  pages 1--13, 2008.

\bibitem[Gal and Ghahramani(2015)]{Gal2015}
Yarin Gal and Zoubin Ghahramani.
\newblock {Dropout as a Bayesian Approximation: Representing Model Uncertainty
  in Deep Learning}.
\newblock In \emph{Proceedings of the 33rd International Conference on Machine
  Learning}, 2015.
\newblock ISBN 1506.02142.
\newblock \doi{10.1109/TKDE.2015.2507132}.
\newblock URL \url{http://arxiv.org/abs/1506.02142}.

\bibitem[Gal et~al.(2017)Gal, Islam, and Ghahramani]{Gal2017}
Yarin Gal, Riashat Islam, and Zoubin Ghahramani.
\newblock {Deep Bayesian Active Learning with Image Data}.
\newblock In \emph{ICML}, 2017.
\newblock ISBN 9781510855144.
\newblock \doi{10.17863/CAM.11070}.
\newblock URL \url{http://arxiv.org/abs/1703.02910}.

\bibitem[Guo et~al.(2017)Guo, Pleiss, Sun, and Weinberger]{Guo2017}
Chuan Guo, Geoff Pleiss, Yu~Sun, and Kilian~Q. Weinberger.
\newblock {On calibration of modern neural networks}.
\newblock \emph{34th International Conference on Machine Learning, ICML 2017},
  2017.

\bibitem[Hein et~al.(2019)Hein, Andriushchenko, and Bitterwolf]{Hein}
Matthias Hein, Maksym Andriushchenko, and Julian Bitterwolf.
\newblock {Why ReLU networks yield high-confidence predictions far away from
  the training data and how to mitigate the problem}.
\newblock In \emph{CVPR}, 2019.

\bibitem[Hendrycks and Gimpel(2017{\natexlab{a}})]{Hendrycks2016a}
Dan Hendrycks and Kevin Gimpel.
\newblock {A Baseline for Detecting Misclassified and Out-of-Distribution
  Examples in Neural Networks}.
\newblock In \emph{ICLR}, 2017{\natexlab{a}}.
\newblock URL \url{http://arxiv.org/abs/1610.02136}.

\bibitem[Hendrycks and Gimpel(2017{\natexlab{b}})]{Hendrycks2017}
Dan Hendrycks and Kevin Gimpel.
\newblock {a Baseline for Detecting Misclassified and Out-of-Distribution
  Examples in Neural Networks}.
\newblock \emph{ICLR}, 2017{\natexlab{b}}.
\newblock URL \url{https://arxiv.org/pdf/1610.02136.pdf}.

\bibitem[Hendrycks et~al.(2019{\natexlab{a}})Hendrycks, Lee, and
  Mazeika]{Hendrycks2019a}
Dan Hendrycks, Kimin Lee, and Mantas Mazeika.
\newblock {Using pre-training can improve model robustness and uncertainty}.
\newblock \emph{36th International Conference on Machine Learning, ICML 2019},
  2019{\natexlab{a}}.

\bibitem[Hendrycks et~al.(2019{\natexlab{b}})Hendrycks, Mazeika, Kadavath, and
  Song]{Hendrycks2019}
Dan Hendrycks, Mantas Mazeika, Saurav Kadavath, and Dawn Song.
\newblock {Using Self-Supervised Learning Can Improve Model Robustness and
  Uncertainty}.
\newblock \penalty0 (NeurIPS), 2019{\natexlab{b}}.
\newblock URL \url{http://arxiv.org/abs/1906.12340}.

\bibitem[Hendrycks et~al.(2020)Hendrycks, Liu, Wallace, Dziedzic, Krishnan, and
  Song]{Hendrycks2020}
Dan Hendrycks, Xiaoyuan Liu, Eric Wallace, Adam Dziedzic, Rishabh Krishnan, and
  Dawn Song.
\newblock {Pretrained Transformers Improve Out-of-Distribution Robustness}.
\newblock In \emph{ACL}, 2020.

\bibitem[Heskes(1996)]{Heskes1996}
Tom Heskes.
\newblock {Practical confidence and prediction intervals}.
\newblock In \emph{Advances in Neural Information Processing Systems 9}, 1996.

\bibitem[Lakshminarayanan et~al.(2017)Lakshminarayanan, Pritzel, and
  Blundell]{Lakshminarayanan2017}
Balaji Lakshminarayanan, Alexander Pritzel, and Charles Blundell.
\newblock {Simple and Scalable Predictive Uncertainty Estimation using Deep
  Ensembles}.
\newblock In \emph{NIPS}, 2017.

\bibitem[Lee et~al.(2018)Lee, Lee, Lee, and Shin]{Lee2018a}
Kimin Lee, Kibok Lee, Honglak Lee, and Jinwoo Shin.
\newblock {A simple unified framework for detecting out-of-distribution samples
  and adversarial attacks}.
\newblock \emph{Advances in Neural Information Processing Systems}, 2018.
\newblock ISSN 10495258.

\bibitem[Liu et~al.(2020)Liu, Lin, Padhy, Tran, Bedrax-Weiss, and
  Lakshminarayanan]{Liu2020}
Jeremiah~Zhe Liu, Zi~Lin, Shreyas Padhy, Dustin Tran, Tania Bedrax-Weiss, and
  Balaji Lakshminarayanan.
\newblock {Simple and principled uncertainty estimation with deterministic deep
  learning via distance awareness}.
\newblock \emph{NeurIPS}, 2020.
\newblock ISSN 23318422.

\bibitem[Lu et~al.(2021)Lu, Ie, and Sha]{Lu2020}
Zhiyun Lu, Eugene Ie, and Fei Sha.
\newblock {Uncertainty Estimation with Infinitesimal Jackknife, Its
  Distribution and Mean-Field Approximation}.
\newblock \emph{Arxiv 2006.07584}, 2021.
\newblock URL \url{http://arxiv.org/abs/2006.07584}.

\bibitem[MacKay(1992)]{MacKay1992}
David J.~C. MacKay.
\newblock {A Practical Bayesian Framework for Backpropagation Networks}.
\newblock \emph{Neural Computation}, 4\penalty0 (3):\penalty0 448--472, 1992.
\newblock ISSN 0899-7667.
\newblock \doi{10.1162/neco.1992.4.3.448}.
\newblock URL
  \url{http://www.mitpressjournals.org/doi/10.1162/neco.1992.4.3.448}.

\bibitem[Malinin and Gales(2018)]{Malinin2018}
Andrey Malinin and Mark Gales.
\newblock {Predictive Uncertainty Estimation via Prior Networks}.
\newblock In \emph{Neural Information Processing Systems}, 2018.
\newblock URL \url{http://arxiv.org/abs/1802.10501}.

\bibitem[Mukhoti et~al.(2021)Mukhoti, Kirsch, van Amersfoort, Torr, and
  Gal]{Mukhoti2021}
Jishnu Mukhoti, Andreas Kirsch, Joost van Amersfoort, Philip H.~S. Torr, and
  Yarin Gal.
\newblock {Deterministic Neural Networks with Appropriate Inductive Biases
  Capture Epistemic and Aleatoric Uncertainty}.
\newblock \emph{ArXiv}, 2021.
\newblock URL \url{http://arxiv.org/abs/2102.11582}.

\bibitem[Musin and Tarasov(2015)]{Musin2015}
Oleg~R. Musin and Alexey~S. Tarasov.
\newblock {The Tammes problem for N = 14}.
\newblock \emph{Experimental Mathematics}, 24\penalty0 (4):\penalty0 460--468,
  2015.
\newblock ISSN 1944950X.
\newblock \doi{10.1080/10586458.2015.1022842}.

\bibitem[Naitzat et~al.(2020)Naitzat, Zhitnikov, and Lim]{Naitzat2020}
Gregory Naitzat, Andrey Zhitnikov, and Lek~Heng Lim.
\newblock {Topology of deep neural networks}.
\newblock \emph{Journal of Machine Learning Research}, 21:\penalty0 1--40,
  2020.
\newblock ISSN 15337928.

\bibitem[Nguyen et~al.(2015)Nguyen, Yosinski, and Clune]{Nguyen2015}
Anh Nguyen, Jason Yosinski, and Jeff Clune.
\newblock {Deep neural networks are easily fooled: High confidence predictions
  for unrecognizable images}.
\newblock \emph{Proceedings of the IEEE Computer Society Conference on Computer
  Vision and Pattern Recognition}, 07-12-June:\penalty0 427--436, 2015.
\newblock ISSN 10636919.
\newblock \doi{10.1109/CVPR.2015.7298640}.

\bibitem[Orhan(2019)]{Orhan2019}
Emin Orhan.
\newblock {Robustness properties of Facebook's ResNeXt WSL models}.
\newblock \emph{ArXiv}, 2019.

\bibitem[Ozbulak et~al.(2018)Ozbulak, de~Neve, and van Messem]{Ozbulak2018a}
Utku Ozbulak, Wesley de~Neve, and Arnout van Messem.
\newblock {How the softmax output is misleading for evaluating the strength of
  adversarial examples}.
\newblock \emph{NeurIPS}, pages 1--9, 2018.
\newblock ISSN 23318422.

\bibitem[Pearce(2021)]{pearce2021thesis}
Tim Pearce.
\newblock \emph{{Uncertainty in Neural Networks; Bayesian Ensembles, Priors
  {\&} Prediction Intervals}}.
\newblock PhD thesis, University of Cambridge, 2021.

\bibitem[Richard and Lippmann(1991)]{Richard1991}
Michael~D. Richard and Richard~P. Lippmann.
\newblock {Neural Network Classifiers Estimate Bayesian a posteriori
  Probabilities}.
\newblock \emph{Neural Computation}, 3\penalty0 (4):\penalty0 461--483, 1991.
\newblock ISSN 0899-7667.
\newblock \doi{10.1162/neco.1991.3.4.461}.

\bibitem[Saxe et~al.(2018)Saxe, Bansal, Dapello, Advani, Kolchinsky, Tracey,
  and Cox]{saxe2018}
Andrew~M. Saxe, Yamini Bansal, Joel Dapello, Madhu Advani, Artemy Kolchinsky,
  Brendan~D. Tracey, and David~D. Cox.
\newblock {On the information bottleneck theory of deep learning}.
\newblock \emph{ICLR}, 2018.
\newblock ISSN 00029378.
\newblock \doi{10.1016/0002-9378(95)90441-7}.

\bibitem[Shafaei et~al.(2019)Shafaei, Schmidt, and Little]{Shafaei2018}
Alireza Shafaei, Mark Schmidt, and James~J. Little.
\newblock {A less biased evaluation of out-of-distribution sample detectors}.
\newblock \emph{British Machine Vision Conference}, 2019.
\newblock ISSN 23318422.

\bibitem[Szegedy et~al.(2014)Szegedy, Zaremba, Sutskever, Bruna, Erhan,
  Goodfellow, and Fergus]{Szegedy2014}
Christian Szegedy, Wojciech Zaremba, Ilya Sutskever, Joan Bruna, Dumitru Erhan,
  Ian Goodfellow, and Rob Fergus.
\newblock {Intriguing properties of neural networks}.
\newblock \emph{2nd International Conference on Learning Representations, ICLR
  2014 - Conference Track Proceedings}, pages 1--10, 2014.

\bibitem[Tibshirani(1996)]{Tibshirani1996}
Robert Tibshirani.
\newblock {A Comparison of Some Error Estimates for Neural Network Models}.
\newblock \emph{Neural Computation}, 8:\penalty0 152--163, 1996.

\bibitem[Tishby and Zaslavsky(2015)]{Tishby2015}
Naftali Tishby and Noga Zaslavsky.
\newblock {Deep learning and the information bottleneck principle}.
\newblock \emph{2015 IEEE Information Theory Workshop, ITW 2015}, 2015.
\newblock \doi{10.1109/ITW.2015.7133169}.

\bibitem[{Van Amersfoort} et~al.(2020){Van Amersfoort}, Smith, Teh, and
  Gal]{VanAmersfoort2020}
Joost {Van Amersfoort}, Lewis Smith, Yee~Whye Teh, and Yarin Gal.
\newblock {Simple and scalable epistemic uncertainty estimation using a single
  deep deterministic neural network}.
\newblock \emph{ICML}, 2020.
\newblock ISSN 23318422.

\bibitem[Wenzel et~al.(2020)Wenzel, Roth, Veeling, {\'{S}}wi{\c{a}}tkowski,
  Tran, Mandt, Snoek, Salimans, Jenatton, and Nowozin]{Wenzel2020}
Florian Wenzel, Kevin Roth, Bastiaan~S. Veeling, Jakub {\'{S}}wi{\c{a}}tkowski,
  Linh Tran, Stephan Mandt, Jasper Snoek, Tim Salimans, Rodolphe Jenatton, and
  Sebastian Nowozin.
\newblock {How Good is the Bayes Posterior in Deep Neural Networks Really?}
\newblock In \emph{ICML}, 2020.
\newblock URL \url{http://arxiv.org/abs/2002.02405}.

\bibitem[Zeiler and Fergus(2014)]{Zeiler2014}
Matthew~D. Zeiler and Rob Fergus.
\newblock {Visualizing and understanding convolutional networks}.
\newblock \emph{European Conference on Computer Vision, ECCV}, 2014.
\newblock ISSN 16113349.
\newblock \doi{10.1007/978-3-319-10590-1_53}.

\end{thebibliography}

\appendix

\newpage
\onecolumn
\section{Proofs}
\label{sec_app_proofs2}

This section contains all mathematical results and proofs, organised as follows.
\begin{itemize}
\item \textbf{Section \ref{sec_vec_fields}.} Derives uncertainty vector fields for all uncertainty estimators. 
\item \textbf{Section \ref{sec_app_proof_validoodregions}.} Derives valid OOD regions for each estimator under various assumptions:
  \begin{itemize}
    \item Theorem \ref{theorem_valid_ood_region_2class}: $U_\text{max}$ and $U_\text{entropy}$ for $K=2$ (assume weight vectors, $\w_1 = -\w_2$).
    \item Corollary \ref{sec_corollary_2class_gaussian}: $U_\text{max}$ and $U_\text{entropy}$ for $K=2$ (Gaussian and linearly separable assumption).
    \item Proposition \ref{prop_valid_ood_region_density}: $U_\text{density}$ for $K\geq2$ (Gaussian assumption).
    \item Definition \ref{def_linearapprox_region}:
    Describes a linear approximation to the exact valid OOD region for general $K \geq 2$.
    \item Proposition \ref{prop_linearapprox_equalfor_k2}:
    Shows the linear approximation is equal to the exact valid OOD region for $K=2$ (assume weight vectors, $\w_1 = -\w_2$).
    \item Theorem \ref{theorem_exact_linear_convergence}:
    For $K\geq3$, shows the exact valid OOD region converges to the linear approximation at large magnitudes at nearly all points (except close to intersections between decision hyperplanes).
    % certain points in space -- on each 2D plane, in between each pair of weight vectors,
    \item Corollary \ref{corollary_linear_region_is_subset_of_exact}: 
    For $K\geq3$, shows that the linear approximate valid OOD region is a subset of the exact valid OOD region.
    % \item Theorem \ref{theorem_validOOD_multiK_symmetrical}: \tp{A linear approximation to the region for $U_\text{max}$ and $U_\text{entropy}$ for $K \geq 3$ (assumes optimal decision boundary structure)}
  \end{itemize}
%\item \textbf{Section \ref{sec_app_proof_optimalstructure}.} Derives the optimal decision boundary structure assuming an idealised model (theorem \ref{theorem_hyperplane_structure_optimal}), and several properties of it (lemmas \ref{lemma_optimal_zmag}, \ref{lemma_optimal_costheta}, \ref{lemma_optimal_notalways}). 
%\tp{Also shows that the optimal structure maximises the valid OOD region (theorem x)}.
\item \textbf{Section \ref{sec_app_proof_optimalstructure}.} Considers the optimal decision boundary structure and related results.
  \begin{itemize}
   % \item Definition \ref{def_optimal_structure}: Describes in precise terms what is meant by the optimal decision boundary structure.
    \item Theorem \ref{theorem_hyperplane_structure_optimal}: Shows that the optimal structure minimises a regularised cross-entropy loss. Assumes class clusters follow a Gaussian distribution, $H \geq K-1$, and $||\pmb{\mu}_i|| = c_2 \; \forall i$.
    \item Proposition \ref{prop_optimal_zmag}: For any structure, $U_\text{max}$ increases with a decrease in $||\z||$.
    \item Theorem \ref{theorem_optimal_costheta_k2}: Shows a decrease in $\max_i \cos \theta_{\z,i}$ always results in an increase in $U_\text{max}$ for $K=2$ and $K=3$.
    \item Theorem \ref{theorem_optimal_costheta}: Shows a decrease in $\max_i \cos \theta_{\z,i}$ always results in an increase in $U_\text{max}$ for large $K$ (assumes that $\frac{-1}{K-1} \approx 0$).
    \item Corollary \ref{corollary_optimal_maximises}: Shows that the optimal structure maximises the linear approximate valid OOD region for $U_\text{max}$. Assumes that $U_\text{max}$ increases for a decrease in $\max_i \cos \theta_{\z,i}$ for all $K \geq 2$, and also that $||\pmb{\mu}_i|| = c_2 \; \forall i$.
    \item Proposition \ref{prop_optimal_notalways}: Shows that a decrease in $\max_i \cos \theta_{\z,i}$ does not always produce an increase in $U_\text{max}$ for structures that are not optimal.
  \end{itemize}
%\tp{Also shows that the optimal structure maximises the valid OOD region (theorem x)}.
\item \textbf{Section \ref{sec_app_proof_mentalmodel}.} Derives the proposed mental model $U_\text{max mental}$ (proposition \ref{prop_mentalmodel}) under two strong assumptions, which are discussed.
\end{itemize}

\newpage
\subsection{Vector Fields}
\label{sec_vec_fields}

\begin{proposition}{}
\label{theorem_vector_fields}
The following equations calculate the direction of maximum uncertainty given any point, $\z$, in the final-hidden layer space for each uncertainty estimator.  
\begin{align}
%    \frac{\partial U_\text{max}(\z)}{\partial \z}
%    &= \sigma(\z)_i\bigg[ 
%        \left( 1 - \sigma(\z)_i  \right) (-\w_i )
%        + \sum_{ j \neq i} \sigma(\z)_j \w_j
%        \bigg]   \;\; \text{where, }  i=\text{argmax}_i\sigma(\z)_i \label{eq_vector_soft_max} \\ 
        \frac{\partial U_\text{max}(\z)}{\partial \z} &=  \sigma(\z)_i \sum_{ j=1}^K \sigma(\z)_j (\w_j - \w_i)  \;\; \text{where, }  i=\text{argmax}_i\sigma(\z)_i \label{eq_vector_soft_max}  \\
       %& \;\;\;\;\;\;\;\; =  \sigma(\z)_i \bigg[ (1-\sigma(\z)_i )(-\w_i) + \sum_{ j \neq i} \sigma(\z)_j \w_j \bigg] \\
        %
        \frac{\partial U_\text{entropy}(\z)}{\partial \z} 
    &= 
    \sum_{i=1}^K ( \log \sigma(\z)_i  +1  ) 
    \sigma(\z)_i \sum_{ j=1}^K \sigma(\z)_j (\w_j - \w_i)
    %\frac{\partial U_\text{max}(\z)}{\partial \z} 
    % \;\; \text{where, }  i = 0,1,...,K
    \label{eq_vector_soft_ent} \\
  \frac{\partial U_\text{density}(\z)}{\partial \z} 
   & \propto 
     \frac{1}{\hat{q}(\z)} \sum_{i=1}^K \pi_i \exp\bigg[-\frac{1}{2}(\z-\pmb{\mu}_i)^\intercal\mathbf{\Sigma}^{-1}_i(\z-\pmb{\mu}_i)\bigg]\mathbf{\Sigma}^{-1}_i(\z-\pmb{\mu}_i) \label{eq_vector_density}
\end{align}
\end{proposition}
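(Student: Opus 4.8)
The statement is essentially a gradient computation: for each of the three uncertainty estimators, we want the direction in $\mathbb{R}^H$ along which the estimator increases fastest, which is just $\partial U/\partial \z$. So the plan is to differentiate each of $U_\text{max}$, $U_\text{entropy}$, and $U_\text{density}$ with respect to $\z$ using the chain rule, and the only genuinely reusable ingredient is the Jacobian of the softmax.

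First I would establish the building block $\frac{\partial \sigma(\z)_k}{\partial \z} = \sigma(\z)_k\big(\w_k - \sum_{j} \sigma(\z)_j \w_j\big)$. This follows from the quotient rule applied to eq.~\eqref{eq_softmax}, using $\frac{\partial}{\partial \z}\w_k^\intercal \z = \w_k$; it is the standard softmax-Jacobian identity, just written with the logit weights $\w_k$ in place of indicator vectors because the logits here are $\w_k^\intercal \z$ rather than free coordinates. Then for $U_\text{max}(\z) = -\sigma(\z)_i$ with $i=\operatorname{argmax}_i \sigma(\z)_i$, I would simply negate the Jacobian row for index $i$: $\frac{\partial U_\text{max}}{\partial \z} = -\sigma(\z)_i\big(\w_i - \sum_j \sigma(\z)_j \w_j\big) = \sigma(\z)_i \sum_j \sigma(\z)_j(\w_j - \w_i)$, which matches eq.~\eqref{eq_vector_soft_max}. (I would note that the argmax is locally constant, so differentiating through it is legitimate away from ties.) For $U_\text{entropy}(\z) = -\sum_k \sigma(\z)_k \log \sigma(\z)_k$, I would differentiate termwise: $\frac{\partial}{\partial \z}\big[\sigma_k \log \sigma_k\big] = (\log \sigma_k + 1)\frac{\partial \sigma_k}{\partial \z}$, substitute the softmax-Jacobian, and collect the negative sign to get eq.~\eqref{eq_vector_soft_ent}.

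For the density term, $U_\text{density}(\z) = -\log \hat{q}(\z)$ with $\hat{q}(\z) = \sum_i \pi_i \mathcal{N}(\z;\pmb{\mu}_i,\pmb{\Sigma}_i)$, I would use $\frac{\partial}{\partial \z}(-\log \hat q) = -\frac{1}{\hat q}\frac{\partial \hat q}{\partial \z}$, and then differentiate each Gaussian component via $\frac{\partial}{\partial \z}\mathcal{N}(\z;\pmb{\mu}_i,\pmb{\Sigma}_i) = -\mathcal{N}(\z;\pmb{\mu}_i,\pmb{\Sigma}_i)\,\pmb{\Sigma}_i^{-1}(\z-\pmb{\mu}_i)$, which is the standard derivative of a multivariate Gaussian density. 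Pulling out the constant normalisation factors into the proportionality sign yields eq.~\eqref{eq_vector_density}. There is no real obstacle here — the computation is routine — so the only point requiring a word of care is the non-differentiability of $U_\text{max}$ at points where two classes tie for the maximum; everywhere else the argmax index is locally constant and the formula holds, and I would state this caveat explicitly rather than belabour it.
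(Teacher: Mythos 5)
Your proposal is correct and follows exactly the route the paper takes: its proof is a one-line appeal to the chain rule applied to each estimator, and you simply fill in the standard details (softmax Jacobian, termwise differentiation of the entropy, and the Gaussian-mixture derivative), with the sensible added caveat about non-differentiability of $U_\text{max}$ at argmax ties. No gaps; your treatment of the proportionality in the $U_\text{density}$ case (absorbing the per-component normalisation constants) is exactly as loose as the paper's own statement, so nothing further is needed.
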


\begin{proof}
These are derived through application of the chain rule, applied to the uncertainty estimators defined in the main text.
\end{proof}

%\tp{unsure whether to move the fig and 2-3 sentences to main text}
\begin{remark}
Fig. \ref{fig_vecfield} visualises the equations from proposition \ref{theorem_vector_fields}.
${\partial U_\text{max}(\z)}/{\partial \z}$ is a linear combination of all weight vectors, which is the root cause of softmax confidence being inexpressive, with $-\w_i$ its primary direction.
${\partial U_\text{entropy}(\z)}/{\partial \z}$ has similar limitations, producing a smoothed version of ${\partial U_\text{max}(\z)}/{\partial \z}$.
Meanwhile, ${\partial U_\text{density}(\z)}/{\partial \z}$, always points away from the data clusters, $\z-\pmb{\mu}_i$, weighted by the distance from each component.

\begin{figure}[h!]
%\vskip -0.1in
\begin{center}
\includegraphics[width=0.3\columnwidth]{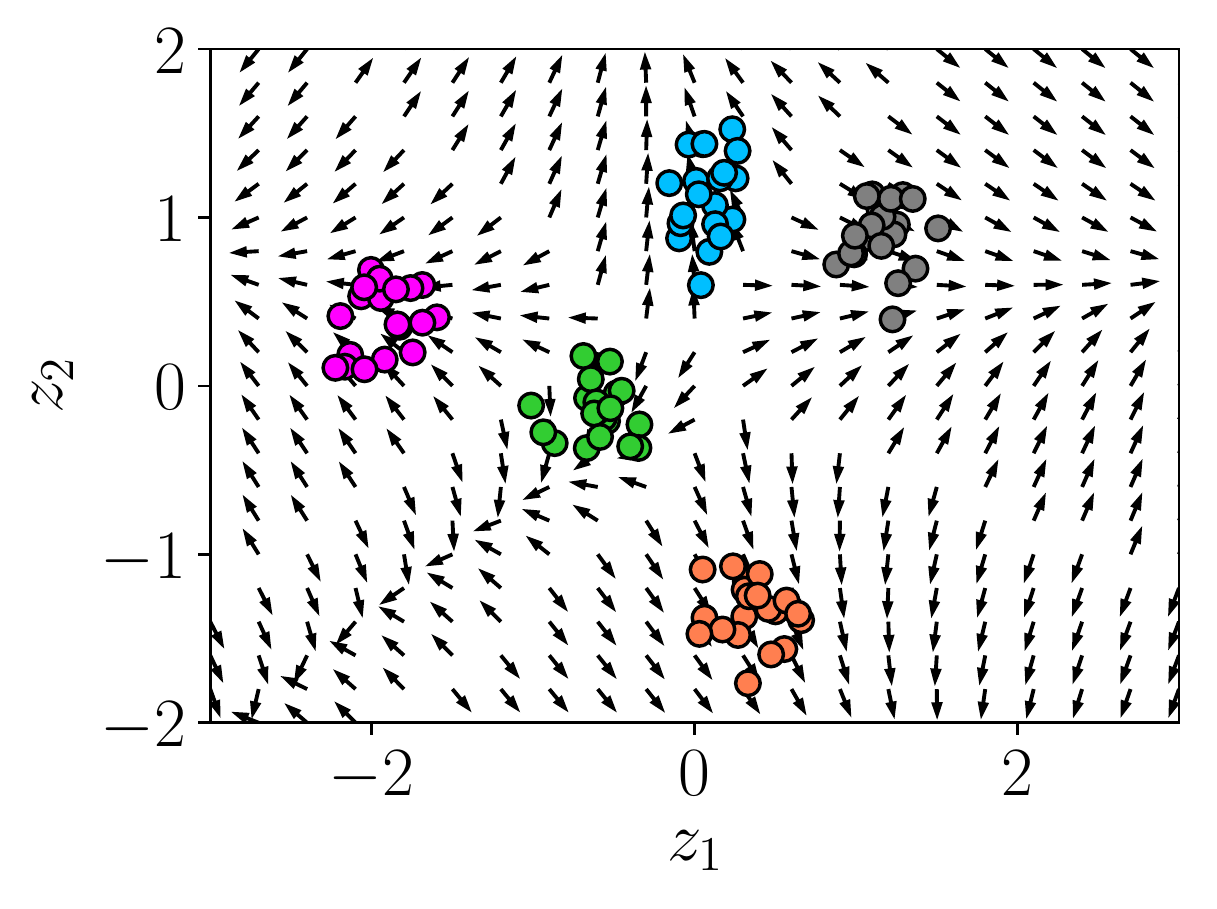}
\put(-70,-5){\small Softmax, $U_\text{max}$}
\includegraphics[width=0.3\columnwidth]{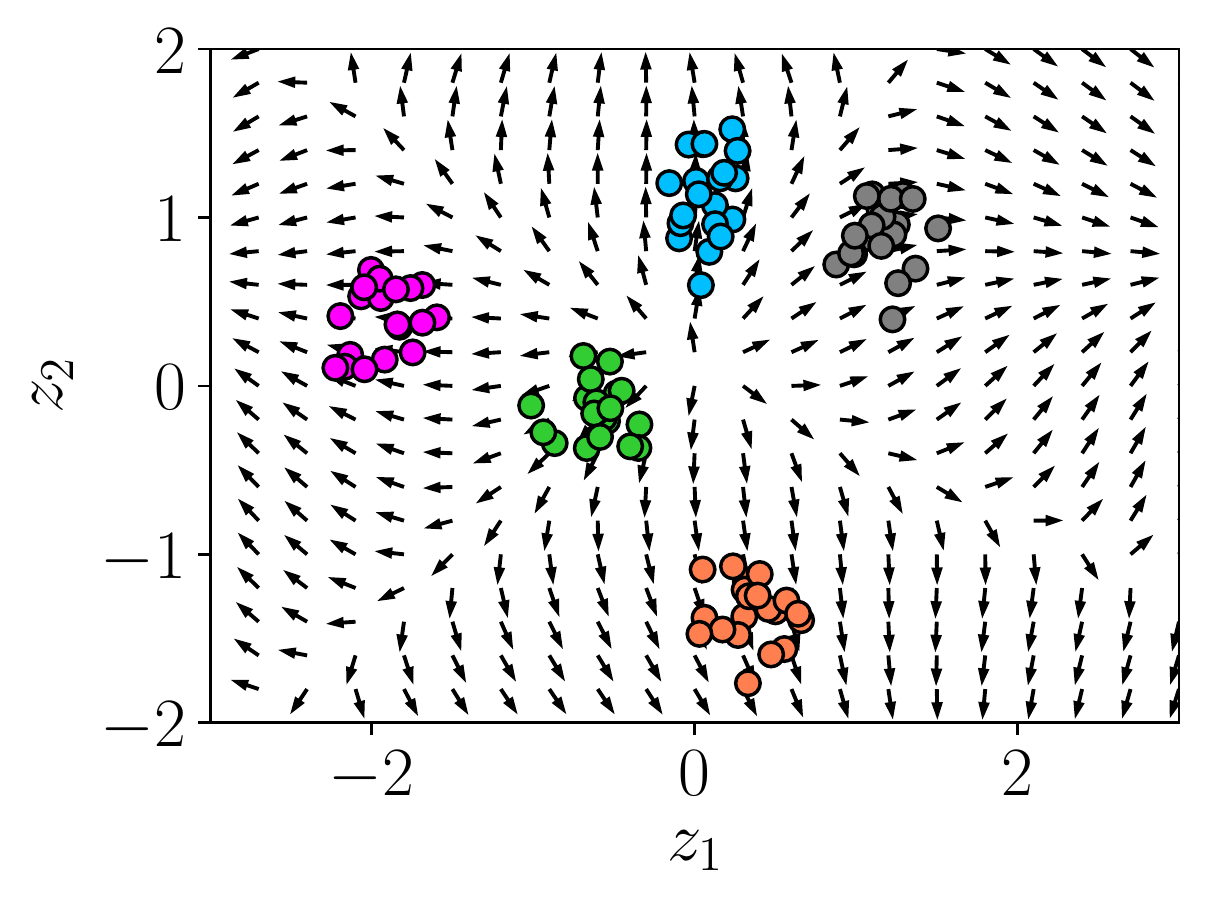}
\put(-70,-5){\small Softmax, $U_\text{entropy}$}
\includegraphics[width=0.3\columnwidth]{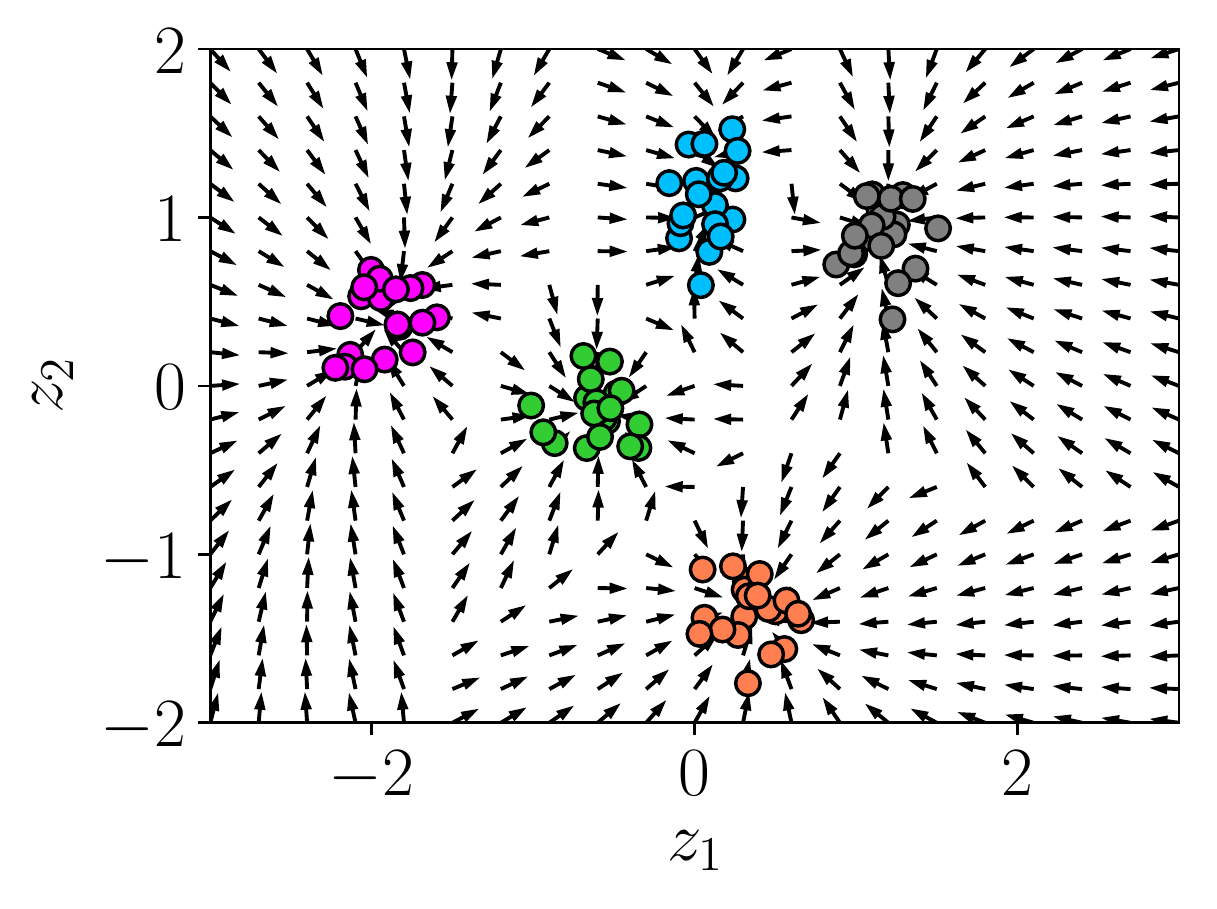}
\put(-70,-5){\small Density, $U_\text{density}$}
\caption{Vector fields showing direction of maximum certainty (magnitude ignored).}
% for softmax and density estimators.}
% Toy classification problem with two hidden features.}
\label{fig_vecfield}
\end{center}
%\vskip -0.2in
\end{figure}
\end{remark}

\newpage
\subsection{Valid OOD Regions}
\label{sec_app_proof_validoodregions}
\begin{theorem}{}
\label{theorem_valid_ood_region_2class}
Consider $K=2$ classes, 
%each represented by a single multivariate Gaussian, $p(\z|y=i) =\mathcal{N}(\pmb{\mu}_i,\pmb{\Sigma}_i)$, and linearly separable (def. \ref{def_linearly_separable}), 
and softmax weight vectors, $\w_1 = -\w_2$. The valid OOD region (def. \ref{def_valid_ood_region}) for $U_\text{max}$ and $U_\text{entropy}$ is the region between two hyperplanes, each parallel to the decision boundary,
\begin{align}
\mathcal{R}_{K=2}: \{\z \in \mathbb{R^H} | \w_1^\intercal\left(\z_0 -\alpha \w_1\right) < \w_1^\intercal \z  < \w_1^\intercal\left(\z_0 +\alpha \w_1\right)\},
\end{align}
where $\z_0$ is some point on the decision boundary. The offset, $\alpha>0$, is determined by solving,
\begin{align}
    \pi_1 p_\text{in}(\z \notin \mathcal{R}_{K=2} |\y=1) + \pi_2 p_\text{in}(\z \notin \mathcal{R}_{K=2} |\y=2) = 1-\epsilon.
\end{align}
%where $\mathcal{R}_{K=2}$ depends on $\alpha$.

%where,
%\begin{align}
%    p_\text{in}(\z \notin \mathcal{R}_\alpha |\y=i) 
%    &= \frac{1}{2} \left[ 1 -  \text{\textup{erf}}\left(-\frac{\w_1^\intercal \pmb{\mu}_i - \w_1^\intercal(\z_0+\alpha \w_1)}{\sqrt{2 \w_1^\intercal \pmb{\Sigma}_i \w_1 }} \right)
%    \right].
%\end{align}
% Equivalently, the valid OOD region is at most $\alpha ||\w_1||$ distance away from any point on the decision boundary.
\end{theorem}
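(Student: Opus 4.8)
The plan is to reduce both $U_\text{max}$ and $U_\text{entropy}$ to monotone functions of the single scalar $|\w_1^\intercal\z|$, and then read the valid OOD region (Definition \ref{def_valid_ood_region}) straight off that reduction.

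First I would exploit the hypothesis $\w_1=-\w_2$. Writing $t\coloneqq\w_1^\intercal\z$, the softmax collapses to a Bernoulli, $\sigma(\z)_1=1/(1+e^{-2t})$ and $\sigma(\z)_2=1/(1+e^{2t})$, so $\max_i\sigma(\z)_i=1/(1+e^{-2|t|})$ and hence $U_\text{max}(\z)=-1/(1+e^{-2|t|})$ is strictly decreasing in $|t|$. Similarly $U_\text{entropy}(\z)$ equals the binary entropy of $\sigma(\z)_1$, which is maximised at $\sigma(\z)_1=1/2$ (i.e. $t=0$, the decision boundary) and strictly decreasing in $|t|$ away from it. The key consequence is that both estimators induce the \emph{same} ordering on $\mathbb{R}^H$: for $U\in\{U_\text{max},U_\text{entropy}\}$ and any $\z,\z'$ one has $U(\z')>U(\z)\iff|\w_1^\intercal\z'|<|\w_1^\intercal\z|$. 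This is precisely why the theorem can state one region for both estimators.

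Next I would substitute this equivalence into the defining inequality. For a candidate OOD point $\z'$,
\[
\mathbb{E}_{\z\sim\mathcal{D}_\text{in}}\!\left[\mathbb{I}(U(\z')>U(\z))\right]=p_\text{in}\!\left(|\w_1^\intercal\z|>|\w_1^\intercal\z'|\right),
\]
so $\z'$ lies in the valid OOD region iff $p_\text{in}(|\w_1^\intercal\z|>|\w_1^\intercal\z'|)>1-\epsilon$, equivalently $p_\text{in}(|\w_1^\intercal\z|\le|\w_1^\intercal\z'|)<\epsilon$. This is a one-dimensional condition: it holds exactly when $|\w_1^\intercal\z'|$ lies below the relevant $\epsilon$-quantile of the pushforward of $p_\text{in}$ under $\z\mapsto|\w_1^\intercal\z|$. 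Denoting that quantile by $\alpha\|\w_1\|^2$ with $\alpha>0$, the region is $\{\z':|\w_1^\intercal\z'|<\alpha\|\w_1\|^2\}$; since $\w_1^\intercal\z_0=0$ for any $\z_0$ on the decision boundary, this is exactly the slab $\w_1^\intercal(\z_0-\alpha\w_1)<\w_1^\intercal\z'<\w_1^\intercal(\z_0+\alpha\w_1)$ claimed in the statement. To pin down $\alpha$, note $p_\text{in}(\z\notin\mathcal{R}_{K=2})=p_\text{in}(|\w_1^\intercal\z|\ge\alpha\|\w_1\|^2)$, and requiring this to equal $1-\epsilon$ is the same quantile condition; expanding by the law of total probability over the two classes with weights $\pi_1,\pi_2$ yields the displayed equation for $\alpha$.

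The one step needing care — and the main obstacle — is the boundary/tie handling: the definition uses a strict inequality ``$>1-\epsilon$'' whereas $\alpha$ is solved from an equality, and the region's boundary hyperplanes are themselves $p_\text{in}$-null only if $|\w_1^\intercal\z|$ has no atoms there. Under the mild assumption that the pushforward of $p_\text{in}$ under $\z\mapsto|\w_1^\intercal\z|$ is atomless the two coincide immediately; in general one takes $\alpha$ to be the supremum of offsets for which the strict inequality holds, and checks that this agrees with the solution of the total-probability equation up to a null set. Everything else — the monotonicity computations and the quantile bookkeeping — is routine.
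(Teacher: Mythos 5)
Your proof is correct, and it reaches the same endpoint as the paper's (the slab between two hyperplanes parallel to the decision boundary, with $\alpha$ fixed by the $(1-\epsilon)$ mass condition), but it gets there by a more elementary and more unified route. The paper establishes the key ordering property via its vector-field result: it computes $\partial U_\text{max}/\partial\z$ under $\w_1=-\w_2$, argues uncertainty increases in the direction $-\w_1$ within the half-space where class $1$ is the argmax, invokes a separate symmetry-about-the-boundary lemma, and then handles $U_\text{entropy}$ by a further lemma showing the same directional property. You instead collapse everything to the scalar $t=\w_1^\intercal\z$, observe that both $U_\text{max}$ and $U_\text{entropy}$ are strictly decreasing functions of $|t|$, and so both induce the identical ordering $U(\z')>U(\z)\iff|\w_1^\intercal\z'|<|\w_1^\intercal\z|$; Definition \ref{def_valid_ood_region} then becomes a one-dimensional quantile condition on the pushforward of $p_\text{in}$ under $\z\mapsto|\w_1^\intercal\z|$, and the slab with threshold $\alpha\lVert\w_1\rVert^2$ (using $\w_1^\intercal\z_0=0$ on the boundary) drops out directly. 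This buys you three things the paper's version lacks: both estimators are treated in one stroke rather than via an auxiliary lemma, no gradient/symmetry machinery is needed, and you explicitly address the strict-versus-nonstrict inequality and possible atoms of $|\w_1^\intercal\z|$ under $p_\text{in}$ — a tie-handling subtlety the paper passes over silently when it equates the strict condition in the definition with the equality defining $\alpha$. The paper's gradient-based argument, for its part, is the one that generalises to the multi-class discussion later (it reuses the vector fields of Proposition \ref{theorem_vector_fields}), which is presumably why it was written that way.
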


\begin{remark}
Note that by using a softmax with two weight vectors for two classes, we have an overparameterised model, and it seems reasonable to assume that the optimised weight vectors will be, $\w_1 \approx -\w_2$ (a softmax with this constraint enforced is equivalent to a scaled sigmoid). Note also that the weight vectors $\w_1$ and $\w_2$ are perpendicular to the decision boundary hyperplane \citep{Bishop2013}.
\end{remark}

\begin{proof}
We first consider $U_\text{max}$.
Begin observing that for the case of two classes, $\sigma(\z)_1 = 1-\sigma(\z)_2$. If, $\w_1=-\w_2$, and if $1=\text{argmax}_i\sigma(\z)_i$, eq. \ref{eq_vector_soft_max} reduces to,
\begin{align}
    %\frac{\partial U_\text{max}(\z)}{\partial \z} &= \sigma(\z)_1
    %\left[(\sigma(\z)_1-1)\w_1 + %(1-\sigma(\z)_1)\w_2 \right] \\
\frac{\partial U_\text{max}(\z)}{\partial \z}      &= \sigma(\z)_1
    \left[ (1-\sigma(\z)_1)(\w_2-\w_1) \right] \\
    &=2\sigma(\z)_1(1-\sigma(\z)_1)(-\w_1).
\end{align}
This shows that for the region where $\sigma(\z)_1 > \sigma(\z)_2$, uncertainty always increases in the direction of $-\w_1$ (toward the decision boundary). The opposite direction, $-\w_2$, is true for $\sigma(\z)_2 > \sigma(\z)_1$. % weighted by a simple function of $\sigma(\z)_1$

Observe also that $U_\text{max}$ is symmetric about the decision boundary (lemma \ref{lemma_1_symmetric_softmax}). 

Consider two hyperplanes, parallel to the decision surface, and offset by some perpendicular vector $\alpha \w_1$ and $-\alpha \w_1$, defined by the equations,
% \begin{align}
%     \w_1^\intercal\left(\z- (\z_0 +\alpha \w_1)\right) &= 0 \\
%     \w_1^\intercal\left(\z- (\z_0 -\alpha \w_1)\right)  &= 0 
% \end{align}
\begin{align}
    \w_1^\intercal \z &=  \w_1^\intercal (\z_0 +\alpha \w_1) \\
    \w_1^\intercal \z &= \w_1^\intercal (\z_0 -\alpha \w_1) 
\end{align}
% \begin{align}
% \w_1^\intercal\left(\z_0 -\alpha \w_1\right) < \w_1^\intercal \z  < \w_1^\intercal\left(\z_0 +\alpha \w_1\right) 
% \end{align}
where $\z_0$ describes some point on the decision boundary, and $\alpha \in \mathbb{R}_+$.

Define the region enclosed by these two planes as $\mathcal{R}_{K=2}$, 
\begin{align}
\label{eq_hyperplane_offset_k2}
\mathcal{R}_{K=2}: \{\z \in \mathbb{R^H} | \w_1^\intercal\left(\z_0 -\alpha \w_1\right) < \w_1^\intercal \z  < \w_1^\intercal\left(\z_0 +\alpha \w_1\right)\}.
\end{align}
Due to the way we have designed this volume,
\begin{align}
    U_\text{max}(\z_i) > U_\text{max}(\z_j) \;\;\;\; \forall \z_i \in \mathcal{R}_{K=2}, \; \forall \z_j \notin \mathcal{R}_{K=2}.
\end{align}
In words; any point inside $\mathcal{R}_{K=2}$ will be more uncertain than any point outside of it. We can use this observation to define the valid OOD region for $U_\text{max}$, by choosing $\alpha$ so that the proportion of the training distribution outside the region, $p_\text{in}(\z \notin \mathcal{R}_{K=2})$, and hence with lower uncertainty than any point inside the region, is $1-\epsilon$. 

Since $p_\text{in}(\z \notin \mathcal{R}_{K=2})$ will always decrease with increasing $\alpha$, this creates a requirement finding $\alpha$ so that,
% \begin{align}
%     p_\text{in}(\z \notin \mathcal{R}_{K=2}) = \pi_1 p_\text{in}(\z \notin \mathcal{R}_{K=2} |\y=1) + \pi_2 p_\text{in}(\z \notin \mathcal{R}_{K=2} |\y=2) > 1-\epsilon
% \end{align}
\begin{align}
    p_\text{in}(\z \notin \mathcal{R}_{K=2}) = \pi_1 p_\text{in}(\z \notin \mathcal{R}_{K=2} |\y=1) + \pi_2 p_\text{in}(\z \notin \mathcal{R}_{K=2} |\y=2) = 1-\epsilon
    \label{eq_probs_k2}
\end{align}

This proof can be extended to $U_\text{entropy}$ via lemma \ref{lemma_2_soft_ent_valid}.

\end{proof}

\begin{lemma}{}
\label{lemma_1_symmetric_softmax}
Assume two classes, and, $\w_1=-\w_2$. The softmax is symmetric about the decision boundary, so for some point on the decision boundary, $\z_0$, such that $\sigma(\z_0)_1 = \sigma(\z_0)_2 = 0.5$, it holds that $\sigma(\z_0+\alpha \w_1)_1 = \sigma(\z_0-\alpha \w_1)_2$, for any $\alpha \in \mathbb{R}$.
\end{lemma}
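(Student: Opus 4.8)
The plan is to exploit the fact that with $\w_1 = -\w_2$ the two-class softmax collapses to a one-dimensional sigmoid in the scalar quantity $t(\z) \coloneqq \w_1^\intercal \z$. First I would substitute $\w_2 = -\w_1$ into eq. \ref{eq_softmax} to obtain
\begin{align}
\sigma(\z)_1 = \frac{\exp(\w_1^\intercal\z)}{\exp(\w_1^\intercal\z)+\exp(-\w_1^\intercal\z)} = \frac{1}{1+\exp(-2\,\w_1^\intercal\z)}, \qquad \sigma(\z)_2 = \frac{1}{1+\exp(2\,\w_1^\intercal\z)},
\end{align}
so that both coordinates depend on $\z$ only through $t(\z)$, and moreover $\sigma(\z)_1$ viewed as a function of $t$ is the reflection of $\sigma(\z)_2$ viewed as a function of $t$, i.e. $s_1(t) = s_2(-t)$ where $s_1(t) = 1/(1+e^{-2t})$ and $s_2(t) = 1/(1+e^{2t})$.

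Next I would record the characterisation of the decision boundary: a point $\z_0$ with $\sigma(\z_0)_1 = \sigma(\z_0)_2 = 0.5$ is, by the display above, exactly a point with $t(\z_0) = \w_1^\intercal\z_0 = 0$, and such a point exists since $\w_1 \neq \mathbf{0}$ (the hyperplane $\{\w_1^\intercal \z = 0\}$ is nonempty). Then the key computation is simply
\begin{align}
t(\z_0 + \alpha\w_1) = \w_1^\intercal\z_0 + \alpha\lVert\w_1\rVert^2 = \alpha\lVert\w_1\rVert^2, \qquad t(\z_0 - \alpha\w_1) = -\alpha\lVert\w_1\rVert^2,
\end{align}
valid for every $\alpha \in \mathbb{R}$. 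Substituting these into the sigmoid expressions gives
\begin{align}
\sigma(\z_0+\alpha\w_1)_1 = \frac{1}{1+\exp(-2\alpha\lVert\w_1\rVert^2)} = \frac{1}{1+\exp\!\big(2\,(-\alpha\lVert\w_1\rVert^2)\big)} = \sigma(\z_0-\alpha\w_1)_2,
\end{align}
which is precisely the claimed identity.

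I do not expect any genuine obstacle: the lemma is essentially the elementary fact that a logistic sigmoid $s$ satisfies $s(t) = 1 - s(-t)$, combined with $\sigma(\z)_2 = 1 - \sigma(\z)_1$ for two classes. The only points deserving an explicit (trivial) remark are that a boundary point $\z_0$ exists, and that the argument nowhere uses $\alpha > 0$, so the symmetry holds for all real offsets. I would close by noting that this is exactly the symmetry invoked in the proof of theorem \ref{theorem_valid_ood_region_2class} to place the two half-spaces separated by the decision hyperplane on an equal footing when choosing the offset $\alpha$, and that it also underlies lemma \ref{lemma_2_soft_ent_valid} for $U_\text{entropy}$, since $U_\text{entropy}(\z)$ is likewise a function of $\sigma(\z)_1$ alone in the two-class case.
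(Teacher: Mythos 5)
Your proposal is correct and follows the same route as the paper, which simply verifies the identity by substituting $\z_0+\alpha\w_1$ and $\z_0-\alpha\w_1$ into eq. \ref{eq_softmax}; you have merely written the substitution out explicitly via the scalar $t(\z)=\w_1^\intercal\z$ and the sigmoid reflection $s(t)=1-s(-t)$. No gaps.
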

\begin{proof} Simply substitute $\z_0+\alpha \w_1$ and $\z_0-\alpha \w_1$ into eq. \ref{eq_softmax} to verify.
\end{proof}

\begin{lemma}{}
\label{lemma_2_soft_ent_valid}
Theorem \ref{theorem_valid_ood_region_2class} also holds for $U_\text{entropy}$.
\end{lemma}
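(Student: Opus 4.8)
The plan is to reduce the claim for $U_\text{entropy}$ to the structure already established for $U_\text{max}$ in Theorem \ref{theorem_valid_ood_region_2class}, by showing that $U_\text{entropy}$ is an increasing function of $U_\text{max}$ on the two-class softmax output, so that the two estimators induce exactly the same ordering of points in $\mathbb{R}^H$. Concretely, for $K=2$ write $p \coloneqq \sigma(\z)_1 \in (0,1)$, so that $\sigma(\z)_2 = 1-p$. Then $U_\text{max}(\z) = -\max(p, 1-p)$ and $U_\text{entropy}(\z) = -\big(p\log p + (1-p)\log(1-p)\big) = H_b(p)$, the binary entropy. Both are functions of $p$ alone, symmetric about $p = 1/2$, maximised at $p=1/2$, and strictly monotone on each side of $1/2$: $-\max(p,1-p)$ increases on $(0,1/2]$ and decreases on $[1/2,1)$, and $H_b(p)$ does likewise. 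Hence there is a strictly increasing bijection $g$ with $U_\text{entropy}(\z) = g\big(U_\text{max}(\z)\big)$ for all $\z$ (explicitly, given $u = -\max(p,1-p) \in [-1,-1/2]$, recover the two candidate values $p = -u$ or $p = 1+u$, both giving the same $H_b$ by symmetry, so $g(u) = H_b(-u)$ is well defined and strictly increasing in $u$).

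Given this, I would argue as follows. For any $\z_i, \z_j$, $U_\text{entropy}(\z_i) > U_\text{entropy}(\z_j)$ if and only if $U_\text{max}(\z_i) > U_\text{max}(\z_j)$, by strict monotonicity of $g$. Therefore the event $\{U_\text{entropy}(\z') > U_\text{entropy}(\z)\}$ coincides with $\{U_\text{max}(\z') > U_\text{max}(\z)\}$ for every fixed $\z'$ and random $\z \sim \mathcal{D}_\text{in}$, so the valid OOD region of Definition \ref{def_valid_ood_region} is literally the same set for both estimators. In particular the region $\mathcal{R}_{K=2}$ of eq. \ref{eq_hyperplane_offset_k2}, with $\alpha$ chosen by eq. \ref{eq_probs_k2}, is the valid OOD region for $U_\text{entropy}$ as well, which is the content of Lemma \ref{lemma_2_soft_ent_valid}. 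One should also note the symmetry argument used in the $U_\text{max}$ proof (Lemma \ref{lemma_1_symmetric_softmax}) applies verbatim, since it is a statement about $\sigma(\z)$ and not about the particular functional form of the estimator.

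The only mild subtlety — and the step I would be most careful about — is the handedness / symmetry bookkeeping: $U_\text{max}$ and $U_\text{entropy}$ are not globally monotone in $p$, only piecewise monotone about $p = 1/2$, so the claim "same ordering" rests on the fact that both have their minimum level sets $\{p : \text{value} = t\}$ equal to symmetric pairs $\{p_0, 1-p_0\}$, and the decision boundary $\{p = 1/2\}$ is common to both. This is exactly why the region is sandwiched between two hyperplanes symmetric about the decision boundary rather than being a single half-space, and the $U_\text{entropy}$ case inherits this unchanged. I would phrase the proof as: (i) both estimators factor through $p$; (ii) on $(0,1)$ each is a strictly decreasing function of $|p - 1/2|$; (iii) hence both equal a common strictly decreasing function of $|p-1/2|$ up to reparametrisation, giving identical sublevel/superlevel sets; (iv) therefore identical valid OOD regions, and the formula of Theorem \ref{theorem_valid_ood_region_2class} transfers. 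No further computation beyond verifying $H_b$ is strictly decreasing in $|p-1/2|$ is needed.
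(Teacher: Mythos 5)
Your proof is correct, but it takes a genuinely different route from the paper's. The paper proves lemma \ref{lemma_2_soft_ent_valid} by re-using the gradient argument: from eq. \ref{eq_vector_soft_max} and eq. \ref{eq_vector_soft_ent} it checks that, for $K=2$ with $\w_1=-\w_2$, the direction of increasing $U_\text{entropy}$ on the side where class 1 is predicted is again $-\w_1$ (toward the decision boundary), i.e. exactly the monotonicity property used for $U_\text{max}$ in theorem \ref{theorem_valid_ood_region_2class}, combined with the symmetry of lemma \ref{lemma_1_symmetric_softmax}. You instead observe that for $K=2$ both estimators factor through $p=\sigma(\z)_1$ and are strictly decreasing in $|p-\tfrac{1}{2}|$, hence related by a strictly increasing bijection $g$, so they induce the same ordering of points and, directly from definition \ref{def_valid_ood_region}, the valid OOD region is the identical set for both estimators; the slab description of theorem \ref{theorem_valid_ood_region_2class} then transfers verbatim. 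Your argument is more elementary (no derivative computation) and in one respect stronger: the order-equivalence of $U_\text{max}$ and $U_\text{entropy}$ holds for any two-class softmax, with the assumption $\w_1=-\w_2$ entering only through the hyperplane description of the region inherited from the theorem, and it makes explicit that the two estimators give identical ROC curves when $K=2$. The paper's gradient route, by contrast, is the one that plugs into its vector-field machinery and mirrors the style of the later multi-class arguments. Your care over the piecewise monotonicity -- level sets being symmetric pairs $\{p_0,1-p_0\}$ with the shared boundary $p=\tfrac{1}{2}$ -- is precisely the right point to flag, and you handle it correctly.
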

\begin{proof} The property required of $U_\text{max}$ to prove theorem \ref{theorem_valid_ood_region_2class}, was that when $\sigma(\z)_1 > \sigma(\z)_2$, uncertainty always decreases in the direction of $-\w_1$. From eq. \ref{eq_vector_soft_max} and \ref{eq_vector_soft_ent}, it's straightforward to see that provided $\w_1=-\w_2$ (as it is by assumption), this also holds for $U_\text{entropy}$.

% \tp{prob don't need to bother writing out the few lines?}
\end{proof}

\begin{definition}{}
\label{def_linearly_separable}
We relax the definition of `linearly separable' slightly. Generally two sets of points, $\mathbf{Z}_1$ and $\mathbf{Z}_2$, are linearly separably if at least one hyperplane exists dividing the sets, i.e. $\w$ exists such that, $\w^\intercal \z_i > c \;\; \forall \z_i \in \mathbf{Z}_1$ and $\w^\intercal \z_j < c \;\; \forall \z_j \in \mathbf{Z}_2$, for some constant $c \in \mathbb{R}$. 

Our results assume densities rather than finite sets of data points, so this does not obviously apply, but we relax the concept slightly. 
%For two densities, $p(\z|\y=0)$ and $p(\z|\y=1)$ to be linearly separable, we require that there is negligible overlap between class distributions, i.e. $p(\z|\y=1) > \epsilon \implies p(\z|\y = 0) \approx 0$, for small $\epsilon$.
For two densities, $p(\z|\y=1)$ and $p(\z|\y=2)$ to be linearly separable, there must exist a hyperplane such that the density of each class falling on the opposite side of the hyperplane is negligible, i.e. $\w$ and $c$ exist such that, $p(\w^\intercal \z < c |\y=2) \approx 1$ and $p(\w^\intercal \z > c |\y=2) \approx 0$, while $p(\w^\intercal \z > c |\y=1) \approx 1$ and $p(\w^\intercal \z < c |\y=1) \approx 0$.
\end{definition}

\begin{corollary}{}
\label{sec_corollary_2class_gaussian}

This corollary extends theorem \ref{theorem_valid_ood_region_2class}. Under conditions as before, and further assuming that each class is represented by a single multivariate Gaussian, $p(\z|\y=i) =\mathcal{N}(\pmb{\mu}_i,\pmb{\Sigma}_i)$, and linearly separable (def. \ref{def_linearly_separable}), the value of $\alpha$ required to define the valid OOD region for $U_\text{max}$ and $U_\text{entropy}$ is given by solving the below,
\begin{align}
    \pi_1  \text{\textup{erf}}\left(\frac{\w_1^\intercal \pmb{\mu}_i - \w_1^\intercal(\z_0+\alpha \w_1)}{\sqrt{2 \w_1^\intercal \pmb{\Sigma}_i \w_1 }} \right)
    + \pi_2    \text{\textup{erf}}\left(\frac{\w_1^\intercal \pmb{\mu}_i - \w_1^\intercal(\z_0+\alpha \w_1)}{\sqrt{2 \w_1^\intercal \pmb{\Sigma}_i \w_1 }} \right)
     = 1 - 2 \epsilon .
\end{align}

\end{corollary}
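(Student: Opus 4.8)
The plan is to treat this as a direct computation off Theorem \ref{theorem_valid_ood_region_2class}. That theorem already pins down the valid OOD region for $U_\text{max}$ and $U_\text{entropy}$ (with $K=2$, $\w_1=-\w_2$) as the slab $\mathcal{R}_{K=2}$ lying between two hyperplanes parallel to the decision boundary at signed offsets $\pm\alpha\w_1$ from a boundary point $\z_0$, with $\alpha>0$ fixed by the balance condition $\pi_1\, p_\text{in}(\z\notin\mathcal{R}_{K=2}\mid\y=1)+\pi_2\, p_\text{in}(\z\notin\mathcal{R}_{K=2}\mid\y=2)=1-\epsilon$. So the only thing left is to evaluate those two conditional probabilities under the additional hypothesis that $p(\z\mid\y=i)=\mathcal{N}(\pmb{\mu}_i,\pmb{\Sigma}_i)$ with the classes linearly separable (Definition \ref{def_linearly_separable}), and read off the equation in $\alpha$.

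First I would reduce each conditional probability to a one-dimensional calculation. Whether $\z$ lies in $\mathcal{R}_{K=2}$ depends on $\z$ only through the scalar $\w_i^\intercal\z$ (equivalently $\w_1^\intercal\z$, since $\w_2=-\w_1$): it is the band $\w_1^\intercal(\z_0-\alpha\w_1)<\w_1^\intercal\z<\w_1^\intercal(\z_0+\alpha\w_1)$, whose complement is the union of the two outer half-lines. Projecting a Gaussian onto a fixed direction, $\w_i^\intercal\z$ conditioned on $\y=i$ is univariate Gaussian with mean $\w_i^\intercal\pmb{\mu}_i$ and variance $\w_i^\intercal\pmb{\Sigma}_i\w_i$. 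By linear separability, essentially all of class $i$'s mass lies on its own side of the decision boundary, so inside the complement of the slab only the outer half-line on class $i$'s side carries non-negligible probability; the opposite tail contributes a negligible term that I would carry through as the approximation error. Hence $p_\text{in}(\z\notin\mathcal{R}_{K=2}\mid\y=i)$ equals, up to that negligible amount, the probability that this univariate Gaussian exceeds the threshold hyperplane sitting at offset $\alpha$ beyond the boundary on class $i$'s side.

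Next I would evaluate that one-sided Gaussian tail with the standard identity $\Pr[X>c]=\tfrac12\,\text{erfc}\big((c-m)/(s\sqrt2)\big)=\tfrac12\big[1+\text{erf}\big((m-c)/(s\sqrt2)\big)\big]$ for $X\sim\mathcal{N}(m,s^2)$, taking $m=\w_i^\intercal\pmb{\mu}_i$, $s^2=\w_i^\intercal\pmb{\Sigma}_i\w_i$, and $c$ the value of $\w_i^\intercal\z$ on the offset hyperplane, which in terms of $\z_0$ and $\w_1$ is exactly $\w_1^\intercal(\z_0+\alpha\w_1)$. This gives $p_\text{in}(\z\notin\mathcal{R}_{K=2}\mid\y=i)\approx\tfrac12\big[1+\text{erf}\big((\w_i^\intercal\pmb{\mu}_i-\w_1^\intercal(\z_0+\alpha\w_1))/\sqrt{2\,\w_1^\intercal\pmb{\Sigma}_i\w_1}\big)\big]$. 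Substituting $i=1,2$ into the Theorem \ref{theorem_valid_ood_region_2class} balance condition, the constant parts contribute $\tfrac12(\pi_1+\pi_2)=\tfrac12$ because the priors sum to one; moving this to the right-hand side turns $1-\epsilon$ into $1-2\epsilon$ and leaves precisely $\pi_1\,\text{erf}(\cdots)+\pi_2\,\text{erf}(\cdots)=1-2\epsilon$, the claimed equation. Each $\text{erf}$ argument is strictly decreasing in $\alpha$, so the left side is strictly monotone (recovering Theorem \ref{theorem_valid_ood_region_2class}'s observation that $p_\text{in}(\z\notin\mathcal{R}_{K=2})$ decreases with $\alpha$) and the equation has a unique positive root for $\epsilon\in(0,1)$.

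The main obstacle is not any individual computation but the careful handling of the linear-separability approximation together with the sign and orientation bookkeeping: one must check which outer half-line is the negligible one for each class, verify that projecting onto $\w_i$ versus $\w_1$ only introduces sign flips that cancel consistently (they coincide for $K=2$ via $\w_2=-\w_1$), and confirm that the offset hyperplane on class $i$'s side corresponds to the $+\alpha$ branch in the $\w_1$ parametrisation. Once the orientation is pinned down, the remainder is just the Gaussian-CDF-to-$\text{erf}$ substitution and the $\pi_1+\pi_2=1$ cancellation.
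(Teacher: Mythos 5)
Your proposal is correct and follows essentially the same route as the paper: starting from the slab and balance condition of Theorem \ref{theorem_valid_ood_region_2class}, using linear separability to drop the negligible tail for each class, evaluating the remaining Gaussian half-space probability (your 1D projection onto $\w_1$ is exactly how that integral reduces to $\tfrac12[1+\mathrm{erf}(\cdot)]$ with mean $\w_1^\intercal\pmb{\mu}_i$ and variance $\w_1^\intercal\pmb{\Sigma}_i\w_1$), and rearranging with $\pi_1+\pi_2=1$ to obtain the $1-2\epsilon$ form. Your added care with the orientation/sign bookkeeping and the monotonicity-in-$\alpha$ remark go slightly beyond the paper's write-up but do not change the argument.
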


\begin{remark}
%Note that this Gaussian assumption is not entirely unrealistic. 
Justification for the Gaussian assumption comes from topological studies finding class components are well modelled by single connected components \citep{Naitzat2020}. Secondly, similar assumptions have been popularised in prior work, e.g. \citep{Lee2018a}.
\end{remark}

\begin{proof}
% \tp{ref for this I have is stackoverflow -- https://math.stackexchange.com/questions/556977/gaussian-integrals-over-a-half-space}

This result requires deriving an analytical expression for eq. \ref{eq_probs_k2}.
%which is made possible via the Gaussian assumption.
%The linearly separable assumption simplifies our problem slightly, since we can ignore the probability mass from each cluster that lies on the `wrong' side of the boundary. 
% will be decreasing in $\alpha$, and depends on $\epsilon, \w_1, \pmb{\mu}_1, \pmb{\mu}_2, \pmb{\Sigma}_1, \pmb{\Sigma}_2$. 
%\tp{this stuff below becomes key when considering non-symmetric boundaries, overlap etc}
% The linearly separable assumption is not strictly necessary -- if clusters were not linearly separable, we would have to 
%Note that this region will be tight for cases where the means of the two classes are symmetric about the , and either $\Sigma_1=\Sigma_2$, or \tp{show rotationally symmetric}. For the general case it will not be tight, but nevertheless describes a valid OOD region. \tp{could formalise this more?}
By assumption we have that $p_\text{in}(\z|\y=i) = \mathcal{N}(\z; \pmb{\mu}_i, \pmb{\Sigma}_i)$, which allows us to analytically evaluate the proportion of a component falling either side of a hyperplane -- technically, integrating over a half space. Our assumption on the densities being linearly separable (def. \ref{def_linearly_separable}) simplifies this further, since we need only consider the integral over a single Gaussian component on each side of the decision boundary, with negligible impact on the result.

%The required integral\footnote{\url{https://math.stackexchange.com/questions/556977/gaussian-integrals-over-a-half-space}} for one component is,

The required computation to integrate over the training data, $\z \sim \mathcal{D}_\text{in}$, for one component is\footnote{\url{https://math.stackexchange.com/questions/556977/gaussian-integrals-over-a-half-space}} ,
\begin{align}
    p_\text{in}(\z \notin \mathcal{R}_{K=2} |\y=1) 
    &= \frac{1}{(2\pi)^{H/2}(\det\pmb{\Sigma}_1)^{1/2}} \int_{\mathbb{R}_\alpha} 
    \exp{\left[ 
    -\frac{1}{2}(\z-\pmb{\mu}_1)^\intercal \pmb{\Sigma}^{-1} (\z-\pmb{\mu}_1) 
    \right]} d\z \\
    % &= \frac{1}{2} \left[ 1 -  \text{erf}\left(\frac{c}{\sqrt{2}}\right)
    % \right] \\
    &= \frac{1}{2} \left[ 1 -  \text{erf}\left(-\frac{\w_1^\intercal \pmb{\mu}_i - \w_1^\intercal(\z_0+\alpha \w_1)}{\sqrt{2 \w_1^\intercal \pmb{\Sigma}_i \w_1 }} \right)
    \right].
\end{align}
Combining with the second component, $p_\text{in}(\z \notin \mathcal{R}_{K=2} |\y=2)$, and rearranging, produces the main result.

%\tp{double check the rearrangement if have time}
\end{proof}

\begin{proposition}{}
\label{prop_valid_ood_region_density}
Consider $K \geq 2$ classes, each represented by a single multivariate Gaussian. The valid OOD region (def. \ref{def_valid_ood_region}) for $U_\text{density}$ is given by,
\begin{align}
    \mathcal{R}_\text{density}:\{\z \in \mathbb{R}^H | \bigcap_{i}
    (\z-\pmb{\mu}_i)^\intercal \pmb{\Sigma}_i (\z-\pmb{\mu}_i) > c_i
    \}
\end{align}
where each $c_i$ depends on $\epsilon, \pmb{\mu}_i,\pmb{\Sigma}_i$.
\end{proposition}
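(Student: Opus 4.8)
The plan is to exploit that $U_\text{density}(\z)=-\log\hat{q}(\z)$ is a strictly decreasing function of the estimated density, so ranking points by uncertainty is identical to ranking them by $-\hat{q}$. First I would note that under this section's modelling assumption the true feature distribution equals the mixture, $p_\text{in}(\z)=\hat{q}(\z)=\sum_{i=1}^K\pi_i\mathcal{N}(\z;\pmb{\mu}_i,\pmb{\Sigma}_i)$. Then the membership condition of def.~\ref{def_valid_ood_region}, $\mathbb{E}_{\z\sim\mathcal{D}_\text{in}}[\mathbb{I}(U_\text{density}(\z')>U_\text{density}(\z))]>1-\epsilon$, is equivalent to $\Pr_{\z\sim p_\text{in}}(\hat{q}(\z)>\hat{q}(\z'))>1-\epsilon$, i.e. $\hat{q}(\z')<t_\epsilon$, where $t_\epsilon$ is the density level with $\Pr_{\z\sim p_\text{in}}(\hat{q}(\z)\le t_\epsilon)=\epsilon$ (well defined since $\hat{q}(\z)$ has a continuous law under $p_\text{in}$, so ties are a null event). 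Hence $\mathcal{R}_\text{density}$ is exactly the sub-level set $\{\z':\hat{q}(\z')<t_\epsilon\}$.

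Next I would convert this sub-level set into the claimed intersection of ellipsoid exteriors. Since each mixture term is non-negative, $\hat{q}(\z')<t_\epsilon$ immediately forces $\pi_i\mathcal{N}(\z';\pmb{\mu}_i,\pmb{\Sigma}_i)<t_\epsilon$ for every $i$, and taking logs of the Gaussian density turns this into $(\z'-\pmb{\mu}_i)^\intercal\pmb{\Sigma}_i^{-1}(\z'-\pmb{\mu}_i)>c_i$ with $c_i=-2\log\bigl(t_\epsilon(2\pi)^{H/2}|\pmb{\Sigma}_i|^{1/2}/\pi_i\bigr)$, a constant fixed by $\epsilon$ and the component parameters. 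This already gives the inclusion $\mathcal{R}_\text{density}\subseteq\bigcap_i\{(\z-\pmb{\mu}_i)^\intercal\pmb{\Sigma}_i^{-1}(\z-\pmb{\mu}_i)>c_i\}$. For the reverse inclusion I would impose the $K$-component analogue of the separation assumption in def.~\ref{def_linearly_separable}: if the clusters are essentially disjoint then at any $\z$ at most one component is non-negligible, so $\hat{q}(\z)\approx\max_i\pi_i\mathcal{N}(\z;\pmb{\mu}_i,\pmb{\Sigma}_i)$, and $\hat{q}(\z)<t_\epsilon$ then follows from the individual inequalities up to negligible mass, so the two sets coincide.

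Finally I would fix the constants by calibration. Because $(\z-\pmb{\mu}_i)^\intercal\pmb{\Sigma}_i^{-1}(\z-\pmb{\mu}_i)\sim\chi^2_H$ when $\z\sim\mathcal{N}(\pmb{\mu}_i,\pmb{\Sigma}_i)$, and under separation almost all of component $i$'s mass lies in the region where only ellipsoid $i$ is active, the training mass inside $\mathcal{R}_\text{density}$ is $\sum_i\pi_i\Pr(\chi^2_H>c_i)$ up to negligible terms; requiring this to equal $\epsilon$ determines the $c_i$ (for instance the symmetric choice $c_i=F^{-1}_{\chi^2_H}(1-\epsilon)$, or any other allocation of the $\epsilon$ budget across components), giving the stated dependence on $\epsilon$ and $(\pmb{\mu}_i,\pmb{\Sigma}_i)$. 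The main obstacle is the reverse inclusion in the second step: in general the sub-level set of a sum of densities is strictly contained in the intersection of the sub-level sets of the summands, so the clean ellipsoid-intersection description is exact only in the non-overlapping regime and degrades near the interfaces between clusters. I would therefore state the separation assumption explicitly, mirroring how def.~\ref{def_linearly_separable} is used in corollary~\ref{sec_corollary_2class_gaussian}, and present the proposition as holding under it.
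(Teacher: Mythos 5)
Your proposal is correct and follows essentially the same route as the paper's proof: invoke the Gaussian assumption to describe the valid OOD region as the intersection of per-component hyperellipsoid exteriors, with the constants $c_i$ calibrated so that only an $\epsilon$ fraction of the training mass lies inside. You are in fact more careful than the paper, which silently identifies the sub-level set of the mixture $\hat{q}$ with the intersection of the component sub-level sets; your explicit separation assumption (mirroring def.~\ref{def_linearly_separable}) and the $\chi^2_H$ calibration make precise what the paper leaves implicit, and your use of $\pmb{\Sigma}_i^{-1}$ in the Mahalanobis form corrects what is evidently a typo ($\pmb{\Sigma}_i$) in the stated region.
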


\begin{proof}
Similar to eq. \ref{eq_probs_k2}, we require the contour defining,
\begin{align}
    p_\text{in}(\z \notin \mathcal{R}_\text{density}) = \sum_i \pi_i p_\text{in}(\z \notin \mathcal{R}_\text{density} |\y=i) = 1-\epsilon.
\end{align}
The Gaussian assumption allows this to be done analytically. For a single Gaussian component, the $1-\epsilon$\% contour is a hyperellipsoid described by $(\z-\pmb{\mu}_i)^\intercal \pmb{\Sigma}_i (\z-\pmb{\mu}_i) > c_i$, where $c_i$ depends on $\epsilon, \pmb{\mu}_i,\pmb{\Sigma}_i$. Our valid OOD region is then just the intersection over all components.
\end{proof}

\begin{definition}
\label{def_linearapprox_region}
%\tp{are we assuming optimal structure here? i think wi and wj need same magnitudes to make the vector normal?}

We define the `linear approximation of the valid OOD region' (also `linear approximation') for $U_\text{max}$ as an approximation to the exact valid OOD region (def. \ref{def_valid_ood_region}), using only linear hyperplanes with structure as described following. This applies to multiple classes, $K \geq 2$, and general weight structures.
% \tp{and $U_\text{entropy}$}

% (which follows contours of constant $U_\text{max}$ \tp{or $U_\text{entropy}$})

%generalisation of the valid OOD region in theorem \ref{theorem_valid_ood_region_2class} to multiple classes, $K \geq 2$, and general weight structures.

We begin with a description in words: take each decision hyperplane and create a pair of hyperplanes, offset in opposite directions by some perpendicular distance. The union of the regions enclosed by each pair will form the linear approximation of the valid OOD region. This is visualised in fig. \ref{fig_geom_linearapprox}.
% Visualised in fig. \ref{fig_valid_ood_region}b and \ref{fig_optimal}.
\begin{figure}[h!]
\vspace{-0.1in}
\centering
\resizebox{0.5\textwidth}{!}{
  %\vspace{-2pt}
  \includegraphics[width=1\textwidth]{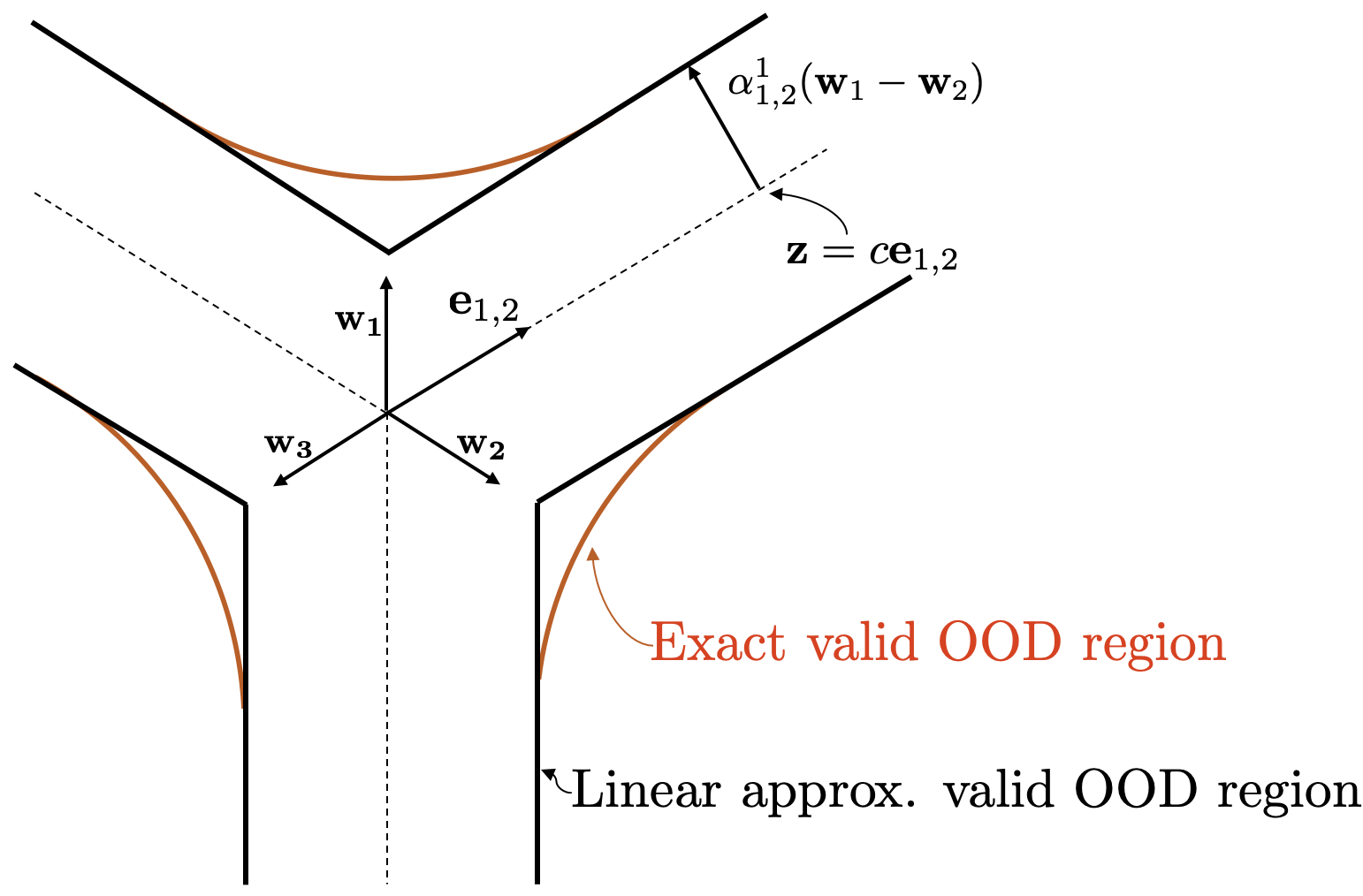} 
 % \put(-75,-14){\small Softmax confidence}
}
\vskip -0.05in
\caption{Geometry of the linear approximate region for $H=2$, $K=3$.}
\label{fig_geom_linearapprox}
\end{figure}

The hyperplane separating any two classes requires, $\sigma(\z)_i=\sigma(\z)_j$. For this to be true, we require $\w_i^\intercal \z = \w_j^\intercal \z \implies (\w_i - \w_j)^\intercal \z = 0 $. This produces the hyperplane,
\begin{align}
\mathcal{R}_{H,ij}:& 
\{\z \in \mathbb{R}^H | 
(\w_i-\w_j)^\intercal \z = 0
\} .
\end{align}
This hyperplane will only form part of the decision boundary where either of the classes $i,j$ are the maximum of all classes. We define another region where class $i$ is maximum,
%The hyperplane separating any two classes will only be an `active' part of the decision hyperplane set where either of the classes are the maximum of all classes. We first define the region where class $i$ is maximum,
\begin{align}
\mathcal{R}_{\hat{\y}=i}:& 
\{\z \in \mathbb{R}^H | 
i = \text{argmax} \, \sigma(\z)_i
\} ,
\end{align}
and we note for $\mathcal{R}_{H,ij}$ we will be interested in the union of the region for both class $i$ and $j$, $\mathcal{R}_{\hat{\y}=i} \cup \mathcal{R}_{\hat{\y}=j}$.

As per the description, we are interested in a pair of hyperplanes offset in opposite directions, parallel to $\mathcal{R}_{H,ij}$. Similar to the case of $K=2$, this results in a region in between the hyperplane pair given by,
\begin{align}
\label{eq_hyperplane_offset}
\mathcal{R}_{H\text{vol},ij}:& 
\{\z \in \mathbb{R}^H | 
(\w_i-\w_j)^\intercal(\z_0 - \alpha_{i,j}^i (\w_i-\w_j))
< (\w_i-\w_j)^\intercal \z < 
(\w_i-\w_j)^\intercal(\z_0+\alpha_{i,j}^j (\w_i-\w_j) )
\} .
\end{align}
The overall linear approximation of the valid OOD region can be built using a combination of these volumes as follows,
\begin{align}
\label{eq_region_combo_k3}
\mathcal{R}_{\text{linear approx.}} \coloneqq
\bigcup_{\forall i \neq j}
\left( \mathcal{R}_{\hat{\y}=i} \cup \mathcal{R}_{\hat{\y}=j} \right) 
\cap \mathcal{R}_{H\text{vol},ij}
\end{align}

It remains to select $\alpha_{i,j}^i$ and $\alpha_{i,j}^j$, to set the distance separating each hyperplane pair. We choose these $\alpha$'s to match the width of the exact valid OOD region at a specific point as follows. Let $\mathbf{e}_{i,j}$ be the vector following the decision boundary between classes $i$ and $j$, in the plane containing the two weight vectors of interest ($\mathbf{e}_{i,j}$ is perpendicular to $(\w_1-\w_2)$). We choose the point on $\mathbf{e}_{i,j}$ to be far from the origin, so $\z = c \mathbf{e}_{i,j}$, for, $c \to \infty$.

%\tp{let's be clear; are we choosing a different alpha for each pair? if so, better to use a subscript, $\alpha_{i,j}$}

Note that the exact valid OOD region will follow contours of constant softmax confidence. For general (non-optimal) weight structures it will \textbf{not} be true that, $\sigma(\mathbf{e}_{i,j} + \alpha_{i,j}^i(\w_1-\w_2))_i = \sigma(\mathbf{e}_{i,j} - \alpha_{i,j}^j(\w_1-\w_2))_j  $, if, $ \alpha_{i,j}^i = \alpha_{i,j}^j$, hence specifying the linear approximate region requires determining up to $K(K-1)$ $\alpha$ parameters (each class forms a decision boundary with every other class). However, in the special case of optimal weight structure (def. \ref{def_optimal_structure}), due to symmetries, only a single global $\alpha$ parameter is required.	We constrain, $\alpha>0$.

%\tp{but we haven't said how we choose $\alpha$ yet. need to say that choosing it based on the exact region, with large magnitude, in between the two w vectors}

\end{definition}

\begin{proposition}
\label{prop_linearapprox_equalfor_k2}
The linear approximate valid OOD region (def. \ref{def_linearapprox_region}) is equal to the exact valid OOD region (as per theorem \ref{theorem_valid_ood_region_2class}) for $K=2$ if $\w_1 = -\w_2$.
\end{proposition}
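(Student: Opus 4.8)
The plan is to show directly that the linear approximation construction of Definition~\ref{def_linearapprox_region}, when specialised to $K=2$ with $\w_1 = -\w_2$, collapses to exactly the region $\mathcal{R}_{K=2}$ described in Theorem~\ref{theorem_valid_ood_region_2class}. First I would note that with only two classes there is a single pair $(i,j) = (1,2)$, so the union in eq.~\ref{eq_region_combo_k3} has just one term, and moreover $\mathcal{R}_{\hat\y=1}\cup\mathcal{R}_{\hat\y=2} = \mathbb{R}^H$ (every point is classified as either class 1 or class 2), so that intersection is vacuous. Hence $\mathcal{R}_{\text{linear approx.}} = \mathcal{R}_{H\text{vol},12}$, the slab between the two offset hyperplanes.

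Next I would substitute $\w_1 - \w_2 = 2\w_1$ into eq.~\ref{eq_hyperplane_offset}. The slab $\mathcal{R}_{H\text{vol},12}$ becomes $\{\z : (2\w_1)^\intercal(\z_0 - \alpha_{1,2}^1\,(2\w_1)) < (2\w_1)^\intercal\z < (2\w_1)^\intercal(\z_0 + \alpha_{1,2}^2\,(2\w_1))\}$. Dividing through by $2$ and absorbing the factor $2$ into a redefinition of the offset parameters (setting $\alpha := 4\alpha_{1,2}^1$, and likewise on the other side), this is precisely the region $\{\z : \w_1^\intercal(\z_0 - \alpha\w_1) < \w_1^\intercal\z < \w_1^\intercal(\z_0 + \alpha\w_1)\}$ appearing in Theorem~\ref{theorem_valid_ood_region_2class}, provided the two offsets coincide. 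So the remaining point is to argue that the single global offset suffices, i.e.\ that $\alpha_{1,2}^1 = \alpha_{1,2}^2$. This follows from the symmetry of the softmax about the decision boundary when $\w_1 = -\w_2$, which is exactly Lemma~\ref{lemma_1_symmetric_softmax}: since $\sigma(\z_0 + \alpha\w_1)_1 = \sigma(\z_0 - \alpha\w_1)_2$ for every $\alpha$, the contour of constant confidence on one side of the boundary is the mirror image of that on the other, so the matching condition $\sigma(\z_0 + \alpha_{1,2}^1(\w_1-\w_2))_1 = \sigma(\z_0 - \alpha_{1,2}^2(\w_1-\w_2))_2$ forces $\alpha_{1,2}^1 = \alpha_{1,2}^2$. (Equivalently, the remark after Definition~\ref{def_linearapprox_region} already states that in the symmetric/optimal case only one global $\alpha$ is needed.)

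Finally, I would verify that the $\alpha$ selected by the linear-approximation recipe (matching the width of the exact region along $\mathbf{e}_{1,2}$ at large magnitude) agrees with the $\alpha$ that Theorem~\ref{theorem_valid_ood_region_2class} fixes via the probability equation $\pi_1 p_\text{in}(\z\notin\mathcal{R}_{K=2}\mid\y=1) + \pi_2 p_\text{in}(\z\notin\mathcal{R}_{K=2}\mid\y=2) = 1-\epsilon$. But for $K=2$ with $\w_1=-\w_2$ the exact valid OOD region \emph{is} already a slab bounded by two hyperplanes parallel to the decision boundary (as Theorem~\ref{theorem_valid_ood_region_2class} shows, using that $U_\text{max}$ is monotone in $|\w_1^\intercal(\z-\z_0)|$ and symmetric about the boundary), so the ``approximation'' is exact at every point, not merely asymptotically; hence the width-matching at $c\mathbf{e}_{1,2}$ picks out the same $\alpha$, and the two regions coincide.

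I expect the main obstacle to be purely bookkeeping: being careful that the two constructions use the decision-boundary-normal direction consistently (Definition~\ref{def_linearapprox_region} offsets along $\w_i - \w_j = 2\w_1$ whereas Theorem~\ref{theorem_valid_ood_region_2class} offsets along $\w_1$, a factor of $2$ that is harmless since $\alpha$ is a free parameter in both), and explicitly invoking Lemma~\ref{lemma_1_symmetric_softmax} to justify that a single offset parameter is legitimate rather than two. There is no genuinely hard analytic content here; the statement is essentially a consistency check that Definition~\ref{def_linearapprox_region} reduces correctly, and the proof should be short.
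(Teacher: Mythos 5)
Your proposal is correct and follows essentially the same route as the paper's proof: reduce the union in eq.~\ref{eq_region_combo_k3} to the single slab $\mathcal{R}_{H\text{vol},1,2}$, observe that with $\w_1=-\w_2$ the offset hyperplanes are just rescaled versions of those in Theorem~\ref{theorem_valid_ood_region_2class}, and use the symmetry (Lemma~\ref{lemma_1_symmetric_softmax}) to get a single offset parameter. Your final step checking that the width-matching recipe picks the same $\alpha$ (because the exact region is already a slab for $K=2$) is actually spelled out more explicitly than in the paper, and the only blemish is the bookkeeping constant (the absorbed factor is $2$, not $4$), which is harmless since $\alpha$ is a free parameter, as you note.
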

\begin{proof}
Observe that in eq. \ref{eq_region_combo_k3}, if we have $K=2$, there is only one combination of $i,j$, and $\mathcal{R}_{\hat{\y}=1} \cup \mathcal{R}_{\hat{\y}=2}$ covers the whole space $\mathbb{R}^H$. Hence, 
\begin{align}
\mathcal{R}_{\text{linear approx.}} &=
 \mathcal{R}_{H\text{vol},1,2} .
\end{align}
It's straightforward to see that eq. \ref{eq_hyperplane_offset} is a scaled version of eq. \ref{eq_hyperplane_offset_k2} since by assumption $\w_1 = -\w_2$. (Recall scaling does not alter a hyperplane.) Also due to symmetry, $\alpha_{i,j}^i = \alpha_{i,j}^j$.

\end{proof}

% \textbf{Lemma or Theorem x.}
% Consider $K$ weight vectors, $\w_i$, of fixed magnitude, $||\w_i|| = c_1$, and evenly distributed so that $\cos \theta_{i,j} = c_4 \;\; \forall i \neq j$

\begin{theorem}
\label{theorem_exact_linear_convergence}
The linear approximate valid OOD region is an increasingly close approximation of the $U_\text{max}$ exact valid OOD region at large magnitudes, for any $K\geq3$ and general weight structures, for all points except close to the intersection between decision boundary hyperplanes.
%\tp{it matches the exact valid OOD region in the 2D plane between any two non-identical weight vectors at large magnitudes }
%\tp{for general $K$.}
%\tp{for u ent too?}
\end{theorem}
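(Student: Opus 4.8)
The idea is that far from the origin and away from where three or more decision hyperplanes meet, $U_\text{max}$ collapses onto a two–class problem whose level sets are exactly the offset hyperplanes of Definition \ref{def_linearapprox_region}. First I would restate the exact region through a global threshold: choose $s^\ast \in (1/K,1)$ with $p_{\z\sim\mathcal{D}_\text{in}}(\max_m \sigma(\z)_m < s^\ast) = 1-\epsilon$, so that the exact valid OOD region is $\{\z' : \max_m \sigma(\z')_m < s^\ast\}$ and its boundary is the level set $\{\max_m \sigma(\z')_m = s^\ast\}$. Fixing a pair $i\ne j$, let $C_{ij}(\delta)$ consist of those $\z$ for which $i$ or $j$ attains $\max_m \w_m^\intercal\z$ and $(\w_i-\w_m)^\intercal\z \ge \delta\|\z\|$, $(\w_j-\w_m)^\intercal\z \ge \delta\|\z\|$ for every $m\ne i,j$; the set excluded by the phrase ``close to the intersection between decision boundary hyperplanes'' is precisely the complement of $\bigcup_{i\ne j} C_{ij}(\delta)$, which shrinks onto the codimension-$\ge 2$ strata as $\delta\downarrow 0$.

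The key step is a concentration estimate on $C_{ij}(\delta)$. There $\w_m^\intercal\z \le \min(\w_i^\intercal\z,\w_j^\intercal\z)-\delta\|\z\|$ for $m\ne i,j$, so with $R(\z) := \sum_{m\ne i,j}\exp\w_m^\intercal\z$,
\begin{align}
R(\z) \le (K-2)\,e^{-\delta\|\z\|}\big(\exp\w_i^\intercal\z + \exp\w_j^\intercal\z\big), \qquad \sigma(\z)_i = \frac{\exp\w_i^\intercal\z}{\exp\w_i^\intercal\z+\exp\w_j^\intercal\z+R(\z)},
\end{align}
and comparing with the two–class softmax $\tilde\sigma_{ij}(\z)_i := \big(1+\exp(-(\w_i-\w_j)^\intercal\z)\big)^{-1}$ built from $\w_i,\w_j$ alone gives $|\sigma(\z)_i-\tilde\sigma_{ij}(\z)_i| \le (K-2)\,e^{-\delta\|\z\|}$ uniformly on $C_{ij}(\delta)$ (and likewise for $j$). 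Hence the exact boundary, intersected with $C_{ij}(\delta)$, differs in softmax value by $O(e^{-\delta\|\z\|})$ from $\{\tilde\sigma_{ij}(\z)_i=s^\ast\}\cup\{\tilde\sigma_{ij}(\z)_j=s^\ast\}$, i.e. from the hyperplane pair $(\w_i-\w_j)^\intercal\z = \pm\log\frac{s^\ast}{1-s^\ast}$, each a translate of $\mathcal{R}_{H,ij}$.

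To turn this value bound into geometric (Hausdorff) convergence I would use that, along the direction $\w_i-\w_j$, the map $\tilde\sigma_{ij}(\cdot)_i$ has derivative $s^\ast(1-s^\ast)\|\w_i-\w_j\|^2$, bounded away from $0$ on the relevant slab provided $s^\ast$ is bounded away from $1$ (so $\epsilon$ must not be vanishingly small); thus an $O(e^{-\delta\|\z\|})$ error in value corresponds to an $O(e^{-\delta\|\z\|})$ displacement perpendicular to $\mathcal{R}_{H,ij}$. Finally I would match this with Definition \ref{def_linearapprox_region}: the offsets $\alpha_{i,j}^i,\alpha_{i,j}^j$ there are pinned down so that the approximating hyperplanes pass through the $c\to\infty$ limit of the exact boundary along $\mathbf{e}_{i,j}$, and the computation above identifies that limit as exactly $(\w_i-\w_j)^\intercal\z = \log\frac{s^\ast}{1-s^\ast}$ (resp. its reflection) — a position depending only on the global $s^\ast$ and on $\w_i-\w_j$, hence the same hyperplane that approximates the exact boundary everywhere on $C_{ij}(\delta)$, not just at the calibration point. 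Taking the union over pairs $i\ne j$ as in eq. \ref{eq_region_combo_k3} and letting $\delta\downarrow 0$ then shows the exact and linear–approximate valid OOD regions agree, outside any neighbourhood of the intersection strata, up to a boundary layer of width $O(e^{-\delta\|\z\|})\to 0$, which is the claim.

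The main obstacle is performing the value-to-position translation uniformly over the whole, unbounded piece of the $i$–$j$ decision boundary rather than on a fixed compact patch, and making the ``except near intersections'' caveat quantitative — in particular handling that for non-optimal weight structures the calibration ray $\mathbf{e}_{i,j}$ need not lie on a genuine segment of the decision boundary (a third class may dominate along it), which is exactly why that caveat is needed; pinning down how fast the admissible region may be allowed to widen around the codimension-$\ge 2$ strata as $\|\z\|\to\infty$ is the delicate part.
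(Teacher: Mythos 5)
Your argument is essentially the paper's own: away from the hyperplane intersections the non-$\{i,j\}$ exponentials in the softmax denominator are suppressed like $e^{-\delta\|\z\|}$, so the exact region's constant-confidence boundary collapses onto the offset hyperplane pair to which Definition \ref{def_linearapprox_region} is calibrated at $c\to\infty$ -- your cones $C_{ij}(\delta)$, uniform remainder bound, and value-to-displacement conversion are a more quantitative rendering of the paper's parametrisation $\z=\beta\mathbf{d}_{1,2}+\alpha(\w_1-\w_2)$ with $\beta\to\infty$ (eq. \ref{eq_33}--\ref{eq_34}). The only slip is the threshold convention: $s^\ast$ should be the lower $\epsilon$-quantile of training confidence, i.e. $p_{\z\sim\mathcal{D}_\text{in}}(\max_m\sigma(\z)_m< s^\ast)=\epsilon$ rather than $1-\epsilon$ (cf. eq. \ref{eq_probs_k2}), which does not affect the convergence argument itself.
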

\begin{proof}
We begin by referring back to fig. \ref{fig_geom_linearapprox}, which illustrates intuitively what we will show. While the exact valid OOD region is curved at the intersection between decision boundary hyperplanes (both close to the origin and where two decision boundary hyperplanes intersect), far away from these intersections, the exact valid OOD region does become linear.

Consider some $\z$ vector that moves along the decision hyperplane between classes 1 and 2 in any direction, $\beta \mathbf{d}_{1,2}$, and perpendicular to the hyperplane, $\alpha (\w_1 - \w_2)$,
\begin{align}
    \z = \beta \mathbf{d}_{1,2} + \alpha (\w_1 - \w_2),
\end{align}
illustrated in fig. \ref{fig_vector_geom}. Note that, $(\w_1-\w_2)^\intercal \mathbf{d}_{1,2} = 0$, and that $\mathbf{e}_{1,2}$ (from def. \ref{def_linearapprox_region}) is a special case of $\mathbf{d}_{1,2}$, where it lives on the plane of $\w_1$ and $\w_2$.
\begin{figure}[h!]
% \vspace{-0.2in}
\centering
\resizebox{0.5\textwidth}{!}{
  \vspace{-2pt}
  \includegraphics[width=1\textwidth]{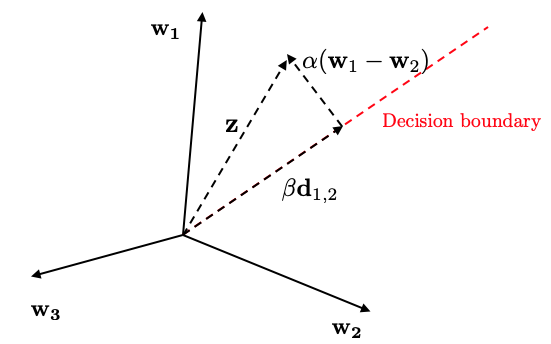} 
 % \put(-75,-14){\small Softmax confidence}
}
\vskip -0.05in
\caption{Vector geometry for $H=2$, $K=3$, and general (non-optimal) weight structuring.}
\label{fig_vector_geom}
\end{figure}

Note that by specifying $\mathbf{d}_{1,2}$, $\beta$ and $\alpha$, this $\z$ vector can span the entire $\mathbb{R}^H$ space (though we are only interested in its use over the range $\mathcal{R}_{\hat{\y}=1} \cup \mathcal{R}_{\hat{\y}=2}$). We now push this vector through the softmax and simplify, presuming we are interested in $\beta>0$, so the max class is 1.
\begin{align}
    \sigma(\z)_1 &= 
    \frac{\exp^{\w_1^\intercal(\beta \mathbf{d}_{1,2} + \alpha (\w_1 - \w_2))}}
    {\exp^{\w_1^\intercal(\beta \mathbf{d}_{1,2} + \alpha (\w_1 - \w_2))} + \exp^{\w_2^\intercal(\beta \mathbf{d}_{1,2} + \alpha (\w_1 - \w_2))}+ \sum_{j=3}^K \exp^{\w_j^\intercal(\beta \mathbf{d}_{1,2} + \alpha (\w_1 - \w_2))}} \\
    &= 
    \frac{\exp^{\alpha \w_1^\intercal (\w_1 - \w_2)}}
    {\exp^{\alpha \w_1^\intercal (\w_1 - \w_2)} + 
    \exp^{\w_2^\intercal(\beta \mathbf{d}_{1,2} + \alpha (\w_1 - \w_2) )- \w_1^\intercal \beta \mathbf{d}_{1,2}}+ 
    \sum_{j=3}^K \exp^{\w_j^\intercal(\beta \mathbf{d}_{1,2} + \alpha (\w_1 - \w_2)) - \w_1^\intercal \beta \mathbf{d}_{1,2}}} \\
    &= 
    \frac{\exp^{\alpha \w_1^\intercal (\w_1 - \w_2)}}
    {\exp^{\alpha \w_1^\intercal (\w_1 - \w_2)} + 
    \exp^{\alpha \w_2^\intercal (\w_1 - \w_2)}+ 
    \sum_{j=3}^K \exp^{\beta( \w_j^\intercal\mathbf{d}_{1,2} - \w_1^\intercal \mathbf{d}_{1,2}) +  \alpha \w_j^\intercal(\w_1 - \w_2)}} \label{eq_33} .
\end{align}
Since we know that, $\w_j^\intercal\mathbf{d}_{1,2} < \w_1^\intercal \mathbf{d}_{1,2}$ (the logit for class 1 must be largest), we can say that provided $ \w_1^\intercal \mathbf{d}_{1,2} = \w_2^\intercal \mathbf{d}_{1,2} \neq \w_j^\intercal \mathbf{d}_{1,2} \; \forall j \geq 3$,
\begin{align}
    \lim_{\beta \to \infty}
    \sum_{j=3}^K \exp^{\beta( \w_j^\intercal\mathbf{d}_{1,2} - \w_1^\intercal \mathbf{d}_{1,2}) +  \alpha \w_j^\intercal(\w_1 - \w_2)} = 0.
    \label{eq_34}
\end{align}
Hence, when $\beta$ grows large (at large magnitudes / at points far from the origin), eq. \ref{eq_33} depends on $\alpha$ in a constant way for nearly all vectors $\mathbf{d}_{1,2}$.

Our linear approximate region def. \ref{def_linearapprox_region} is defined to match the width of the exact valid OOD region, far from the origin. Therefore, at increasing magnitudes, the linear approximate valid OOD region is an increasingly close approximation of the exact valid OOD region for most points.

However, note that the rate of this convergence is slower when $\mathbf{d}_{1,2}$ approaches meeting the criteria; $ \w_1^\intercal \mathbf{d}_{1,2} = \w_2^\intercal \mathbf{d}_{1,2} = \w_j^\intercal \mathbf{d}_{1,2}$ for some $j \geq 3$. 
This convergence happens at all orientations except where $\mathbf{d}_{1,2}$ intersects with a second decision boundary hyperplane. For example, if $ \w_1^\intercal \mathbf{d}_{1,2} = \w_2^\intercal \mathbf{d}_{1,2} = \w_3^\intercal \mathbf{d}_{1,2}$, then eq. \ref{eq_34} does not reduce to zero, and an additional term remains in the denominator of $\sigma(\z)_i$, that departs from the linear plane over the rest of the space.

%\tp{not really sure we've said this clearly/fully correctly. more want to say. 1) look it's linear at most places. 2) but the rate of this convergence is slower when $\mathbf{d}_{1,2}$ approaches $ \w_1^\intercal \mathbf{d}_{1,2} = \w_2^\intercal \mathbf{d}_{1,2} = \w_3^\intercal \mathbf{d}_{1,2}$.}

\end{proof}

\begin{corollary}
\label{corollary_linear_region_is_subset_of_exact}
This corollary is an extension of theorem \ref{theorem_exact_linear_convergence}.
Consider the linear approximate valid OOD region (def. \ref{def_linearapprox_region}), $\mathcal{R}_{\text{linear approx.}}$, for any number of classes $K \geq 3$ and general weight structure. 
The linear approximate valid OOD region forms a subset of the exact valid OOD region (def. \ref{def_valid_ood_region}), $\mathcal{R}_{\text{linear approx.}} \subseteq \mathcal{R}_{\text{exact}}$.
\end{corollary}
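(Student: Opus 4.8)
The plan is to recast the exact valid OOD region as a super-level set of the uncertainty, exactly as in the proof of Theorem~\ref{theorem_exact_linear_convergence}: fix the threshold $u^*$ determined by $p_{\z\sim\mathcal{D}_\text{in}}(U_\text{max}(\z) < u^*) = 1-\epsilon$, so that $\mathcal{R}_{\text{exact}} = \{\z \in \mathbb{R}^H : U_\text{max}(\z) > u^*\}$. By eq.~\ref{eq_region_combo_k3} it suffices to show that, for each pair $i \neq j$, every $\z \in (\mathcal{R}_{\hat{\y}=i}\cup\mathcal{R}_{\hat{\y}=j}) \cap \mathcal{R}_{H\text{vol},ij}$ satisfies $U_\text{max}(\z) > u^*$; taking the union over pairs then yields $\mathcal{R}_{\text{linear approx.}} \subseteq \mathcal{R}_{\text{exact}}$. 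Fix such a $\z$; we may assume $i = \operatorname{argmax}_k \sigma(\z)_k$ (swapping the labels $i,j$ if necessary, which is harmless since both $\mathcal{R}_{\hat{\y}=i}\cup\mathcal{R}_{\hat{\y}=j}$ and $\mathcal{R}_{H\text{vol},ij}$ are symmetric in $i,j$). The factor $\mathcal{R}_{\hat{\y}=i}\cup\mathcal{R}_{\hat{\y}=j}$ is precisely what guarantees that the argmax is one of the two classes forming the boundary under consideration, and $\w_i^\intercal\z \geq \w_j^\intercal\z$ places $\z$ on class $i$'s side of the $i$--$j$ decision hyperplane.

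The core step is a monotone domination by the restricted two-class problem on $\{i,j\}$. Writing $A = \exp(\w_i^\intercal\z)$, $B = \exp(\w_j^\intercal\z)$ and $C = \sum_{k\neq i,j}\exp(\w_k^\intercal\z) \geq 0$, we have $\sigma(\z)_i = \frac{A}{A+B+C} \leq \frac{A}{A+B} =: \sigma^{(ij)}(\z)_i$, hence $U_\text{max}(\z) = -\sigma(\z)_i \geq -\sigma^{(ij)}(\z)_i$. Now $\sigma^{(ij)}(\z)_i = \big(1+\exp(-(\w_i-\w_j)^\intercal\z)\big)^{-1}$ depends on $\z$ only through $(\w_i-\w_j)^\intercal\z$, and is strictly increasing in it; equivalently, $-\sigma^{(ij)}(\z)_i$ decreases strictly as $\z$ moves away from the $i$--$j$ hyperplane on class $i$'s side. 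Membership of $\z$ in $\mathcal{R}_{H\text{vol},ij}$ (eq.~\ref{eq_hyperplane_offset}, using $(\w_i-\w_j)^\intercal\z_0 = 0$) bounds $(\w_i-\w_j)^\intercal\z$ strictly below $\alpha\,\|\w_i-\w_j\|^2$, where $\alpha$ is the offset prescribed for class $i$'s side of this pair in Definition~\ref{def_linearapprox_region}; therefore $U_\text{max}(\z) \geq -\sigma^{(ij)}(\z)_i > -\big(1+\exp(-\alpha\|\w_i-\w_j\|^2)\big)^{-1}$, and the right-hand side is exactly the value $U_\text{max}^{(ij)}$ takes at perpendicular offset $\alpha$.

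It remains to identify this boundary value with $u^*$. By construction, $\alpha$ is selected so that, along the ray $\z = c\,\mathbf{e}_{i,j}$ with $c \to \infty$, the exact valid OOD region reaches perpendicular offset $\alpha$ on class $i$'s side — that is, $\lim_{c\to\infty} U_\text{max}\big(c\,\mathbf{e}_{i,j} + \alpha(\w_i-\w_j)\big) = u^*$. But eqs.~\ref{eq_33}--\ref{eq_34} in the proof of Theorem~\ref{theorem_exact_linear_convergence} show that the contributions of the classes $k \geq 3$ to the denominator of $\sigma(\z)_i$ vanish in this limit (since $\mathbf{e}_{i,j}$ lies on a single decision hyperplane), so the limit equals $-\big(1+\exp(-\alpha\|\w_i-\w_j\|^2)\big)^{-1}$ — which does not depend on $c$. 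Hence that quantity equals $u^*$, and chaining the inequalities gives $U_\text{max}(\z) > u^*$, i.e.\ $\z \in \mathcal{R}_{\text{exact}}$, completing the proof after the union over pairs.

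The main obstacle is bookkeeping rather than analysis: one must carefully track signs and which side of which decision hyperplane contains $\z$ so that the two-class sigmoid is monotone in the correct direction, and one must be precise that the offset $\alpha$ in Definition~\ref{def_linearapprox_region} is specified through the limit $c\to\infty$ rather than at a finite point — which is exactly why Theorem~\ref{theorem_exact_linear_convergence} must be invoked, since only in that limit does the restricted two-class confidence coincide with the exact confidence at offset $\alpha$. A secondary subtlety is that the reduction to a single pair relies essentially on the factor $\mathcal{R}_{\hat{\y}=i}\cup\mathcal{R}_{\hat{\y}=j}$ in eq.~\ref{eq_region_combo_k3} to rule out the case where some third class attains the argmax at $\z$, which would invalidate the two-class comparison for that pair; note also that any $\mathbf{d}$-components of $\z$ orthogonal to $\w_i-\w_j$ but outside the $\w_i$--$\w_j$ plane are immaterial, as neither $\sigma^{(ij)}(\z)_i$ nor membership in $\mathcal{R}_{H\text{vol},ij}$ sees them.
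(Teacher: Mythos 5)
Your proposal is correct and follows essentially the same route as the paper: the key fact in both is that the contribution of the remaining classes to the softmax denominator is strictly positive at any finite point (your bound $A/(A+B+C)\leq A/(A+B)$ is exactly the paper's observation that the $j\geq 3$ terms in eq.~\ref{eq_33} vanish only in the $\beta\to\infty$ limit used to fix the $\alpha$'s), so confidence inside each offset strip lies strictly below the limiting two-class value defining the boundary. Your write-up is simply a more explicit rendering -- making the threshold $u^*$, the pairwise decomposition of eq.~\ref{eq_region_combo_k3}, and the identification of the boundary value with $u^*$ precise where the paper leaves them implicit.
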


\begin{proof}
The linear approximate valid OOD region chooses all $\alpha$'s by assuming that $\beta=\infty$ in eq. \ref{eq_33}. 
%Whilst this provides an increasingly good approximation as $\beta \to \infty$, 
In general $\beta < \infty$ so we can say,
%We further note that, 
%\tp{something like. at smaller $\beta$ -> below is true}
\begin{align}
    \sum_{j=3}^K \exp^{\beta( \w_j^\intercal\mathbf{d}_{1,2} - \w_1^\intercal \mathbf{d}_{1,2}) +  \alpha \w_j^\intercal(\w_1 - \w_2)} > 0,  \;\; \forall \, 0\leq \beta < \infty, \;\; \forall \, \alpha>0 .
\end{align}
Hence, for some $\z$ with $0\leq \beta < \infty$, $\sigma(\z)_1$ can only be lower than the linear approximate valid OOD region, and $U_\text{max}(\z)$  can only be higher. Therefore the linear region must be a subset of the exact valid OOD region
%\tp{so $\sigma(\z)_1$ is lower -> $U_\text{max}$ is always higher than predicted by the linear region -> therefore linear region must be a subset of exact valid OOD region}
\end{proof}

\newpage
\subsection{Optimal Structure}
\label{sec_app_proof_optimalstructure}

\begin{theorem}
\label{theorem_hyperplane_structure_optimal}
Consider $K$ equally weighted components in $\mathbb{R}^H$, where $H \geq K-1 $,  each following a Gaussian distribution, $p(\z|\y=i) = \mathcal{N}(\z;\pmb{\mu}_i,\pmb{\Sigma}_i)$. Assume mean vectors have fixed magnitude, $||\pmb{\mu}_i|| = c_2 \; \forall i$.
These clusters are to be separated with a softmax with weights $\w_i$.
In order to minimise a regularised cross-entropy loss,
\begin{align}
\pmb{\mu}_i,\pmb{\Sigma}_i, \w_i &= \text{argmin}_{\pmb{\mu}_i,\pmb{\Sigma}_i, \w_i} - \mathbb{E}_{p(\z)}
 \bigg[ \sum_i^K p(\y=i|\z)   \log \sigma(\z)_i \bigg] + \sum_i^K  \lambda_1 \lvert \lvert \w_i \rvert \rvert^2 , 
\end{align}
the parameters $\pmb{\mu}_i,\pmb{\Sigma}_i, \w_i$, must have an optimal decision boundary structure (def. \ref{def_optimal_structure}).

% Consider $K$ equally weighted components in $\mathbb{R}^H$, each following a Gaussian distribution, $p(\z|\y=i) = \mathcal{N}(\z;\pmb{\mu}_i,\pmb{\Sigma}_i)$. 
% Assume $H \geq K-1 $ and mean vectors have fixed magnitude, $||\pmb{\mu}_i|| = c_2$,
% These classes are to be separated with a softmax with weights $\w_i$.
% In order to minimise a regularised cross-entropy loss,
% \begin{align}
% \pmb{\mu}_i,\pmb{\Sigma}_i, \w_i = \text{argmin}_{\pmb{\mu}_i,\pmb{\Sigma}_i, \w_i} - \sum_i^K \mathbb{E}_{\z \sim \mathcal{N}(\pmb{\mu}_i,\pmb{\Sigma}_i)}
%  \bigg[  \log \sigma(\z)_i \bigg] + \lambda_1 \lvert \lvert \w_i \rvert \rvert^2 , % + \lambda_2 \lvert \lvert \pmb{\mu}_i \rvert \rvert
% \end{align}
% the parameters $\pmb{\mu}_i,\pmb{\Sigma}_i, \w_i$, must meet the below conditions.
% \begin{itemize}[leftmargin=15pt]
% \item Weight magnitudes are constant, $\lvert \lvert \w_i \rvert \rvert = c_1$.  
% \vspace{-0.05in}
% \item Clusters collapse to a point, $\pmb{\Sigma}_i \approx \mathbf{0}$. %= \epsilon I$ for $\epsilon \approx 0$. % For some small $\epsilon \approx 0$, $\pmb{\Sigma}_i = \epsilon I$.
% \vspace{-0.05in}
% \item Bias values (here absorbed into the $\w_i$ vector) are zero.
% \vspace{-0.05in}
% \item Weight and mean vectors of the same class are in the same direction, $\pmb{\mu}_i = c_3 \w_i$. %, where $c_3 =   || \pmb{\mu}_i|| /  || \w_i|| $
% \vspace{-0.05in}
% \item $\w_i$'s are `evenly distributed'. If $\theta_{i,j}$ represents the angle between $\w_i$ and $\w_j$, then,
% \begin{align}
% \cos \theta_{i,j} = \frac{-1}{K-1} \;\;\; \forall i \neq j .
% \end{align}
% \end{itemize}

\end{theorem}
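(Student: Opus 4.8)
The plan is to first collapse the functional minimisation over distributions to a finite-dimensional problem, and then identify the minimiser with a simplex equiangular tight frame (ETF). First I would rewrite the expected cross-entropy using Bayes' rule and the balanced prior $\pi_i=1/K$, giving $-\mathbb{E}_{p(\z)}\big[\sum_i p(\y=i\mid\z)\log\sigma(\z)_i\big] = \tfrac1K\sum_i \mathbb{E}_{\z\sim\mathcal{N}(\pmb{\mu}_i,\pmb{\Sigma}_i)}[-\log\sigma(\z)_i]$. Expanding $-\log\sigma(\z)_i = -\w_i^\intercal\z + \log\sum_j e^{\w_j^\intercal\z}$, the linear term contributes $-\w_i^\intercal\pmb{\mu}_i$ independently of $\pmb{\Sigma}_i$, while $\z\mapsto\log\sum_j e^{\w_j^\intercal\z}$ is convex, so Jensen gives $\mathbb{E}[\log\sum_j e^{\w_j^\intercal\z}] \ge \log\sum_j e^{\w_j^\intercal\pmb{\mu}_i}$, with equality attained in the limit $\pmb{\Sigma}_i\to\mathbf{0}$. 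Since the covariances are free and appear nowhere else, any minimiser has $\pmb{\Sigma}_i\approx\mathbf{0}$ (feature-structure property 3), and the problem reduces to minimising $\tfrac1K\sum_i\big[-\w_i^\intercal\pmb{\mu}_i + \log\sum_j e^{\w_j^\intercal\pmb{\mu}_i}\big] + \lambda_1\sum_i\lVert\w_i\rVert^2$ over $\w_i,\pmb{\mu}_i$ subject to $\lVert\pmb{\mu}_i\rVert=c_2$.

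Next I would handle the bias. Writing $\w_i=(\tilde\w_i,b_i)$ against the augmented feature $(\tilde\z,1)$, adding a common constant to every $b_i$ shifts all logits equally and so leaves each $\sigma(\z)_i$ unchanged while altering $\sum_i b_i^2$, which is minimised when $\sum_i b_i=0$; combined with the permutation symmetry of the reduced objective (invariant under simultaneously relabelling the $\w_i$ and $\pmb{\mu}_i$) this forces all $b_i$ equal at an optimum, hence all zero (weight-structure property 2). For the core, I would note that the reduced objective depends on the vectors only through the logit matrix $\ell_{ij}=\w_j^\intercal\pmb{\mu}_i$ and the norms $\lVert\w_i\rVert$; since $\log\sum_j e^{\ell_{ij}}$ is convex and symmetric in its arguments, hence Schur-convex, for any fixed norms and fixed row-sums $\sum_j\ell_{ij}$ the off-diagonal entries of each row should be equalised, and a parallel symmetrisation across rows (using the same permutation symmetry, plus Cauchy--Schwarz $\w_i^\intercal\pmb{\mu}_i\le c_2\lVert\w_i\rVert$ with equality iff $\pmb{\mu}_i\parallel\w_i$) pins the configuration to $\lVert\w_i\rVert=c_1$ for all $i$, $\pmb{\mu}_i=c_3\w_i$, and $\cos\theta_{i,j}=t$ for all $i\ne j$ — i.e.\ weight-structure property 1 and feature-structure properties 1--2.

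The surviving problem is then a scalar optimisation: under these symmetries the objective equals $\log\!\big(1+(K-1)e^{c_1 c_2(t-1)}\big) + \lambda_1 K c_1^2$, strictly increasing in $t$, so $t$ is pushed to its least feasible value; writing $\hat\w_i=\w_i/\lVert\w_i\rVert$, one has $0\le\lVert\sum_i\hat\w_i\rVert^2 = K + K(K-1)t$, so $t\ge -1/(K-1)$, attained exactly when $\sum_i\w_i=0$, which is realisable precisely because $H\ge K-1$. This gives $\cos\theta_{i,j}=-1/(K-1)$ (the spherical-code angle, completing the weight structure), while minimising the residual over $c_1>0$ fixes $c_1$ and hence $c_3$; the clause that $\lVert\pmb{\mu}_i\rVert$ is ``as large as possible subject to regularisation'' is consistent with, and left undetermined by, the fixed-$c_2$ hypothesis here. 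The minimiser is then unique up to a global rotation and class relabelling, i.e.\ has optimal decision boundary structure. The main obstacle I anticipate is the passage from ``the symmetric ETF achieves the lower bound'' to ``every global minimiser is symmetric'': this requires tracking the equality cases of each inequality used — Jensen for the covariances, Schur-convexity of log-sum-exp for the off-diagonal logits, Cauchy--Schwarz for the $\w_i$--$\pmb{\mu}_i$ alignment, and norm-minimisation for the biases — and verifying they are jointly tight only at the simplex ETF, which is exactly where the hypothesis $H\ge K-1$ is needed (it guarantees the ETF exists and the combined bound is attainable).
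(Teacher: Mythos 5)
Your proposal reaches the same conclusion as the paper but by a noticeably different (and in places tighter) route. Where the paper handles the covariances by invoking an analytic approximation for pushing a Gaussian through a softmax \citep{Lu2020} and reading off that confidence decreases with variance, you use exactness of the Bayes rewriting plus Jensen on the convex log-sum-exp term, with equality only in the degenerate limit $\pmb{\Sigma}_i\to\mathbf{0}$ -- this is more elementary and avoids relying on an approximation (modulo the minor point that log-sum-exp is affine along directions on which all logits move together, so ``equality only at $\pmb{\Sigma}_i\approx\mathbf{0}$'' needs that one-line caveat). Likewise, for the angle, the paper asserts that evenly distributing the $\w_i$ is optimal by citing the spherical-codes literature \citep{Bagchi1997} and then derives $\cos\theta_{i,j}=\tfrac{-1}{K-1}$ from $\sum_i\w_i=\mathbf{0}$; you instead get the bound $t\geq\tfrac{-1}{K-1}$ directly from $0\leq\lVert\sum_i\hat\w_i\rVert^2$ together with monotonicity of the symmetrised objective in $t$, which is self-contained and makes the role of $H\geq K-1$ (attainability of $\sum_i\w_i=\mathbf{0}$) explicit. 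The alignment step ($\pmb{\mu}_i=c_3\w_i$ via Cauchy--Schwarz), the equal-norm conclusion, the trade-off of $c_1$ against $\lambda_1$, and the zero-bias argument all parallel the paper's reasoning, though your bias argument (shift invariance of the softmax plus minimising $\sum_i b_i^2$) is framed differently from the paper's ``constant-1 feature only incurs regularisation penalty.'' The one place your sketch is genuinely looser is the symmetrisation step: Schur-convexity of log-sum-exp equalises the off-diagonal logits only at \emph{fixed} row sums $\sum_j\w_j^\intercal\pmb{\mu}_i$, which are not fixed a priori, and permutation symmetry of the objective does not by itself force a symmetric minimiser -- you rightly flag this equality-case bookkeeping as the main obstacle. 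Note, however, that the paper's own proof is no stronger here (it simply asserts ``inherent symmetry'' implies constant weight magnitudes), so your proposal is at least at the paper's level of rigour while being more self-contained in the two technical steps above.
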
 
 
\begin{remark}
The assumption that `mean vectors have fixed magnitude' may not be strictly necessary for this proof, but it reduces the space of possible structures we must consider, making it more straightforward. 

The assumption can be justified in two ways. 1) Empirically, $||\z||$ does not vary greatly within the training distribution with no obvious signs of multi-modal behaviour in fig. \ref{fig_cifar_etc}. 2) Earlier in the network, weight initialisations, batchnormalisation and regularisation might be expected to encourage activations for different classes to be of similar magnitudes.
\end{remark}

\begin{proof}

%We begin by intuiting

In order to align with the optimal structure, the following criteria must be fulfilled.
\begin{itemize}[]
    \item Criteria 1. Weights are of constant magnitude, $||\w_i|| = c_1 \; \forall i$, and as large as possible subject to regularisation.
    \item Criteria 2. Bias value are zero.
    \item Criteria 3a. Weight vectors are evenly distributed.
    \item Criteria 3b. $\cos \theta_{i,j} = \frac{-1}{K-1} \; \forall i \neq j$.
%    \item Criteria 4. Mean vector magnitudes, $||\pmb{\mu}_i||$ are as large as possible subject to regularisation.
    \item Criteria 4. Weights point toward mean vectors, $\pmb{\mu}_i = c_3\w_i$.
    \item Criteria 5. Variance of clusters is small, $\pmb{\Sigma}_i \approx \mathbf{0}$.
\end{itemize}
First observe that, since we have full control over placement of class clusters, we should choose them to be linearly separable (def. \ref{def_linearly_separable}), which simplifies our objective,
\begin{align}
\pmb{\mu}_i,\pmb{\Sigma}_i, \w_i &= \text{argmin}_{\pmb{\mu}_i,\pmb{\Sigma}_i, \w_i} - \sum_i^K \mathbb{E}_{p(\z|y=i)}
 \bigg[  \log \sigma(\z)_i \bigg] + \lambda_1 \lvert \lvert \w_i \rvert \rvert^2 .
\end{align}
We now consider the effect of the covariance matrix on our objective.
We use an analytical approximation for passing a normally distributed random vector through a softmax \citep{Lu2020},
\begin{align}
%\int_\z \frac{\exp^{\w_i^\intercal \z}}{ \sum_j \exp^{\w_j^\intercal \z}} \mathcal{N}(\z; \pmb{\mu}_i, \pmb{\Sigma}_i) d\z 
\int_\z \sigma(\z)_i \mathcal{N}(\z; \pmb{\mu}_i, \pmb{\Sigma}_i) d\z 
\approx
 \frac{1}{1 + \sum_{j \neq i}\exp{-\frac{\w_j^\intercal \z - \w_i^\intercal \z}{\sqrt{ 1+ \frac{\pi}{8} (\w_i^\intercal \Sigma_i \w_i + \w_j^\intercal \Sigma_i \w_j )}} } }.
\end{align}
From this we can see that softmax confidence always decreases with a increasing variance, and hence is maximised when, $\pmb{\Sigma}_i \approx \mathbf{0}$ -- this proves \textbf{criteria 5}. Class clusters are therefore delta functions, and we can rewrite our objective, 
\begin{align}
\max_{\pmb{\mu}_i, ||\w_i||, \cos \theta_{\pmb{\mu}_i,i} \forall i}  \sum_i   \log \left( \frac{\exp ||\w_i|| c_2 \cos \theta_{\pmb{\mu}_i,i} }{\sum_j \exp ||\w_j|| c_2 \cos \theta_{\pmb{\mu}_i,j} }  \right)  - \lambda_1 ||\w_i||^2.
\end{align}
Since $||\pmb{\mu}_i||$ is fixed, there is no option to `stack' boundaries (fig. \ref{fig_counterfact_structs}), and so $\pmb{\mu}_i$'s must be arranged on a hypersphere. There creates inherent symmetry in our problem, and weights should be of constant magnitude. $\sigma(\z)_i$ is increasing in $||\w_i||$ (proposition \ref{prop_optimal_zmag}), which should be as large as possible subject to the regularisation penalty (\textbf{criteria 1}).
\begin{align}
\max_{\pmb{\mu}_i, \cos \theta_{\pmb{\mu}_i,i} \forall i}  \sum_i   \log \left( \frac{\exp c_1 c_2 \cos \theta_{\pmb{\mu}_i,i} }{\sum_j \exp c_1 c_2 \cos \theta_{\pmb{\mu}_i,j} }  \right)  - \lambda_1 ||\w_i||^2
\end{align}
It remains to determine how these weight vectors should be arranged.
One would like to maximise $\cos \theta_{\pmb{\mu}_i,i}$, and minimise $\cos \theta_{\pmb{\mu}_i,j} \; \forall i \neq j$. 
Maximising $\cos \theta_{\pmb{\mu}_i,i}$ is straightforward; by setting $\pmb{\mu}_i = c_3 \w_i$ for $c_3>0$ (\textbf{criteria 4}), we find $\cos \theta_{\pmb{\mu}_i,i}=1$.

% relative to each other. Clearly we would like to minimise $\cos \theta_{\pmb{\mu}_i,j} \; \forall i \neq j$.
%, and due to inherent symmetry we would expect this to be constant , $\cos \theta_{\pmb{\mu}_i,j} =c_4 \; \forall i \neq j$.
%\begin{align}
%\min_{\cos \theta_{i,j} \forall i \neq j}  \cos \theta_{i,j} \;\; s.t. \; \theta_{i,j} = c_4 \; \forall i \neq j.
%\end{align}

To minimise $\cos \theta_{\pmb{\mu}_i,j} \; \forall i \neq j$, we turn to an area know as spherical codes \citep{Coxeter1989, Brauchart2015, Musin2015}.
A general problem considered in this area requires arranging $K$ unit vectors in $H$ dimensions, so that the minimum angle between any pair of vectors, is maximised (see `the kissing number problem' and `Tammes problem'). This matches our own objective, where we'd like to minimise the cosine angle.

For general $H$ and $K$, this problem is unsolved, however when $H \geq K-1$, there are more degrees of freedom allowed by the space than there are vectors, and the problem becomes more straightforward. Note that in neural networks, typically $H \gg K$.

%The following proof is adapted from\footnote{\url{https://math.stackexchange.com/questions/2575583/the-maximum-of-minimal-intersection-angle-of-n-vectors-in-mathbbrn}}.

% Note that it's straightforward for $\w_1 \dots \w_K$ to be evenly distributed when $H \geq K-1$, since there are more degrees of freedom allowed by the space than there are vectors. The problem becomes far more challenging when $K > H$, and is unsolved for the general case -- example

The best thing one can do if $H \geq K-1$, is to evenly distribute weight vectors \citep{Bagchi1997} (\textbf{criteria 3a}), this implies, $\sum_i \w_i = \mathbf{0}$, i.e. directly summing all vectors returns to the origin, and also, $ \cos \theta_{i,j} = \text{const.} \;\; \forall i \neq j$, i.e. the angle between all pairs of vectors is equal.

For any vector, $\mathbf{a} \in \mathbb{R}^H$, we can therefore say,
\begin{align}
\mathbf{a}^\intercal \sum_i \w_i = \sum_i \w_i^\intercal \mathbf{a} = 0.
\label{eq_aw_equals_0}
\end{align}
This allows us to deduce the result in \textbf{criteria 3b},
\begin{align}
0 &= \sum_i \sum_j \w_i^\intercal \w_j \\
   &= \sum_j \w_j^\intercal \w_j + \sum_{i \neq j} \w_i^\intercal \w_j \\
   &= K c_1^2 +  \sum_{i \neq j} c_1^2 \cos \theta_{i,j} \\
   &= K +  2 \sum_{i > j} \cos \theta_{i,j} \\
   &= K +  2  {K \choose 2} \cos \theta_{i,j}  \label{eq_nKchoose2} \\
   &= K +  2  \frac{K !}{2! (K-2)!}\cos \theta_{i,j} \\
   &= K +   K(K-1) \cos \theta_{i,j}\\
 \cos \theta_{i,j} &= \frac{-1}{K-1} .
\end{align}
Finally we think of the bias, absorbed into $\w_i$, as having a corresponding final-layer feature of a constant 1 across classes. Since this can not be modified as part of $\pmb{\mu}_i$, it only introduces a penalty via the regularisation term, $\lambda_1 \lvert \lvert \w_i \rvert \rvert^2$. Hence it should be set to zero (\textbf{criteria 2}).

%The extra dimension provided by the bias is redundant since $H \geq K-1$. Since $\w_i$ are penalised via the regularisation term, $\lambda_1 \lvert \lvert \w_i \rvert \rvert^2$

%We think of the bias, absorbed into $\w_i$, as having a corresponding final-layer feature of a constant 1 across classes. Require $K-2$ dimensions to evenly space weight vectors. Clusters can't be moved on the bias dimension. Since they are penalised via the regularisation term, $\lambda_1 \lvert \lvert \w_i \rvert \rvert^2$

%Since by assumption classes are balanced, there s

%, to maximise,
%\begin{align}
%\max_{\pmb{\mu}_i, \pmb{\Sigma}_i, ||\w_i||, \cos \theta_{\z,i} \forall i}  \sum_i \mathbb{E}_{p(\z|y=i)} \bigg[ \log \left( \frac{\exp ||\w_i|| c_2 \cos \theta_{\z,i} }{\sum_j \exp ||\w_j|| c_2 \cos \theta_{\z,j} }  \right) \bigg]  - \lambda_1 ||\w_i||^2.
%\end{align}
%This creates unavoidable symmetry in the problem, and we can largely intuit the required solution -- we have $K$ Gaussian components that, to minimise a cross-entropy loss, need to be positioned as far as possible from each other. 
%meaning that weights will be of constant magnitude $||\w_i||=c_1$ (where $c_1$ decreases with an increase in $\lambda_1$). Fundamentally, the challenge is therefore how to evenly distribute these $K$ Gaussians on a sphere.

%We prove this by induction by first considering the case $K=2$, then $K=3$

\vspace{2in}

\end{proof}

\begin{proposition}
\label{prop_optimal_zmag}
Consider a softmax with weight vectors, $\w_i$, and $K$ classes.
For any weight structure, $U_\text{max}(\z)$ increases with decreasing $||\z||$. 
%, so $\sigma(\z)_i < \sigma(\z/T)_i$ for some $T > 1$.
\end{proposition}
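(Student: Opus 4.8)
The plan is to reduce the statement to a one-dimensional monotonicity fact about the softmax. Write $\z = ||\z|| \hat{\z}$ where $\hat{\z}$ is the unit vector in the direction of $\z$, and let $a_j \coloneqq \w_j^\intercal \hat{\z}$, so that the logit for class $j$ is $\w_j^\intercal \z = ||\z|| a_j$. Let $i = \operatorname{argmax}_j \sigma(\z)_j$; since the softmax is monotone in the logits, $i$ is also $\operatorname{argmax}_j a_j$, and crucially this index does not change as we scale $||\z||$ (the direction $\hat{\z}$ is fixed). Then
\begin{align}
\max_j \sigma(\z)_j = \sigma(\z)_i = \frac{1}{1 + \sum_{j \neq i} \exp\!\big( ||\z|| (a_j - a_i) \big)}.
\end{align}

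The key step is to observe that $a_j - a_i \leq 0$ for every $j$, with the sum over $j \neq i$ involving at least one term (assuming $K \geq 2$); moreover at least one term is strictly negative unless all logits tie, in which case $U_\text{max}$ is constant. Writing $t \coloneqq ||\z|| \geq 0$, each summand $\exp(t(a_j - a_i))$ is non-increasing in $t$, hence $\sum_{j \neq i} \exp(t(a_j-a_i))$ is non-increasing, hence $\sigma(\z)_i = 1/(1 + \sum_{j\neq i} \exp(t(a_j-a_i)))$ is non-decreasing in $t$. Therefore $\max_j \sigma(\z)_j$ is non-decreasing in $||\z||$, and $U_\text{max}(\z) = -\max_j \sigma(\z)_j$ is non-increasing in $||\z||$ — i.e. $U_\text{max}$ increases (weakly) as $||\z||$ decreases, which is the claim. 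Alternatively one can give the calculus version: differentiate $\sigma(\z)_i$ with respect to $t$ and check the sign, but the term-by-term monotonicity argument is cleaner and avoids edge cases.

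The only real subtlety — the "main obstacle" — is handling the argmax index carefully: one must note that scaling $\z$ by a positive factor preserves the ordering of the logits, so the class achieving the maximum is the same for all $||\z|| > 0$, which is what lets us hold $i$ fixed while varying $t$. A secondary point worth a sentence is the degenerate case where two or more logits are tied for the maximum along direction $\hat{\z}$; then $U_\text{max}$ may be flat in $||\z||$, consistent with the (weak) monotonicity asserted. No continuity or differentiability machinery is needed beyond monotonicity of $t \mapsto e^{ct}$ for $c \leq 0$.
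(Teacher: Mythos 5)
Your proof is correct and follows essentially the same route as the paper: rewriting $\max_j \sigma(\z)_j$ as $1/(1+\sum_{j\neq i}\exp(||\z||(a_j-a_i)))$ with $a_j = \w_j^\intercal\hat{\z}$ is exactly the paper's expression $1/(1+\sum_{j\neq i}\exp(||\z||(||\w_j||\cos\theta_{\z,j}-||\w_i||\cos\theta_{\z,i})))$, and both conclude by monotonicity of the exponentials with non-positive coefficients. Your added care about the argmax index being scale-invariant and the tied-logit edge case simply makes explicit what the paper dismisses with ``clearly decreases.''
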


\begin{proof}
 %A decrease in $||\z||$ can be represented by a vector, $\z/T$ for some $T > 1$. We must show that $\sigma(\z)_i < \sigma(\z/T)_i$.
We have,
\begin{align}
   \sigma(\z)_i &= \frac{\exp{\w_i^\intercal \z}}
   {\sum_j \exp{\w_j^\intercal \z}} \\
   &= \frac{1}
   {1 + \sum_{j \neq i} \exp{\w_j^\intercal \z - \w_i^\intercal \z}} \\
   &= \frac{1}
   {1 + \sum_{j \neq i} \exp{||\w_j|| \, ||\z|| \cos \theta_{\z,j} - ||\w_i|| \, ||\z|| \cos \theta_{\z,i}}} \\
   &= \frac{1}
   {1 + \sum_{j \neq i} \exp{||\z|| \left( ||\w_j|| \cos \theta_{\z,j} - ||\w_i|| \cos \theta_{\z,i} \right)}} \\
\end{align}
which clearly decreases with decreasing $||\z||$. Hence, $U_\text{max}(\z) = \max_i -\sigma(\z)_i$ increases with decreasing $||\z||$.

\end{proof}

\begin{remark}
Ahead of presenting theorem \ref{theorem_optimal_costheta_k2} \& \ref{theorem_optimal_costheta}, we first provide a simple result that intuits why we'd expect their result to be true for any number of classes, $K\geq2$. We then explain why the proof is more involved than might initially be expected, and outline our approach.

%It's harder to rigorously prove this than intuition suggests (a common occurrence in spherical coding problems \citep{Coxeter1989}!). We have taken two approaches:

We can say that the sum of the changes in cosine angles must be zero,
\begin{align}
    \sum_{i=1}^K \vartriangle \cos \theta_{\z,i} = 0 .
\end{align}
Decreasing $\max_i \cos \theta_{\z,i}$ requires setting $\vartriangle \cos \theta_{\z,1} <0$. This means the cumulative change in all other cosine angles must be positive, $-\vartriangle \cos \theta_{\z,1} = \sum_{j=2}^K \vartriangle \cos \theta_{\z,j}>0$.
Writing this out in a form similar to the softmax but without exponentials clearly decreases with decreasing $\vartriangle \cos \theta_{\z,1}$,
\begin{align}
    \frac{ \vartriangle \cos \theta_{\z,1}}{\vartriangle  \cos \theta_{\z,1} + \sum_{j=2}^K \vartriangle  \cos \theta_{\z,j}}.
\end{align}

Unfortunately, for general $a_i$'s and $b_i$'s,
\begin{align}
    \sum a_i >  \sum b_i \nRightarrow \sum \exp^{a_i} >\exp^{b_i}.
\end{align}
Hence, introducing the exponentials to create a softmax makes the proof challenging, and we must prove there are properties of the $\cos \theta_{\z,i}$'s that make this true.
We have taken two approaches:

1) \textbf{Theorem \ref{theorem_optimal_costheta_k2}.} For low numbers of classes, it is possible to keep track of how angles between $\z$ and every $\w_i$ change with a change in $\z$, and to enumerate all possible rotations of a vector. However, this approach becomes cumbersome beyond $K=3$. 

2) \textbf{Theorem \ref{theorem_optimal_costheta}.} On the other hand, we have, $\cos \theta_{i,j} = \frac{-1}{K-1}$, and for large numbers of classes, $\cos \theta_{i,j}  \approx 0$, i.e. vectors are approximately mutually orthogonal to each other. This simplifies our proof, since we no longer need to keep track of angles between $\z$ and \textit{every} $\w_i$, only those which it is moving away/towards from (the rest can be assumed to remain at $\cos \theta_{i,j} \approx 0$).

%It seems intuitively reasonable that $U_\text{max}(\z)$ is guaranteed to increase with decreasing $\max_i \cos \theta_{\z,i}$, under the optimal structure, for \textit{any} number of classes, $K\geq2$.

%It's harder to rigorously prove this than intuition suggests (a common occurrence in spherical coding problems \citep{Coxeter1989}!). 

\end{remark}

\begin{theorem}
\label{theorem_optimal_costheta_k2}
Consider a softmax with weight vectors, $\w_i$, and $K=2$ or $K=3$  classes.
For the optimal structure (def. \ref{def_optimal_structure}), $U_\text{max}(\z)$ increases with decreasing $\max_i \cos \theta_{\z,i}$ (but fixed $||\z||$).
\end{theorem}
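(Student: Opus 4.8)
The plan is to reduce the claim to an explicit computation in the two‑dimensional plane of the weight vectors and then run a short case analysis over the ways $\max_i\cos\theta_{\z,i}$ can decrease. The case $K=2$ is immediate: the optimal structure forces $\w_2=-\w_1$ with $\|\w_1\|=\|\w_2\|=c_1$, so $\sigma(\z)_2=1-\sigma(\z)_1$ and, using $\w_1^\intercal\z=c_1\|\z\|\cos\theta_{\z,1}$,
\begin{align}
\sigma(\z)_1=\bigl(1+\exp(-2c_1\|\z\|\cos\theta_{\z,1})\bigr)^{-1}.
\end{align}
When class $1$ is the argmax we have $\max_i\cos\theta_{\z,i}=\cos\theta_{\z,1}\ge 0$ and $U_\text{max}(\z)=-\sigma(\z)_1$; at fixed $\|\z\|$ the right‑hand side is strictly increasing in $\cos\theta_{\z,1}$, so $U_\text{max}$ strictly increases as $\max_i\cos\theta_{\z,i}$ decreases. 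The other argmax case is the mirror image under $\w_1\leftrightarrow\w_2$.

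For $K=3$ I would first drop the ambient dimension. In the optimal structure the three weights have equal norm $c_1$ and pairwise angle $\arccos(-\tfrac12)=120^\circ$, so they span a plane $P$; since $\w_i^\intercal\z=\w_i^\intercal\z_P$ where $\z_P$ is the orthogonal projection of $\z$ onto $P$, the softmax depends on $\z$ only through $\z_P$. Writing $\z_P=\rho(\cos\psi,\sin\psi)$ in polar coordinates with $\w_1$ along $\psi=0$, one has $\max_i\cos\theta_{\z,i}=\tfrac{\rho}{\|\z\|}\cos\psi$, and up to relabelling class $1$ is the argmax exactly on the $120^\circ$ sector $\psi\in(-60^\circ,60^\circ)$. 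Using $\cos(\psi\mp120^\circ)-\cos\psi=-\tfrac32\cos\psi\pm\tfrac{\sqrt3}{2}\sin\psi=-\sqrt3\cos(\psi\pm30^\circ)$ I would write
\begin{align}
\sigma(\z)_1=\bigl(1+f(\rho,\psi)\bigr)^{-1},\qquad f(\rho,\psi)=\exp\bigl(-\sqrt3\,c_1\rho\cos(\psi+30^\circ)\bigr)+\exp\bigl(-\sqrt3\,c_1\rho\cos(\psi-30^\circ)\bigr),
\end{align}
which also equals $2\exp(-\tfrac32 c_1\rho\cos\psi)\,\cosh(\tfrac{\sqrt3}{2}c_1\rho\sin\psi)$. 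It then suffices to show that on the sector $\psi\in(-60^\circ,60^\circ)$, at fixed $\|\z\|$, $f$ strictly increases whenever $\tfrac{\rho}{\|\z\|}\cos\psi$ strictly decreases.

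Following the remark preceding the theorem I would "enumerate the rotations": such a decrease is produced by shrinking the in‑plane radius $\rho$ (equivalently rotating $\z$ partly out of $P$), by rotating $\z_P$ away from $\w_1$ so $|\psi|$ grows, or by a combination. For the first move, note $\psi\pm30^\circ\in(-90^\circ,90^\circ)$ on the sector, so both exponents in the first form of $f$ have a strictly negative coefficient of $\rho$ and $f$ strictly decreases in $\rho$. For the second move, use the $\cosh$ form with $\psi\in[0,60^\circ]$ (mirror for $\psi\le 0$): increasing $\psi$ decreases $\cos\psi$ and increases $\sin\psi$, so both factors grow and $f$ grows. Hence each pure move that lowers $\max_i\cos\theta_{\z,i}$ raises $f$, lowers $\sigma(\z)_1$, and raises $U_\text{max}(\z)$ — consistent with, and using the same sign logic as, proposition \ref{prop_optimal_zmag} and eq. \ref{eq_vector_soft_max}.

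The delicate point — the main obstacle — is the combined move, where $\rho$ and $|\psi|$ change together: one must rule out that a simultaneous small decrease of both could lower $f$ while still lowering $\tfrac{\rho}{\|\z\|}\cos\psi$. I would handle this by parametrising admissible perturbations so that $\|\z\|$ is held fixed together with the orthogonal component $\|\z_\perp\|$, hence $\rho$ is fixed, leaving only the in‑plane angle $\psi$ to vary, for which the monotonicity above is exact; the finite bookkeeping of how each $\theta_{\z,i}$ moves under an in‑plane rotation is exactly what makes the three‑class case tractable. This is also where the approach stops scaling — for $K\ge4$ the weight vectors no longer lie in a plane and the enumeration explodes — which is why theorem \ref{theorem_optimal_costheta} treats large $K$ via the separate near‑orthogonality argument sketched in the remark.
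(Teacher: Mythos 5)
Your proposal is correct and takes essentially the same route as the paper: the $K=2$ case via the sigmoid form in $\cos\theta_{\z,1}$, and the $K=3$ case by reducing to the plane spanned by the weight vectors and treating the in-plane rotation and the out-of-plane rotation (your shrinking of $\rho$) as separate moves, the latter matching the paper's cooling argument based on $\w_i^\intercal \mathbf{n}=0$. If anything your $\cosh$ factorisation of $f(\rho,\psi)$ makes the in-plane monotonicity analytic where the paper only verifies the sign of the derivative numerically, and the combined $(\rho,\psi)$ move you flag as delicate is likewise not resolved in the paper's proof, which also only covers these two pure perturbation families.
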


\begin{proof}
%proof here

% \begin{align}
%     \sum_i^K \cos \theta_{\z,i} = 0 
% \end{align}
We first observe two useful equations.

Similar to eq. \ref{eq_aw_equals_0}, for general classes,
\begin{align}
   0 &= \sum_i \w_i^\intercal \z \\
    &= \sum_i ||\w_i|| \, ||\z|| \cos \theta_{\z,i} \\
    &= \sum_i \cos \theta_{\z,i} .
\end{align}
It will simplify notation to write the softmax with weight vectors relabelled, so $\w_1$ now represents the weight vector closest (largest cosine similarity) to $\z$,  $\cos \theta_{\z,1} \coloneqq \max_i \cos \theta_{\z,i}$, $\w_2$ the second closest and so on. We also have,
 \begin{align}
%   \sigma(\z)_i &= \frac{ \exp \w_i^\intercal \z}{\sum_{j} \exp\w_j^\intercal \z} \\
	\max_i \sigma(\z)_i &= \ \frac{ \exp ||\w_1|| \, ||\z|| \cos \theta_{\z,1} }
	    {\sum_{j} \exp ||\w_j|| \, ||\z|| \cos \theta_{\z,j}} \\
	&= \frac{ \exp c \cos \theta_{\z,1} }
	    {\exp c \cos \theta_{\z,1} + \sum_{j =2}^K \exp c \cos \theta_{\z,j}} .
 \end{align}
for some constant, $||\w_i|| \, ||\z|| = c >0$. The specific value is unimportant here, it's effect being evaluated independently in lemma \ref{prop_optimal_zmag}.

\textbf{Two classes}

For two classes we have, $ \cos \theta_{\z,1} + \cos \theta_{\z,2} = 0$. 

Any decrease in $\cos \theta_{\z,1}$ therefore leads to an increase in $ \cos \theta_{\z,2}$.
Inspecting the softmax,
 \begin{align}
	-U_\text{max}(\z) = \max_i \sigma(\z)_i
	&= \frac{ \exp c  \cos \theta_{\z,1} }
	    { \exp c \cos \theta_{\z,1} +  \exp c \cos \theta_{\z,2}} \\
	&= \frac{ 1 }
	    { 1 +  \exp c (\cos \theta_{\z,2} - \cos \theta_{\z,1} )} \\
	&= \frac{ 1 }
	    { 1 +  \exp -2c \cos \theta_{\z,1} } 
 \end{align}
reveals that $\max_i \sigma(\z)_i$ decreases with decreasing $\cos \theta_{\z,1}$, so $U_\text{max}(\z)$ increases.

\textbf{Three classes}

For three classes, $ \cos \theta_{\z,1} + \cos \theta_{\z,2} + \cos \theta_{\z,3} = 0$.
The subspace covered by the three weight vectors is actually a 2D plane (in fact the space covered by evenly distributed vectors is always a subspace, $\mathbb{R}^{K-1}$ \citep{Bagchi1997} hence our requirement for $H \geq K-1$).  On this 2D plane, $\cos \theta_{i,j} = \frac{-1}{K-1} = \frac{-1}{2} \;\; \forall i \neq j \; \implies \theta_{i,j} = \frac{2}{3}\pi$.

First consider the case when $\z$ is also on this plane.
We can capture all possible movement by considering the change of a single parameter, $\theta_{\z,1} \in [0, \pi/3]$ (noting beyond $\pi/3$ the maximum class changes, and all other possibilities are symmetries of this). 

In this 2D planar case, we can write, $\theta_{\z,2} = \frac{2}{3}\pi - \theta_{\z,1}$ and $\theta_{\z,3} = \frac{2}{3}\pi + \theta_{\z,1}$.
%vector angles must sum to zero, $\theta_{\z,1}+\theta_{\z,2}+\theta_{\z,3}=0$, and can be rewritten in terms of $\theta_{\z,1}$ so, $\theta_{\z,1} + (\frac{2}{3}\pi + \theta_{\z,1}) + (\frac{2}{3}\pi - \theta_{\z,1})=0$ (draw out three equally spaced vectors in 2D to see). 
We now express the softmax in these terms,
 \begin{align}
	-U_\text{max}(\z) = \max_i \sigma(\z)_i
	&= \frac{ \exp c  \cos \theta_{\z,1} }
	    { \exp c \cos \theta_{\z,1} +  \exp c \cos \theta_{\z,2} +\exp c \cos \theta_{\z,3} } \\
	&= \frac{ \exp c  \cos \theta_{\z,1} }
	    { \exp c \cos \theta_{\z,1} +  \exp c \cos( \frac{2}{3}\pi - \theta_{\z,1}) +\exp c \cos ( \frac{2}{3}\pi + \theta_{\z,1})} \label{eq_k3_intermsof_theta1}
 \end{align}
 This can be plugged into a derivative calculator, as visualised in fig. \ref{fig_derviative_k3}, to verify that, $\frac{\partial \max_i \sigma(\z)_i}{\partial \cos \theta_{\z,1} }<0$ across the evaluated range, hence $\max_i \sigma(\z)_i$ is decreasing in $\cos \theta_{\z,1}$, or equivalently, $U_\text{max}(\z)$ is increasing in $\cos \theta_{\z,1}$.
 \begin{figure}[h]
  \vskip 0.05in
 \begin{center}
 \includegraphics[width=0.4\columnwidth]{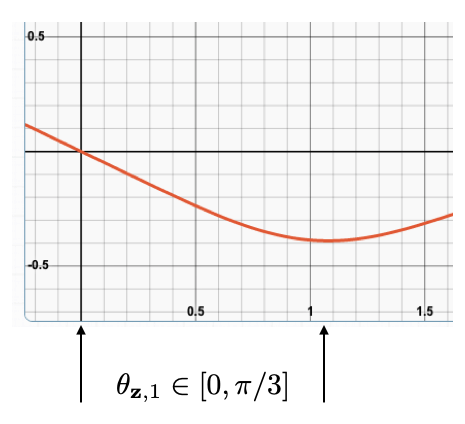} 
% \put(-270,-10){\small Softmax prediction}
% exp(cos(x)) /(exp(cos(x)) + exp(cos((2/3)pi-x))+exp(cos((2/3)pi+x)))
% https://www.derivative-calculator.net/#
 \caption{Plot of first derivative of eq. \ref{eq_k3_intermsof_theta1}, $\frac{\partial \max_i \sigma(\z)_i}{\partial \cos \theta_{\z,1} }$.}
 \label{fig_derviative_k3}
 \end{center}
  \vskip -0.2in
 \end{figure}

It remains to describe what happens when $\cos \theta_{\z,1}$ is decreased by lifting the $\z$ vector away from the 2D subspace, rather than moved within the plane as above. Consider some modified vector rotated off the 2D plane by some angle $\theta_r$, which can be created by, $\z_r = \z \cos \theta_r + \mathbf{n} \sin \theta_r $ (here $\z$ lives on the 2D plane), for some vector normal to the 2D plane, $\mathbf{n}$. Now consider passing this vector through the softmax, 
 \begin{align}
	-U_\text{max}(\z) = \max_i \sigma(\z_r)_i
	&= \frac{ \exp \w_1^\intercal \z_r }
	    { \exp \w_1^\intercal \z_r  +  \exp \w_2^\intercal \z_r  +\exp \w_3^\intercal \z_r  } \\
&= \frac{ \exp \w_1^\intercal (\z \cos \theta_r + \mathbf{n} \sin \theta_r ) }
	    { \exp \w_1^\intercal (\z \cos \theta_r + \mathbf{n} \sin \theta_r )  +  \exp \w_2^\intercal (\z \cos \theta_r + \mathbf{n} \sin \theta_r ) + \cdots  } \\
&= \frac{ \exp \w_1^\intercal (\z \cos \theta_r  ) }
	    { \exp \w_1^\intercal (\z \cos \theta_r  )  +  \exp \w_2^\intercal (\z \cos \theta_r ) +\exp \w_3^\intercal (\z \cos \theta_r )  } 
 \end{align}
where the last line follows since $\mathbf{n}$ is normal to \textit{all} weight vectors, so $\w_i^\intercal \mathbf{n} = 0$. Here $\cos \theta_r$ has a similar cooling effect to lowering the magnitude as described in lemma \ref{prop_optimal_zmag}. Hence, decreasing $\cos \theta_{\z,1}$ by rotating it off the plane also reduces softmax confidence, and hence increases $U_\text{max}(\z)$.

\end{proof}

\begin{theorem}
\label{theorem_optimal_costheta}
Consider a softmax with weight vectors, $\w_i$, and some large number of $K$ classes, so it holds that $\cos \theta_{i,j} = \frac{-1}{K-1} \approx 0$.
For the optimal structure (def. \ref{def_optimal_structure}), $U_\text{max}(\z)$ increases with decreasing $\max_i \cos \theta_{\z,i}$. 
\end{theorem}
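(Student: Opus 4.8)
The plan is to use the near-orthogonality of the optimal weight frame to collapse the softmax onto a few ``active'' logits and then settle the claim by an elementary monotonicity argument. Write $c \coloneqq ||\w_i||\,||\z|| > 0$ (this is constant in $i$ and is held fixed throughout), relabel the weights so that $\w_1$ is the one closest to $\z$, and set $m \coloneqq \cos\theta_{\z,1} = \max_i\cos\theta_{\z,i}$. From the optimal structure we may use $\sum_i\w_i=\mathbf 0$ (established inside the proof of Theorem~\ref{theorem_hyperplane_structure_optimal}), so that $\sum_i\cos\theta_{\z,i}=0$; in particular $m\ge 0$ and $\sum_{j\ne1}\cos\theta_{\z,j}=-m$. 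Rewriting the softmax as in Proposition~\ref{prop_optimal_zmag} and using $||\w_i||=||\w_j||$,
\begin{align}
-U_\text{max}(\z)=\max_i\sigma(\z)_i=\frac{1}{1+\sum_{j\ne1}e^{\,c\left(\cos\theta_{\z,j}-\cos\theta_{\z,1}\right)}},
\end{align}
so it suffices to show that decreasing $m$ makes the denominator increase.

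\textbf{Reducing the admissible motions.} Since $||\z||$ is fixed, any decrease of $m$ is produced by rotating $\hat\z$. Decompose such a rotation into a component within $\mathrm{span}\{\w_i\}$ and a component into its orthogonal complement. A rotation into the orthogonal complement multiplies every logit $\w_i^\intercal\z$ by a common factor $\cos\theta_r\in(0,1)$ -- the ``cooling'' mechanism already exploited in Proposition~\ref{prop_optimal_zmag} and in the last paragraph of the proof of Theorem~\ref{theorem_optimal_costheta_k2} -- which strictly lowers $\max_i\sigma(\z)_i$, i.e.\ strictly raises $U_\text{max}$. It therefore suffices to treat in-span rotations, and the building block is a rotation of $\hat\z$ away from its currently-closest weight vector(s) and toward some other weight vector $\w_k$ (stopping before $\cos\theta_{\z,k}$ overtakes $m$; at such a meeting point one relabels and continues with the next building block). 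Along such a motion $\cos\theta_{\z,1}$ decreases by an $O(1)$ amount, $\cos\theta_{\z,k}$ increases, while every remaining $\cos\theta_{\z,j}$ stays within $O(1/K)$ of the baseline $\tfrac{-1}{K-1}$ -- this is exactly where the hypothesis $\cos\theta_{i,j}=\tfrac{-1}{K-1}\approx 0$ enters, since for $j\neq 1,k$ the inner products $\w_k^\intercal\w_j$ and $\w_1^\intercal\w_j$ are both $\approx 0$.

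\textbf{The elementary step.} Under this reduction every exponent $\cos\theta_{\z,j}-\cos\theta_{\z,1}$ is non-decreasing along the motion: for $j=k$ because $\cos\theta_{\z,k}$ rises and $\cos\theta_{\z,1}$ falls, and for the remaining $j$ because $\cos\theta_{\z,j}$ moves by only $O(1/K)$ while $\cos\theta_{\z,1}=m$ falls by an $O(1)$ amount. Hence every summand $e^{c(\cos\theta_{\z,j}-\cos\theta_{\z,1})}$ is non-decreasing and the $j=k$ summand is strictly increasing, so the denominator in the displayed identity strictly increases; therefore $\max_i\sigma(\z)_i$ strictly decreases and $U_\text{max}(\z)$ strictly increases. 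Finally, any admissible decrease of $m$ is a concatenation of such building-block rotations together with out-of-span rotations, which finishes the argument. (For $K=2$ this reduces to the two-term computation in Theorem~\ref{theorem_optimal_costheta_k2}; here $K$ is large, so certainly $K\ge 3$.)

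\textbf{Main obstacle.} The one genuinely delicate step is the bound keeping the ``inactive'' cosines near $\tfrac{-1}{K-1}$: one must exclude decreasing $m$ in a way that simultaneously drives several $\cos\theta_{\z,j}$ up enough that the approximation $e^{c\cos\theta_{\z,j}}\approx 1$ fails. Restricting to planar building-block motions and invoking $\cos\theta_{i,j}\approx 0$ -- precisely the regime the theorem assumes -- handles this cleanly. Turning the discarded $O(1/K)$ terms into a fully rigorous bound (for instance by keeping all $\cos\theta_{\z,j}$ exactly and using $\cos\theta_{\z,j}\le m$, so $e^{c\cos\theta_{\z,j}}\le e^{cm}$, together with the symmetry of the frame to control the whole denominator) is more work than the approximate statement calls for, and is where the subtlety of the result lies.
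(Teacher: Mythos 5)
Your overall route is the same as the paper's: rewrite $\max_i\sigma(\z)_i$ so the claim becomes ``the sum $\sum_{j\neq 1}e^{c(\cos\theta_{\z,j}-\cos\theta_{\z,1})}$ increases'', use $\sum_i\w_i=\mathbf{0}$ to get $\sum_i\cos\theta_{\z,i}=0$, split the motion into in-span rotations toward one (then several) other weight vectors and rotations out of $\mathrm{span}\{\w_i\}$, and dispatch the out-of-span part by the cooling argument. The gap is in your ``elementary step''. Under the conservation constraint you yourself invoke, it is \emph{not} true that every exponent $\cos\theta_{\z,j}-\cos\theta_{\z,1}$ is non-decreasing along an in-span rotation toward $\w_k$: writing $\delta_i$ for the changes in the cosines, the inactive ones shift by $\delta_c=-(\delta_1+\delta_k)/(K-2)$, so their exponents change by $\delta_c-\delta_1$, which is negative whenever $\delta_k>(K-1)\,|\delta_1|$. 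This is exactly the delicate regime: if $\z$ starts aligned with $\w_1$ and is rotated by a small angle $\phi$ toward $\w_k$, then $|\delta_1|=O(\phi^2)$ while $\delta_k=O(\phi)$, so all $K-2$ inactive summands strictly decrease. Your ``$O(1)$ versus $O(1/K)$'' comparison presumes each building block lowers $m$ by an amount bounded away from zero, but the theorem is a monotonicity statement and must also cover small decreases of $m$ (and the initial segment of any block starting near alignment), where the comparison fails; your ``main obstacle'' paragraph flags this but leaves it open, so the key inequality is not established as written.

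The paper does not claim term-by-term monotonicity at this point; it compares aggregates. The total loss over the inactive terms, $(K-2)(1-e^{\delta_c-\delta_1})$, is weighed against the single gain term $e^{\cos\theta_{\z,2}+\delta_2-\delta_1}-e^{\cos\theta_{\z,2}}$, and the strict increase and convexity of the exponential (the paper's ``budget applied higher up the curve'' argument) tips the balance even when the first-order contributions cancel. This can be made precise with $e^x\ge 1+x$: the loss is at most $(K-2)(\delta_1-\delta_c)=(K-1)\delta_1+\delta_2$, the gain is at least $\delta_2-\delta_1$ (using $\cos\theta_{\z,2}\gtrsim 0$), and the difference is at least $-K\delta_1>0$; in the aligned-$\z$ example above the net increase of the denominator is second order in $\phi$, which is why any term-by-term argument must fail there. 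Replacing your pointwise claim with this aggregate convexity comparison closes the gap and essentially recovers the paper's proof; the rest of your proposal (the rewriting, the conservation identity, the multi-vector extension, and the out-of-span cooling case) matches the paper's argument.
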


\begin{proof}

%\textbf{Background}
We borrow notation and several results from theorem \ref{theorem_optimal_costheta_k2}; $\cos \theta_{\z,1}$ will refer to the largest cosine angle between any weight vector $\w_i$ and $\z$.
We also use the conservation of $\cos \theta_{\z,i}$'s,
\begin{align}
    \sum_{i=1}^K \cos \theta_{\z,i} = 0 .
\end{align}

We will consider the change in $U_\text{max}$ for two vectors which are modified in specific ways to decrease $\max \cos \theta_{\z,1}$. For $\z_\text{orig.}$ and $\z_\text{mod.}$, we study the change in uncertainty estimator,
$U_\text{max}(\z_\text{orig.}) - U_\text{max}(\z_\text{mod.})$. Where obvious from context we drop the orig. and mod. subscript. Note that the changes in $\cos \theta_{\z,i}$'s are also conserved,
%It will be convenient to define the symbol $\vartriangle$ to capture changes, e.g. $\vartriangle \cos \theta_{\z,i} = \cos \theta_{\z_\text{orig.},i} - \cos \theta_{\z_\text{mod.},i}$
\begin{align}
    %\sum_{i=1}^K \cos \theta_{\z_\text{orig.},i} = 0 \\
    %\sum_{i=1}^K \cos \theta_{\z_\text{mod.},i} = 0 \\
    \sum_{i=1}^K \cos \theta_{\z_\text{orig.},i} - \cos \theta_{\z_\text{mod.},i} = \sum_{i=1}^K  \vartriangle \cos \theta_{\z,i} = 0.
\end{align}
%where, $\vartriangle \cos \theta_{\z,i}$ means the change in $\cos \theta_{\z,i}$. 
To lighten notation, we assume (without loss of generality) that, $||\w_i|| = ||\z_\text{orig.}||=||\z_\text{mod.}|| = 1$ (this can be achieved with a scaling of the space).
% , and the effect of $||\z||$ is independent, explored in lemma \ref{lemma_optimal_zmag}

Let us begin by writing the softmax for the original and modified vector, with $\delta_i \coloneqq \vartriangle \cos \theta_{\z,i}$,
%referring to the change of the cosine angle,
\begin{align}
- U_\text{max}(\z_\text{orig.}) = \max_i \sigma(\z_\text{orig.})_i &=
\frac{\exp^{\cos \theta_{\z,1}}}
{\exp^{\cos \theta_{\z,1}} + \sum_{j=2}^K \exp^{ \cos \theta_{\z,j}}} \\
- U_\text{max}(\z_\text{mod.}) = \max_i \sigma(\z_\text{mod.})_i &=
\frac{\exp^{\cos \theta_{\z,1} + \delta_1}}
{\exp^{\cos \theta_{\z,1} + \delta_1} + \sum_{j=2}^K \exp^{\cos \theta_{\z,j} + \delta_j}} \\
&=
\frac{\exp^{\cos \theta_{\z,1} }}
{\exp^{\cos \theta_{\z,1}} + \sum_{j=2}^K \exp^{\cos \theta_{\z,j} + \delta_j - \delta_1}} .
\end{align}
For an increase in $U_\text{max}$, or a decrease in $\max_i \sigma$, it must hold that,
\begin{align}
\label{eq_sorig_smod_compare}
    \sum_{j=2}^K \exp^{\cos \theta_{\z,j}} <
    \sum_{j=2}^K \exp^{\cos \theta_{\z,j} + \delta_j - \delta_1} .
\end{align}

%\textbf{Moving away from $\w_1$ and toward $\w_2$}
\textbf{Moving between two weight vectors}

We first consider a rotation such that $\cos \theta_{\z,1}$ decreases by moving directly toward $\cos \theta_{\z,2}$. Hence, $\delta_1<0$ and $\delta_2>0$. Because rotation is only in this 2D plane, $\delta_{j}$ will all be equal for $j \geq 3$, say, $\delta_c$, and must conserve the change in $\cos \theta_{\z,i}$.
\begin{align}
 \delta_1 +\delta_2 + \sum_{j=3}^K \delta_j &= 0 \\
 \delta_1 +\delta_2 + (K-2) \delta_c &= 0 \\
 \frac{-(\delta_1 +\delta_2)}{K-2} &= \delta_c
\end{align}

%Note that the change in angles (radians) $\vartriangle \theta_{\z,1} = -\vartriangle \theta_{\z,2}$, since we are only moving the $\z$ vector in the 2D plane containing  $\w_1$ and $\w_2$. Due to $\theta_{\z,1} > \theta_{\z,2}$, this means, $ || \vartriangle \cos \theta_{\z,1} || < || \vartriangle \cos \theta_{\z,2} ||$, or using above notation, $|| \delta_1 || < || \delta_2 ||$.
%Also, since we have assumed a large number of classes, $\cos \theta_{\z,j} \approx 0 \;\; \forall j \geq 3$.
% We turn to showing that eq. \ref{eq_sorig_smod_compare} is true under these constraints,

We turn to showing that eq. \ref{eq_sorig_smod_compare} is true for our case. Note that since we have assumed a large number of classes, $\cos \theta_{\z,j} \approx 0 \;\; \forall j \geq 3$.
\begin{align}
    \exp^{\cos \theta_{\z,2}}+ \sum_{j=3}^K \exp^{\cos \theta_{\z,j}} &<
    \exp^{\cos \theta_{\z,2} + \delta_2 - \delta_1} +
    \sum_{j=3}^K \exp^{\cos \theta_{\z,j} + \delta_j - \delta_1} \\
    \exp^{\cos \theta_{\z,2}}+ (K-2) &<
    \exp^{\cos \theta_{\z,2} + \delta_2 - \delta_1} +
    (K-2) \exp^{ \delta_c - \delta_1} \\
     (K-2)(1- \exp^{ \delta_c - \delta_1} ) &<
    \exp^{\cos \theta_{\z,2} + \delta_2 - \delta_1}- \exp^{\cos \theta_{\z,2}}. \label{eq_88}
    % \exp^{\cos \theta_{\z,2}}+ \sum_{j=3}^K \exp^{\cos \theta_{\z,j}} &<
    % \exp^{\cos \theta_{\z,2} + \delta_2 - + \delta_1} +
    % \sum_{j=3}^K \exp^{\cos \theta_{\z,j} + \delta_j - \delta_1} \\
\end{align}
It's a little difficult to directly see why this should be true. Formally it follows from the exponential function being strictly increasing and convex.
The quantity, $1- \exp^{ \delta_c - \delta_1} $, will be small but it's multiplied by $K-2$, while, $\exp^{\cos \theta_{\z,2} + \delta_2 - \delta_1}- \exp^{\cos \theta_{\z,2}}$, will be large.
Fig. \ref{fig_costheta_diag_w2} articulates the quantities geometrically. 
Roughly speaking, each side of eq. \ref{eq_88} has a similar total budget in terms of change in $\cos \theta$, but the budget used on the RHS is applied higher up the exponential curve, so is amplified (since it is strictly increasing and convex), resulting in a greater change than on the LHS. Hence, the inequality must be true.

%More formally, since the exponential function is strictly increasing and convex, it holds that,
%\begin{align}
%\frac{\exp{v} - \exp{u}}{v-u} \leq \frac{\exp{w} - \exp{v}}{w-v} \;\;\; \forall u <v<w
%\end{align}
%
%\tp{change this to say if strictly increasing and convex (which exponential function is), f(v)-f(u)/(uv) <...}

\begin{figure}[h!]
\vspace{-0.1in}
\centering
\resizebox{0.9\textwidth}{!}{
  %\vspace{-2pt}
  \includegraphics[width=1\textwidth]{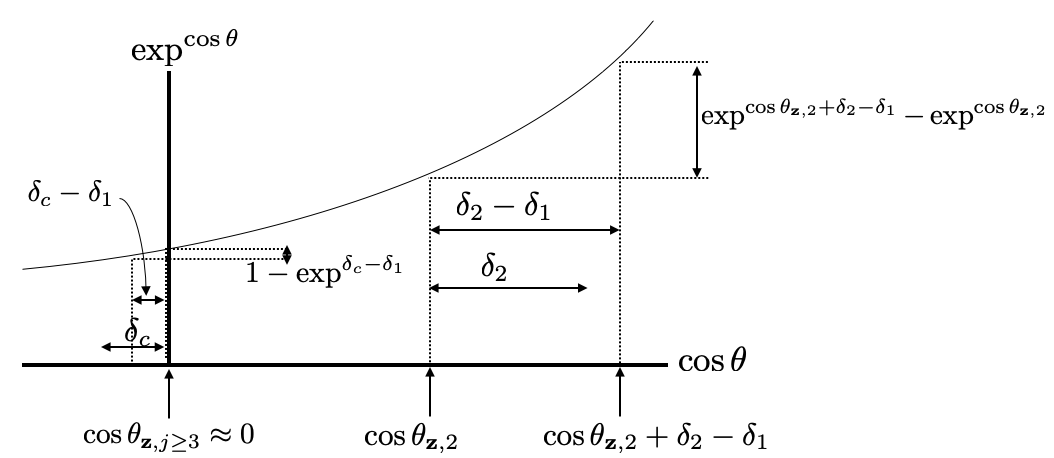} 
 % \put(-75,-14){\small Softmax confidence}
}
\vskip -0.05in
\caption{Geometry of eq. \ref{eq_88}.}
\label{fig_costheta_diag_w2}
\end{figure}

%\textbf{Moving away from $\w_1$ and toward $\w_2$, $\w_3$}
\textbf{Moving between multiple weight vectors}

Having shown $U_\text{max}$ always increases with a decrease in $\cos \theta_{\z,1}$ when moving on the 2D space between two vectors, we now repeat a similar exercise, but allow movement in a 3D space between three vectors, so while $\delta_1<0$, both $\delta_2>0$ and $\delta_3>0$.
\begin{align}
    \exp^{\cos \theta_{\z,2}} +
    \exp^{\cos \theta_{\z,3}} +
    \sum_{j=4}^K \exp^{\cos \theta_{\z,j}} 
    &<
    \exp^{\cos \theta_{\z,2} +\delta_2 -\delta_1} +
    \exp^{\cos \theta_{\z,3} +\delta_3 -\delta_1}  +
    \sum_{j=4}^K \exp^{\cos \theta_{\z,j}+ \epsilon_j -\delta_1}\\
    \exp^{\cos \theta_{\z,2}} +
    \exp^{\cos \theta_{\z,3}} +
    (K-3)
    &<
    \exp^{\cos \theta_{\z,2} +\delta_2 -\delta_1} +
    \exp^{\cos \theta_{\z,3} +\delta_3 -\delta_1}  +
    (K-3) \exp^{\delta_c -\delta_1}\\
    (K-3)(1-\exp^{\delta_c -\delta_1})
    &<
    (\exp^{\cos \theta_{\z,2} +\delta_2 -\delta_1} -\exp^{\cos \theta_{\z,2}}) +
    (\exp^{\cos \theta_{\z,3} +\delta_3 -\delta_1}  -\exp^{\cos \theta_{\z,3}}) 
\end{align}
Precisely the same arguments apply in this case as before; since $\cos \theta_{\z,2}>\cos \theta_{\z,j}\approx 0$ and $\cos \theta_{\z,3}>\cos \theta_{\z,j}\approx 0$, the exponential amplifies the budget of change in cosine angle on the RHS of the equation, making it true. 

Clearly this reasoning can be extended to further weight vectors.

\textbf{Moving outside of the subspace covered by all vectors}

The other way to decrease $\max_i \cos \theta_{\z,i}$ is to rotate a vector outside of the ($K-1$ dimensional) subspace covered by all weight vectors. It should be straightforward to recognise that this decreases the cosine angle of all weight vectors, increasing $U_\text{max}$ -- we discuss this in more detail in theorem \ref{theorem_optimal_costheta_k2} for $K=3$.

%\textbf{Moving between any number of weight vectors}
%It's clear from the prior section with three weights, that moving between additional weights would further increase $U_\text{max}$ for a fixed $\max_i \cos \theta_{\z,i}$.

\end{proof}

\begin{corollary}
\label{corollary_optimal_maximises}
Generalising from theorem \ref{theorem_optimal_costheta_k2} \& \ref{theorem_optimal_costheta}, assume that for the optimal structure, for any $K \geq 2$, it holds that $U_\text{max}$ always increases for a decrease in $\max_i \cos \theta_{\z,i}$. 
Further assume fixed cluster magnitudes, $|| \pmb{\mu}_i || = c_2 \; \forall i$. The optimal structure (def. \ref{def_optimal_structure}) maximises the linear approximate valid OOD region for $U_\text{max}$. 

%Assume that $U_\text{max}$ increases for a decrease in $\max_i \cos \theta_{\z,i}$, and that cluster magnitudes are fixed. 
%\tp{are we claiming this is only for linear approx?}
%\tp{I left this pretty vague in the main text, saying things like `enlarges the valid OOD region'. at worst we could always just do a numerical example reflecting the figure and frame as a result}
\end{corollary}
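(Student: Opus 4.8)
The plan is to reduce the claim to a one–dimensional monotonicity statement about slab widths together with a symmetry argument, reusing the machinery already in place. Recall from definition \ref{def_linearapprox_region} that $\mathcal{R}_{\text{linear approx.}}$ is the union, over class pairs $(i,j)$, of a slab $\mathcal{R}_{H\text{vol},ij}$ of perpendicular half-widths $\alpha_{i,j}^i,\alpha_{i,j}^j$ intersected with the cone $\mathcal{R}_{\hat{\y}=i}\cup\mathcal{R}_{\hat{\y}=j}$, and that the $\alpha$'s are fixed by matching the \emph{exact} valid OOD region along $\mathbf{e}_{i,j}$ far from the origin. By theorem \ref{theorem_exact_linear_convergence}, in that far regime the exact region between classes $i$ and $j$ is governed by those two classes alone, so the matching reduces to the two-class softmax in magnitude/angle form (eq. \ref{eq_softmax_mag_angle}): if $u^{*}$ is the uncertainty level with $P_{\z\sim\mathcal{D}_\text{in}}(U_\text{max}(\z)<u^{*})=1-\epsilon$, then $\alpha_{i,j}^i$ is proportional to the offset at which $\max_k\sigma(\z)_k$ rises from $\tfrac12$ on the hyperplane to $-u^{*}$. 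Thus the whole region is pinned by two data: the angular arrangement of the $\w_i$'s, and the scalar $u^{*}$.

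First I would evaluate $u^{*}$. Under the optimal-structure assumptions (or, for a competitor arrangement, the best cluster placement compatible with $||\pmb{\mu}_i||=c_2$, namely $\pmb{\mu}_i\parallel\w_i$ with $\pmb{\Sigma}_i\to\mathbf{0}$), the training features collapse onto $\{c_2\w_i/||\w_i||\}$, so $-u^{*}$ is the relevant order statistic of $\{1/(1+\sum_{j\neq i}e^{c(\cos\theta_{i,j}-1)})\}_i$, with $c>0$ a fixed magnitude constant. By theorem \ref{theorem_hyperplane_structure_optimal} the optimal structure minimises the regularised cross-entropy, equivalently maximises $\sum_i\log\sigma(\pmb{\mu}_i)_i$; by symmetry all summands are then equal and maximal, pushing $-u^{*}$ as close to $1$ as the budget permits. (Directly: maximising $\min_i\sigma(\pmb{\mu}_i)_i$ forces all $\cos\theta_{i,j}$ as negative and balanced as possible, i.e.\ $\cos\theta_{i,j}=\tfrac{-1}{K-1}$, the even / spherical-code arrangement.)

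Next I would use the corollary's standing monotonicity assumption to turn a larger $-u^{*}$ into a larger region. Moving from a decision hyperplane toward cluster $i$ increases $\max_k\cos\theta_{k,\z}$, so by assumption $\max_k\sigma(\z)_k$ increases strictly and continuously from $\tfrac12$; hence the offset at which it reaches $-u^{*}$, i.e.\ the slab width, is a strictly increasing function of $-u^{*}$ for each pair (via eq. \ref{eq_softmax_mag_angle}), so the optimal arrangement widens every slab. Then I would invoke symmetry to conclude the union is as large as possible: for the even arrangement the cones $\mathcal{R}_{\hat{\y}=i}$ are congruent and tile the space and the slabs meet the cone walls symmetrically, so no slab is truncated more than necessary; a symmetry-breaking perturbation both lowers $-u^{*}$ and distorts the tiling, so by a variational argument on the (permutation-)symmetric volume functional over the compact space of arrangements it cannot increase the total volume. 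Together these give that the optimal structure maximises $\mathcal{R}_{\text{linear approx.}}$ (volume measured, say, within any fixed ball, consistently with corollary \ref{corollary_linear_region_is_subset_of_exact}).

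The main obstacle is the last step. Spreading the $\w_i$'s more evenly has \emph{competing} effects on volume: it raises $-u^{*}$ (widening slabs) and spreads the boundaries over more directions, but it also enlarges $||\w_i-\w_j||$, which for a \emph{fixed} threshold would narrow the slabs; one must show the first two effects dominate. I would handle this by fixing the magnitude budget so that only the angular arrangement varies, arguing that the volume functional is symmetric under permuting classes and that the even configuration is its maximiser over the compact configuration space — with an explicit check for $K=2,3$ (where proposition \ref{prop_linearapprox_equalfor_k2} and fig. \ref{fig_optimal} make the geometry elementary) as a sanity check, and a second-order / perturbation analysis of $-u^{*}$ and the slab geometry for general $K$.
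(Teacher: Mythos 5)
Your plan is considerably more ambitious than what the paper actually does, and it contains an acknowledged but unresolved gap at precisely the step that carries the whole claim. You correctly reduce the problem to (i) tight, weight-aligned clusters pushing the training confidence level $-u^{*}$ up, (ii) monotonicity of confidence in $\max_i\cos\theta_{\z,i}$ turning a higher threshold into wider slabs, and (iii) a comparison over angular arrangements of the $\w_i$. But step (iii) is where the content lies, and you yourself note the competing effect: spreading the weight vectors more evenly raises $-u^{*}$ yet also enlarges $\lVert\w_i-\w_j\rVert$, which for a fixed threshold narrows each slab, and the resolution you offer ("a variational argument on the symmetric volume functional over the compact configuration space", "a second-order / perturbation analysis") is only promised, never carried out. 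Symmetry of a functional under permutations does not by itself imply the symmetric configuration is a maximiser (it could be a saddle or a minimiser), so without that analysis the proposal does not establish the corollary. A smaller looseness: when you argue the slab width grows with $-u^{*}$ by "moving from the hyperplane toward cluster $i$", the corollary's assumption is monotonicity in $\max_i\cos\theta_{\z,i}$ at fixed $\lVert\z\rVert$, whereas moving perpendicular to the hyperplane changes both angle and magnitude; you would need to combine it with proposition \ref{prop_optimal_zmag} to make that step clean. You also rightly flag that "maximises the region" needs a measure (volume within a fixed ball), which the statement leaves implicit.

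For comparison, the paper's own proof is far more modest: it simply observes that taking $\pmb{\Sigma}_i\approx\mathbf{0}$ maximises the region with respect to cluster spread (tighter clusters are more confident, so more of the space is more uncertain than $(1-\epsilon)$ of the training data), and that under the assumed monotonicity in $\max_i\cos\theta_{\z,i}$ the valid OOD region lies between all data clusters, appealing to fig. \ref{fig_optimal} rather than to any quantitative comparison across weight structures. So the global "optimal arrangement beats every competing arrangement in volume" statement you are trying to prove is asserted, not derived, in the paper; your route via $u^{*}$ and slab widths is a genuinely different and more quantitative decomposition, but as written it stops short at the same hard point, and until the variational step is actually executed (or at least verified beyond $K=2,3$) the argument is incomplete.
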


\begin{proof}
This result can be seen by observing fig. \ref{fig_optimal}.
First note that an increase in cluster variance will decrease the valid OOD region. Since the optimal structure assumes, $\pmb{\Sigma} \approx \mathbf{0}$, the valid OOD region is maximised from this perspective.
Secondly, if $U_\text{max}$ always increases for a decrease in $\max_i \cos \theta_{\z,i}$, the valid OOD region will lie between all data clusters.

%Observe that, since cluster magnitudes are fixed, the volume of the linear approximate valid OOD region is maximised by maximising,
%\begin{align}
%\alpha_\text{sum} \coloneqq \sum_{i \neq j} \alpha_{i,j}^i +  \alpha_{i,j}^j .
%\end{align}

%Secondly, if $U_\text{max}$ always increases for a decrease in $\max_i \cos \theta_{\z,i}$

%Since $U_\text{max}$ always increases for a decrease in $\max_i \cos \theta_{\z,i}$, 

\end{proof}

\begin{proposition}
\label{prop_optimal_notalways}
% Theorem \ref{theorem_optimal_costheta} does not hold for all softmax weight structures.
% Consider a softmax with weight vectors, $\w_i$, and some large number of $K$ classes
% For the optimal structure (theorem \ref{theorem_hyperplane_structure_optimal}), 
$U_\text{max}(\z)$ is not guaranteed to increase with decreasing $\max_i \cos \theta_{\z,i}$ for general (non-optimal) decision boundary structures. 
\end{proposition}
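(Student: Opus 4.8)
The plan is to prove the proposition by exhibiting an explicit counterexample: a non-optimal weight structure together with two final-layer points $\z_{\text{orig}}$ and $\z_{\text{mod}}$ of equal norm for which $\max_i \cos\theta_{\z_{\text{mod}},i} < \max_i \cos\theta_{\z_{\text{orig}},i}$ and yet $U_{\text{max}}(\z_{\text{mod}}) < U_{\text{max}}(\z_{\text{orig}})$. The conceptual point that guides the construction is that the logits are $\w_i^\intercal\z = \|\w_i\|\,\|\z\|\cos\theta_{\z,i}$, so once the weight magnitudes $\|\w_i\|$ differ, the weight vector realising $\max_i\cos\theta_{\z,i}$ (the one "nearest" to $\z$) need not be the weight vector with the largest logit. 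Hence $\max_i\cos\theta_{\z,i}$ is not a monotone surrogate for the winning logit, nor for the logit gap that controls $U_{\text{max}}$, and this decoupling is exactly what the optimal-structure symmetry assumptions of Theorem \ref{theorem_optimal_costheta_k2} rule out.

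Concretely I would take $K=2$, $H=2$, with $\w_1 = (2,0)$ and $\w_2 = (0,1)$ — unequal magnitudes and not antipodal, hence non-optimal in the sense of Definition \ref{def_optimal_structure} — and restrict attention to unit vectors $\z = (\cos\alpha,\sin\alpha)$, $\alpha\in[0,\pi/2]$, so that $\|\z\|$ is held fixed (this matters so the effect cannot be attributed to the magnitude/temperature mechanism of Proposition \ref{prop_optimal_zmag}). Here $\cos\theta_{\z,1} = \cos\alpha$ and $\cos\theta_{\z,2} = \sin\alpha$, class $1$ is predicted exactly when $2\cos\alpha > \sin\alpha$, i.e. $\alpha < \arctan 2$, and on that range $-U_{\text{max}}(\z) = \sigma(\z)_1 = \bigl(1 + e^{\sin\alpha - 2\cos\alpha}\bigr)^{-1}$.

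Next I would isolate the sub-interval $\alpha\in(\pi/4,\arctan 2)$. On it the weight vector nearest $\z$ is $\w_2$, so $\max_i\cos\theta_{\z,i} = \sin\alpha$, which is strictly increasing in $\alpha$; but $\sin\alpha - 2\cos\alpha$ is also strictly increasing in $\alpha$, so $\sigma(\z)_1$ is strictly decreasing and $U_{\text{max}}(\z)$ strictly increasing in $\alpha$. Therefore moving from some $\alpha_{\text{orig}}$ to a smaller $\alpha_{\text{mod}}$ inside this interval decreases $\max_i\cos\theta_{\z,i}$ while simultaneously decreasing $U_{\text{max}}(\z)$, which contradicts the asserted implication. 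To make the argument self-contained I would then substitute two concrete values, say $\alpha_{\text{orig}} = 1.1$ and $\alpha_{\text{mod}} = 0.8$ (radians), and record that class $1$ wins at both points, that $\max_i\cos\theta_{\z,i}$ falls from $\approx 0.89$ to $\approx 0.72$, and that $U_{\text{max}}$ falls from $\approx -0.50$ to $\approx -0.66$.

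I do not expect a genuine obstacle: the only things needing care are keeping $\|\z\|$ fixed and verifying that the chosen points lie strictly in the region where class $1$ is the argmax, which the condition $\alpha<\arctan 2$ secures. If desired I would close with a brief remark that the same breakdown occurs for $K\geq 3$ — for instance under a "lopsided" structure of the kind in Fig. \ref{fig_counterfact_structs} — so the phenomenon is not an artefact of the two-class setting, and that it is precisely the equal-magnitude, evenly-distributed structure that prevents it (cf. Theorems \ref{theorem_optimal_costheta_k2} and \ref{theorem_optimal_costheta}).
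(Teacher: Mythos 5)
Your proposal is correct and follows essentially the same strategy as the paper: exhibiting an explicit counterexample with two equal-norm points for which $\max_i \cos\theta_{\z,i}$ decreases while $U_\text{max}$ does not increase. The only difference is cosmetic — the paper uses a $K=3$ ``sandwich'' structure with equal weight norms ($\w_1=[0,1]$, $\w_2=[-1,0]$, $\w_3=[1,0]$), whereas you use $K=2$ with unequal norms so that the nearest weight vector and the winning logit decouple; both constructions validly establish the proposition, and your verified numbers check out.
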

\begin{proof}
We provide one numerical counterexample to verify. Consider a `sandwich' weight structuring as in fig. \ref{fig_counterfact_structs}, with $K=3$ and $H=2$. We choose, 
\begin{align*}
    \w_1 &= [0,1]\\
    \w_2 &= [-1,0]\\
    \w_3 &= [1,0] \\
    \z_1 &= [1,0] \\
    \z_2 &= [0.9,-0.44],
\end{align*}
so that $||\z_1||=||\z_2||$, and, $\max_i \cos \theta_{\z_1, i} > \max_i \cos \theta_{\z_2, i}$ -- specifically $\max_i \cos \theta_{\z_1, i} = 1$, $\max_i \cos \theta_{\z_2, i} = 0.8$.

Passing both $\z$ vectors through a softmax gives, $\max \sigma(\z_1)_i = 0.665$ and $\max \sigma(\z_2)_i = 0.700$, hence $U_\text{max}(\z_1)>U_\text{max}(\z_2)$ despite $\z_1$ having a larger $\max \cos \theta$.

% z1 = np.array([1,0])
% z2 = np.array([0.9,-0.44])
% w1 = np.array([0,1])
% w2 = np.array([-1,0])
% w3 = np.array([1,0])
% def softmaxtp(z,w1,w2,w3):
% 	return np.exp(np.matmul(z,w3)) / (np.exp(np.matmul(z,w3))+np.exp(np.matmul(z,w2))+np.exp(np.matmul(z,w1)))
% print('z1',round(softmaxtp(z1,w1,w2,w3),4))
% print('z2',round(softmaxtp(z2,w1,w2,w3),4))
% z1 0.6652
% z2 0.7007
\end{proof}

\newpage
\subsection{Mental Model}
\label{sec_app_proof_mentalmodel}

\begin{proposition}
\label{prop_mentalmodel}
Assume a softmax layer separating $K$ classes has an optimal decision boundary structure (def. \ref{def_optimal_structure}), and also that $\cos\theta_{i,\z} = \frac{-1}{K-1}$ for all angles except the maximum class. An uncertainty estimator that maintains the ordering of the resulting estimator (but not absolute values) is, 
% \begin{align}
% \frac{1}
% {1
% + (K-1) \exp{- ||\w|| \, || \z || (  \frac{1}{K-1} 
% +  \max \cos \theta_{i,\z})}
% },
% \end{align}
% and if we care about ordering of samples rather than absolute values, we can define,
% \begin{align}
% U_\text{max mental}(\z) &\coloneqq
% \frac{1}
% {1 + (K-1)\exp{- ||\z||( \frac{1}{K-1} + \max \cos \theta_{i,\z} ) }}.
% \end{align}
% \begin{align}
%  U_\text{max mental}(\z) &\coloneqq
%  ||\z||( \frac{1}{K-1} + \max \cos \theta_{i,\z} ) .
% \end{align}
\begin{align}
U_\text{max mental}(\z) &\coloneqq
\frac{-1}
{1 + (K-1)\exp{- ||\z||( \frac{1}{K-1} + \max \cos \theta_{i,\z} ) }}.
\end{align}
\end{proposition}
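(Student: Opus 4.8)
The plan is to substitute the two assumptions directly into the magnitude--angle form of the softmax (eq.~\ref{eq_softmax_mag_angle}) and simplify. First I would fix notation by relabelling the classes so that class $1$ attains $\cos\theta_{1,\z} = \max_i \cos\theta_{i,\z}$. The optimal structure (def.~\ref{def_optimal_structure}) forces $||\w_i|| = c_1$ for all $i$ with $c_1 > 0$ and zero bias, so $\sigma(\z)_i = \exp(c_1 ||\z|| \cos\theta_{i,\z}) / \sum_j \exp(c_1 ||\z|| \cos\theta_{j,\z})$ is strictly increasing in $\cos\theta_{i,\z}$; hence class $1$ is also the $\mathrm{argmax}$ of the softmax and $U_\text{max}(\z) = -\sigma(\z)_1$.

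Next I would apply the second assumption, $\cos\theta_{j,\z} = \frac{-1}{K-1}$ for every $j \neq 1$, which collapses the denominator: each of the $K-1$ non-maximal terms equals $\exp(-c_1 ||\z|| / (K-1))$. Dividing numerator and denominator by $\exp(c_1 ||\z|| \cos\theta_{1,\z})$ gives
\begin{align}
-U_\text{max}(\z) = \sigma(\z)_1 = \frac{1}{1 + (K-1)\exp\!\big(-c_1 ||\z||\,(\tfrac{1}{K-1} + \max_i \cos\theta_{i,\z})\big)},
\end{align}
which is exactly $-U_\text{max mental}(\z)$ apart from the constant $c_1$ inside the exponential.

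To finish I would argue that dropping $c_1$ leaves the induced ordering unchanged. Both quantities have the form $g(c_1 t(\z))$ and $g(t(\z))$ with $t(\z) = ||\z||\,(\tfrac{1}{K-1} + \max_i \cos\theta_{i,\z})$ and $g(x) = -1/(1 + (K-1)e^{-x})$ strictly increasing. Under the optimal structure $\sum_i \w_i = \mathbf{0}$, so $\sum_i \cos\theta_{i,\z} = 0$ and therefore $\max_i \cos\theta_{i,\z}$ is at least the average $0$; thus $t(\z) \geq 0$, and since $c_1 > 0$ we have $c_1 t(\z) \leq c_1 t(\z') \iff t(\z) \leq t(\z')$. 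Composing with the strictly increasing $g$ shows $U_\text{max}$ and $U_\text{max mental}$ rank every pair of points in $\mathbb{R}^H$ identically, which is all that the valid-OOD-region machinery of def.~\ref{def_valid_ood_region} needs.

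The main obstacle is not the algebra but the honest handling of the assumptions: strictly, $\sum_i \cos\theta_{i,\z} = 0$ combined with $\cos\theta_{j,\z} = \frac{-1}{K-1}$ for all $j \neq 1$ would already force $\max_i \cos\theta_{i,\z} = 1$, so the second assumption cannot hold exactly while $\max_i \cos\theta_{i,\z}$ is treated as a free variable. I would therefore present it as a deliberate simplification --- replacing each non-maximal angle by the "resting" value $\frac{-1}{K-1}$ it would take if $\z$ were orthogonal to the remaining weight vectors --- and note, as the surrounding text does, that this trades exactness for an interpretable closed form in the two quantities $||\z||$ (feature strength) and $\max_i \cos\theta_{i,\z}$ (feature familiarity).
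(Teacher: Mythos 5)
Your proposal is correct and follows essentially the same route as the paper's own proof: substitute constant weight norms and the $\cos\theta_{j,\z}=\frac{-1}{K-1}$ assumption into the magnitude--angle form of the softmax, collapse the denominator, and drop the positive constant $c_1$ because a strictly increasing reparameterisation preserves the ordering relevant for OOD detection. Your extra observations (that positivity of $c_1$ alone suffices for order preservation, and that the two assumptions cannot hold exactly while $\max_i\cos\theta_{i,\z}$ varies freely) are sound and align with the paper's accompanying remark.
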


\begin{proof}
Begin by writing $U_\text{max}$ in terms of magnitudes and angles. Then observe that for the optimal structure, weight magnitudes are constant, $||\w_i|| = c \;\; \forall i$. Next substitute $\cos\theta_{i,\z} = \frac{-1}{K-1}$ for non-maximum class angles,
\begin{align}
- U_\text{max}(\z) &= \frac{\exp{\lvert \lvert \w_i \rvert \rvert \, \lvert \lvert \z \rvert \rvert \cos \theta_{i,\z}}}{\sum_j \exp{\lvert \lvert \w_j \rvert \rvert \, \lvert \lvert \z \rvert \rvert \cos \theta_{j,\z}}} \\ 
&= \frac{\exp{c \, \lvert \lvert \z \rvert \rvert \cos \theta_{i,\z}}}{\sum_j \exp{c \, \lvert \lvert \z \rvert \rvert \cos \theta_{j,\z}}} \\
&= \frac{\exp{c \, \lvert \lvert \z \rvert \rvert \max \cos \theta_{i,\z} }}
{\exp{c \, \lvert \lvert \z \rvert \rvert \max \cos \theta_{i,\z} }
+ \sum_{j\neq i} \exp{c \, \lvert \lvert \z \rvert \rvert \cos \theta_{j,\z} }} \\
&= \frac{\exp{c \, \lvert \lvert \z \rvert \rvert \max \cos \theta_{i,\z} }}
{\exp{c \, \lvert \lvert \z \rvert \rvert \max \cos \theta_{i,\z} }
+ (K-1) \exp{c \, \lvert \lvert \z \rvert \rvert \frac{-1}{K-1} }} \\
&= \frac{1}
{1
+ (K-1) \exp{\left(c \, \lvert \lvert \z \rvert \rvert \frac{-1}{K-1} 
- c \, \lvert \lvert \z \rvert \rvert \max \cos \theta_{i,\z}\right)}
} \\
&= \frac{1}
{1
+ (K-1) \exp{-c \, \lvert \lvert \z \rvert \rvert \left(  \frac{1}{K-1} 
+  \max \cos \theta_{i,\z}\right)}
}
\end{align}
If we are only concerned by the relative ordering produced by our uncertainty estimator, as is the case in OOD detection, we can ignore 
%everything except the piece inside the exponential, which produces our mental model,
the constant $c$, to produce our mental model,
\begin{align}
U_\text{max mental}(\z) &\coloneqq
\frac{-1}
{1 + (K-1)\exp{- ||\z||( \frac{1}{K-1} + \max \cos \theta_{i,\z} ) }}.
\end{align}
% \begin{align}
%  U_\text{max mental}(\z) &\coloneqq
%  - ||\z||( \frac{1}{K-1} + \max \cos \theta_{i,\z} ) .
% \end{align}

\end{proof}

\begin{remark}
We are careful to point out that the model we propose in proposition \ref{prop_mentalmodel} is intended to formalise findings from this paper in a single interpretable equation, where we have traded off accuracy for readability. Nevertheless, we discuss the validity of the assumptions made, arguing that, whilst imperfect, they are not entirely unreasonable. 

Assumption 1: A trained softmax layer has optimal decision boundary structure. 
Section \ref{sec_decision_analysis} evaluated how well the optimal structure matched that found in neural networks of various architectures trained on various datasets. Whilst there were differences, we argued it was a reasonable match.

Assumption 2: $\cos\theta_{i,\z} = \frac{-1}{K-1}$ for all angles except the maximum class. Under the optimal structure, data from the training distribution should point in the same direction as their corresponding weight vector. In this case, we'd expect. $\max_i \cos\theta_{i,\z} \approx 1$, and for all other $i$'s, $\cos\theta_{i,\z} = \frac{-1}{K-1} \; \forall i \neq \text{argmax}_i \cos \theta_{i,\z}$. It's less clear how well this holds for OOD data. Fig. \ref{fig_cifar_etc} (middle column) plots this empirically for trained ResNet18's, which shows similar patterns for both the OOD and training histograms, suggesting this assumption holds no worse than the optimal structure assumption.

\end{remark}

\newpage
\section{Extended Related Work}
\label{sec_app_relatedwork}

\subsection{Comparison Studies in the Literature}
\label{sec_app_baseline_table}
In table \ref{table_reported_results} we list a selection of previous studies reporting AUROC for OOD detection, both for softmax confidence, and other reference methods.

Note softmax confidence AUROC  never drops below 78\% (Shafaei et al. \citeyearpar{Shafaei2018} report accuracy), and achieves up to 98.8\%, hence we report 75\% to 99\% in the main text, also in line with our own results in fig. \ref{fig_grid_auroc}.
Modifications typically improve over softmax by 1-5\% with two exceptions, one of which used an unconventional `Dirty-MNIST' training dataset (testing regular MNIST vs. F.MNIST in our own experiments produces around 94\% AUROC).

%. In two situations this is significantly larger -- DDU trained on F.MNIST (improvement of 11\%), and DDU on Dirty MNIST  (improvement of 16\%). Our own experiments (fig. \ref{fig_grid_auroc}) found softmax achieved around a 98\% AUROC for the F.MNIST / MNIST pairing. We are also aware that Dirty MNIST is a custom built dataset designed to create high aleatoric uncertainty, and reduce softmax confidence. This 

%\tp{maybe could add in where we get the numbers in main text. e.g. min (softmax) = 70 etc}

\begin{table}[h!]
  \caption{AUROC scores for OOD detection reported in the literature. Various architectures, methods and datasets.}
  \vspace{-0.2in}
  \label{table_reported_results}
  \begin{center}
  \resizebox{\textwidth}{!}{
  \begin{tabular}{llll}
    \toprule
    Citation & $\mathcal{D}_\text{in}$ & $\mathcal{D}_\text{out}$ & Reported OOD score in format, Method:AUROC \\
    \midrule
     \textbf{Softmax only} \\
    \citep{Hendrycks2017} & CIFAR10 & Various & Softmax:96.0     \\
    \citep{Hendrycks2017} & CIFAR100 & Various & Softmax:90.0     \\
    \citep{Hendrycks2017} & MNIST & Various & Softmax:91.0     \\
    \citep{Hendrycks2017} & IMDB & Various & Softmax:94.0     \\
    \citep{Hendrycks2017} & TIMIT & Various & Softmax:97.0     \\
    \textbf{Various methods} \\
    \citep{Malinin2018} & MNIST  & Omniglot & Softmax:98.8, MC Dropout:99.2, PriorNetwork:100.0     \\
    \citep{Lee2018a} & CIFAR10 & SVHN & Softmax:89.9, Mahalanobis(vanilla):93.9     \\
    \citep{Shafaei2018} & Various & Various & Softmax:72.6, MCDropout:73.5, Ensemble:72.8 \\
    \citep{VanAmersfoort2020} & CIFAR10 & SVHN & Softmax:90.6, Ensemble:93.3, DUQ:92.7     \\
    \citep{VanAmersfoort2020} & F.MNIST & MNIST & Softmax:84.3, Ensemble:86.1, DUQ:95.5     \\
    \citep{Mukhoti2021} & CIFAR10 & SVHN & Softmax:93.0, Ensemble:97.7, DUQ:93.7, DDU:97.9     \\
    \citep{Mukhoti2021} & CIFAR10 & CIFAR100 & Softmax:88.1, Ensemble:91.4, DUQ:85.9, DDU:91.8     \\
    \citep{Mukhoti2021} & `Dirty MNIST' & F.MNIST & Softmax:84.0, Ensemble:97.7, DDU:99.8     \\
%    \midrule
    \textbf{Pre-training} \\
    \citep{Hendrycks2019a} & CIFAR10 & SVHN & Softmax:91.8, Softmax(pre-train):95.7     \\
    \citep{Hendrycks2019a} & CIFAR10 & Various & Softmax:77.6, Softmax(pre-train):83.8     \\
    \bottomrule
  \end{tabular}
  }
\end{center}
%\footnotesize{$^{*1}$\citep{Shafaei2018} reported accuracy rather than AUROC.}

%\footnotesize{$^{*2}$We do not include this result in the improvements reported in the main text (1\% to 5\%), since it uses an unusual dataset for $\mathcal{D}_\text{in}$, designed to maximise aleatoric uncertainty.}
\end{table}

\subsection{Softmax Criticisms}
\label{sec_app_softmax_critiques}

%\textbf{Softmax criticisms:} 
Below we quote a selection of academic papers and talks on the topic of uncertainty in deep learning, evidencing a general belief that the softmax confidence of deep neural networks is unsuitable for capturing epistemic/model/distributional uncertainty.

\textbf{Academic Papers}

\textit{"[The softmax output] is often erroneously interpreted as model confidence."} 
\newline 
\citep{Gal2015}

\textit{"Deterministic models can capture aleatoric uncertainty but cannot capture epistemic uncertainty."} 
\newline  
\citep{Gal2017}

\textit{"NNs ... until recently have been unable to provide measures of uncertainty in their predictions."} 
\newline 
\citep{Malinin2018}

\textit{"When asked to predict on a data point unlike the training data, the NN should increase its uncertainty. There is no mechanism built into standard NNs to do this ... standard NNs cannot estimate epistemic uncertainty."} 
\newline \citep{pearce2021thesis}

\textit{"NNs are poor at quantifying predictive uncertainty."} \newline  \citep{Lakshminarayanan2017}

\textit{"Deep neural networks with the softmax classifier are known to produce highly overconfident posterior distributions even for such abnormal samples."} \newline  \citep{Lee2018a}

\textit{"The output of the [softmax] classifier cannot identify these [far from the training data] inputs as out-of-distribution."} 
\newline \citep{Hein}

\textit{"The only uncertainty that can reliably be captured by looking at the softmax distribution is aleatoric uncertainty."} 
\newline \citep{VanAmersfoort2020}

\textit{"Softmax entropy is inherently inappropriate to capture epistemic uncertainty."} 
\newline  \citep{Mukhoti2021}

\textit{"For [softmax] classifiers ... misclassification will occur with high confidence if the unknown is far from any known data."} \newline  \citep{Boult2019}

% or some calibration thereof, 

\textbf{Academic Talks}

\textit{"Softmax is not telling you anything about ... model uncertainty."} 
\newline  Elise Jennings, Training Program on Extreme-Scale Computing 2019 
\newline \textcolor{blue}{\url{https://youtu.be/Puc_ujh5QZs?t=1323}} % 22:09

\textit{"The [softmax] network has no way of telling you `I'm completely uncertain about the outcome and don't rely on my prediction'."}
\newline  Florian Wilhelm, PyData Berlin 2019
\newline  \textcolor{blue}{\url{https://youtu.be/LCDIqL-8bHs?t=262}} % 4:20

%\textit{"Uncertainty estimation ... is unavailable in neural networks."} 
%\newline LearnML channel 2020, 
%\newline \textcolor{blue}{\url{https://youtu.be/eHT0raFtl1Q?t=8}} %0:08

%\subsection*{Low Dimensional Visualisations}
%
%Following we list examples of papers showing the softmax's nonsensical extrapolations in low dimensions, similar to our fig. \ref{fig_2d}.
%
%\citep{Liu2020} (fig. 1)
%
%\citep{Hein} (fig. 1)
%
%\subsection*{More on Mukhoti et al. \citeyearpar{Mukhoti2021}}
%\textbf{More on Mukhoti et al. \citeyearpar{Mukhoti2021}:}

%our results assume datasets are separable in hidden feature space, and hence have minimal aleatoric uncertainty, as is usual for many benchmark datasets currently in use. 

\subsection{Comprehensive Literature Review}
\label{sec_app_comprehensive}

% Early motivation for combining neural networks with a softmax layer arose from the aesthetically pleasing appearance of a normalised probability and compatibility with maximum likelihood and maximum mutual information objectives \citep{Bridle1990,Bridle1990b}. 

% This has prompted a refocusing of efforts on single model approaches, e.g. \citep{VanAmersfoort2020, Liu2020, Mukhoti2021}.

This section provides a full literature review (summarised in section \ref{sec_bg}).

\textbf{Softmax origins:}
Neural networks have long combined the cross-entropy loss with a sigmoid or softmax output for classification problems \citep{Baum, Bridle1990,Bridle1990b}. The original attraction was compatibility with maximum likelihood and information theoretic objectives and the interpretation of outputs as posterior predictive probabilities \citep{Richard1991}.

\textbf{Softmax criticism:}
Recent work has shown that interpreting softmax confidence as predictive probabilities can have several pitfalls; they are poorly calibrated and generally overconfident (i.e. 90\% confidence does not mean correct 90\% of the time) \citep{Guo2017}, and can be easily manipulated by adversarial examples \citep{Nguyen2015, Ozbulak2018a}. 
It's also been claimed there is no reason to trust them outside of the training distribution -- \textit{"[the softmax output] is often erroneously interpreted as model confidence."} \citep{Gal2015}.

This paper investigates the last point. Whilst most work on uncertainty and neural networks criticises softmax uncertainties only informally, several works more rigorously demonstrate weaknesses.
Hein et al. \citeyearpar{Hein} prove that any input can be magnified, $\tilde{\x} = \alpha \x$, with large $\alpha>1$, to produce an arbitrarily high confidence prediction by ReLU networks. Whilst they define $\alpha \x$ as `far from the training data', our work uses `far' to mean unfamiliar image datasets with bounded pixel intensity (something trivial to check). 

%Mukhoti et al. \citeyearpar{Mukhoti2021} released work concurrently with our own. Whilst they make strong claims about the relationship between softmax confidence and uncertainty, they focus on tasks with high aleatoric uncertainty -- we focus on standard datasets which have low aleatoric uncertainty. Section \ref{sec_app_comprehensive} provides more discussion and a comprehensive literature review.

\textbf{More on Mukhoti et al.:} Mukhoti et al. \citeyearpar{Mukhoti2021} released work concurrently with our own. 
Though they make claims that contradict our findings, these contradictions mostly arise from overly-strong wording and differences in task set up, and in our view the two papers are actually complimentary.

Their study is motivated by settings where there is significant overlap between classes, and hence high aleatoric uncertainty in the training data. Their experiments use a custom-created `Ambiguous-MNIST' dataset, where a VAE generates inputs in-between MNIST classes, with multiple conflicting labels assigned to each input. `Dirty-MNIST' refers to a mixture of Ambiguous-MNIST and standard MNIST data. This leads to justifiably low confidence softmax estimates for much of the training dataset, and hence softmax entropy is weakened when distinguishing training data from OOD data. However, even for this dataset, softmax confidence achieves a modest 84\% AUROC in OOD detection.

There are two main arguments supporting their claim to \textit{``prove we cannot infer epistemic uncertainty from a deterministic model’s softmax entropy''}. The first hinges on the observation that by looking at softmax entropy, one cannot disentangle aleatoric from epistemic uncertainty. While we do not dispute this (section \ref{sec_failure_measure}, cause 2b), according to our paper's findings, the same logic could be used to claim the reverse; that we cannot infer aleatoric uncertainty from softmax entropy. Secondly, they argue that in an ensemble with epistemic uncertainty, some of the individual models must have lower softmax entropy than others, making them inherently unreliable. We do not dispute that there is increased variance in the softmax entropies of an ensemble for OOD data, but our findings would predict the \textit{average} softmax entropy of individual models would also increase. 

%, \textit{``the observation that the mutual information of an ensemble quantifies epistemic uncertainty implies that the softmax entropy of a deterministic model cannot''}. Whilst we do not dispute that there is variance in the softmax entropies of an ensemble, our findings would predict that on OOD 

In our view Mukhoti et al. highlight an important failure of softmax conflating the two uncertainty types. The majority of benchmarks in the literature, e.g. those in table \ref{table_reported_results},  use datasets for which high accuracy is achievable, and which do not have heavy aleatoric uncertainty -- our analysis focuses on this set up, which has allowed us to investigate the relationship of softmax confidence and epistemic uncertainty in isolation. We do not comment on which of these set ups are more reflective to the real-world problems. In practise we note that it is simple for practitioners to know if there is high aleatoric uncertainty in their dataset by examining test and training accuracies (if high, there will be low aleatoric uncertainty).

\textbf{Softmax uses:} 
%Hendryks et al. \citeyearpar{Hendrycks2016a} 
Two independent studies empirically assessed the ability of softmax confidence to detect OOD inputs and misclassified test inputs \citep{Hendrycks2016a,Shafaei2018}. They found softmax confidence performs reasonably well on both tasks, though has room for improvement. This modest capability has been confirmed through work using softmax as a baseline (table \ref{table_reported_results}).  

%It's been shown that this capability further improves for networks pre-trained on large-scale tasks and fine-tuned on smaller datasets \citep{Orhan2019, Hendrycks2019a, Hendrycks2020}.

% Methods that improve over softmax confidence, such as Bayesian neural networks, have so far failed to be widely adopted \citep{Wenzel2020}. As a result softmax confidence is the sole measure of uncertainty available in most of today's neural networks.

% A more independent evaluation was conducted by Shafaei et al. \citeyearpar{Shafaei2018} who found popular methods offered OOD detection rates of just 65\% to 78\%, with softmax confidence about middle of the pack.
% To our knowledge, our paper is the first to explain why softmax achieves this reasonable performance level.

%\tp{could prob ignore this para}
%\textbf{Improvements:}
% Various approaches have been shown to improve over uncertainties obtained directly from the softmax, from training several models and measuring their disagreement  -- \textit{ensembling} \citep{Heskes1996, Tibshirani1996} -- to modelling the uncertainty in parameters -- \textit{Bayesian neural networks} \citep{MacKay1992}. But these always come with an increased implementation and computational burden, and have so far failed to be widely adopted \citep{Wenzel2020}. As a result softmax confidence is the sole measure of uncertainty available in most of today's neural networks.

\textbf{Softmax manipulation:}
Several prominent works have shown that OOD inputs can be intentionally crafted to create high softmax confidence predictions, for example through gradient-based perturbations to the input \citep{Szegedy2014}, pattern generators optimised via evolutionary algorithms \citep{Nguyen2015}, or synthetic objects rendered in unusual poses \citep{Alcorn}. We test a more benign setup in our work, commonly used in OOD benchmarking, where OOD data is selected arbitrarily as some unrelated dataset.
%\citep{Alcorn}

% \citep{Szegedy2014, Nguyen2015, Alcorn}, we consider the case where 

\textbf{Improvements:}
Various approaches improve over uncertainties obtained directly from the softmax, e.g. ensembling \citep{Heskes1996, Tibshirani1996, Lakshminarayanan2017} and Bayesian neural networks \citep{MacKay1992}. But these always come with an increased implementation and computational burden, and have so far failed to be widely adopted \citep{Wenzel2020}. As a result softmax confidence is the sole measure of uncertainty available in most of today's neural networks.

% But since they appear like a faux probability they are generally used \tp{cite an early paper using them as a baseline}. Hendryks et al.\tp{cite} more comprehensively tested their ability finding they make a reasonable baseline for OOD detection . To our knowledge, no prior work has provided an investigation for this reasonable performance.

%  in an informal, vague way

% This leads to a general rhetoric

 \textbf{Pre-training:}
 Zeiler et al. \citeyearpar{Zeiler2014} showed networks pre-trained on large-scale generic vision tasks can be fine-tuned on other vision tasks with improved accuracy and reduced training times. Recent work has shown this also leads to improved robustness, tested on OOD detection in vision and NLP tasks \citep{Orhan2019, Hendrycks2019a, Hendrycks2020}.

\newpage
\section{Experimental Details and Further Results}
\label{sec_app_experiment_details}

This section lists the main hyperparameters for each experiment. For full details, we refer readers to the code -- for each experiment and figure we point to the relevant script. These are hosted at 
\textcolor{blue}{\url{https://github.com/******}}. 
%\textcolor{blue}{\url{https://github.com/TeaPearce}}.  

Smaller experiments in the paper were run on a machine with a single GPU (GTX 1080), while the benchmark results in table \ref{table_AUROC_bench} were run on an internal cluster of GPUs (four Titan X's).

\subsection{Misc. Figures}
\label{sec_app_exp_miscfigures}

\begin{list}{•}{}
\item Fig. \ref{fig_2d}a script: \verb|toy_classification_latent_05.py|

Architecture:
\newline
$\x \in \mathbb{R}^2$ $\to$ 16 fc $\to$ $H=16$ $\to$ softmax $K=3$,  ReLU activations, trained for 50 epochs

\item Fig. \ref{fig_2d}b script: \verb|bottleneck_mnist_08_intro.py|

Architecture:
\newline
$\x \in \mathbb{R}^{28,28,1}$ $\to$ 32$\times$(5,5) conv $\to$ 64$\times$(3,3) conv $\to$ (2,2) pooling $\to$ 128$\times$(3,3) conv $\to$ (2,2) pooling $\to$ $H=16$ fc $\to$ softmax $K=3$,  ReLU activations, trained for 2 epochs

PCA visualisation:
\newline
We select the first two principle components over the final-layer space (on a mixture of training and OOD data points). We then create a grid on this 2D plane, plotting $U_\text{entropy}$ over the grid. 100 data points from the test set, and 100 data points from the OOD dataset are plotted.
\item Fig. \ref{fig_valid_ood_region} script: \verb|region_03_2class.py|, \verb|region_04_3class.py|
\item Fig. \ref{fig_vecfield} script: \verb|vector_field_03.py|
\end{list}

\subsection{Decision Boundary Analysis}
\label{sec_app_exp_traineddecisionboundary}

Relating to experiment: Section \ref{sec_decision_analysis}, fig. \ref{fig_weight_analysis}

Run using script: \verb|symmetry_search_04.py|

LeNet models are trained from scratch with following architectures.

For $K=2$:
\newline
$\x \in \mathbb{R}^{28,28,1}$ $\to$ 32$\times$(5,5) conv $\to$ 64$\times$(3,3) conv $\to$ (2,2) pooling $\to$ 128$\times$(3,3) conv $\to$ (2,2) pooling $\to$ $H=4$ fc $\to$ softmax $K=2$,  Tanh activations

For $K=5$:
\newline
$\x \in \mathbb{R}^{28,28,1}$ $\to$ 32$\times$(5,5) conv $\to$ 64$\times$(3,3) conv $\to$ (2,2) pooling $\to$ 128$\times$(3,3) conv $\to$ (2,2) pooling $\to$ $H=32$ fc $\to$ softmax $K=5$,  Tanh activations

Trained ResNet18's (C10 and C100) are taken from experiments in section \ref{sec_failure_measure}.

Trained EfficientNetB0 is taken from tensorflow's public checkpoint.

\subsection{Counterfactual Experiments}
\label{sec_app_exp_counterfactual_structs}

Relating to experiment: Section \ref{sec_optimal_boundary_importance}, fig. \ref{fig_counterfact_structs}

Run using script: \verb|counterfact_mnist_02.py| 

Architecture:
\newline
$\x \in \mathbb{R}^{28,28,1}$ $\to$ 32$\times$(5,5) conv $\to$ 64$\times$(3,3) conv $\to$ (2,2) pooling $\to$ 128$\times$(3,3) conv $\to$ (2,2) pooling $\to$ $H=16$ fc $\to$ softmax $K=3$,  ReLU activations, trained for 2 epochs

For the stack structure, we deviated slightly from this. To avoid the optimisation failing, we needed to evolve the weight vectors during training -- see code for details. All other structures were fixed from initialisation.

Fig. \ref{fig_weird_structures_pca} shows PCA visualisations (similar to those described in section \ref{sec_app_exp_miscfigures}) relating to fig. \ref{fig_counterfact_structs}.

\begin{figure}[h!]
\begin{center}
% \vskip 0.05in
\includegraphics[width=0.3\columnwidth]{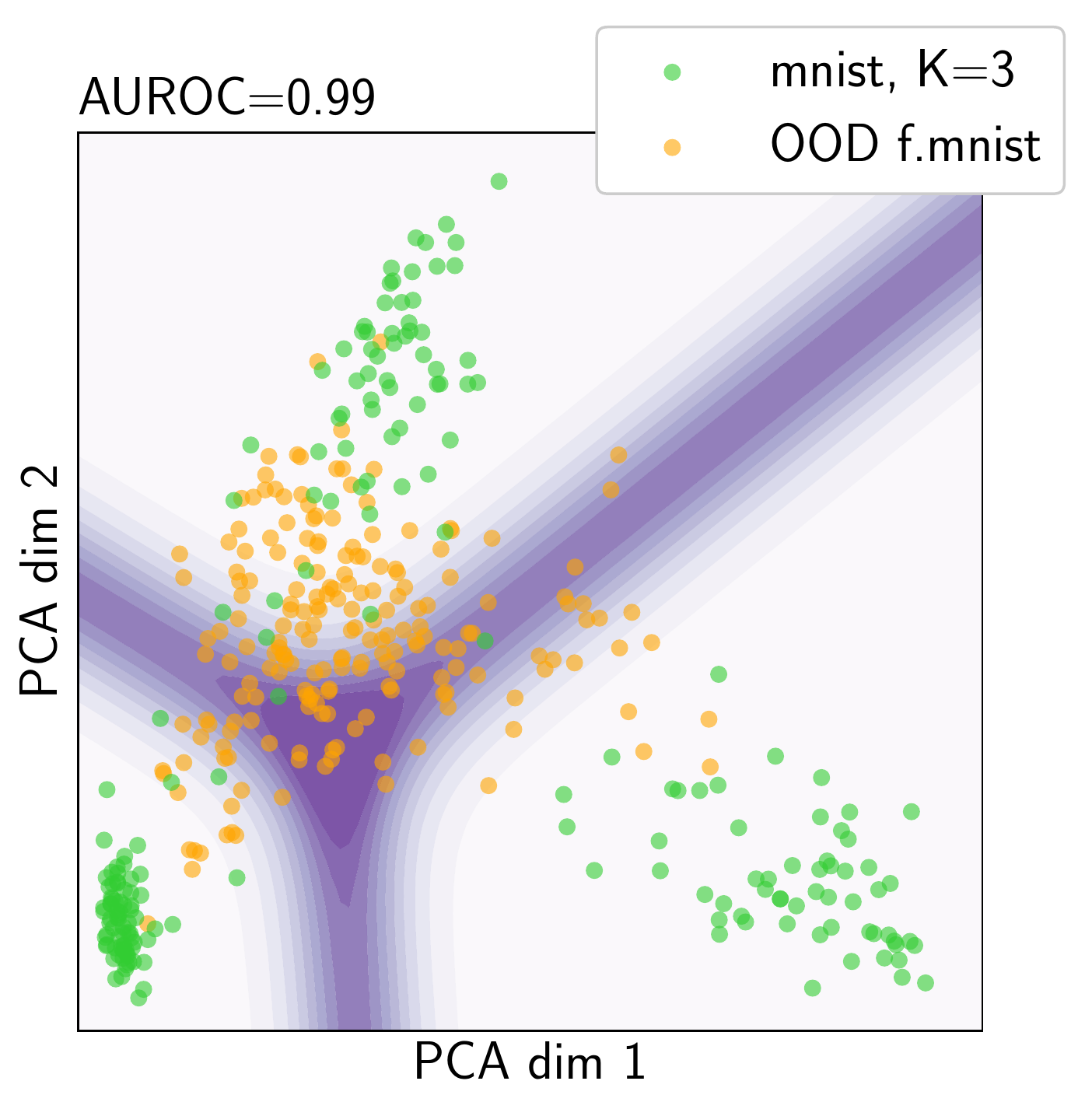}
\put(-80,-10){\small 1) Trainable}
\includegraphics[width=0.3\columnwidth]{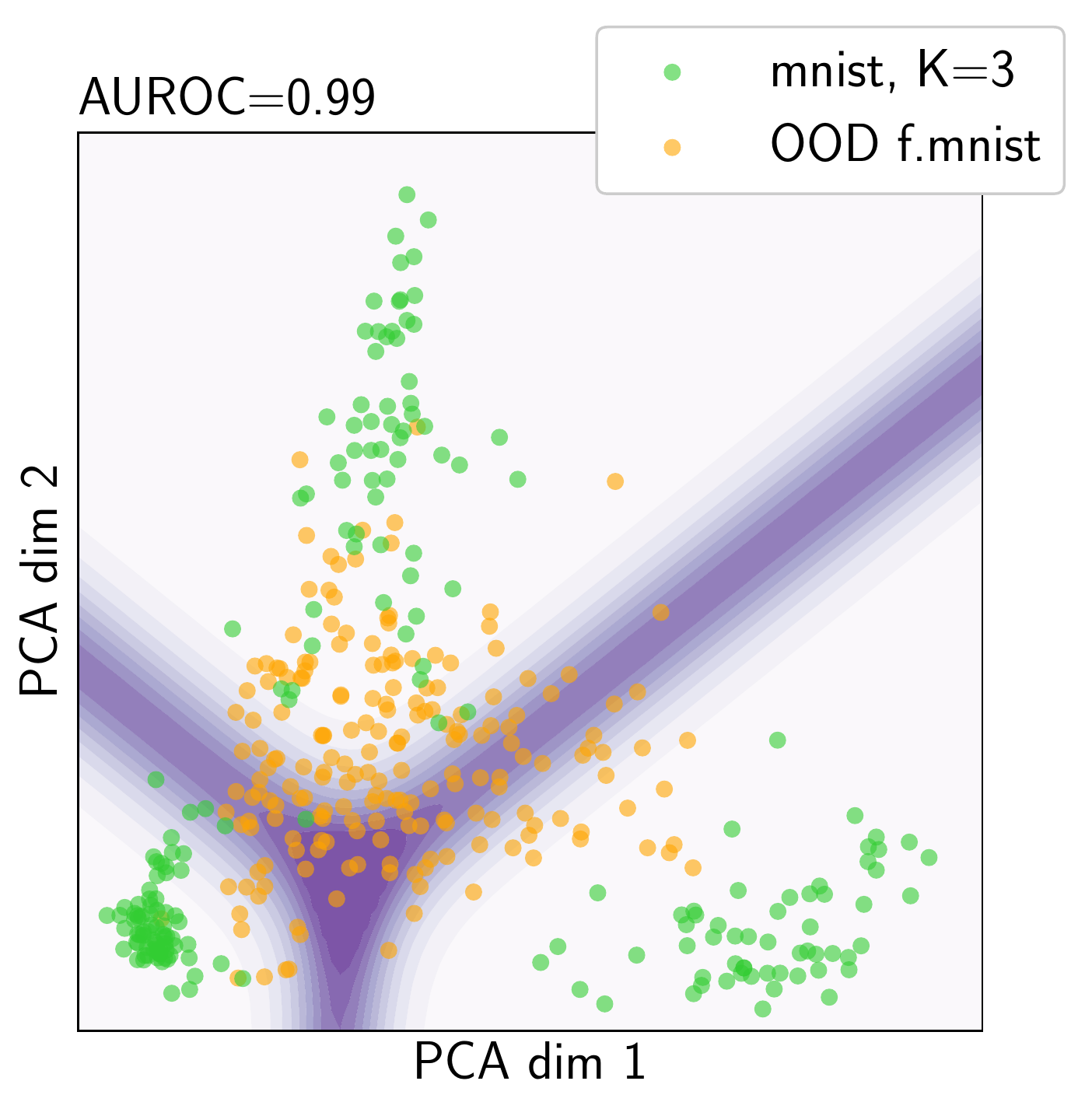}
\put(-80,-10){\small 2) Optimal}
\includegraphics[width=0.3\columnwidth]{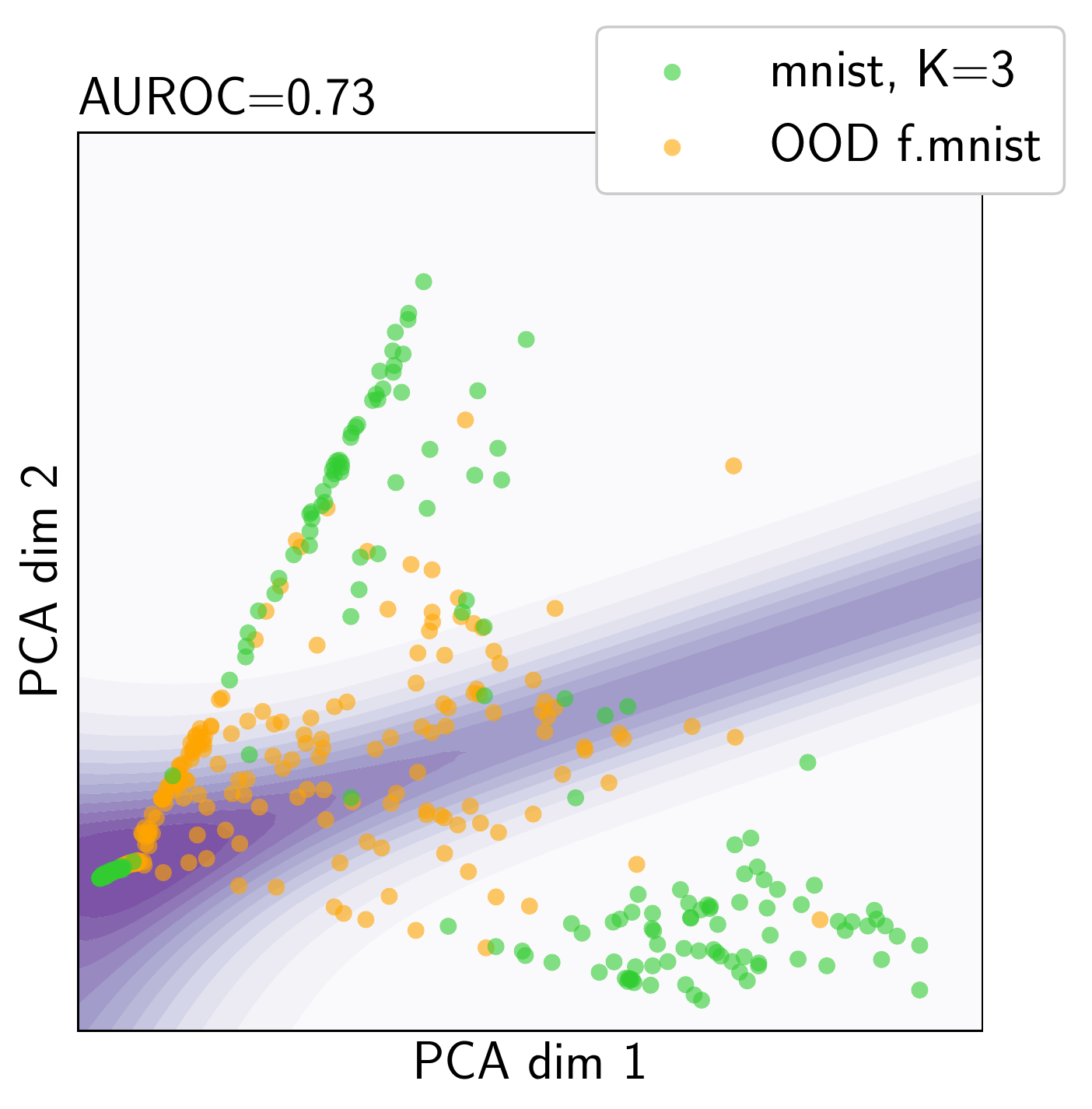}
\put(-80,-10){\small 3) Sandwich}

\includegraphics[width=0.3\columnwidth]{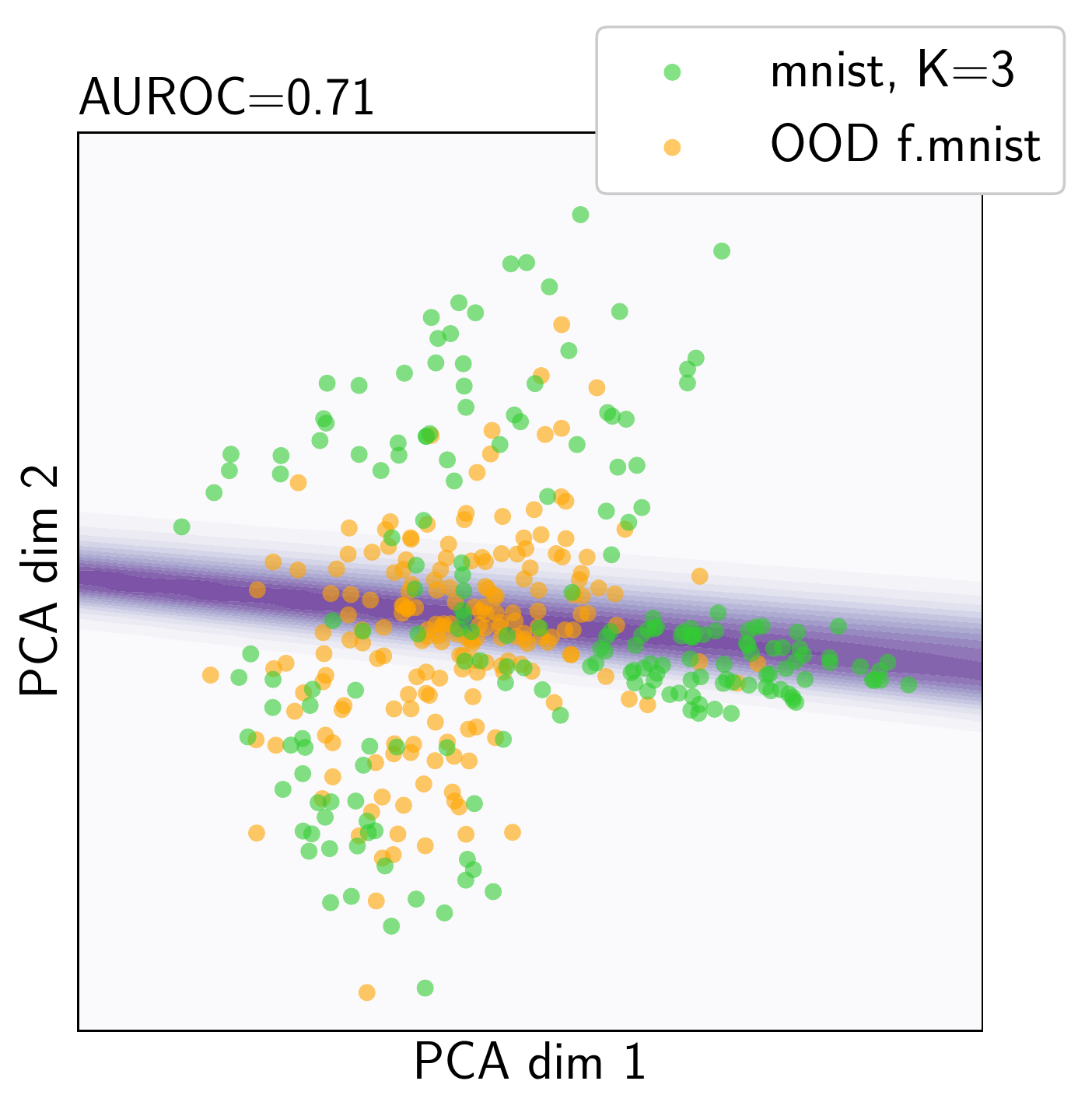}
\put(-80,-10){\small 4) Stack}
\includegraphics[width=0.3\columnwidth]{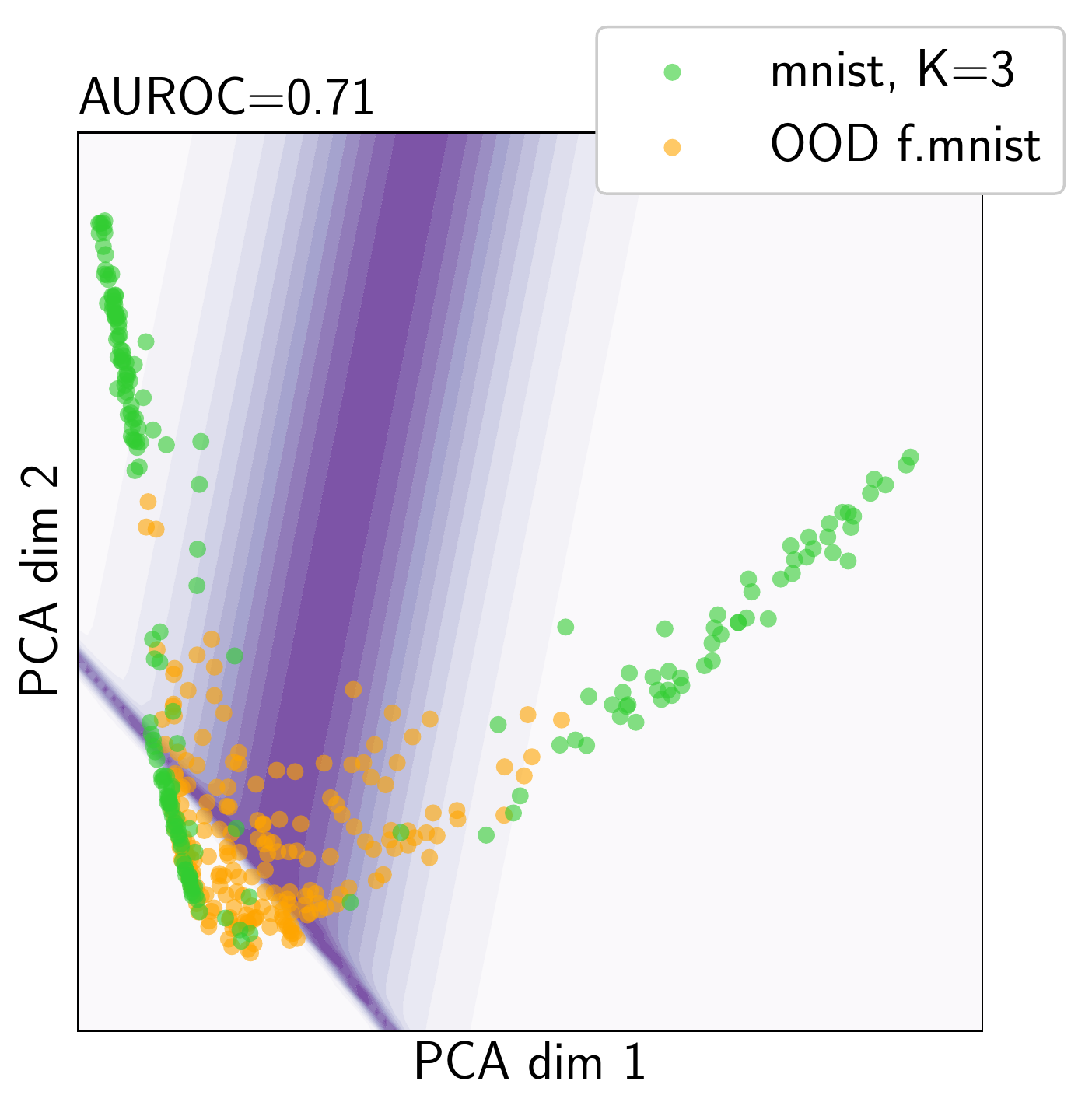}
\put(-80,-10){\small 5) Lopsided}
\vspace{0.2in}
\end{center}
\vskip -0.1in
\caption{Relating to fig. \ref{fig_counterfact_structs}, we provide PCA visualisations in the final-hidden layer for each structure tested.}
\label{fig_weird_structures_pca}
\vskip -0.1in
\end{figure}
% created with the same methodology used in fig. \ref{fig_pca_simple}

\subsection{Filtering Experiments}
\label{sec_app_exp_filter_magnitude_egs}

\subsubsection{Visualising Activations}

Relating to experiment: Section \ref{sec_filter_empirical}, fig. \ref{fig_activation_mnsit_fashion}

Run using script: \verb|bottleneck_mnist_10_interp_04.py| 

Architecture:
\newline
$\x \in \mathbb{R}^{28,28,1}$ $\to$ 32$\times$(5,5) conv $\to$ 64$\times$(3,3) conv $\to$ (2,2) pooling $\to$ 128$\times$(3,3) conv $\to$ (2,2) pooling $\to$ $H=64$ fc $\to$ softmax $K=3$,  ReLU activations, trained for 40 epochs

We used a simple algorithm to order the activations before displaying, grouping by which class each final-layer neurons was most activated for.

\subsubsection{Literal Filter Interpretation}

Relating to experiment: Section \ref{sec_filter_empirical}, fig. \ref{fig_noise}

Run using script: \verb|permute_test_04_mac_02.py| 

Architecture:
\newline
$\x \in \mathbb{R}^{9,9}$ $\to$ 16$\times$(3,3) conv $\to$ 32$\times$(3,3) conv $\to$ (2,2) pooling $\to$ 64$\times$(2,2) conv $\to$ softmax $K=10$,  ReLU activations, trained for 15 epochs

\subsubsection{Effect of Depth on Filtering}

Relating to experiment: Section \ref{sec_filter_depth_exp}, fig. \ref{fig_depth_experiment}

Run using script: \verb|bottleneck_mnist_10_windows.py| 

Example architecture for 8 convolutional layers (4x VGG blocks):
\newline
$\x \in \mathbb{R}^{28,28,1}$ 
$\to$ 32$\times$(3,3) conv $\to$ 32$\times$(3,3) conv $\to$ (2,2) pooling $\to$ batchnorm
$\to$ 64$\times$(3,3) conv $\to$ 64$\times$(3,3) conv $\to$ (2,2) pooling $\to$ batchnorm
$\to$ 128$\times$(3,3) conv $\to$ 128$\times$(3,3) conv $\to$ (2,2) pooling $\to$ batchnorm
$\to$ 128$\times$(3,3) conv $\to$ 128$\times$(3,3) conv $\to$ (2,2) pooling $\to$ batchnorm
$\to$ dropout $\to$ 512 fc $\to$ softmax $K=10$
,  ReLU activations, trained for 12 epochs

\subsection{Standard OOD Detection Benchmarking}
\label{sec_app_exp_standard_bench}

Relating to experiment: Section \ref{sec_failure_measure}, table \ref{table_AUROC_bench}

Run using script:
Models were trained for 100 epochs using script from
\textcolor{blue}{\url{https://github.com/huyvnphan/PyTorch_CIFAR10}}, which uses the official pytorch ResNet definitions 
\textcolor{blue}{\url{https://github.com/pytorch/vision/blob/master/torchvision/models/resnet.py}}. We made only minor modifications to take different image datasets. 

Comparison of GMM vs Softmax: \verb|grid_run_05.py| 
 
Results display: \verb|grid_display_03.py|

Fig. \ref{fig_grid_auroc} provides a breakdown of AUROC over individual datasets, the averages are given in table \ref{table_AUROC_bench}. We balance all datasets when computing AUROC.

%Details on density model fitting for $U_\text{density}$. 

We experimented extensively to tune the fitting of the GMM as far as possible, optimising for AUROC results using $U_\text{density}$. We performed random search over the first four variables below, and a full grid search over the last three.
\begin{list}{•}{}
\item When appropriate, constrict one GMM component to model each class, or allow flexible fitting via expectation maximisation (EM)
\item Number of final-layer neurons used (randomly selected) $\in [100,256,512]$ -- note for ResNet18, $H=512$
\item Diagonal covariance regularisation $\in [1e-1,1e-3,1e-5,1e-7]$
\item Fit on $\z$ before or after non-linearity is applied
\item Covariance type $\in[ $full , diagonal , isotropic$]$
\item Fit on $\z$ or $\log \z$
\item Number components $\in [1,K,10,100]$
\end{list}
The final parameters providing the best fit are as follows.
\begin{list}{•}{}
\item GMM fit with EM
\item Number of final-layer neurons used = $512$ %(for ResNet18, $H=512$)
\item Diagonal covariance regularisation = $1e-5$
\item Fit on $\z$ after non-linearity is applied
\item Covariance type = full
\item Fit on $\z$
\item Number components = $K$
\end{list}
We further experimented using kernel density estimation (KDE) instead of GMM, but this gave consistently worse performance even after similar tuning efforts.

\begin{figure}[t]
\vskip 0.05in
\begin{center}

AUROC 
\vspace{0.1in}

\includegraphics[width=0.2\columnwidth]{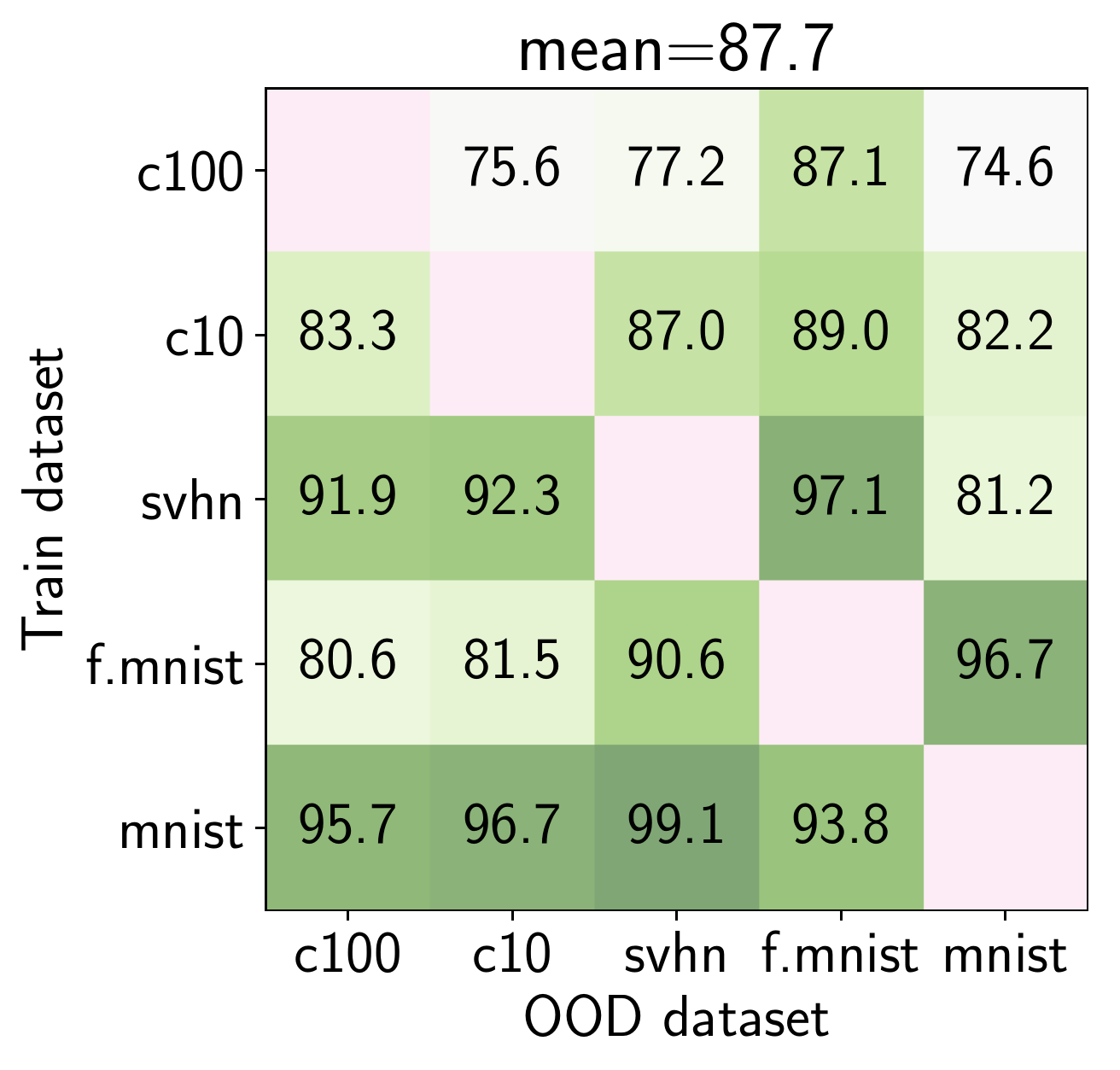}
\put(-60,80){\small Softmax, $U_\text{max}$}
\includegraphics[width=0.2\columnwidth]{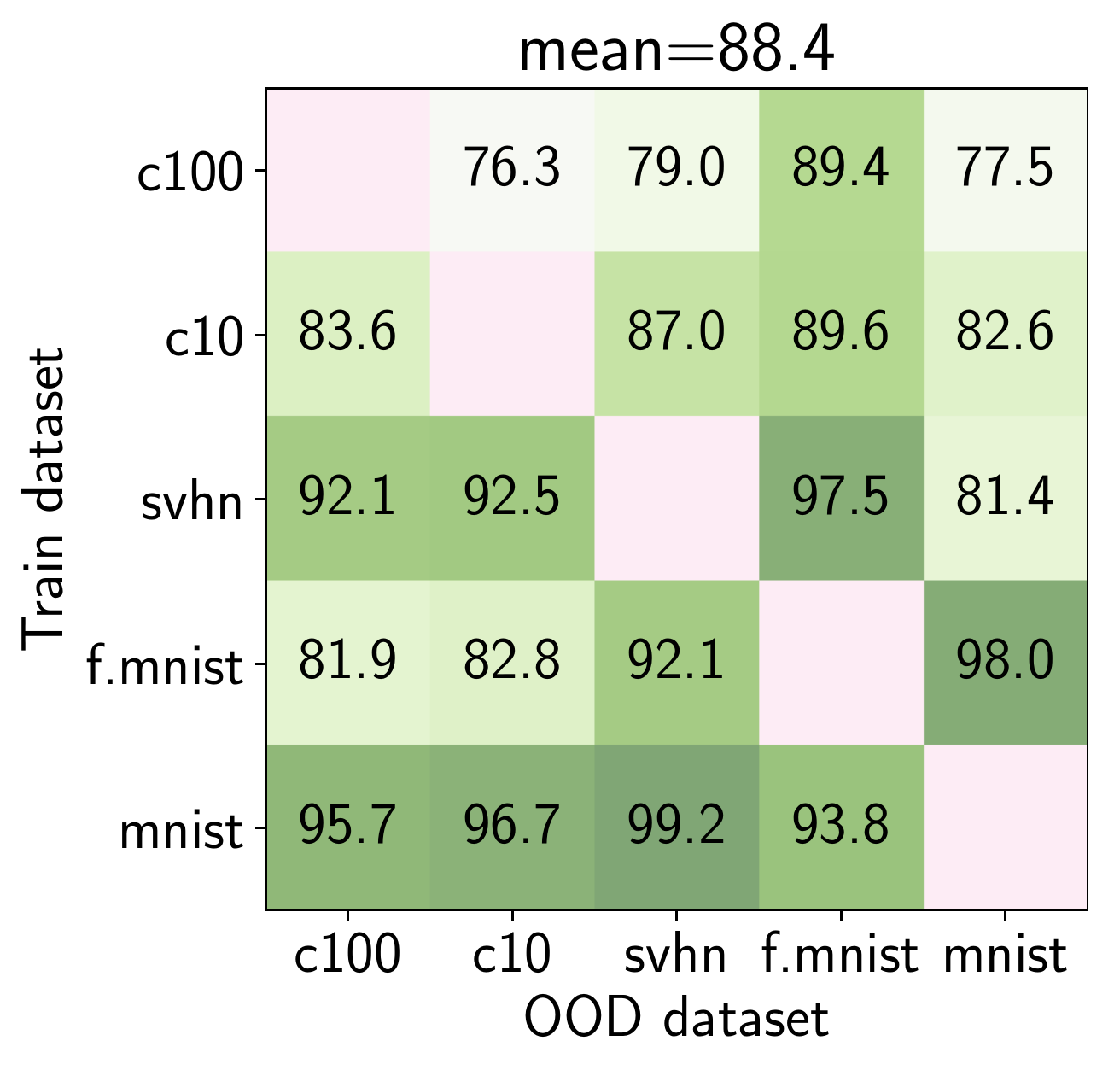}
\put(-60,80){\small Softmax, $U_\text{entropy}$}
\includegraphics[width=0.2\columnwidth]{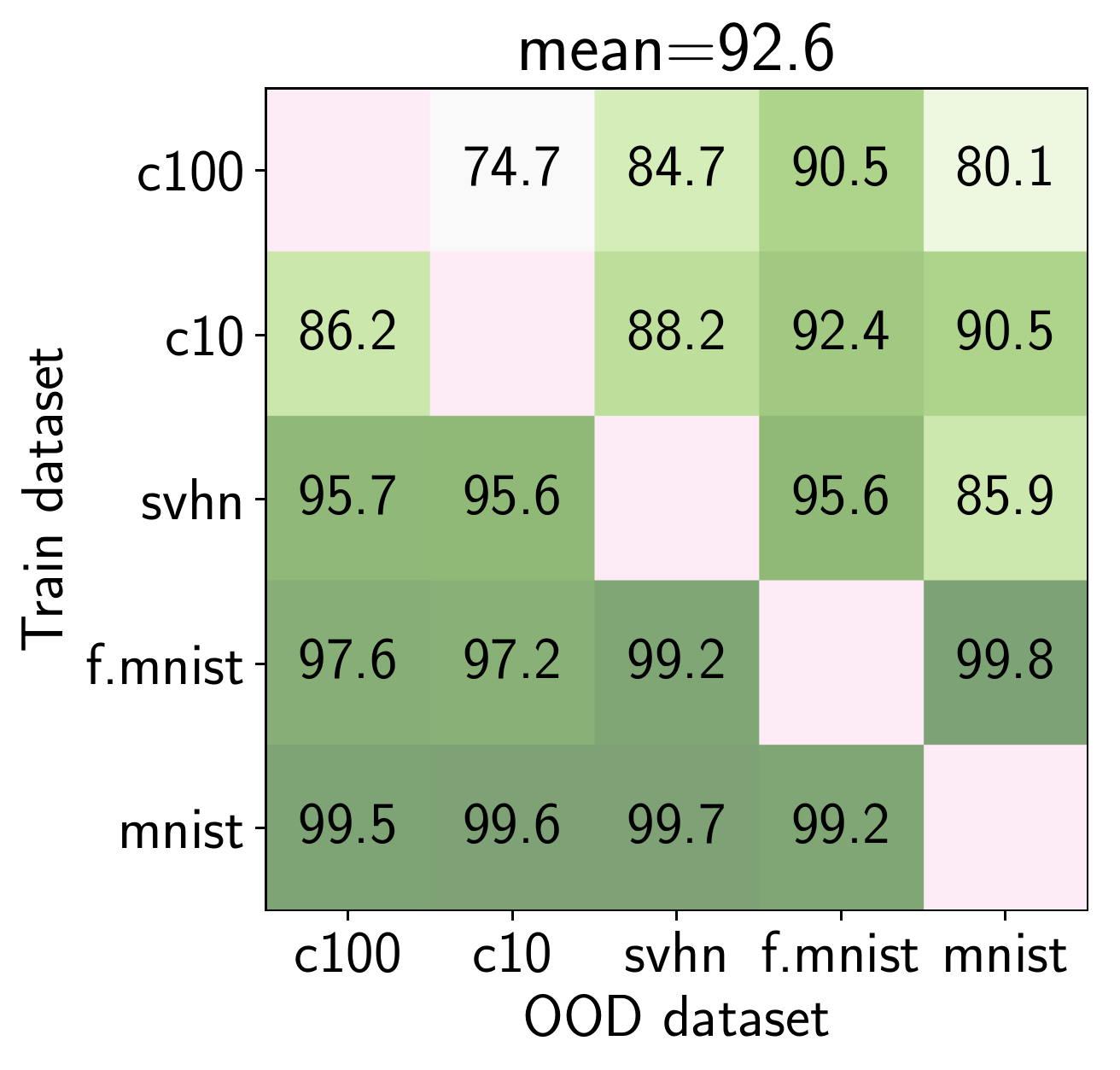}
\put(-60,80){\small Density, $U_\text{density}$}
\includegraphics[width=0.2\columnwidth]{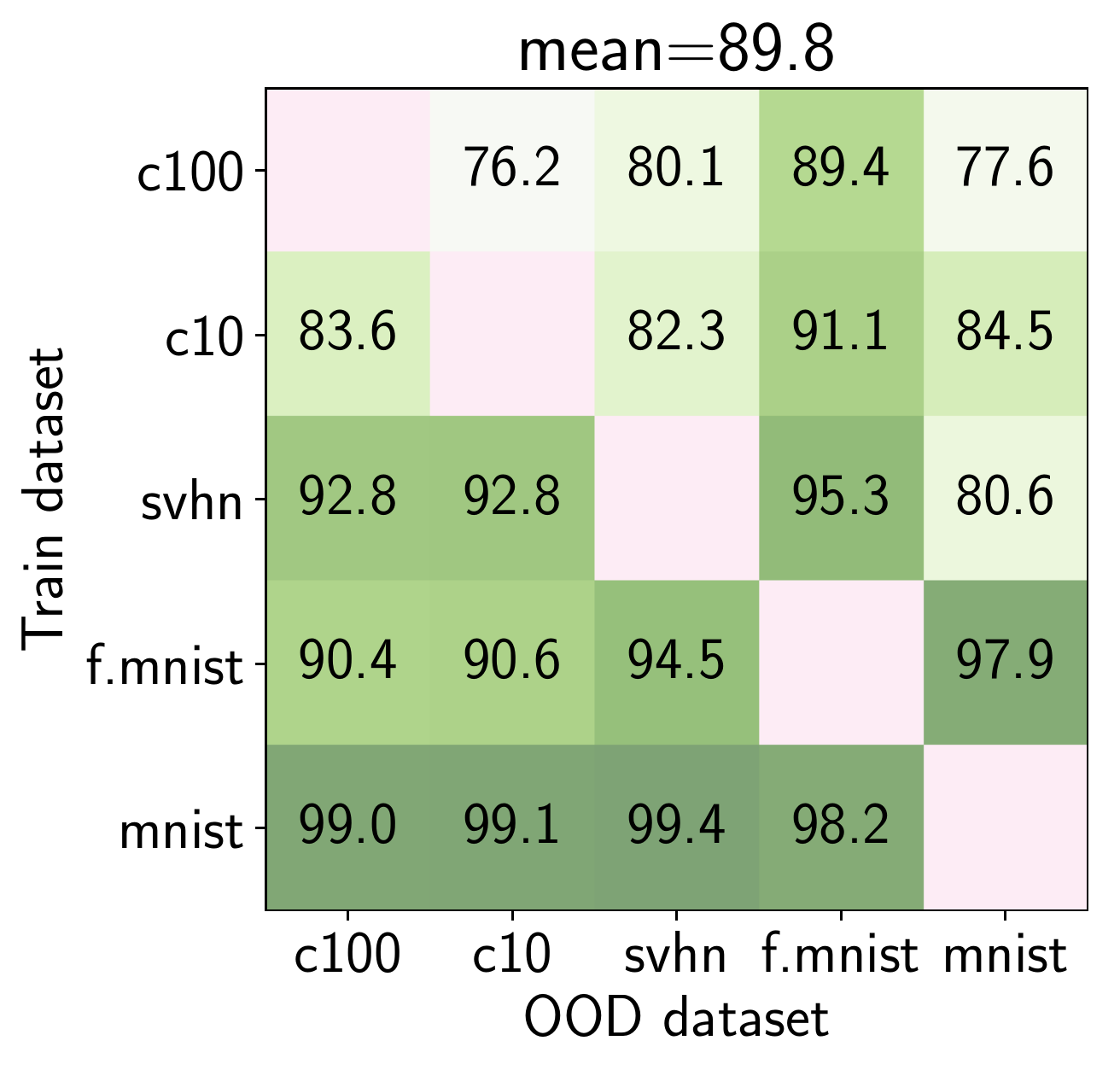}\put(-70,80){\small Softmax, $\max \cos \theta_{i,\z}$}
\includegraphics[width=0.2\columnwidth]{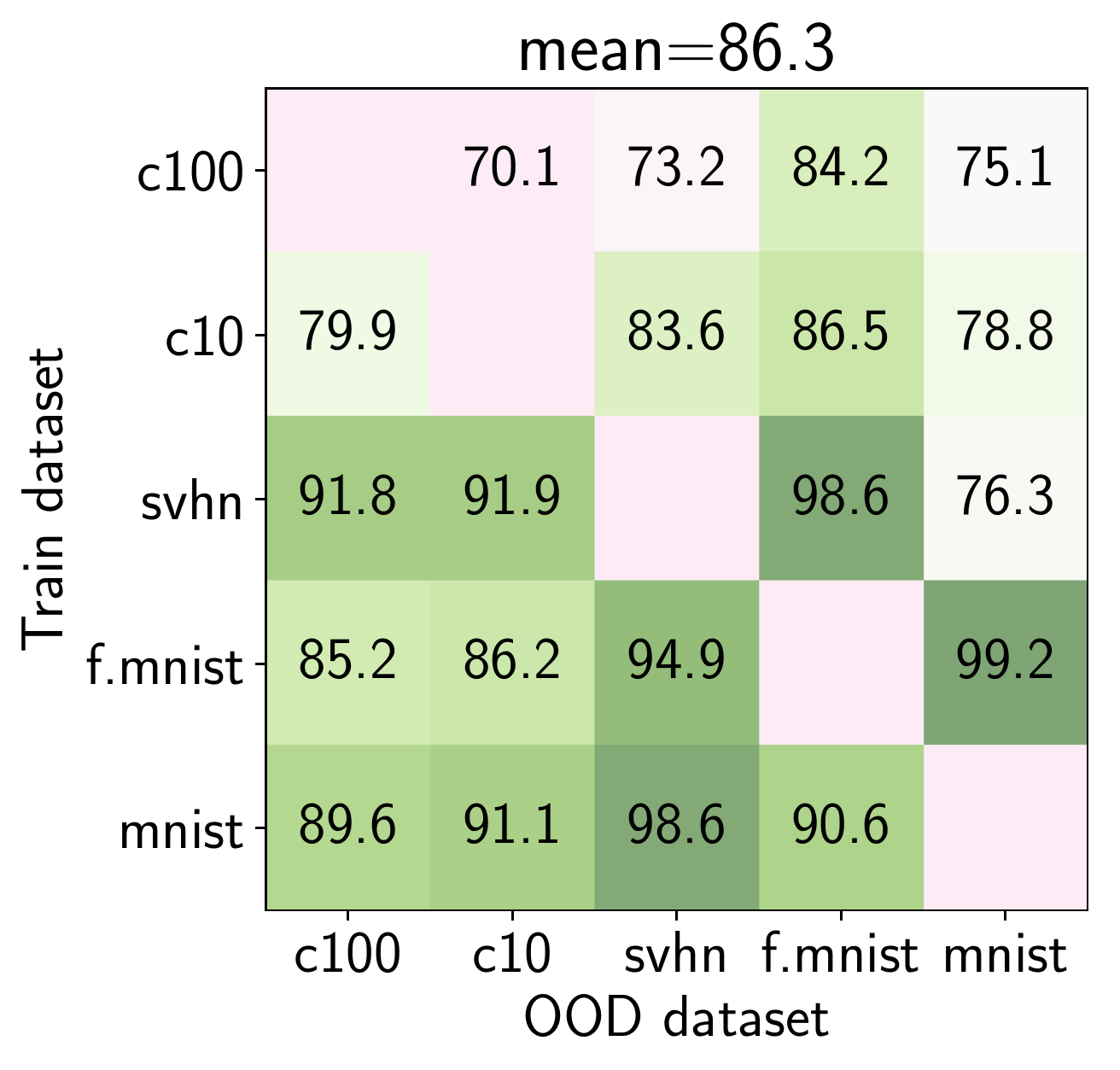}\put(-60,80){\small Softmax, $||\z||$}

%AUPRC
%
%\includegraphics[width=0.19\columnwidth]{images/08_gridv2/grid_max_auprc.pdf}
%\includegraphics[width=0.19\columnwidth]{images/08_gridv2/grid_ent_auprc.pdf}
%\includegraphics[width=0.19\columnwidth]{images/08_gridv2/grid_latents_auprc.pdf}
%\includegraphics[width=0.19\columnwidth]{images/08_gridv2/grid_costheta_auprc.pdf}
%\includegraphics[width=0.19\columnwidth]{images/08_gridv2/grid_zmag_auprc.pdf}

\caption{Full permutations of training datasets, with each dataset used as OOD. Numbers are AUROC for an evenly balanced combination of train and OOD datasets, averaged over five runs. E.g. for a ResNet18 trained on CIFAR 10, the mean AUROC against MNIST using $U_\text{max}$ is 82.2\%.}
\label{fig_grid_auroc}
\end{center}
\vskip -0.2in
\end{figure}

\subsection{Pre-training/Fine-tuning OOD Detection}
\label{sec_app_exp_pretrain_finetune}

Relating to experiment: Section \ref{sec_failure_measure}, table \ref{table_AUROC_bench}

Run using script: \verb|transfer_wind_12.py|, \verb|read_results_transfer_02.py|

Details of datasets used are given in table \ref{table_transferdatasets}.

We conducted a tuning process similar as in section \ref{sec_app_exp_standard_bench}. 
The final parameters providing the best fit are as follows.
\begin{list}{•}{}
\item GMM fit with EM
\item Number of final-layer neurons used = $1280$ %(for ResNet18, $H=512$)
\item Diagonal covariance regularisation = $1e-3$
\item Fit on $\z$ after non-linearity is applied
\item Covariance type = full
\item Fit on $\log \z$
\item Number components = $K$
\end{list}
%Here, fitting on $\log \z$ rather than $\z$ was preferable.

\begin{table}[h!]
  \caption{Transfer dataset information.}
  \vspace{-0.1in}
  \label{table_transferdatasets}
  \begin{center}
  \resizebox{0.8\textwidth}{!}{
  \begin{tabular}{l c ccccc l}
    \toprule
   % Short name & Long name & Image dimenions & Samples trained on & Samples tested on & 
   Dataset & Crop size & $N$ train data & $N$ test data & $K$ classes & Last-layer eps & Full fine-tune eps & Hyperlink \\
    \midrule
Satellite & (224, 224) & 1600 & 250 & 21 & 16 & 32 & \textcolor{blue}{\href{https://www.tensorflow.org/datasets/catalog/uc_merced}{uc merced}} \\ 
Cancer & (150, 150) & 4000 & 1000 & 8 & 16 & 32 & \textcolor{blue}{\href{https://www.tensorflow.org/datasets/catalog/colorectal_histology}{colorectal histology}} \\ 
Pets & (224, 224) & 3680 & 1000 & 37 & 16 & 32 & \textcolor{blue}{\href{https://www.tensorflow.org/datasets/catalog/oxford_iiit_pet}{oxford iiit pet}} \\ 
Flowers & (224, 224) & 2040 & 1000 & 102 & 16 & 64 & \textcolor{blue}{\href{https://www.tensorflow.org/datasets/catalog/oxford_flowers102}{oxford flowers102}} \\ 
Beans & (224, 224) & 534 & 500 & 3 & 16 & 32 & \textcolor{blue}{\href{https://www.tensorflow.org/datasets/catalog/beans}{beans}} \\ 
   \bottomrule
  \end{tabular}
  }
\end{center}
\end{table}

\iffalse
\subsection{Aleatoric Uncertainty and Test Accuracy}
\label{sec_app_aleatoric_and_test}
There is a relationship between the amount of aleatoric uncertainty in a dataset and accuracy on the test fold (but not necessarily training accuracy since a model can overfit to data noise). A model scoring 100\% test accuracy must have no ambiguity about any input, and hence no aleatoric uncertainty. On the other hand, a model achieving 50\% accuracy on a two class task has maximum aleatory uncertainty about every input and, one would expect, $U_\text{max}(\x_i)=0.5$ for all training data points. We make the assumption that an ambiguous input can only be ambiguous between two classes (e.g. an MNIST digit may be midway between a `1' and `7', but it's unlikely to also be mistaken as a `3'). In the worse case, every error a model makes is due to aleatoric uncertainty, and for every error it makes, it will make one correct classification. 
Hence, if accuracy is 90\%, then the model may have aleatoric uncertainty on up to 20\% of the samples (10\% correctly classified and 10\% incorrectly classified).
\begin{align}
\frac{100 - accuracy}{K}
\end{align}
\fi

\newpage
\section{Plots}

\begin{figure}[h]
\begin{center}
% \vskip 0.05in
\includegraphics[width=0.3\columnwidth]{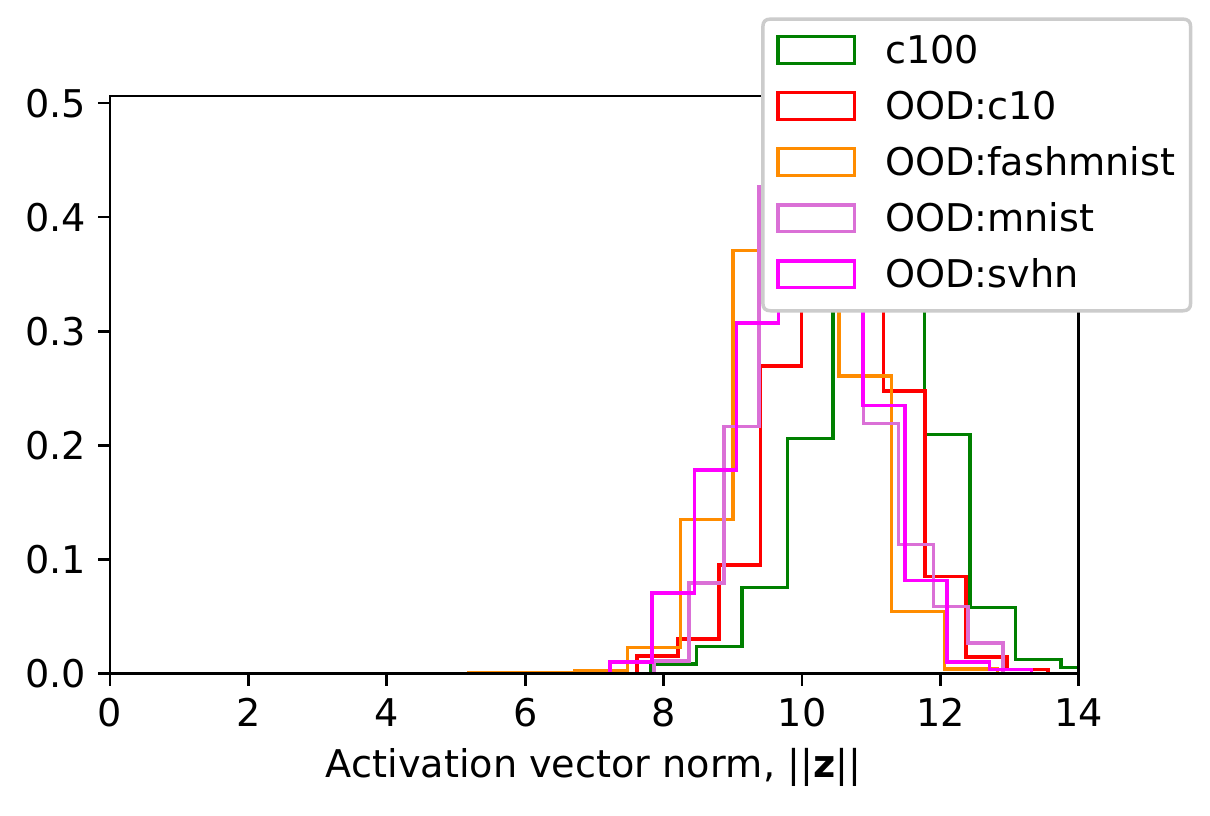}
\includegraphics[width=0.3\columnwidth]{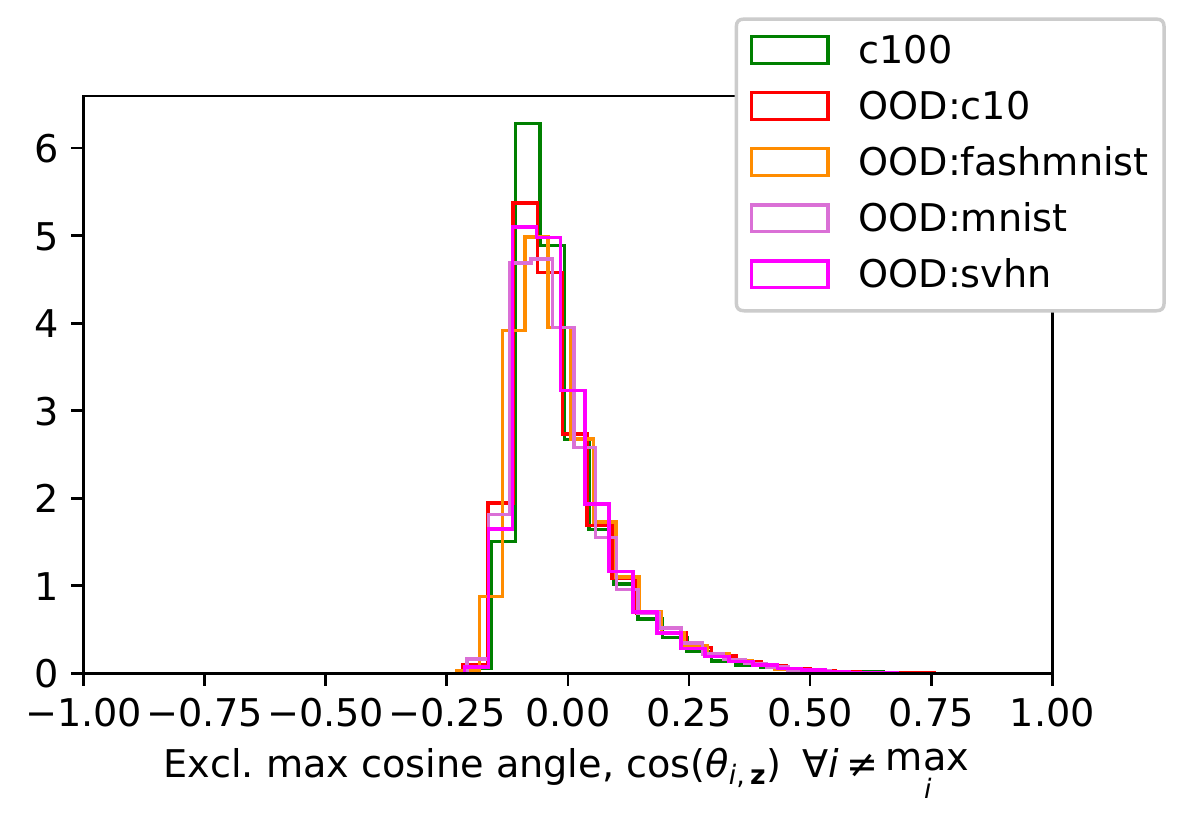}
\includegraphics[width=0.3\columnwidth]{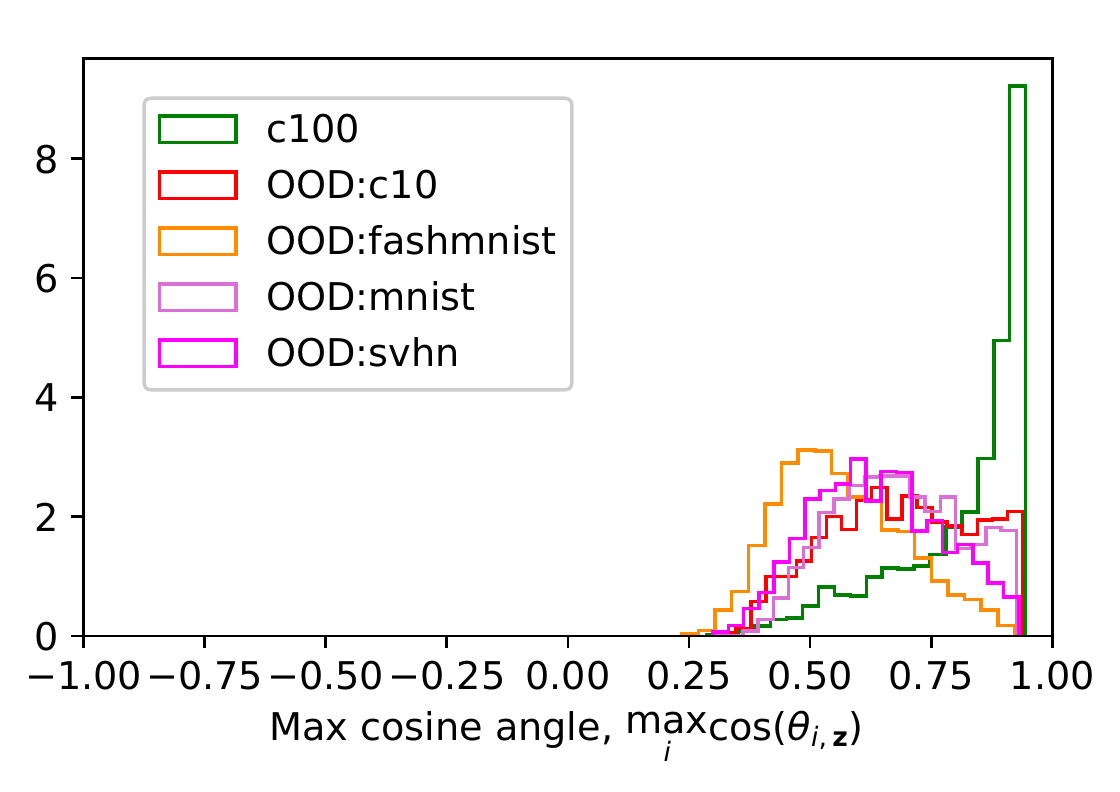}
\put(-204,83){\small $\mathcal{D}_\text{in} = $ Cifar 100}
\vspace{0.2in}

\includegraphics[width=0.3\columnwidth]{images/03_cifar_etc/c10_znorm.pdf}
\includegraphics[width=0.3\columnwidth]{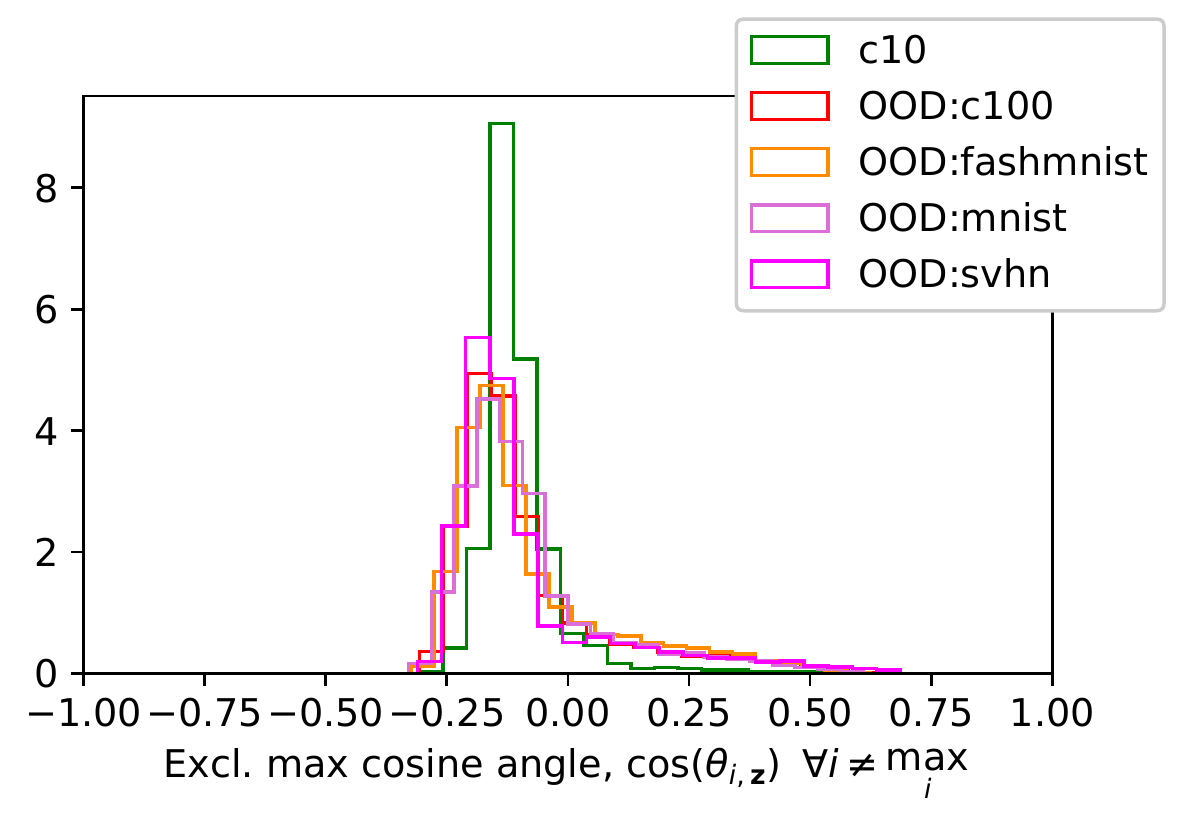}
\includegraphics[width=0.3\columnwidth]{images/03_cifar_etc/c10_max_cosine.pdf}
\put(-204,83){\small $\mathcal{D}_\text{in} = $ Cifar 10}
\vspace{0.2in}

\includegraphics[width=0.3\columnwidth]{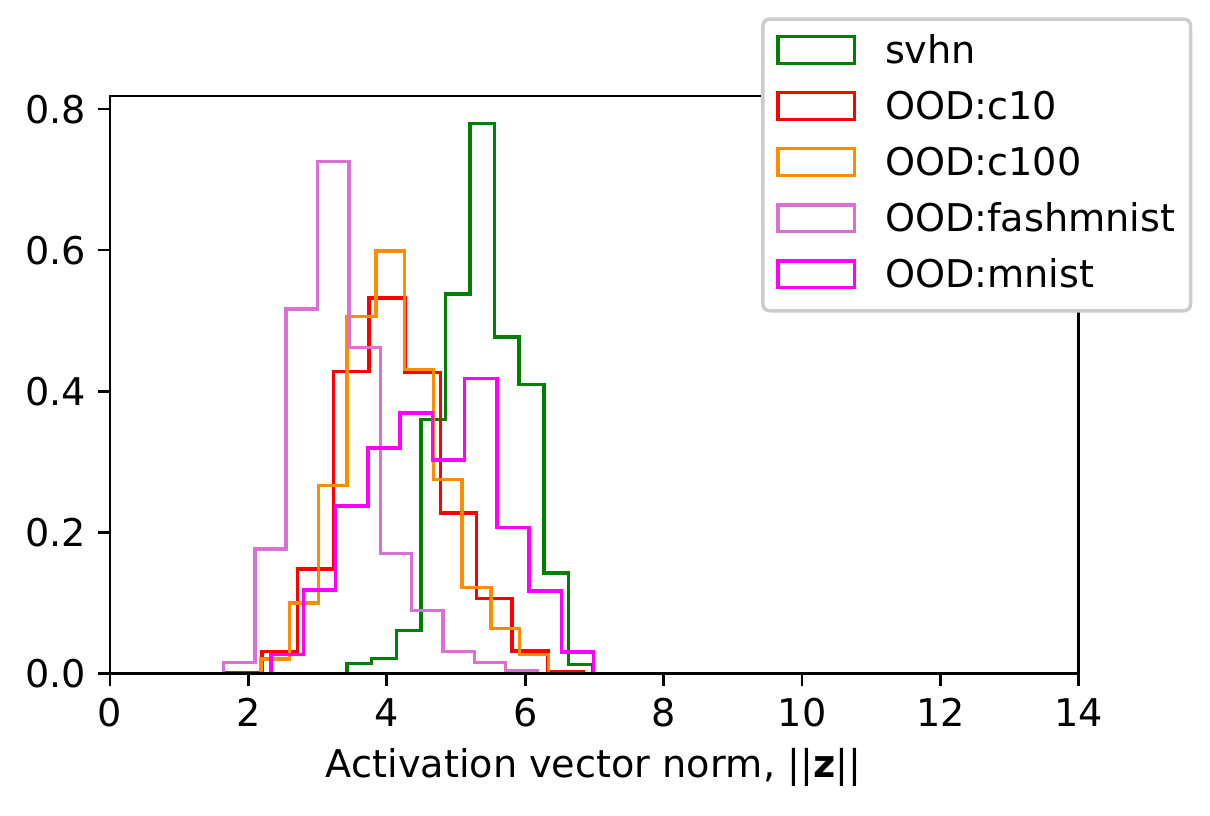}
\includegraphics[width=0.3\columnwidth]{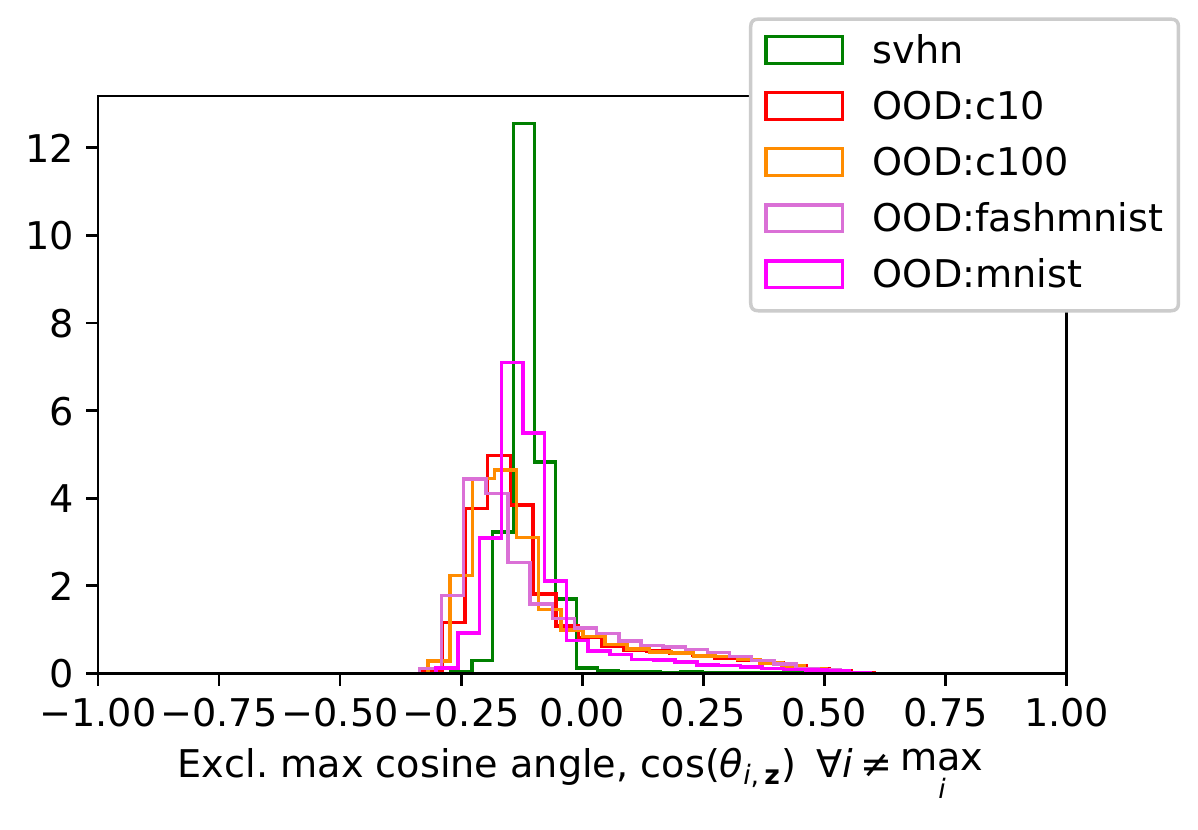}
\includegraphics[width=0.3\columnwidth]{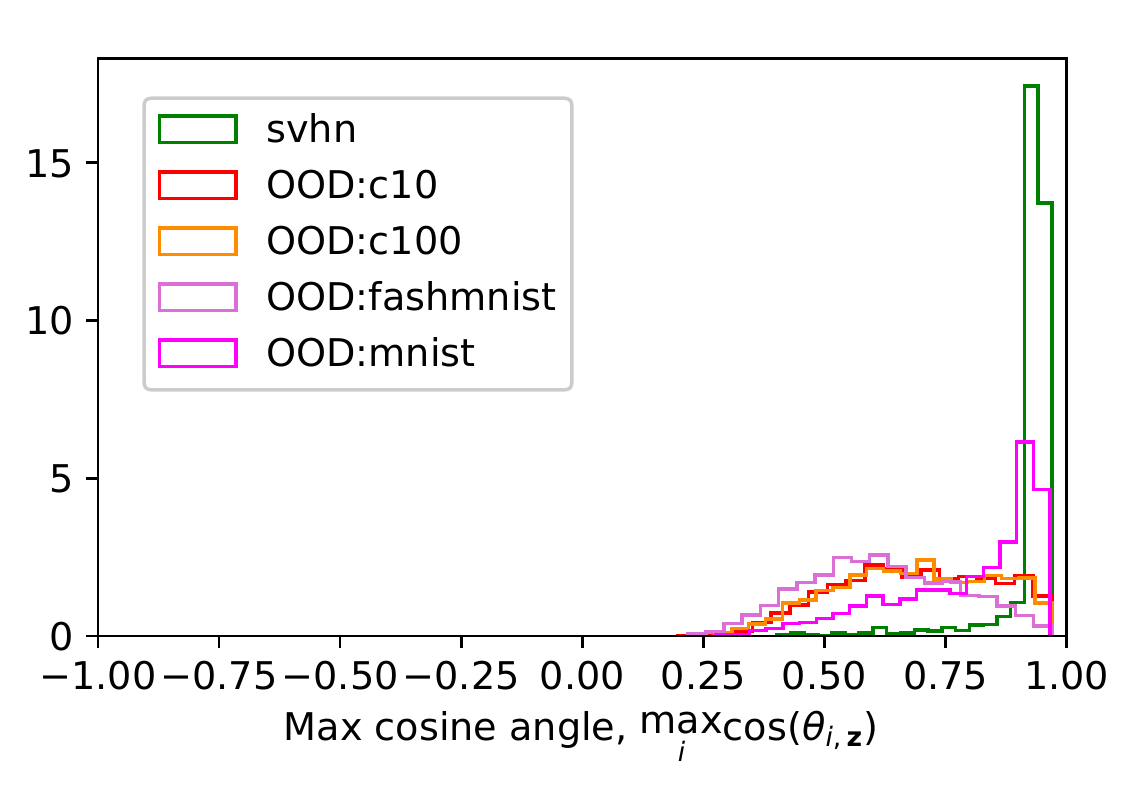}
\put(-204,83){\small $\mathcal{D}_\text{in} = $ SVHN} 

\includegraphics[width=0.3\columnwidth]{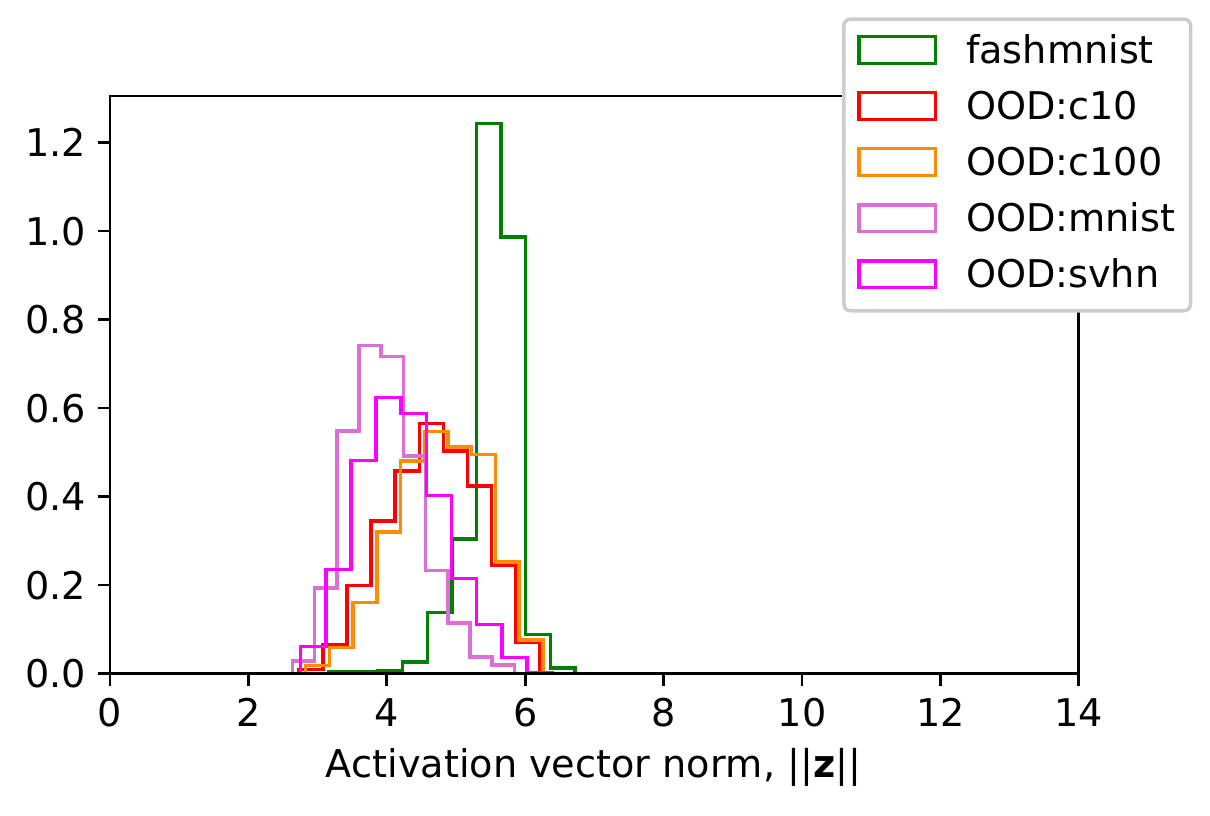}
\includegraphics[width=0.3\columnwidth]{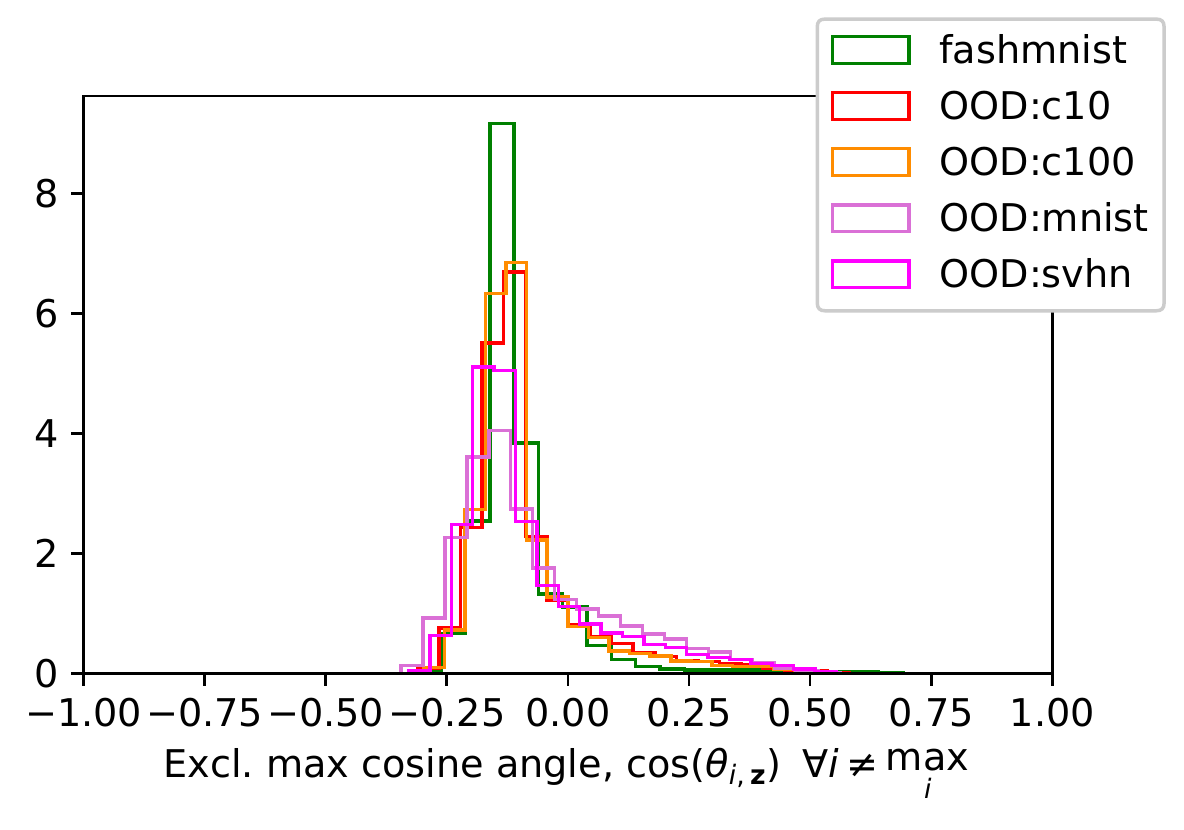}
\includegraphics[width=0.3\columnwidth]{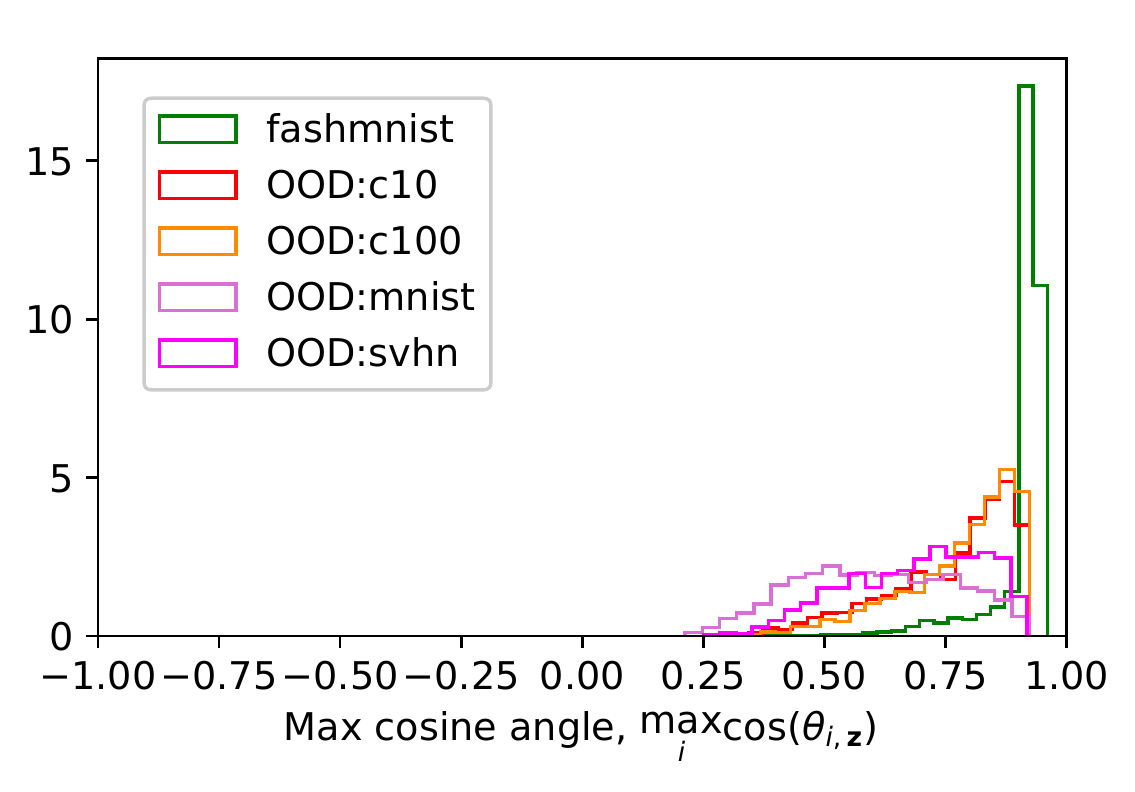}
\put(-204,83){\small $\mathcal{D}_\text{in} = $ Fashion MNIST}
\vspace{0.2in}

\includegraphics[width=0.3\columnwidth]{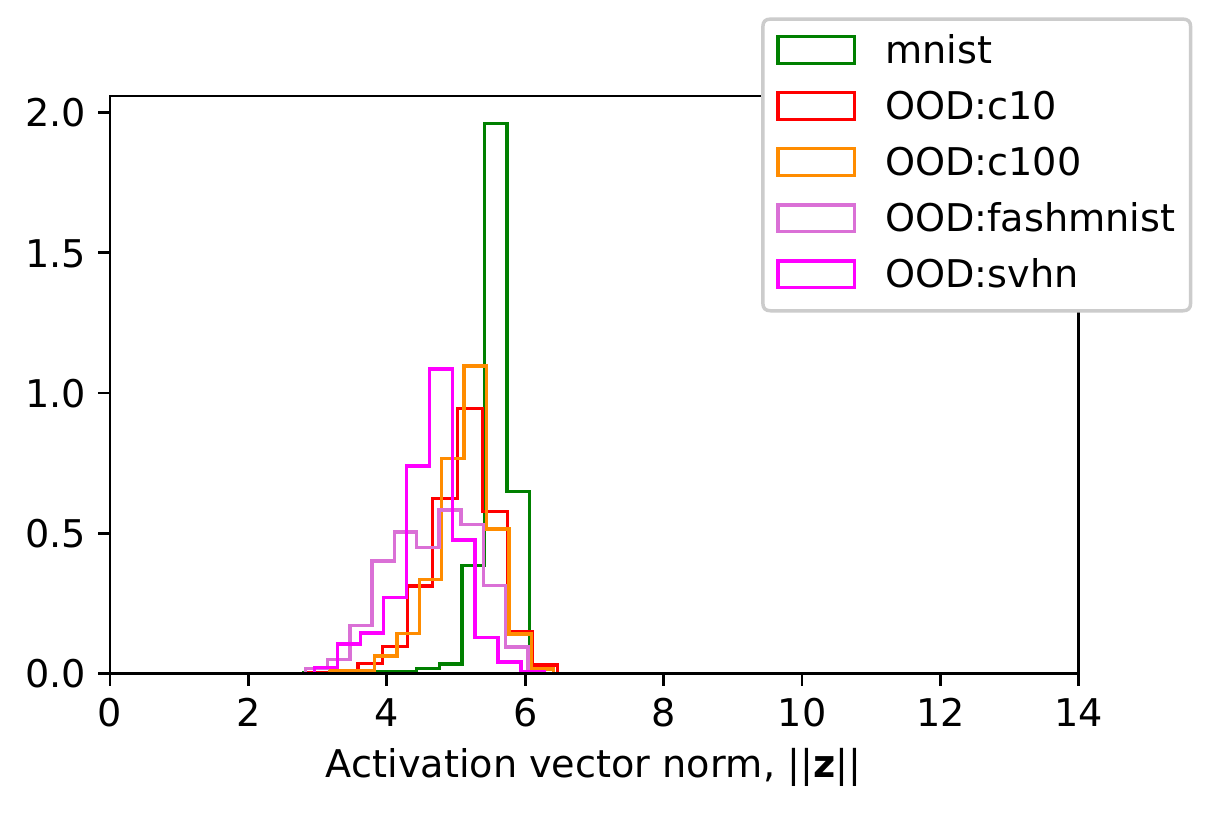}
\includegraphics[width=0.3\columnwidth]{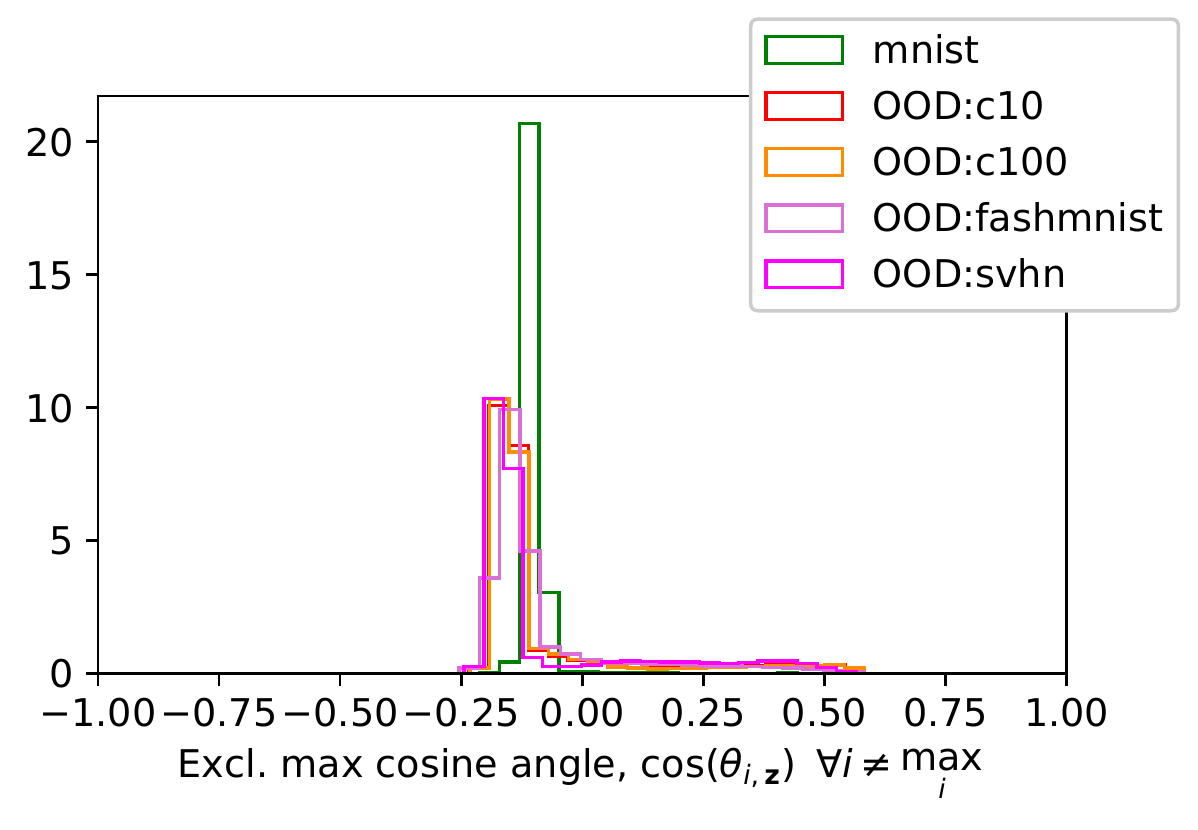}
\includegraphics[width=0.3\columnwidth]{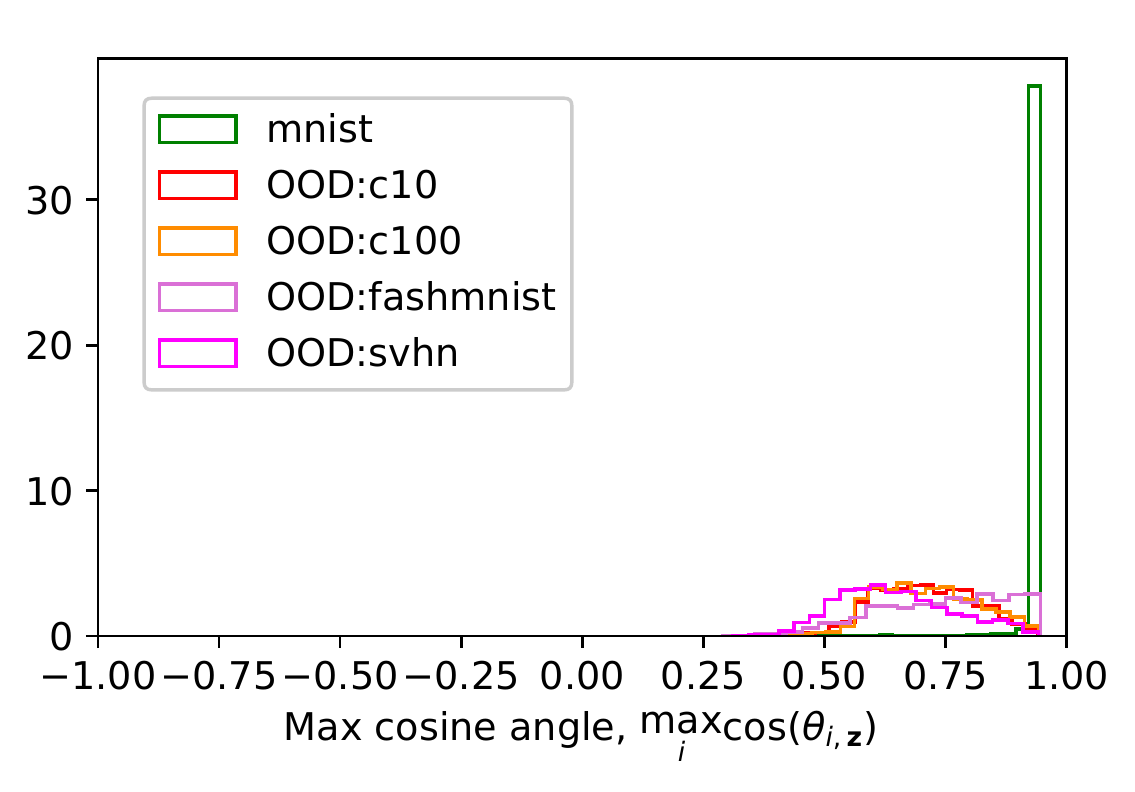}
\put(-204,83){\small $\mathcal{D}_\text{in} = $ MNIST} 
\vspace{0.2in}

\end{center}
\vskip -0.1in
\caption{For ResNet18's trained on $\mathcal{D}_\text{in}$ from experiments in section \ref{sec_failure_measure}, we plot distributions of $||\z||$ (left column), and cosine angles -- both the maximum cosine angle (right column), and all except the maximum cosine angle (middle column).}
\label{fig_cifar_etc}
\vskip -0.1in
\end{figure}

\newpage
\section{Miscellaneous}

\subsection{Code Snippet}
\label{sec_app_codesnippet}
We detail the procedure for computing $U_\text{density}$ through a code snippet.% in listing \ref{python_algo}.
%, with steps outlined below.
% \begin{enumerate}
%     % \item Begin with a trained neural network.
%     \item Make a forward pass with the training data set (or validation set if available), extracting and storing the final-layer features, $\z \sim \mathcal{D}_\text{in}$. %\tp{Can be before any non-linearity is applied. Optionally apply a simple scaling scheme, e.g. $\z \xleftarrow{} \log \z$}.
%     \item Fit a density model $\hat{q}(\z)$ estimating $p_\text{in}(\z)$. In this work Gaussian mixture models (GMM) are used, implementable through \textcolor{blue}{\href{https://scikit-learn.org/stable/modules/generated/sklearn.mixture.GaussianMixture.html}{scikit-learn}}.
%     \item For a new test point, $\x^*$, extract the final-layer activations, $\z^*$, and use the trained density model to estimate uncertainty, $U_\text{density}(\z^*) = -\log \hat{q}(\z^*)$.
% \end{enumerate}

\begin{lstlisting}[float=tp, language=Python, caption=Code sketch of the method.,captionpos=h!]
from tensorflow import keras
from sklearn.mixture import GaussianMixture

# train model in usual way
model.fit(x_train, y_train)

# extract train data features from final hidden layer
model_z = keras.Model(inputs=model.input, 
                      outputs=model.layers[-2]) 
z_train = model_z.predict(x_train)

# fit density model
gmm = GaussianMixture(n_components).fit(z_train)

# extract test features
z_test = model_z.predict(x_test)

# compute log probability score
uncertainty_estimate = -gmm.score_samples(z_test)
\end{lstlisting}
\label{python_algo}

\subsection{Illustrative Example of Softmax Failure}
Fig. \ref{fig_ankle_boots} plots final-layer activations as in fig. \ref{fig_activation_mnsit_fashion}, but here ankle boots are included in the subset of three Fashion MNIST training classes. MNIST digits are consistently mapped to the same feature space. Note this is not a failure of softmax extrapolations, but due to `feature overlap'.

\begin{figure}[h!]
\begin{center}
% \vskip 0.05in
\includegraphics[width=0.8\columnwidth]{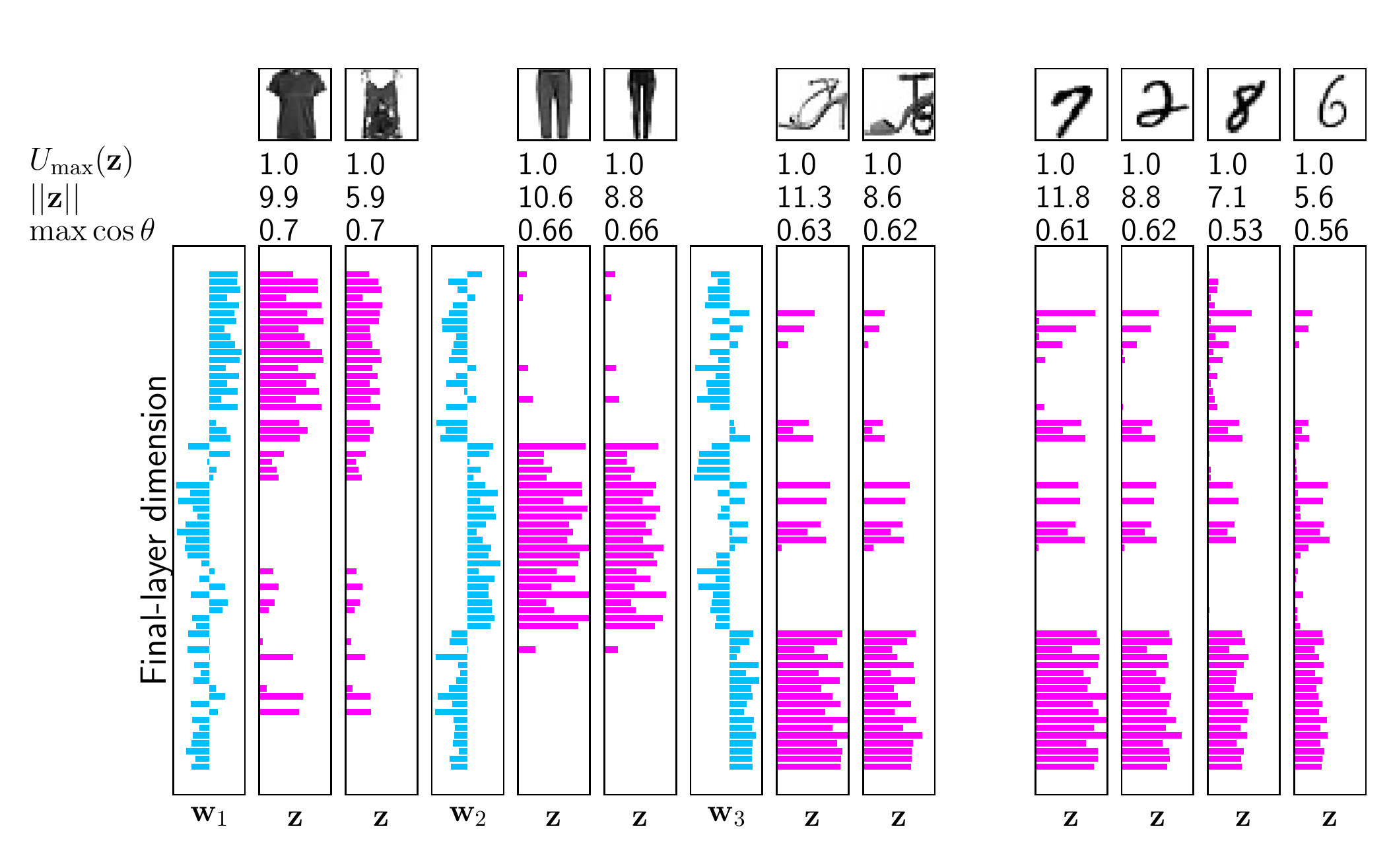}
% \put(-80,-10){\small 1) Trainable}
\end{center}
\vskip -0.1in
\caption{When ankle boots are included in the subset of classes trained on, OOD MNIST digits are consistently mapped into that class with high confidence, leading to poor OOD detection.}
\label{fig_ankle_boots}
% \vskip -0.1in
\end{figure}

\subsection{The Importance of Depth on the Filtering Effect}
\label{sec_filter_depth_exp}

We hypothesised that the feature filtering (section \ref{sec_misleading_intuition}) effect might be stronger in deeper convolutional networks -- the receptive field of convolutional filters increases with depth, meaning more holistic features are represented \citep{Zeiler2014}. OOD data may be less likely to contain these fuller features. To test this, we present a brief experiment; VGG-style networks with varying numbers of convolutional blocks are trained on all classes of either MNIST or Fashion MNIST, with the other dataset used as OOD data.
% (further details in section \tp{x}). 
Accuracy and AUROC are plotted for each dataset pair and network depth in fig. \ref{fig_depth_experiment}. Whilst accuracy quickly plateaus in these simple tasks, AUROC signifcantly improves in the deeper networks, suggesting depth strenghthens the filtering effect.
\begin{figure}[h!]
% \vskip 0.05in
\begin{center}
\includegraphics[width=0.3\columnwidth]{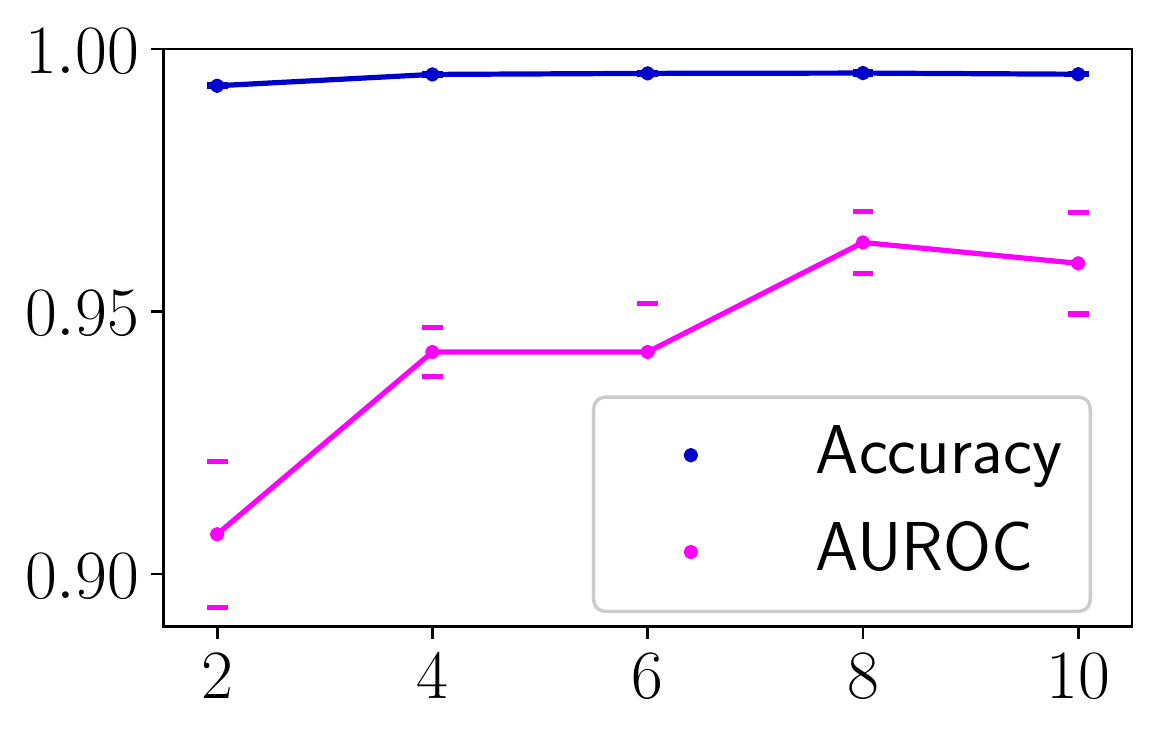}
\put(-80,78){\footnotesize $\mathcal{D}_\text{in}=\text{MNIST}$}
%, $\mathcal{D}_\text{out}=\text{fashMNIST}$}
\includegraphics[width=0.3\columnwidth]{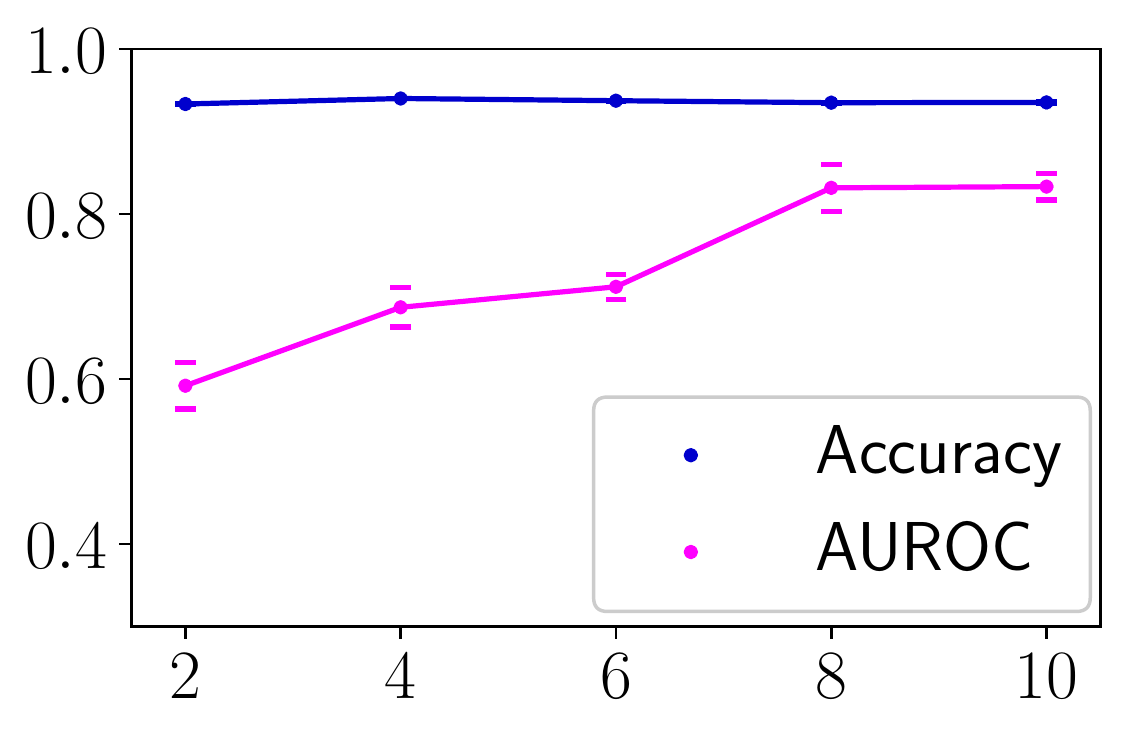}
\put(-90,78){\small $\mathcal{D}_\text{in}=\text{fashion MNIST}$}
%, $\mathcal{D}_\text{out}=\text{MNIST}$}
\put(-160,-5){\small No. convolutional layers}
\caption{As depth increases, OOD detection improves. Mean$\pm$1 std. error over five runs.}
\label{fig_depth_experiment}
\end{center}
% \vskip -0.2in
\end{figure}

\end{document}